\algnewcommand\algorithmicinput{\textbf{Inputs:}}
\algnewcommand\INPUT{\item[\algorithmicinput]}
\algnewcommand\algorithmicoutput{\textbf{Outputs:}}
\algnewcommand\OUTPUT{\item[\algorithmicoutput]}
\theoremstyle{plain}
\newtheorem{theorem}{Theorem}
\newtheorem{lemma}{Lemma}
\newtheorem{corollary}{Corollary}
\theoremstyle{definition}
\newtheorem{assumption}{Assumption}
\newtheorem{definition}{Definition}
\theoremstyle{remark}
\tikzset{>=latex}
\tikzstyle{block} = [rectangle, fill=blue!30,
\crefname{equation}{}{} 
\crefname{enumi}{}{}
\crefname{assumption}{Assumption}{Assumptions}
\newcommand{\iid}{\textit{i.i.d.}\xspace}
\newcommand{\E}[2][]{\mathbb{E}_{#1}\mathopen{}\left[#2\right]\mathclose{}}
\newcommand{\Or}[1]{\mathcal{O}\mathopen{}\left(#1\right)\mathclose{}}
\newcommand{\Om}[1]{\Omega\mathopen{}\left(#1\right)\mathclose{}}
\newcommand{\R}{\mathbb{R}}
\DeclareMathOperator*{\argmax}{argmax}
\newcommand{\mb}[1]{\mathbf{#1}}        
\newcommand{\eqdef}{\vcentcolon=}
\renewcommand{\mid}{\,|\,}
\newcommand{\T}{\mathrm{T}}             
\newcommand{\Mi}[2]{\,\mathrm{I}\mathopen{}\left(#1; #2\right)\mathclose{}}
\newcommand{\st}{\text{s.t.~}}
\newcommand{\ball}{\ensuremath{\mathbb{B}}}
\newcommand{\btheta}{{\bm{\theta}}}
\newcommand{\alg}{H-UCRL\xspace}
\newcommand{\x}{\mb{s}}
\renewcommand{\u}{\mb{a}}
\newcommand{\X}{\mathcal{S}}
\newcommand{\U}{\mathcal{A}}
\newcommand{\ti}{n}                  
\newcommand{\Ti}{N}
\renewcommand{\ni}{t}                
\newcommand{\Ni}{T}
\newcommand{\noise}{\bm{\omega}}
\newcommand{\snoise}{\omega}
\newcommand{\xo}{\tilde{\x}}
\newcommand{\uo}{\tilde{\u}}
\newcommand{\nstate}{\ensuremath{p}}
\newcommand{\ninp}{\ensuremath{q}}
\newcommand{\modelclass}{\ensuremath{\mathcal{M}}}
\newcommand{\dataset}{\ensuremath{\mathcal{D}}}
\newcommand{\modelposterior}{\ensuremath{p(\tilde{f} \mid \dataset_{1:\ni})}}
\newcommand{\bsigma}{\bm{\sigma}}
\newcommand{\bSigma}{\bm{\Sigma}}
\newcommand{\bmu}{\bm{\mu}}
\newcommand{\ab}{\ensuremath{\mb{x}}}
\newcommand{\adomain}{\mathcal{X}}
\renewcommand{\paragraph}[1]{\textbf{#1}\hspace{1em}}
\title{Efficient Model-Based Reinforcement Learning through Optimistic Policy Search and Planning}
\author{%
      Sebastian Curi \thanks{Equal contribution} \\
      Department of Computer Science \\
      ETH Zurich \\
      \texttt{scuri@inf.ethz.ch} \\
  \And
      Felix Berkenkamp $^*$ \\
      Bosch Center for Artificial Intelligence \\
      \texttt{felix.berkenkamp@de.bosch.com} \\
  \And
      Andreas Krause \\
      Department of Computer Science \\
      ETH Zurich \\
      \texttt{krausea@ethz.ch} \\
}
\begin{document}

\doparttoc 
\faketableofcontents 

\maketitle

\begin{abstract}
    \looseness -1
    Model-based reinforcement learning algorithms with probabilistic dynamical models are amongst the most data-efficient learning methods. 
    This is often attributed to their ability to distinguish between epistemic and aleatoric uncertainty.
    However, while most algorithms distinguish these two uncertainties for {\em learning} the model, they ignore it when {\em optimizing} the policy, which leads to greedy and insufficient exploration.
    At the same time, there are no practical solvers for optimistic exploration algorithms.
    In this paper, we propose a {\em practical} optimistic exploration algorithm (\alg).
    \alg reparameterizes the set of plausible models and {\em hallucinates} control directly on the {\em epistemic} uncertainty.
    By augmenting the input space with the {\em hallucinated} inputs, \alg can be solved using standard greedy planners.
    Furthermore, we analyze \alg and construct a general regret bound for well-calibrated models, which is provably sublinear in the case of Gaussian Process models. 
    Based on this theoretical foundation, we show how optimistic exploration can be easily combined with state-of-the-art reinforcement learning algorithms and different probabilistic models. 
    Our experiments demonstrate that optimistic exploration significantly speeds-up learning when there are penalties on actions, a setting that is notoriously difficult for existing model-based reinforcement learning algorithms.
\end{abstract}

\section{Introduction}
\looseness -1 
Model-Based Reinforcement Learning (MBRL) with probabilistic dynamical models can solve many challenging high-dimensional tasks with impressive sample efficiency \citep{Chua2018Deep}.
These algorithms alternate between two phases: 
first, they collect data with a policy and fit a model to the data; then, they simulate transitions with the model and optimize the policy accordingly.
%
A key feature of the recent success of MBRL algorithms is the use of models that explicitly distinguish between {\em epistemic} and {\em aleatoric} uncertainty when learning a model \citep{Gal2016Uncertainty}.
Aleatoric uncertainty is inherent to the system (noise), whereas epistemic uncertainty arises from data scarcity \citep{DerKiureghian2009Aleatory}. 
However, to optimize the policy, practical algorithms marginalize over both the aleatoric \emph{and} epistemic uncertainty to optimize the expected performance under the current model, as in PILCO \citep{Deisenroth2011PILCO}. 
This \emph{greedy exploitation} can cause the optimization to get stuck in local minima even in simple environments like the swing-up of an inverted pendulum:
In \cref{fig:inverted_final}, all methods can solve this problem without action penalties (left plot). 
However, with action penalties, the expected reward (under the epistemic uncertainty) of swinging up the pendulum is low relative to the cost of the maneuver. 
Consequently, the greedy policy does not actuate the system at all and fails to complete the task. 
While optimistic exploration is a well-known remedy, there is currently a lack of efficient, principled means of incorporating optimism in deep MBRL.

\paragraph{Contributions} 
\looseness -1 
Our main contribution is a novel optimistic MBRL algorithm, {\em Hallucinated-UCRL} (\alg), which 
can be applied together with state-of-the-art RL algorithms (\cref{sec:algorithm}).
Our key idea is to {\em reduce optimistic exploration to greedy exploitation} by 
reparameterizing the model-space using a mean/epistemic variance decomposition.
In particular, we augment the control space of the agent with {\em hallucinated} control actions that directly control the agent's {\em epistemic} uncertainty about the 1-step ahead transition dynamics (\cref{sec:practical}).
We provide a general theoretical analysis for \alg and prove sublinear regret bounds for the special case of Gaussian Process (GP) dynamics models (\cref{ssec:TheoreticalAnalysis}). 
Finally, we evaluate \alg in high-dimensional continuous control tasks that shed light on when optimistic exploration outperforms greedy exploitation and Thompson sampling (\cref{sec:experiments}).
To the best of our knowledge, this is the first approach that successfully implements \emph{optimistic} exploration with deep-MBRL.

\begin{figure}[t]
  \centering\includegraphics[width=\linewidth]{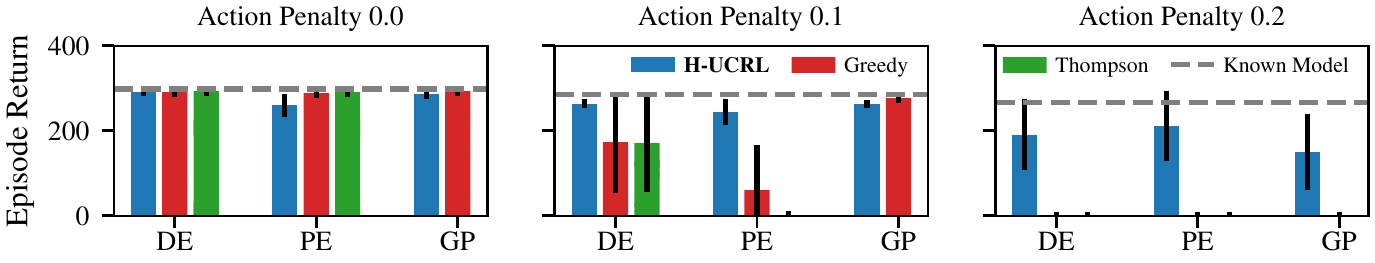}
\caption{
  Final returns in an inverted pendulum swing-up task with sparse rewards.
  As the action penalty increases, exploration through noise is penalized and algorithms get stuck in a local minimum, where the pendulum is kept at the bottom position. 
  Instead, \alg is able to solve the swing-up task reliably.
  This holds for for all considered dynamical models: Deterministic- (DE) and Probabilistic Ensembles (PE) of neural networks as well as Gaussian Processes (GP) models. 
  }
  \vspace{-1.5em}
  \label{fig:inverted_final}
\end{figure}

\paragraph{Related Work}
MBRL is a promising avenue towards applying RL methods to complex real-life decision problems due to its sample efficiency \citep{Deisenroth2013Survey}.
For instance, \citet{Kaiser2019MBAtari} use MBRL to solve the Atari suite, whereas \citet{Kamthe2018DataEfficient} solve low-dimensional continuous-control problems using GP models and \citet{Chua2018Deep} solve high-dimensional continuous-control problems using ensembles of probabilistic Neural Networks (NN).
All these approaches perform {\em greedy exploitation} under the current model using a variant of PILCO \citep{Deisenroth2011PILCO}.
Unfortunately, greedy exploitation  is {\em provably} optimal only in very limited cases such as linear quadratic regulators (LQR) \citep{Mania2019CertaintyEquivalence}.

\looseness -1 
Variants of {\em Thompson (posterior) sampling} are a common approach for {\em provable} exploration in reinforcement learning \citep{dearden1999model}. In particular, \citet{Osband2013More} propose Thompson sampling for {\em tabular} MDPs.
\citet{Chowdhury2019Online} prove a $\tilde{\mathcal{O}}(\sqrt{\Ni})$ regret bound for continuous states and actions for this theoretical algorithm, where $\Ni$ is the number of episodes. 
However, Thompson sampling can be applied only when it is tractable to sample from the posterior distribution over dynamical models. For example, this is intractable for GP models with continuous domains.
Moreover, \citet{Wang2018BatchedBO} suggest that approximate inference methods may suffer from variance starvation and limited exploration.

\looseness -1 
The \emph{Optimism-in-the-Face-of-Uncertainty} (OFU) principle is a classical approach towards {\em provable} exploration in the theory of RL.
Notably, \citet{Brafman2003Rmax} present the R-Max algorithm for {\em tabular} MDPs, where a learner is optimistic about the reward function and uses the {\em expected} dynamics to find a policy. R-Max has a sample complexity of $\mathcal{O}(1/\epsilon^3)$, which translates to a sub-optimal regret of $\tilde{\mathcal{O}}(\Ni^{2/3})$.
\citet{Jaksch2010UCRL} propose the UCRL algorithm that is optimistic on the transition dynamics and achieves an optimal $\tilde{\mathcal{O}}(\sqrt{\Ni})$ regret rate for tabular MDPs. Recently, \citet{Zanette2019Euler}, \citet{Efroni2019Tight}, and \citet{Domingues2020KernelVIUCB} provide refined UCRL algorithms for tabular MDPs.
When the number of states and actions increase, these {\em tabular algorithms are inefficient} and practical algorithms must exploit structure of the problem. The use of optimism in continuous state/action MDPs however is much less explored. \citet{Jin2019LSVI-UCB} present an optimistic algorithm for {\em linear} MDPs and \citet{Abbasi-Yadkori2011Regret} for linear quadratic regulators (LQR), both achieving $\tilde{\mathcal{O}}(\sqrt{\Ni})$ regret. 
Finally, \citet{Luo2018SLBO} propose a trust-region UCRL meta-algorithm that asymptotically finds an optimal policy but it is intractable to implement.  

Perhaps most closely related to our work, \citet{Chowdhury2019Online} present GP-UCRL for continuous state and action spaces. They use optimistic exploration for the policy optimization step with dynamical models that lie in a Reproducing Kernel Hilbert Space (RKHS). 
However, as mentioned by \citet{Chowdhury2019Online}, their algorithm is intractable to implement and cannot be used in practice.
Instead, we build on an implementable but expensive strategy that was heuristically suggested by \citet{Moldovan2015Optimismdriven} for planning on \emph{deterministic} systems and develop a principled and highly efficient optimistic exploration approach for deep MBRL.
Partial results from this paper appear in \citet[Chapter 5]{Berkenkamp2019Safe}.

\paragraph{Concurrent Work} 
\citet{kakade2020information} build tight confidence intervals for our problem setting based on information theoretical quantities.
However, they assume an optimization oracle and do not provide a practical implementation (their experiments use Thompson sampling). 
\citet{abeille2020efficient} propose an equivalent algorithm to \alg in the context of LQR and proved that the planning problem can be solved efficiently. 
In the same spirit as \alg, \citet{neu2020unifying} reduce intractable optimistic exploration to greedy planning using well-selected reward bonuses. In particular, they prove an equivalence between optimistic reinforcement learning and exploration bonus \citep{azar2017minimax} for tabular and linear MDPs.
How to generalize these exploration bonuses to our setting is left for future work.
\section{Problem Statement and Background} \label{sec:problem_statement}
We consider a stochastic environment with states $\x \in \X \subseteq \R^\nstate$, actions $\u \in \U \subset \R^\ninp$ within a compact set $\U$, and \iid, additive transition noise $\noise_\ti \in \R^\nstate$. The resulting transition dynamics are
\begin{equation}
	\x_{\ti+1} = f(\x_\ti, \u_\ti) + \noise_\ti
	\label{eq:stochastic_dynamic_system_additive}
\end{equation}
with $f \colon \X \times \U \to \X$. 
For tractability we assume continuity of $f$, which is common for any method that aims to approximate $f$ with a continuous model (such as neural networks). In addition, we also assume sub-Gaussian noise $\noise$, which includes any zero-mean distribution with bounded support and Gaussians. This assumption allows the noise to depend on states and actions.
\begin{assumption}[System properties]
  The true dynamics $f$ in \cref{eq:stochastic_dynamic_system_additive} are $L_f$-Lipschitz continuous and, for all $\ti \geq 0$, the elements of the noise vector $\noise_\ti$ are \iid $\sigma$-sub-Gaussian.
  \label{as:dynamics_f_lipschitz}
  \label{as:transition_noise_sub_gaussian}
\end{assumption}

\subsection{Model-based Reinforcement Learning}
\paragraph{Objective}
Our goal is to control the stochastic system \cref{eq:stochastic_dynamic_system_additive} optimally in an \emph{episodic} setting over a finite time horizon $\Ti$.
To control the system, we use any deterministic policy $\pi_\ti \colon \X \to \U$ from a set $\Pi$ that selects actions $\u_\ti = \pi_\ti(\x_\ti)$ given the current state. 
For ease of notation, we assume that the system is reset to a known state $\x_0$ at the end of each episode, that there is a known reward function $r \colon \X \times \U \to \mathbb{R}$, and we omit the dependence of the policy on the time index. 
Our results, easily extend to known initial state distributions and unknown reward functions using standard techniques (see \citet{Chowdhury2019Online}).
For any dynamical model $\tilde{f} \colon \X \times \U \to \X$ (e.g., $f$ in \cref{eq:stochastic_dynamic_system_additive}), the performance of a policy $\pi$ is the total reward collected during an episode in expectation over the transition noise $\noise$,
\begin{equation}
    \label{eq:performance_estimator}
    J(\tilde{f}, \pi) = \E[\tilde{\noise}_{0:\Ti-1}]{\sum \nolimits_{\ti =0}^\Ti r(\xo_\ti, \pi(\xo_\ti)) \, \bigg| \, \x_0},
    \quad \st \xo_{\ti + 1} = \tilde{f}(\xo_\ti, \pi(\xo_\ti)) + \tilde{\noise}_\ti .
\end{equation}
Thus, we aim to find the optimal policy $\pi^*$ for the true dynamics $f$ in  \cref{eq:stochastic_dynamic_system_additive},
\begin{equation}
  \label{eq:exploration:regret:optimal_parameters}
  \pi^* = \argmax_{\pi \in \Pi}\,J(f, \pi).
\end{equation}
If the dynamics $f$ were known, \cref{eq:exploration:regret:optimal_parameters} would be a standard stochastic optimal control problem. However, in model-based reinforcement learning we do \emph{not} know the dynamics $f$ and have to learn them online.

\paragraph{Model-learning}
We consider algorithms that iteratively select policies $\pi_\ni$ at each iteration/episode $\ni$ and conduct a single rollout on the real system \cref{eq:stochastic_dynamic_system_additive}. That is, starting with $\dataset_1 = \emptyset$, at each iteration $\ni$ we apply the selected policy $\pi_\ni$ to \cref{eq:stochastic_dynamic_system_additive} and collect transition data $\dataset_{\ni+1} = \{(\x_{\ti-1,\ni}, \u_{\ti-1, \ni} ), \x_{\ti,\ni}\}_{\ti=1}^\Ti$. 

\looseness -1 
We use a statistical model to estimate which dynamical models $\tilde{f}$ are compatible with the data in $\dataset_{1:\ni} = \cup_{0 < i \leq \ni} \dataset_i $. This can either come from a frequentist model with mean and confidence estimate  $\bmu_\ni(\x, \u)$ and $\bSigma_\ni(\x, \u)$, or from a Bayesian perspective that estimates a posterior distribution $\modelposterior$ over dynamical models $\tilde{f}$ and defines $\bmu_{\ni}(\cdot) = \mathbb{E}_{\tilde{f} \sim \modelposterior} [\tilde{f}(\cdot) ]$ and $\bSigma_\ni^2(\cdot)= \mathrm{Var}[ \tilde{f}(\cdot)]$, respectively. Either way, we require the model to be well-calibrated:

\begin{assumption}[Calibrated model]
    The statistical model is {\em calibrated} w.r.t. $f$ in \cref{eq:stochastic_dynamic_system_additive}, so that with $\bsigma_\ni(\cdot) = \mathrm{diag}(\bSigma_\ni(\cdot))$ there exists a sequence $\beta_\ni \in \R_{>0}$ such that, with probability at least $(1 - \delta)$, it holds jointly for all $\ni \geq 0$ and $\x, \u \in \X \times \U$ that $| f(\x, \u) - \bmu_\ni(\x, \u) | \leq \beta_\ni \bsigma_\ni(\x, \u)$, elementwise.
    \label{as:well_calibrated_model}
\end{assumption}

Popular choices for statistical dynamics models include {\em Gaussian Processes (GP)} \citep{Rasmussen2006Gaussian} and {\em Neural Networks (NN)} \citep{Anthony2009Neural}. GP models naturally differentiate between aleatoric noise and epistemic uncertainty and are effective in the low-data regime. 
They provably satisfy \cref{as:well_calibrated_model} when the true function $f$ has finite norm in the RKHS induced by the covariance function.
In contrast to GP models, NNs potentially scale to larger dimensions and data sets.
From a practical perspective, NN models that differentiate aleatoric from epistemic uncertainty can be efficiently implemented using Probabilistic Ensembles (PE) \citep{Lakshminarayanan2017Simple}.
Deterministic Ensembles (DE) are also commonly used but they do not represent aleatoric uncertainty correctly \citep{Chua2018Deep}. NN models are not calibrated in general, but can be re-calibrated to satisfy \cref{as:well_calibrated_model} \citep{Kuleshov2018Calibrated}. State-of-the-art methods typically learn models so that the one-step predictions in \cref{as:well_calibrated_model} combine to yield good predictions for trajectories \citep{Archer2015HMM,Doerr2018Probabilistic,Curi2020Structured}.


\subsection{Exploration Strategies}
\label{sec:exploration:optimism:expected_performance}

\begin{algorithm}[t]
  \caption{Model-based Reinforcement Learning}
  \label{alg:mbrl}
  \begin{algorithmic}[1]
    \INPUT{Calibrated dynamical model,
          reward function $r(\x, \u)$,
          horizon $\Ti$, 
          initial state $\x_0$}
    \For{$\ni = 1, 2, \dots$}
      \State Select $\pi_\ni$ based on \cref{eq:expected_performance_exploration}, \cref{eq:thompson_sampling_exploration}, or  \cref{eq:optimistic_exploration}
      \State Reset the system to $\x_{0, \ni} = \x_0$
      \For{$\ti = 1, \dots, \Ti$}
        \State $\x_{\ti, \ni} = f(\x_{\ti-1, \ni}, \pi_\ni(\x_{\ti-1, \ni})) + \noise_{\ti-1, \ni}$
        \label{alg:mbrl:rollout}
      \EndFor
      \State Update statistical dynamical model with the $\Ti$ observed state transitions in $\dataset_\ni$.
    \EndFor
  \end{algorithmic}
\end{algorithm}

Ultimately the performance of our algorithm depends on the choice of $\pi_\ni$. We now provide a unified overview of existing exploration schemes and summarize the MBRL procedure in \cref{alg:mbrl}.

\paragraph{Greedy Exploitation} In practice, one of the most commonly used algorithms is to select the policy $\pi_\ni$ that greedily maximizes the expected performance over the aleatoric uncertainty {\em and} epistemic uncertainty induced by the dynamical model. Other exploration strategies, such as dithering (e.g., epsilon-greedy, Boltzmann exploration) \citep{Sutton1998Reinforcement} or certainty equivalent control \citep[Chapter 6.1]{Bertsekas1995Dynamic}, can be grouped into this class. The greedy policy is
\begin{align}
    \label{eq:expected_performance_exploration}
    \pi_\ni^\mathrm{Greedy} = \argmax_{\pi \in \Pi} \E[\tilde{f} \sim \modelposterior]{ J(\tilde{f}, \pi)} .
\end{align}
\looseness -1
For example, PILCO \citep{Deisenroth2011PILCO} and GP-MPC \citep{Kamthe2018DataEfficient} use moment matching to approximate $\modelposterior$ and use {\em greedy} exploitation to optimize the policy.
Likewise, PETS-1 and PETS-$\infty$ from \citet{Chua2018Deep} also lie in this category, in which $\modelposterior$ is represented via ensembles. 
The main difference between PETS-$\infty$ and other algorithms is that PETS-$\infty$ ensures consistency by sampling a function per rollout, whereas PETS-1, PILCO, and GP-MPC sample a new function at each time step for computational reasons. 
We show in \cref{sec:expected_performance_bandit} that, in the bandit setting, the exploration is only driven by noise and optimization artifacts.
In the tabular RL setting, dithering takes an exponential number of episodes to find an optimal policy \citep{Osband2014Generalization}.
As such, it is \emph{not} an efficient exploration scheme for reinforcement learning.
Nevertheless, for some specific reward and dynamics structure, such as linear-quadratic control, greedy exploitation indeed achieves no-regret \citep{Mania2019CertaintyEquivalence}.
However, it is the most common exploration strategy and many practical algorithms to efficiently solve the optimization problem \cref{eq:expected_performance_exploration} exist (cf.~\cref{sec:practical}).

\looseness -1
\paragraph{Thompson Sampling}
A theoretically grounded exploration strategy is Thompson sampling, which optimizes the policy w.r.t.~a single model that is sampled from $\modelposterior$ at every episode. Formally,
\begin{align}
    \label{eq:thompson_sampling_exploration}
    \tilde{f}_\ni &\sim \modelposterior, \quad  \pi_\ni^\mathrm{TS} = \argmax_{\pi \in \Pi} J(\tilde{f}_\ni, \pi) .
\end{align}
This is different to PETS-$\infty$, as the former algorithm optimizes w.r.t.~the average of the (consistent) model trajectories instead of a single model.
In general, it is intractable to sample from $\modelposterior$. 
 Nevertheless, after the sampling step, the optimization problem is equivalent to greedy exploitation of the sampled model. Thus, the same optimization algorithms can be used to solve \cref{eq:expected_performance_exploration} and \cref{eq:thompson_sampling_exploration}. 

\paragraph{Upper-Confidence Reinforcement Learning (UCRL)}
The final exploration strategy we address is UCRL exploration \citep{Jaksch2010UCRL}, which optimizes jointly over policies and models inside the set $\modelclass_\ni = \{ \tilde{f} \mid | \tilde{f}(\x, \u) - \bmu_\ni(\x, \u) | \leq \beta_\ni \bsigma_\ni(\x, \u) \, \forall \x,\u \in \X \times \U \}$ that contains all statistically-plausible models compatible with \cref{as:well_calibrated_model}. The UCRL algorithm is
\begin{align}
   \label{eq:optimistic_exploration_background}
    \pi_\ni^\mathrm{UCRL} = \argmax_{\pi \in \Pi} \max_{ \tilde{f} \in \modelclass_\ni} J(\tilde{f}, \pi) .
\end{align}
\looseness -1 Instead of greedy exploitation, these algorithms optimize an optimistic policy that maximizes performance over all plausible models.
Unfortunately, this joint optimization is in general {\em intractable} and algorithms designed for greedy exploitation \cref{eq:expected_performance_exploration} do {\em not} generally solve the UCRL objective \cref{eq:optimistic_exploration_background}.



\section{Hallucinated Upper Confidence Reinforcement Learning (\alg)} \label{sec:algorithm}
\begin{figure}[t]
  \centering \tikzset{cross/.style={cross out, draw=black, minimum size=2*(#1-\pgflinewidth), inner sep=0pt, outer sep=0pt},
cross/.default={1pt}}

\begin{tikzpicture}[x=1in, y=1in]

\node at (0, 0) 	[name=start] 	                {$\x_0 = \xo_0$};
\node [name=o1, above right=0.2 and .8 of start]    {$\xo_1$};
\node [name=o2, above right=-0.1 and .8 of o1] 		{$\xo_2$};
\node [name=o3, above right=-0.2 and .8 of o2] 		{$\xo_3$};

\node 	[name=mu1, below left=0.2 and 0.01 of o1, anchor=west]    		{};


\fill[black, opacity=0.2] ($(o1) + (-0.2,-0.4)$) rectangle ($(o1) + (-0.1,0.1)$);
\fill[black, opacity=0.2] ($(o2) + (-0.2,-0.3)$) rectangle ($(o2) + (-0.1,0.02)$);
\fill[black, opacity=0.2] ($(o3) + (-0.2,-0.3)$) rectangle ($(o3) + (-0.1,+0.05)$);

\fill[black, opacity=0.2] ($(o1) + (-0.2,-0.4)$) rectangle ($(o1) + (-0.1,0.1)$);
\fill[black, opacity=0.2] ($(o2) + (-0.2,-0.6)$) rectangle ($(o2) + (-0.1,0.15)$);
\fill[black, opacity=0.2] ($(o3) + (-0.2,-0.7)$) rectangle ($(o3) + (-0.1,+0.2)$);

\draw[black!40, thick, dashed] (start.east) -- ($(o1.west) + (-0.04, 0) $);

\draw[black!40, thick, dashed] (o1.east) -- ($(o2.west) + (-0.04, 0) $);

\draw[black!40, thick, dashed] (o2.east) -- ($(o3.west) + (-0.04, 0) $);

\draw ($(o3) - (0.15, 0)$) node[cross=6pt,black!30!red, ultra thick]{};

\node 	[name=pi0, above right=-0.2 and 0.28 of start]  {$\pi(\xo_0)$};
\node 	[name=eta0, below=0.15 of o1.west,anchor=west]   {$\eta(\xo_0)$};

\node 	[name=pi1, below right=-0.1 and 0.15 of o1]    	{$\pi(\xo_1)$};
\node 	[name=eta1, below=0.15 of o2.west,anchor=west]	{$\eta(\xo_1)$};


\node 	[name=pi2, below right=-0.05 and 0.15 of o2]    	{$\pi(\xo_2)$};
\node 	[name=eta2, below=0.15 of o3.west,anchor=west]	{$\eta(\xo_2)$};

\draw[blue!70, thick, ->] (start.east) -- ($(mu1.west) + (-0.03, 0.1) $);
\draw[blue!70, thick, ->] ($(mu1.west) + (-0.03, 0.1) $) --($(o1.west) + (-0.04, 0) $);

\draw[blue!70, thick, ->] (o1.east) -- ($(o2.west) + (-0.04, -0.18)$);
\draw[blue!70, thick, ->] ($(o2.west) + (-0.04, -0.18)$) --($(o2.west) + (-0.04, 0) $);

\draw[blue!70, thick, ->] (o2.east) -- ($(o3.west) + (-0.04, -0.17)$);
\draw[blue!70, thick, ->] ($(o3.west) + (-0.04, -0.17)$) --($(o3.west) + (-0.04, 0) $);

\draw (3.75,.65) node[cross=6pt,black!30!red, ultra thick]{};
\fill[opacity=0.2,black] (3.7,.4) rectangle (3.8,.5);
\fill[opacity=0.2,black] (3.7,.2) rectangle (3.8,.3);
\fill[opacity=0.2,black] (3.7,.2) rectangle (3.8,.3);
\node at (3.85, .65) [anchor=west] {Sparse reward};
\node at (3.85, .45) [anchor=west] {State distribution};
\node at (3.77, .15) [anchor=west] {\begin{tabular}{l}
One-step uncertainty \\
$\beta_\ni \bsigma_\ni(\xo_\ti, \pi(\xo_\ti))$ \\
\end{tabular}};

\end{tikzpicture}
  \caption{\looseness -1 Illustration of the optimistic trajectory $\xo_\ti$ from \alg. 
  The policy $\pi$ is used to choose the next-state distribution, and the variables $\eta$ to choose the next state optimistically inside the one-step confidence interval (dark grey bars). 
  The true dynamics is contained inside the light grey confidence intervals, but, after the first step, not necessarily inside the dark grey bars.
  Even when the expected reward w.r.t.~the epistemic uncertainty is small (red cross compared to light grey bar), 
  \alg efficiently finds the high-reward region (red cross).
  Instead, greedy exploitation strategies fail.
  }
  \label{fig:optimistic_trajectory_new}
\end{figure}

We propose a practical variant of the UCRL-exploration \eqref{eq:optimistic_exploration_background} algorithm. 
Namely, we reparameterize the functions $\tilde{f} \in \modelclass_\ni$ as  $\tilde{f} = \bmu_{\ni-1}(\x, \u) + \beta_{\ni-1} \bSigma_{\ni-1}(\x, \u) \eta(\x, \u)$, for some function $\eta \colon \R^\nstate \times \R^\ninp \to [-1, 1]^\nstate$. 
This transformation is similar in spirit to the re-parameterization trick from \citet{Kingma2013AutoEncoding}, except that $\eta(\x, \u)$ are functions.
The key insight is that instead of optimizing over dynamics in $\tilde{f} \in \modelclass_\ni$ as in UCRL, it suffices to optimize over the functions $\eta(\cdot)$. 
We call this algorithm \alg, formally:
\begin{align}
   \label{eq:optimistic_exploration}
    \pi_\ni^\mathrm{\alg} = \argmax_{\pi \in \Pi}\max_{ \eta(\cdot) \in [-1, 1]^\nstate}  J(\tilde{f}, \pi),  \st \tilde{f}(\x, \u) = \bmu_{\ni-1}(\x, \u) + \beta_{\ni-1} \bSigma_{\ni-1}(\x, \u) \eta(\x, \u) .
\end{align}
At a high level, the policy $\pi$ acts on the \emph{inputs} (actions) of the dynamics and chooses the next-state distribution. 
In turn, the optimization variables $\eta$ act in the \emph{outputs} of the dynamics to select the most-optimistic outcome from within the confidence intervals. 
We call the optimization variables the {\em hallucinated} controls as the agent hallucinates control authority to find the most-optimistic model.

The \alg algorithm \emph{does not explicitly propagate uncertainty} over the horizon.
Instead, it does so \emph{implicitly} by using the pointwise uncertainty estimates from the model to recursively plan an optimistic trajectory, as illustrated in \cref{fig:optimistic_trajectory_new}. This has the practical advantage that the model only has to be well-calibrated for 1-step predictions and not $\Ti$-step predictions.
In practice, the parameter $\beta_\ni$ trades off between exploration and exploitation.




\ActivateWarningFilters[pdftoc]
\subsection{Solving the Optimization Problem} 
\DeactivateWarningFilters[pdftoc]
\label{sec:practical}
\looseness -1 Problem \eqref{eq:optimistic_exploration} is still intractable as it requires to optimize over general functions.
The {\em crucial} insight is that we can make the \alg algorithm \cref{eq:optimistic_exploration} practical by optimizing over a smaller class of functions $\eta$. 
In \cref{ap:exploration:practical_implementation}, we prove that it suffices to optimize over Lipschitz-continuous bounded functions instead of general bounded functions. 
Therefore, we can optimize jointly over policies and Lipschitz-continuous, bounded \emph{functions} $\eta(\cdot)$.
Furthermore, we can re-write $\eta(\xo_\ti, \uo_\ti) = \eta(\xo_\ti, \pi(\xo_{\ti, \ni})) = \eta(\xo_{\ti, \ni})$. 
This allows to reduce the intractable optimistic problem \eqref{eq:optimistic_exploration} to {\em greedy exploitation} \eqref{eq:expected_performance_exploration}: We simply treat $\eta(\cdot) \in [-1, 1]^\nstate$ as an additional {\em hallucinated} control input that has no associated control penalties and can exert as much control as the current {\em epistemic} uncertainty that the model affords. 
With this observation in mind, \alg greedily exploits a {\em hallucinated} system with the extended dynamics $\tilde{f}$ in \cref{eq:optimistic_exploration} and a corresponding augmented control policy $(\pi, \eta)$. 
This means that we can now use the {\em same} efficient MBRL approaches for optimistic exploration that were previously restricted to greedy exploitation and Thompson sampling (albeit on a slightly larger action space, since the dimension of the action space increases from $\ninp$ to $\ninp + \nstate$).

In practice, if we have access to a greedy oracle $\pi = \texttt{GreedyOracle}(f)\xspace$, we simply access it using $\pi, \eta = \texttt{GreedyOracle}(\bmu_{\ni-1}  + \beta_{\ni-1} \bSigma_{\ni-1} \eta)\xspace$. 
Broadly speaking, greedy oracles are implemented using offline-policy search or online planning algorithms. 
Next, we discuss how to use these strategies independently to solve the \alg planning problem~\eqref{eq:optimistic_exploration}. 
For a detailed discussion on how to augment common algorithms with hallucination, see \Cref{ap:dyna-mpc}. 

\textbf{Offline Policy Search} is any algorithm that optimizes a parametric policy to maximize performance of the current dynamical model. 
As inputs, it takes the dynamical model and a parametric family for the policy and the critic (the value function).
It outputs the optimized policy and the corresponding critic of the optimized policy. 
These algorithms have fast inference time and scale to large dimensions but can suffer from model bias and inductive bias from the parametric policies and critics \citep{VanHasselt2019MBRLbias}.

\textbf{Online Planning} or Model Predictive Control \citep{Morari1999} is a local planning algorithm that outputs the best action for the current state. 
This method solves the \alg planning problem~\eqref{eq:optimistic_exploration} in a receding-horizon fashion. 
The planning horizon is usually shorter than $\Ti$ and the reward-to-go is bootstrapped using a terminal reward.
In most cases, however, this terminal reward is unknown and must be learned \citep{Lowrey2019POLO}.
As the planner observes the {\em true} transitions during deployment, it suffers less from model errors.
However, its running time is too slow for real-time implementation.

\paragraph{Combining Offline Policy Search with Online Planning}
In \Cref{alg:hucrl}, we propose to combine the best of both worlds to solve the \alg planning problem~\eqref{eq:optimistic_exploration}.
In particular, \Cref{alg:hucrl} takes as inputs a policy search algorithm and a planning algorithm. 
After each episode, it optimizes parametric (e.g. neural networks) control and hallucination policies $(\pi_{\theta}, \eta_{\theta})$ using the policy search algorithm. 
As a by-product of the policy search algorithm we have the {\em learned} critic $Q_{\vartheta}$.
At deployment, the planning algorithm returns the true and hallucinated actions $(a, a')$, and we only execute the true action $a$ to the true system.
We initialize the planning algorithm using the learned policies $(\pi_{\theta}, \eta_{\theta})$ and use the {\em learned} critic to bootstrap at the end of the prediction horizon. 
In this way, we achieve the best of both worlds. 
The policy search algorithm accelerates the planning algorithm by shortening the planning horizon with the learned critic and by using the learned policies to warm-start the optimization. 
The planning algorithm reduces the model-bias that a pure policy search algorithm has. 

\begin{algorithm}[t]
  \caption{\alg combining Optimistic Policy Search and Planning}
  \label{alg:hucrl}
  \begin{algorithmic}[1]
    \INPUT{Mean $\bmu(\cdot, \cdot)$ and variance $\bSigma^2(\cdot,\cdot)$, parametric policies $\pi_{\theta}(\cdot)$, $\eta_\theta(\cdot)$, parametric critic $Q_{\vartheta}(\cdot)$, horizon $\Ti$, policy search algorithm \texttt{PolicySearch}, online planning algorithm \texttt{Plan}}, 
    \For{$\ni = 1, 2, \dots$}
        \State {$(\pi_{\theta, \ni}, \eta_{\theta,\ni}), Q_{\vartheta,\ni} \gets  \texttt{PolicySearch}(\bmu_{\ni-1}; \bSigma_{\ni-1}^2; (\pi_{\theta,\ni-1}, \eta_{\theta,\ni-1}$))}
        \For{$\ti = 1, \dots, \Ti$}
        \State $(\u_{\ti-1, \ni}, \u'_{\ti-1, \ni}) = \texttt{Plan}(\x_{\ti-1, \ni}; \bmu_{\ni-1}; \bSigma_{\ni-1}^2; (\pi_{\theta,\ni}, \eta_{\theta,\ni}), Q_{\vartheta})$
        \State $\x_{\ti, \ni} = f(\x_{\ti-1, \ni}, \u_{\ti-1, \ni}) + \noise_{\ti-1, \ni}$
         \EndFor
    \State Update statistical dynamical model with the $\Ti$ observed state transitions in $\dataset_\ni$.
    \EndFor
  \end{algorithmic}
\end{algorithm}

\subsection{Theoretical Analysis} \label{ssec:TheoreticalAnalysis}
\looseness -1 
In this section, we analyze the \alg algorithm \cref{eq:optimistic_exploration}. A natural quality criterion to evaluate exploration schemes is the {\em cumulative regret} $R_\Ni = \sum_{\ni = 1}^\Ni | J(f, \pi^*) - J(f, \pi_\ni) |$, which is the difference in performance between the optimal policy $\pi^*$ and $\pi_\ni$ on the true system $f$ over the run of the algorithm \citep{Chowdhury2019Online}.
If we can show that $R_\Ni$ is sublinear in $\Ni$, then we know that the performance $J(f, \pi_\ni)$ of our chosen policies $\pi_\ni$ converges to the performance of the optimal policy $\pi^*$.
We first introduce the final assumption for the results in this section to hold.

\begin{assumption}[Continuity]
    \looseness -1 
    The functions $\bmu_\ni$ and $\bsigma_\ni$ are $L_\mu$ and $L_\sigma$ Lipschitz continuous, any policy $\pi \in \Pi$ is $L_\pi$-Lipschitz continuous 
    and the reward $r(\cdot, \cdot)$ is $L_r$-Lipschitz continuous.
    \label{as:model_predictions_lipschitz}
    \label{as:pi_lipschitz}
    \label{as:action_set_compact}
    \label{as:reward_lipschitz}
\end{assumption}

\cref{as:pi_lipschitz,as:reward_lipschitz} is not restrictive.
NN with Lipschitz-continuous non-linearities or GP with Lipschitz-continuous kernels output Lipschitz-continuous predictions (see \cref{ap:gp_predictions_lipschitz}).  Furthermore, we are free to choose the policy class $\Pi$, and most reward functions are either quadratic or tolerance functions \citep{Tassa2018DeepMind}. 
Discontinuous reward functions are generally very difficult to optimize.

\paragraph{Model complexity}
In general, we expect that $R_\Ni$ depends on the complexity of the statistical model in \cref{as:well_calibrated_model}. If we can quickly estimate the true model using a few data-points, then the regret would be lower than if the model is slower to learn. To account for these differences, we construct the following complexity measure over a given set $\X$ and $\U$,
\begin{equation}
    \label{eq:model_prediction_variance_bound}
    I_\Ni(\X, \U) = \max_{\dataset_1, \dots, \dataset_\Ni \subset \X \times \X \times \U, \, |\dataset_\ni| = \Ti} \sum\nolimits_{\ni=1}^\Ni \sum_{\x, \u \in \dataset_\ni} \| \bsigma_{\ni-1}(\x, \u) \|_2^2 .
\end{equation}
While in general impossible to compute, this complexity measure considers the ``worst-case'' datasets $\dataset_{1}$ to $\dataset_\Ni$, with $|\dataset_\ni| = \Ti$ elements each, that we could collect at each iteration of \cref{alg:mbrl} in order to maximize the predictive uncertainty of our statistical model. Intuitively, if $\bsigma(\x, \u)$ shrinks sufficiently quickly after observing a transition $(\cdot, \x, \u)$ and if the model generalizes well over $\X \times \U$, then \cref{eq:model_prediction_variance_bound} will be small. In contrast, if our model does not learn or generalize at all, then $I_\Ni$ will be $\Or{\Ni\Ti\nstate}$ and we cannot hope to succeed in finding the optimal policy. For the special case of Gaussian process (GP) models, we show that $I_\Ni$ is indeed sublinear in the following.

\paragraph{General regret bound}
\looseness -1
The true sequence of states $\x_{\ti,\ni}$ at which we obtain data during our rollout in \cref{alg:mbrl:rollout} of \cref{alg:mbrl} lies somewhere withing the light-gray shaded state distribution with epistemic uncertainty in \cref{fig:optimistic_trajectory_new}. While this is generally difficult to compute, we can bound it in terms of the predictive variance $\bsigma_{\ni-1}(\x_{\ti,\ni}, \pi_\ni(\x_{\ti,\ni}))$, which is directly related to $I_\Ni$. However, the optimistically planned trajectory instead depends on $\bsigma_{\ni-1}(\xo_{\ti,\ni}, \pi(\xo_{\ti,\ni}))$ in \cref{eq:optimistic_exploration}, which enables policy optimization without explicitly constructing the state distribution. How the predictive uncertainties of these two trajectories relate depends on the generalization properties of our statistical model; specifically on $L_\sigma$ in \cref{as:model_predictions_lipschitz}. We can use this observation to obtain the following bound on $R_\Ni$:




\begin{restatable}{theorem}{generalregretbound}
  \label{thm:exploration:regret:general_regret_bound}
   Under \cref{as:dynamics_f_lipschitz,as:pi_lipschitz,as:reward_lipschitz,as:well_calibrated_model,as:model_predictions_lipschitz} let $\x_{\ti,\ni} \in \X$ and $\u_{\ti,\ni} \in \U$ for all $\ti,\ni>0$. Then, for all $\Ni \geq 1$, with probability at least $(1-\delta)$, the regret of \alg in \cref{eq:optimistic_exploration} is at most
  $
    R_\Ni \leq \Or{L_\sigma^\Ti \beta_{\Ni-1}^\Ti \sqrt{ \Ni \Ti^3  \, I_\Ni(\X, \U) }  }
  $.
\end{restatable}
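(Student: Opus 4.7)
The plan is to combine three standard ingredients---optimism, a simulation lemma, and a Cauchy--Schwarz bound against $I_\Ni$---and track their interaction carefully.

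\textbf{Step 1 (Optimism).} \cref{as:well_calibrated_model} gives $f \in \modelclass_\ni$ for all $\ni$ simultaneously with probability $1-\delta$. On that event, the optimistic choice in \cref{eq:optimistic_exploration} satisfies $J(\tilde{f}_\ni, \pi_\ni) \geq J(f, \pi^*)$, where $\tilde{f}_\ni = \bmu_{\ni-1} + \beta_{\ni-1} \bSigma_{\ni-1} \eta_\ni$. Hence $R_\Ni \leq \sum_\ni [J(\tilde{f}_\ni, \pi_\ni) - J(f, \pi_\ni)]$, reducing the problem to a per-episode simulation-gap bound under the fixed policy $\pi_\ni$ but two different dynamics.

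\textbf{Step 2 (Simulation lemma).} Let $\xo_{\ti,\ni}$ and $\x_{\ti,\ni}$ be the trajectories induced by $\pi_\ni$ under $\tilde{f}_\ni$ and $f$ respectively, coupled so they share the noise realisation. \cref{as:reward_lipschitz,as:pi_lipschitz} yield
\begin{equation*}
J(\tilde{f}_\ni, \pi_\ni) - J(f, \pi_\ni) \leq L_r(1+L_\pi) \sum_{\ti=0}^{\Ti} \mathbb{E}\|\xo_{\ti,\ni} - \x_{\ti,\ni}\|.
\end{equation*}
Add and subtract $\tilde{f}_\ni(\x_{\ti,\ni}, \pi_\ni(\x_{\ti,\ni}))$ inside the one-step update. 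The first piece is controlled by the Lipschitz constant of $\tilde{f}_\ni$, which by \cref{as:model_predictions_lipschitz} is at most $L_{\tilde{f}} := (L_\mu + \beta_{\ni-1}L_\sigma)(1+L_\pi)$. The second piece is the gap between two members of $\modelclass_\ni$ evaluated at the realised state, hence bounded by $2\beta_{\ni-1}\|\bsigma_{\ni-1}(\x_{\ti,\ni},\pi_\ni(\x_{\ti,\ni}))\|_2$ via two applications of \cref{as:well_calibrated_model}. Unrolling the recursion (the noise terms cancel under the coupling) gives
\begin{equation*}
\|\xo_{\ti,\ni} - \x_{\ti,\ni}\| \leq 2\beta_{\ni-1} \sum_{s=0}^{\ti-1} L_{\tilde{f}}^{\,\ti-1-s}\,\|\bsigma_{\ni-1}(\x_{s,\ni},\pi_\ni(\x_{s,\ni}))\|_2.
\end{equation*}

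\textbf{Step 3 (Aggregation).} Summing this deviation bound over $\ti \in \{0,\dots,\Ti\}$ absorbs one extra factor of $\Ti$, and combining with Step 2 yields
\begin{equation*}
R_\Ni \lesssim L_r(1+L_\pi)\,\beta_{\Ni-1}\, L_{\tilde{f}}^{\,\Ti}\,\Ti\, \sum_{\ni=1}^{\Ni}\sum_{\ti=0}^{\Ti-1} \|\bsigma_{\ni-1}(\x_{\ti,\ni},\pi_\ni(\x_{\ti,\ni}))\|_2.
\end{equation*}
Cauchy--Schwarz across the $\Ni\Ti$ summands together with \cref{eq:model_prediction_variance_bound} gives $\sum_\ni\sum_\ti \|\bsigma_{\ni-1}\|_2 \leq \sqrt{\Ni\Ti\, I_\Ni(\X,\U)}$. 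Since $L_{\tilde{f}}^\Ti = \Or{(\beta_{\Ni-1}L_\sigma)^\Ti}$ in the nontrivial regime $\beta_{\Ni-1}L_\sigma \gtrsim L_\mu$, the bound $R_\Ni \leq \Or{L_\sigma^\Ti \beta_{\Ni-1}^\Ti \sqrt{\Ni\Ti^3\, I_\Ni(\X,\U)}}$ follows after absorbing $L_r(1+L_\pi)$ into the hidden constant.

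\textbf{Main obstacle.} The hard part is the $\Ti$-step propagation of $\|\xo - \x\|$: iterating the worst-case Lipschitz constant of $\tilde{f}_\ni \in \modelclass_\ni$ unavoidably yields the exponential $L_\sigma^\Ti \beta_{\Ni-1}^\Ti$, which is the price of replacing an explicit multi-step uncertainty propagation with the one-step calibrated confidence used in \cref{eq:optimistic_exploration}. A secondary subtlety is that $\bsigma_{\ni-1}$ appears naturally at the hallucinated states $\xo_{\ti,\ni}$ via the optimistic reparameterization, yet $I_\Ni$ aggregates $\bsigma_{\ni-1}$ along realized states $\x_{\ti,\ni}$; evaluating the one-step model mismatch at $\x_{\ti,\ni}$ in Step 2 (rather than at $\xo_{\ti,\ni}$) cleanly ties the regret to $I_\Ni$ and confines all $L_\sigma^\Ti$ blowup to the Lipschitz recursion, avoiding a second circular appeal to $L_\sigma$.
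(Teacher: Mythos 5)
Your overall route is the same as the paper's: optimism reduces the regret to a per-episode gap $J(\tilde f_\ni,\pi_\ni)-J(f,\pi_\ni)$, a noise-coupled simulation lemma converts that gap into $\sum_\ti \mathbb{E}\|\xo_{\ti,\ni}-\x_{\ti,\ni}\|$, a one-step-error-plus-Lipschitz recursion bounds the trajectory deviation by a weighted sum of $\|\bsigma_{\ni-1}\|$ at realized states, and Cauchy--Schwarz against $I_\Ni(\X,\U)$ finishes. Your aggregation in Step~3 (one Cauchy--Schwarz over all $\Ni\Ti$ terms) is equivalent to the paper's two nested Jensen steps, and the final order matches.

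There is, however, one genuine gap in Step~2. You bound $\|\tilde f_\ni(\xo_\ti)-\tilde f_\ni(\x_\ti)\|$ by "the Lipschitz constant of $\tilde f_\ni$, which by \cref{as:model_predictions_lipschitz} is at most $(L_\mu+\beta_{\ni-1}L_\sigma)(1+L_\pi)$." That assumption controls only $\bmu_\ni$ and $\bsigma_\ni$; it says nothing about the hallucination function $\eta_\ni$, which is an arbitrary (possibly discontinuous) map into $[-1,1]^\nstate$ chosen by the inner maximization, so $\tilde f_\ni=\bmu_{\ni-1}+\beta_{\ni-1}\bSigma_{\ni-1}\eta_\ni$ need not be Lipschitz with that constant -- the paper devotes \cref{ap:exploration:practical_implementation} precisely to the delicacy of $\eta$'s regularity (and even there the Lipschitz constant of $\eta$ degenerates as $\bsigma\to 0$). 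The paper sidesteps this entirely by telescoping through $f(\xo_\ti)$ (true dynamics at the hallucinated state) rather than $\tilde f_\ni(\x_\ti)$: then only $L_f$, the pointwise envelope $\|\bSigma_{\ni-1}(\xo_\ti)\eta_\ni(\xo_\ti)\|\le\|\bsigma_{\ni-1}(\xo_\ti)\|$, calibration at $\xo_\ti$, and $L_\sigma$-Lipschitzness of $\bsigma$ are needed, yielding the recursion factor $L_{\mathrm{fc}}+2\beta_{\ni-1}L_\sigma\sqrt{1+L_\pi}$. Your decomposition is repairable without changing the final bound: replace the Lipschitz claim on $\beta_{\ni-1}\bSigma_{\ni-1}\eta_\ni$ by the two-sided envelope bound $\|\bSigma_{\ni-1}(\xo_\ti)\eta_\ni(\xo_\ti)-\bSigma_{\ni-1}(\x_\ti)\eta_\ni(\x_\ti)\|\le\|\bsigma_{\ni-1}(\xo_\ti)\|+\|\bsigma_{\ni-1}(\x_\ti)\|\le 2\|\bsigma_{\ni-1}(\x_\ti)\|+L_\sigma\sqrt{1+L_\pi}\,\|\xo_\ti-\x_\ti\|$, which recovers the same recursion constant up to a factor of two. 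As written, though, the step does not follow from the stated assumptions. A minor additional remark: your closing claim that evaluating the mismatch at $\x_{\ti,\ni}$ "avoids a second circular appeal to $L_\sigma$" is not really an advantage over the paper -- in both arguments $L_\sigma$ enters the recursion factor exactly once, just at a different point of the decomposition.
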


We provide a proof of \cref{thm:exploration:regret:general_regret_bound} in \cref{ap:exploration_proof}. The theorem ensures that, if we evaluate optimistic policies according to \cref{eq:optimistic_exploration}, we eventually achieve performance $J(f, \pi_\ni)$ arbitrarily close to the optimal performance of $J(f, \pi^*)$ if $I_\Ni(\X, \U)$ grows at a rate smaller than $\Ni$. As one would expect, the regret bound in \cref{thm:exploration:regret:general_regret_bound} depends on constant factors like the prediction horizon $\Ti$, the relevant Lipschitz constants of the dynamics, policy, reward, and the predictive uncertainty. 
The dependence on the dimensionality of the state space $\nstate$ is hidden inside $I_\Ni$, while $\beta_\ni$ is a function of $\delta$.

\paragraph{Gaussian Process Models}
\looseness -1 
For the bound in \cref{thm:exploration:regret:general_regret_bound} to be useful, we must show that $I_\Ni$ is sublinear. Proving this is impossible for general models, but can be proven for GP models. In particular, we show in \cref{ap:sec:gp_regret_bounds} that $I_\Ni$ is bounded by the worst-case mutual information (information capacity) of the GP model. \citet{Srinivas2012Gaussian,Krause2011Contextual} derive upper-bounds for the information capacity for commonly-used kernels. For example, when we use their results for independent GP models with squared exponential kernels for each component $[f(\x, \u)]_i$, we obtain a regret bound $\mathcal{O}(\, (1+B_f)^\Ti  L_\sigma^\Ti \Ti^2 \sqrt{\Ni} (\nstate^2 (\nstate + \ninp) \log(\nstate \Ni \Ti))^{(\Ti+1) / 2} )$, where $B_f$ is a bound on the functional complexity of the function $f$. Specifically, $B_f$ is the norm of $f$ in the RKHS that corresponds to the kernel.

A similar optimistic exploration scheme was analyzed by \citet{Chowdhury2019Online}, but for an algorithm that is not implementable as we discussed at the beginning of \cref{sec:algorithm}. 
Their exploration scheme \emph{depends} on the (generally unknown) Lipschitz constant of the value function, which corresponds to knowing $L_f$ \emph{a priori} in our setting. While this is a restrictive  and impractical requirement, we show in \cref{ap:sec:comparison_to_chowdhury2019} that under this assumption we can improve the dependence on $L_\sigma^\Ti \beta_\Ni^\Ti$ in the regret bound in \cref{thm:exploration:regret:general_regret_bound} to $(L_f \beta_\Ni)^{1/2}$. This matches the bounds derived by \citet{Chowdhury2019Online} up to constant factors. Thus we can consider the regret term $L_\sigma^\Ti \beta_\Ni^\Ti$ to be the additional cost that we have to pay for a practical algorithm. 

\paragraph{Unbounded domains}
\looseness -1
We assume that the domain $\X$ is compact in order to bound $I_\Ni$ for GP models, which enables a convenient analysis and is also used by \citet{Chowdhury2019Online}. 
However, it is incompatible with \cref{as:transition_noise_sub_gaussian}, which allows for potentially unbounded noise $\noise$. While this is a technical detail, we formally prove in \cref{ap:unbounded_domains} that we can bound the domain with high probability within a norm-ball of radius $b_\ni = \mathcal{O}(L_f^\Ti \Ti \nstate \log(\Ti \ni^2))$. For GP models with a squared exponential kernel, we analyze $I_\Ni$ in this setting and show that the regret bound only increases by a polylog factor.


\section{Experiments} \label{sec:experiments}
\looseness -1 Throughout the experiments, we consider reward functions of the form $r(\x, \u) = r_{\text{state}}(\x) - \rho c_{\text{action}}(\u)$, where $r_{\text{state}}(\x)$ is the reward for being in a ``good'' state, and $\rho \in [0, \infty)$ is a parameter that scales the action costs $c_{\text{action}}(\u)$.
We evaluate how \alg, greedy exploitation, and Thompson sampling perform for different values of $\rho$ in different Mujoco environments \citep{Todorov2012Mujoco}.
We expect greedy exploitation to struggle for larger $\rho$, whereas \alg and Thompson sampling should perform well.
As modeling choice, we use 5-head probabilistic ensembles as in \citet{Chua2018Deep}. 
For greedy exploitation, we sample the next-state from the ensemble mean and covariance (PE-DS algorithm in \citet{Chua2018Deep}). 
We use ensemble sampling \citep{Lu2017Ensemble} to approximate Thompson sampling.
For \alg, we follow \citet{Lakshminarayanan2017Simple} and use the ensemble mean and covariance as the next-state predictive distribution.
For more experimental details and learning curves, see \Cref{ap:extended_experiments}.
We provide an open-source implementation of our method, which is available at \url{http://github.com/sebascuri/hucrl}.

\paragraph{Sparse Inverted Pendulum}
We first investigate a swing-up pendulum with sparse rewards.
In this task, the policy must perform a complex maneuver to swing the pendulum to the upwards position.
A policy that does not act obtains zero state rewards but suffers zero action costs. 
Slightly moving the pendulum still has zero state reward but the actions are penalized. 
Hence, a zero-action policy is locally optimal, but it fails to complete the task. 
We show the results in \cref{fig:inverted_final}:
With no action penalty, all exploration methods perform equally well -- the randomness is enough to explore and find a quasi-optimal sequence.
For $\rho=0.1$, greedy exploitation struggles: sometimes it finds the swing-up sequence, which explains the large error bars. 
Finally, for $\rho=0.2$ only \alg is able to successfully swing up the pendulum.

\begin{figure}[t]
  \centering\includegraphics[width=\linewidth]{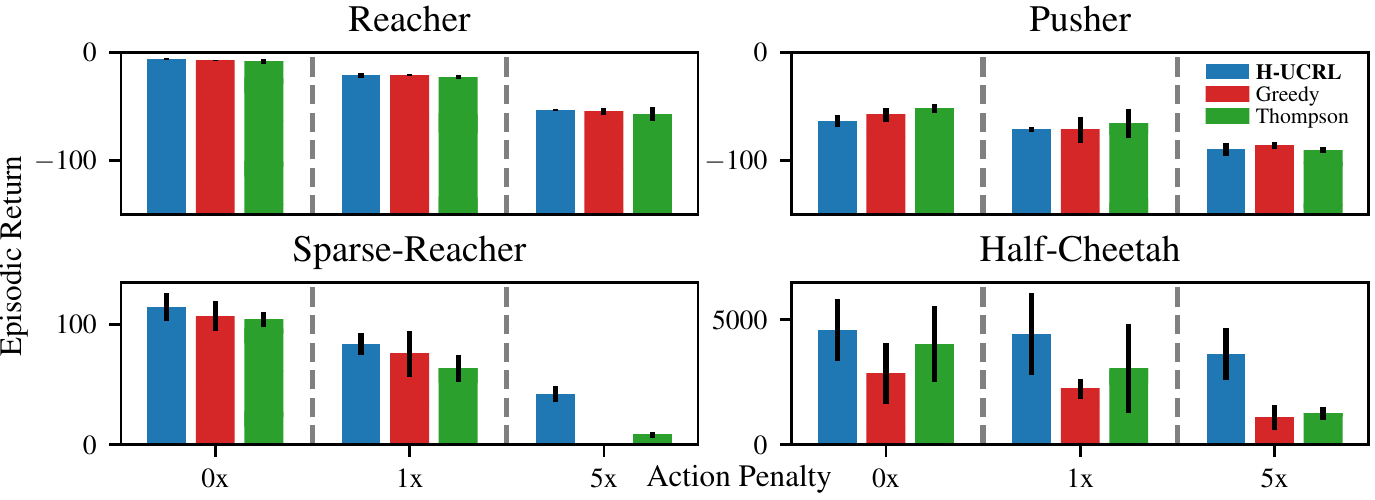}
  \caption{Mean final episodic returns on Mujoco tasks averaged over five different random seeds. For Reacher and Pusher (50 episodes), all exploration strategies perform equally. For Sparse-Reacher (50 episodes) and Half-Cheetah (250 episodes), \alg outperforms other exploration algorithms.}
  \label{fig:mujoco_final}
\end{figure}

\begin{figure}[htpb]
\includegraphics[width=\linewidth]{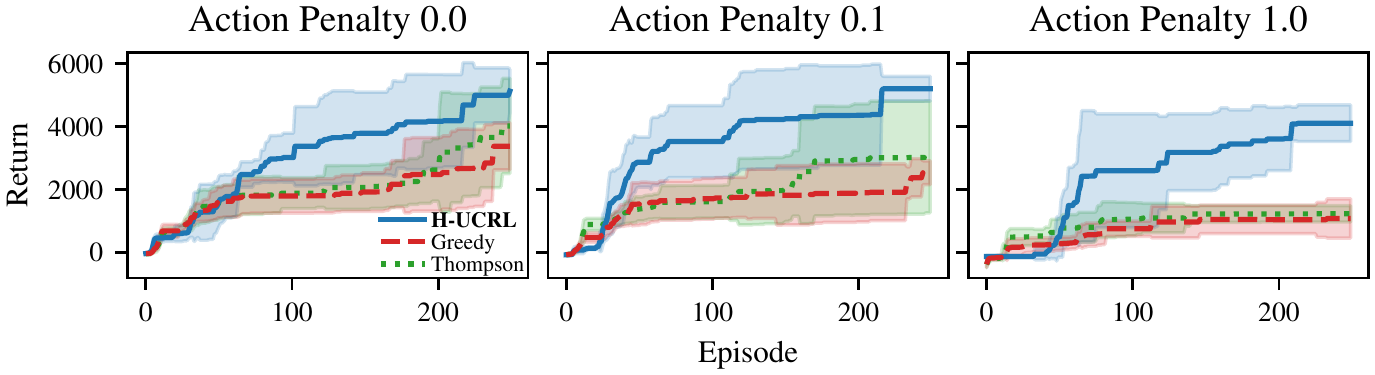}
\caption{Learning curves in Half-Cheetah environment. For all action penalties, \alg learns faster than greedy and Thompson sampling strategies. For larger action penalties, greedy and Thompson lead to insufficient exploration and get stuck in local optima with poor performance.} \label{fig:cheetah_learning_main}
\end{figure}

\paragraph{7-DOF PR2 Robot}
Next, we evaluate how \alg performs in higher-dimensional problems.
We start by comparing the Reacher and Pusher environments proposed by \citet{Chua2018Deep}. 
We plot the results in the upper left and right subplots in \cref{fig:mujoco_final}. 
The Reacher has to move the end-effector towards a goal that is randomly sampled at the beginning of each episode. 
The Pusher has to push an object towards a goal. 
The rewards and costs in these environments are quadratic. 
All exploration strategies achieve state-of-the-art performance, which seems to indicate that greedy exploitation is indeed sufficient for these tasks. 
Presumably, this is due to the over-actuated dynamics and the reward structure. 
This is in line with the theoretical results for linear-quadratic control by \citet{Mania2019CertaintyEquivalence}.

To test this hypothesis, we repeat the Reacher experiment with a sparse reward function.
We plot the results in the lower left plot of \cref{fig:mujoco_final}.
The state reward has a positive signal when the end-effector is close to the goal and the action has a non-negative signal when it is close to zero.
Here we observe that \alg outperforms alternative methods, particularly for larger action penalties.

\paragraph{Half-Cheetah}
\looseness -1 Our final experiment demonstrates \alg on a common deep-RL benchmark, the Half-Cheetah.
The goal is to make the cheetah run forward as fast as possible.
The actuators have to interact in a complex manner to achieve running.
In \cref{fig:cheetah_learning_main}, we can see a clear advantage of using \alg at different action penalties, even at zero. 
This indicates that \alg not only addresses action penalties, but also explores through complex dynamics. 
For the sake of completeness, we also show the final returns in the lower right plot of \cref{fig:mujoco_final}.

\looseness -1 
\paragraph{\alg vs. Thompson Sampling} In \Cref{ap:ThompsonSampling}, we carry out extensive experiments to empirically evaluate why Thompson sampling fails in our setting.
\citet{phan2019thompson} in the Bandit Setting and \citet{kakade2020information} in the RL setting also report that approximate Thompson sampling fails unless strong modelling priors are used.  
We believe that the poor performance of Thompson sampling relative to \alg suggests that the models that we use are sufficient to construct well-calibrated 1-step ahead confidence intervals, but do not comprise a rich enough posterior distribution for Thompson sampling.
As an example, in \alg we use the five members of the ensemble to construct the 1-step ahead confidence interval at every time-step. 
On the other hand, in Thompson sampling we sample a {\em single} model from the {\em approximate} posterior for the full horizon. 
It is possible that in some regions of the state-space one member is more optimistic than others, and in a different region the situation reverses. 
This is not only a property of ensembles, but also other approximate models such as random-feature GP models (c.f.~\Cref{ap:TS_approxGP}) exhibit the same behaviour. 
This discussion highlights the advantage of \alg over Thompson sampling using deep neural networks: \alg only requires calibrated 1-step ahead confidence intervals, and we know how to construct them (c.f.~\citet{Malik2019Calibrated}). 
Instead, Thompson sampling requires posterior models that are calibrated throughout the full trajectory.
Due to the multi-step nature of the problem, constructing {\em scalable} approximate posteriors that have enough variance to sufficiently explore is still an open problem. 
\section{Conclusions}
In this work, we introduced \alg: a practical optimistic-exploration algorithm for deep MBRL. 
The key idea is a reduction from (generally intractable) optimistic exploration to greedy exploitation in an augmented policy space.
Crucially, this insight enables the use of highly effective standard MBRL algorithms that previously were restricted to greedy exploitation and Thompson sampling.
Furthermore, we provided a theoretical analysis of \alg and show that it attains sublinear regret for some models. 
In our experiments, \alg performs as well or better than other exploration algorithms, achieving state-of-the-art performance on the evaluated tasks.

\section*{Broader Impact}
Improving sample efficiency is one of the key bottlenecks in applying reinforcement learning to real-world problems with potential major societal benefit such as personal robotics, renewable energy systems, medical decisions making, etc.
Thus, algorithmic and theoretical contributions as presented in this paper can help decrease the cost associated with optimizing RL policies. Of course, the overall RL framework is so general that potential misuse cannot be ruled out.

\begin{ack}
This project has received funding from the European Research Council (ERC) under the European Unions Horizon 2020 research and innovation program grant agreement No 815943. It was also supported by a fellowship from the Open Philanthropy Project.
\end{ack}

\bibliographystyle{plainnat}
\bibliography{thesis.bib}

\clearpage
\appendix

\part{Appendix} 

The following table provides an overview of the appendix.
\noptcrule 
\parttoc 


\clearpage

\section{Expected Performance for Exploration in the Bandit Setting}
\label{sec:expected_performance_bandit}
\label{ap:expected_performance_bandit}

In practice, one of the most commonly used exploration strategies is to select $\btheta_\ni$ in order to maximize the expected performance over the aleatoric uncertainty \emph{and epistemic uncertainty} induced by the Gaussian process model. 

We consider the simplest possible case that still allows for nonlinear dynamics. That is, we consider a system with zero-mean noise, i.e., $\E{\noise_\ti = \mb{0}}$ for all time steps $\ti \geq 0$. In addition, we consider a one-dimensional system, $\nstate = 1$, with a linear (convex/concave) reward function $r(\x, \u) = \x$, a constant feedback policy $\pi(\x) = \btheta$ that is parameterized by some parameters $\btheta$, and a time horizon of one step, $\Ti=1$. With these simplifying assumptions, the performance estimate $J(f, \pi)$ in \cref{eq:performance_estimator} reduces to
\begin{equation}
\begin{aligned}
    \label{eq:performance_estimator_bandit_setting}
    J(\tilde{f}, \pi) &= \E[\noise_{0:\Ti-1}]{\sum_{\ti =0}^\Ti r(\xo_\ti, \pi(\xo_\ti) ) \, \bigg| \, \x_0},
    \quad \st \xo_{\ti + 1} = \tilde{f}(\xo_\ti, \pi(\xo_\ti)) + \noise_\ti ,\\
    &= \E[\noise_{0:\Ti-1}]{
        \sum_{\ti =0}^\Ti r(\xo_\ti, \pi(\xo_\ti))
        \, \bigg| \, \x_0
    },
    \quad \st \xo_{\ti + 1} = \tilde{f}(\xo_\ti, \btheta)  + \noise_\ti ,
    &&\text{($\pi(\x)=\btheta$)} \\
    &= \E[\noise_{0:\Ti-1}]{
        \sum_{\ti =0}^\Ti \xo_\ti 
        \, \bigg| \, \x_0
    },
    \quad \st \xo_{\ti + 1} = \tilde{f}(\xo_\ti, \btheta)  + \noise_\ti , &&\text{($\nstate=1$, $r(\x, \u) = \x$)} \\
    &= \E[\noise_{0}]{
        \x_0 + \xo_1 
        \, \bigg| \, \x_0
    },
    \quad \st \xo_{1} = \tilde{f}(\x_0, \btheta) + \noise_0 , &&\text{($\Ti=1$)} \\
    &= \x_0 + \tilde{f}(\x_0, \btheta) + \E[\noise_0]{\noise_0} ,\\
    &= \x_0 + \tilde{f}(\x_0, \btheta), &&\text{($\E{\noise} = 0$)}
\end{aligned}
\end{equation}
so that the overall goal of model-based reinforcement learning in \cref{eq:exploration:regret:optimal_parameters} becomes 
\begin{align}
    \label{eq:bandit_optimal_parameters}
    \btheta^* &= \argmax_{\pi_\btheta } J(f, \pi_\btheta) ,\\ 
    &= \argmax_{\btheta} \x_0 + f(\x_0, \btheta) ,\\
    &= \argmax_{\btheta} f(\x_0, \btheta) .
\end{align}
This is the simplest possible scenario and reduces the optimal control problem in \cref{eq:expected_performance_exploration} to the bandit problem, where want to maximize an unknown function $f$ that depends on parameters $\btheta$ together with a fixed context $\x_0$ that does not impact the solution of the problem.

Algorithms that model the unknown function $f$ in \cref{eq:bandit_optimal_parameters} with a probabilistic model $\modelposterior$ based on noisy observations in $\dataset_\ni$ are called Bayesian optimization algorithms \citep{Brochu2010Tutorial}. In this special case of model-based reinforcement learning, the expected performance objective \cref{eq:expected_performance_exploration} reduces to 
\begin{align}
  \label{eq:expected_performance_exploration_bandit}
    \btheta_\ni &= \argmax_{\btheta} \E[\tilde{f} \sim \modelposterior]{ J(\tilde{f}, \pi_\btheta)} ,\\
    &= \argmax_{\btheta} \E[\tilde{f} \sim \modelposterior]{ \x_0 + \tilde{f}(\x_0, \btheta) } ,\\
    &= \argmax_{\btheta} \E[\tilde{f} \sim \modelposterior]{ \tilde{f}(\x_0, \btheta) } ,\\
    &= \argmax_{\btheta} \bmu_{\ni-1}(\x_0, \btheta) .
\end{align}
Thus the expected performance objective selects parameters $\btheta_\ni$ that maximize the posterior mean estimate of $f$ according to \modelposterior. This may seem natural, since the linear reward function encourages states that are as large as possible. However, in the Bayesian optimization literature \cref{eq:expected_performance_exploration_bandit} is equivalent to the \textsc{UCB} strategy with $\beta_\ni = 0$. This is a greedy algorithm that is well-known to get stuck in local optima \citep{Srinivas2012Gaussian}.

This is illustrated in \cref{fig:expected_performance_comparison}: We use a Gaussian process model for $f$ and use \cref{eq:expected_performance_exploration_bandit}, which means we set $\beta=0$ in the \textsc{GP-UCB} algorithm. As a result, we obtain optimization behaviors as in \cref{fig:expected_performance_comparison_beta_0}. The first evaluation that achieves performance higher than the expected prior performance (in our case, zero), is evaluated repeatedly (orange crosses). However, this can correspond to a local optimum of the true, unknown objective function (black dashed). In contrast, if we use an optimistic algorithm and set $\beta=2$, \textsc{GP-UCB} evaluates parameters with close-to-optimal performance.

As a consequence of this counter-example, it is clear that we cannot expect the expected performance exploration criterion in \cref{eq:expected_performance_exploration} to yield regret guarantees for exploration \emph{in the general case}. However, under the additional assumption of linear dynamics, \citet{Mania2019CertaintyEquivalence} show that the algorithm is no-regret. More empirically, \citet[Section 6.1]{Deisenroth2014Gaussian} discuss how to choose specific reward functions that tend to encourage high-variance transitions and thus exploration. However, it is unclear how such an approach can be analyzed theoretically and we would prefer to avoid reward-shaping to encourage exploration.

\begin{figure}
  \begin{subfigure}[t]{0.49\textwidth}
    \includegraphics[width=\linewidth]{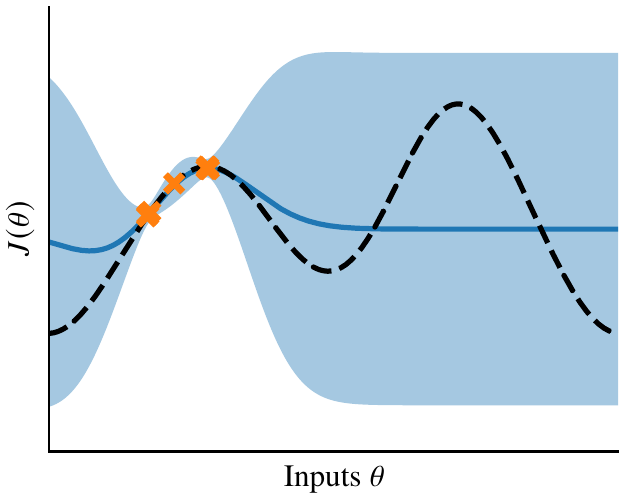}
    \caption{$\beta_\ni = 0.$}
    \label{fig:expected_performance_comparison_beta_0}
  \end{subfigure}%
  \begin{subfigure}[t]{0.49\textwidth}
    \includegraphics[width=\linewidth]{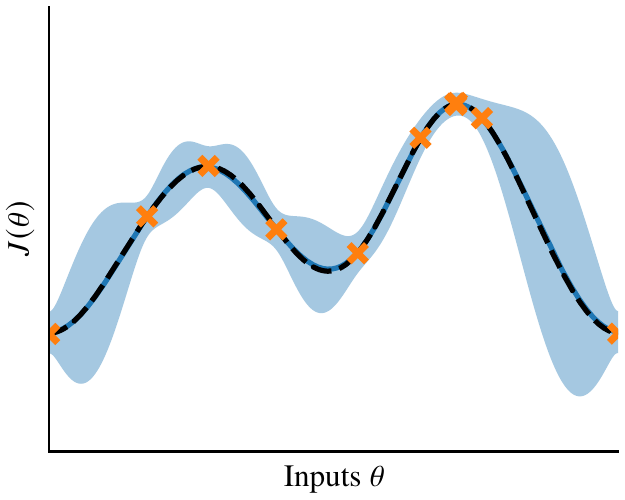}
    \caption{$\beta_\ni = 2.$}
    \label{fig:expected_performance_comparison_beta_2}
  \end{subfigure}
  \caption{Comparison of the \textsc{GP-UCB} algorithm with two different constants for $\beta_\ni$. The expected performance objective in \cref{eq:expected_performance_exploration_bandit} is equivalent setting to $\beta = 0$ in \cref{fig:expected_performance_comparison_beta_0}. The algorithm gets stuck and repeatedly evaluates inputs (orange crosses) at a local optimum of the true objective function (black dashed). This is due to the mean function (blue line) achieving higher values than the prior expected performance of zero. In contrast, an optimistic algorithm with $\beta=2$ in \cref{fig:expected_performance_comparison_beta_2} determines close-to-optimal parameters after few evaluations.}
  \label{fig:expected_performance_comparison}
\end{figure}


\section{Extended Experiments}
\label{ap:extended_experiments}

\subsection{Experimental Setup}

\paragraph{Models}
We consider ensembles of Probabilistic Neural Networks (PE) as in \citet{Chua2018Deep} and Gaussian Process (GP) Models for the inverted pendulum as in \citet{Kamthe2018DataEfficient}.
For GPs, we use the predictive variance estimate as $\bSigma_{\ni-1}(\x, \u)$
For Ensembles, we approximate the output of the ensemble with a Gaussian as suggested by \citet{Lakshminarayanan2017Simple} and use its predictive mean and variance as $\bmu_{\ni-1}(\x, \u)$ and $\bSigma_{\ni-1}(\x, \u)$.

\paragraph{Model Selection (Training)}
For GPs we do not optimize the Hyper-parameters as this is prone to getting stuck to local minima \citep{Bull2011Convergence}. Advanced methods to avoid this problem, such as those proposed by \citet{Berkenkamp2019NoRegret}, are left for future work. 
For Ensembles, we train each ensemble separately using Adam \citep{Kingma2014Adam}. 
We assign a transition to each ensemble member sampling from a Poisson distribution $\text{Poi}(1)$ \citep{Osband2016Deep}. 
This is an asymptotic approximation to the Bootstrap. 

\paragraph{Approximate Thompson Sampling}
We do not consider a Thompson sampling variant of Exact GPs due to the computational complexity. 
For PE, we sample at the beginning of each episode a head and use \emph{only} this head for optimizing the policy as in \citet{Lu2017Ensemble}. 

\paragraph{Trajectory Sampling}
For greedy exploitation, we propagate particles and the next-state distribution is given by the ensemble (or GP) output at the current particle location. 
This is the PE-DS algorithm from \citet{Chua2018Deep}, which has comparable performance to PE-TS1 and PE-TS$\infty$. 
We use this algorithm because it has the same predictive uncertainty used by \alg. 

\paragraph{Policy Search and Planning Algorithm}
For experiments, we use a modification of MPO \citep{Abdolmaleki2018MPPO} with Hallucinated Data Augmentation to simulate data and Hallucinated Value Expansion to compute targets as the $\texttt{PolicySearch}$ algorithm. 
As the resulting algorithm is on-policy, we only learn a value function as critic. 
The planning algorithm is implemented using Dyna-MPC from \Cref{alg:dyna-mpc}.
We update the sampling distribution using the Cross-Entropy Method from \citet{Botev2013CEM}. 
We provide an open-source implementation of our method, which is available at \url{http://github.com/sebascuri/hucrl} that builds upon the RL-LIB library from \citet{Curi2020RLLib}, based on pytorch \citep{paszke2017automatic}. 

\subsection{Environment Description and Learning Curves}

\FloatBarrier
\subsubsection{Swing-Up Inverted Pendulum}
The pendulum has $\nstate=2$ and $\ninp=1$, with actions bounded in $[-1, 1]$ and each episode lasts 400 time steps.. We transform the angles to a quaternion representation via $[\sin(\theta), \cos(\theta)]$. The pendulum starts at $\theta_0 = \pi$, $\omega_0 = 0$ and the objective is to swing it up to $\theta_0 = 0$, $\omega_0 = 0$. The reward function is $r(\theta, \omega, \u) = r_\theta \cdot r_\omega + \rho r_\u$, where $r_\theta = \textsc{tolerance}(\cos(\theta), \text{bounds}=(0.95, 1.), \text{margin}=0.1)$, $r_\omega = \textsc{tolerance}(\omega, \text{bounds}=(-0.5, 0.5), \text{margin}=0.5)$, and $r_\u = \textsc{tolerance}(\u, \text{bounds}=(-0.1, 0.1), \text{margin}=0.1) -1$. The $\textsc{tolerance}$ is defined in \citet{Tassa2018DeepMind}. In \cref{fig:inverted_evolution} we show the learning curve of the PE model for five different random seeds. \alg finds quickly a swing-up maneuvere even with high action penalties.  
        
\begin{figure}[ht]
  \includegraphics[width=\linewidth]{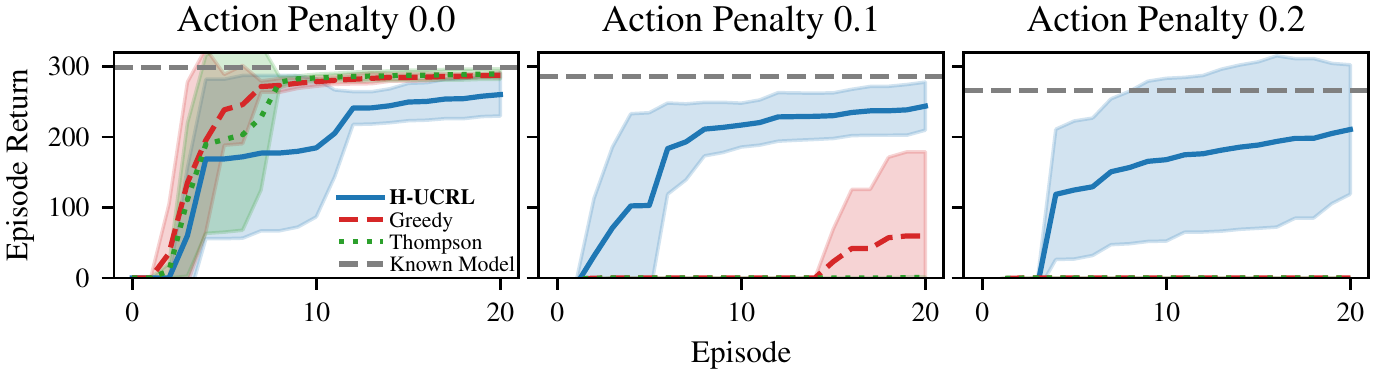}
  \caption{Learning curves of the inverted pendulum. \alg outperforms other algorithms during learning.}
  \label{fig:inverted_evolution}
\end{figure}

\FloatBarrier
\subsubsection{Mujoco Cart Pole}
We repeat the experiment in a easy environment, the Mujoco Cart Pole. 
The cart-pole has $\nstate=4$ and $\ninp=1$, with actions bounded in $[-3, 3]$ and each episode lasts 200 time steps.
We transform the angles to a quaternion representation via $[\sin(\theta), \cos(\theta)]$.
The cart-pole starts from $(0,0,0,0) + \omega$, where $\omega$ is a zero-mean normal noise with $0.1$ standard deviation. The goal is to upswing and stabilize the end-effector at position $x=0$. The reward is given by $r = e^{-\sum_{i=x,y}\text{ee}^2_i / 0.6^2} - \rho \u^2$, where $\text{ee}$ is vector of coordinates of the end-effector. 
Here we see again that, as the action penalty increases, expected and Thompson sampling do not find a swing-up maneuver. We plot the final results together with the learning curves in \cref{fig:mujoco_cartpole}. 

\begin{figure}[ht]
    \centering
    \begin{subfigure}{\textwidth}
    \centering
        \includegraphics[width=\linewidth]{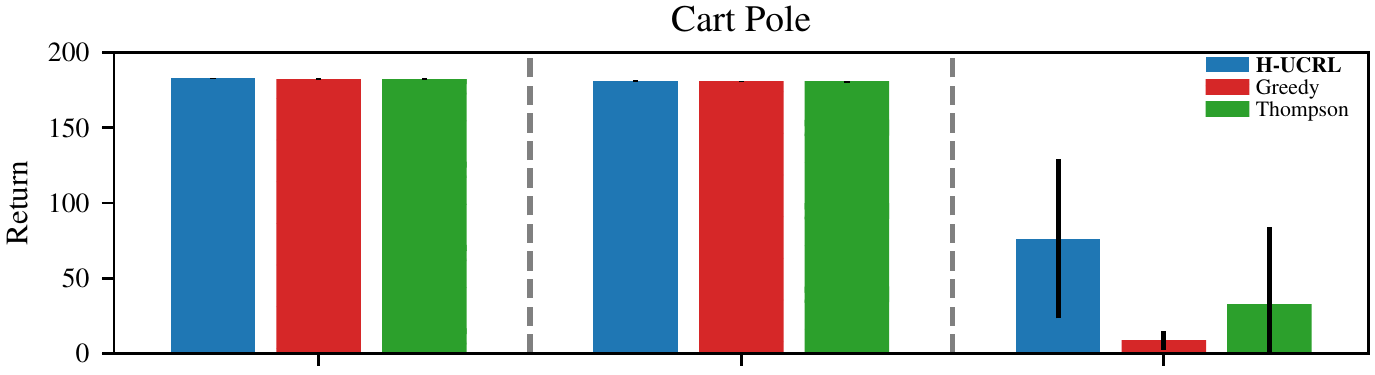}
    \end{subfigure} \\ 
    \begin{subfigure}{\textwidth}
    \centering
        \includegraphics[width=\linewidth]{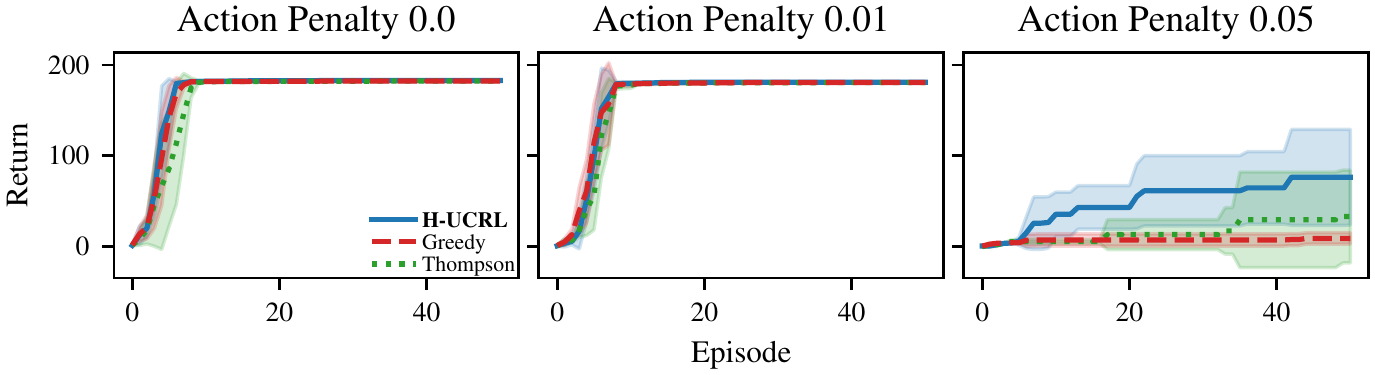}
    \end{subfigure}
  \caption{Top: Final episodic return in Cart-Pole environment. 
  Bottom: Learning curves in Cart-Pole environment. 
  For action penalty = 0.05, \alg outperforms other algorithms.
  For action penalty=0.2 already after the fifth episode it finds a swing-up maneuver. Thompson sampling finds it in only one run after the thirtyfifth episode.
  }
  \label{fig:mujoco_cartpole}
\end{figure}

\FloatBarrier
\subsubsection{Reacher}
The Reacher is a 7DOF robot with $\nstate=14$ and $\ninp=7$, with actions bounded in $[-20, 20]^\ninp$ and each episode lasts 150 time steps. The goal is sampled at location $(x, y, z) = (0, 0.25, 0) + \omega$, where $\omega$ is a zero-mean normal noise with $0.1$ standard deviation.
We transform the angles to a quaternion representation via $[\sin(\theta), \cos(\theta)]$. 
The goal is to move the end-effector towards the goal and the reward signal is given by $r = - \sum_{i =x,y,z}(\text{ee} - \text{goal})_i^2 - \rho \sum_{i=1}^7 \u_i^2$, where $\text{ee} - \text{goal}$ is the vector that measures the distance between the end-effector and the goal.
We show the results in \cref{fig:mujoco_reacher}. All algorithms perform equally for different action penalties.

\begin{figure}[ht]
    \centering
    \begin{subfigure}{\textwidth}
    \centering
        \includegraphics[width=\linewidth]{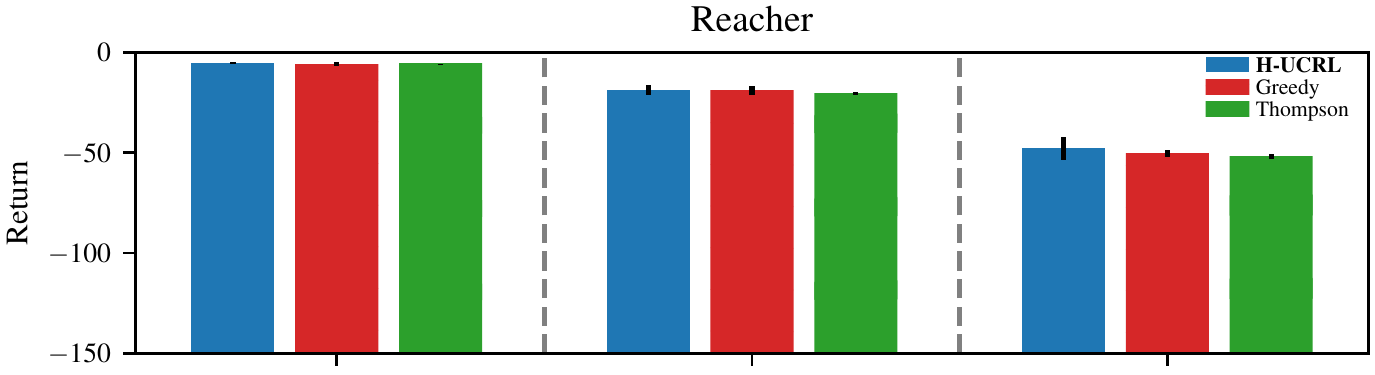}
    \end{subfigure} \\ 
    \begin{subfigure}{\textwidth}
    \centering
        \includegraphics[width=\linewidth]{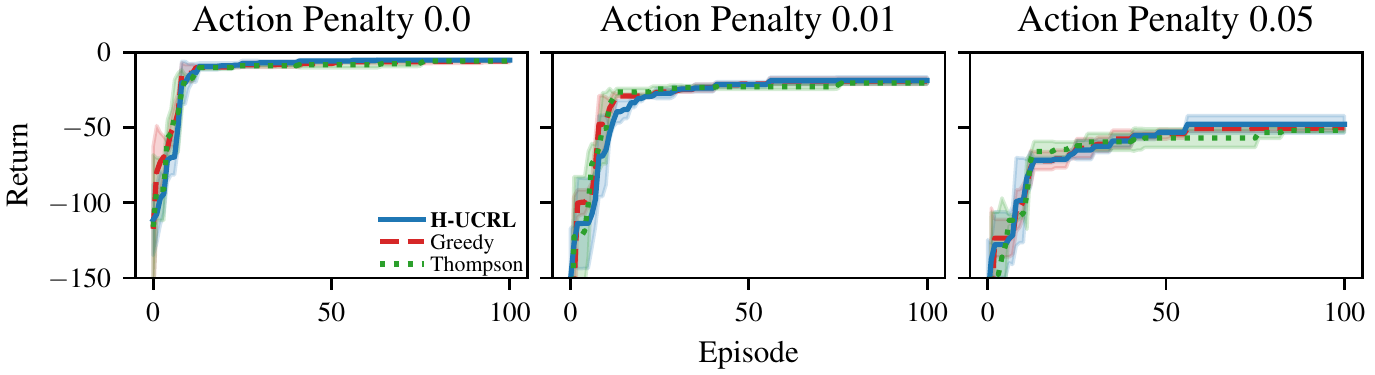}
    \end{subfigure}
  \caption{Top: Final episodic return in Reacher environment. 
  Bottom: Learning curves in Reacher environment. 
  Greedy, Thompson sampling, and \alg perform equally well.}
  \label{fig:mujoco_reacher}
\end{figure}

\FloatBarrier
\subsubsection{Pusher}
The Pusher is also a 7DOF robot with $\nstate=14$ and $\ninp=7$, with action bounds in $[-2, 2]^\ninp$ and each episode lasts 150 time steps. The object is free to move, introducing 3 more states to the environment. 
The robot starts with zero angles, an angular velocity sampled uniformly at random from $[-0.005, 0.005]$, the object is sampled from $(x,y)=(-0.25, 0.15) + \omega$, where $\omega$ is a zero-mean normal noise with $0.025$ standard deviation. The objective is to push the object towards the goal at $(x, y) = (0, 0)$. 
The reward signal is given by $r = - 0.5\sum_{i =x,y,z}(\text{ee} - \text{obj})_i^2 - 1.25 \sum_{i =x,y,z}(\text{obj} - \text{goal})_i^2 - \rho \sum_{i=1}^7  \u_i^2$, where $\text{ee} - \text{obj}$ is the distance between the end-effector and the object and $\text{obj} - \text{goal}$ is the distance between the object and the goal. We show the results in \cref{fig:mujoco_pusher}. All algorithms perform equally for different action penalties.

\begin{figure}[ht]
    \centering
    \begin{subfigure}{\textwidth}
    \centering
        \includegraphics[width=\linewidth]{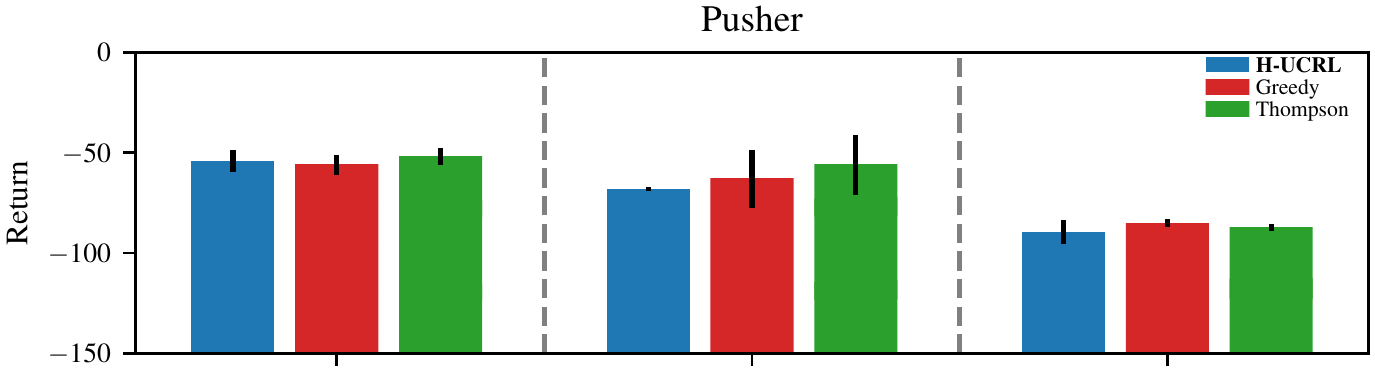}
    \end{subfigure} \\ 
    \begin{subfigure}{\textwidth}
    \centering
        \includegraphics[width=\linewidth]{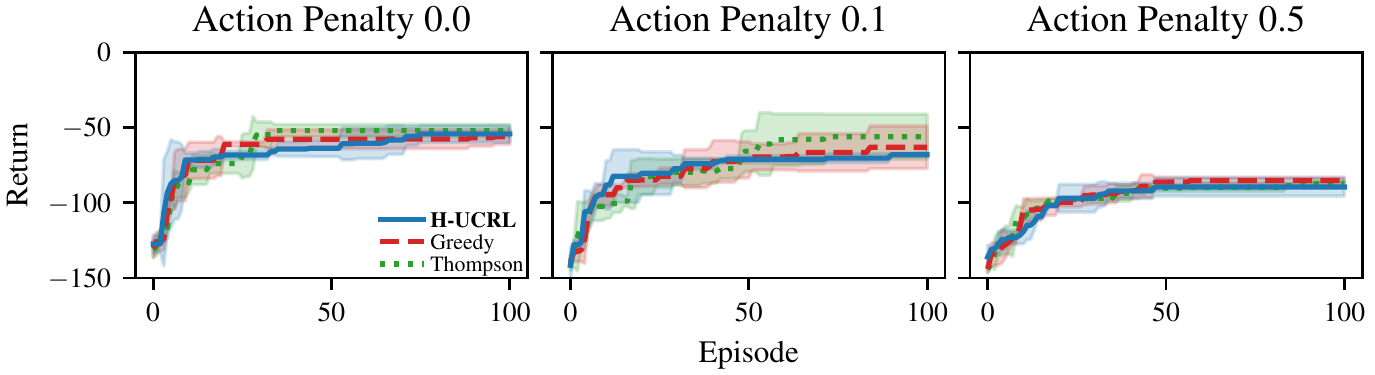}
    \end{subfigure}
  \caption{Top: Final episodic return in Pusher environment. 
  Bottom: Learning curves in Pusher environment. 
  Greedy, Thompson sampling, and \alg perform equally well.
  }
  \label{fig:mujoco_pusher}
\end{figure}

\FloatBarrier
\subsubsection{Sparse Reacher}
The sparse Reacher is the same 7DOF robot as the Reacher with $\nstate=14$ and $\ninp=7$, with actions bounded in $[-20, 20]^\ninp$ and each episode lasts 150 time steps. The sole difference arises in the reward function, which is given by $r = e^{-\sum_{i=x,y,z}  (\text{ee} - \text{goal})_i^2/0.45^2 } + \rho (e^{-\sum_{i=1}^7  \u_i^2} - 1)$. 
We show the results in \cref{fig:mujoco_reacher_sparse}. \alg performs better than Greedy and Thompson, particularly for larger action penalties.

\begin{figure}[ht]
    \centering
    \begin{subfigure}{\textwidth}
    \centering
        \includegraphics[width=\linewidth]{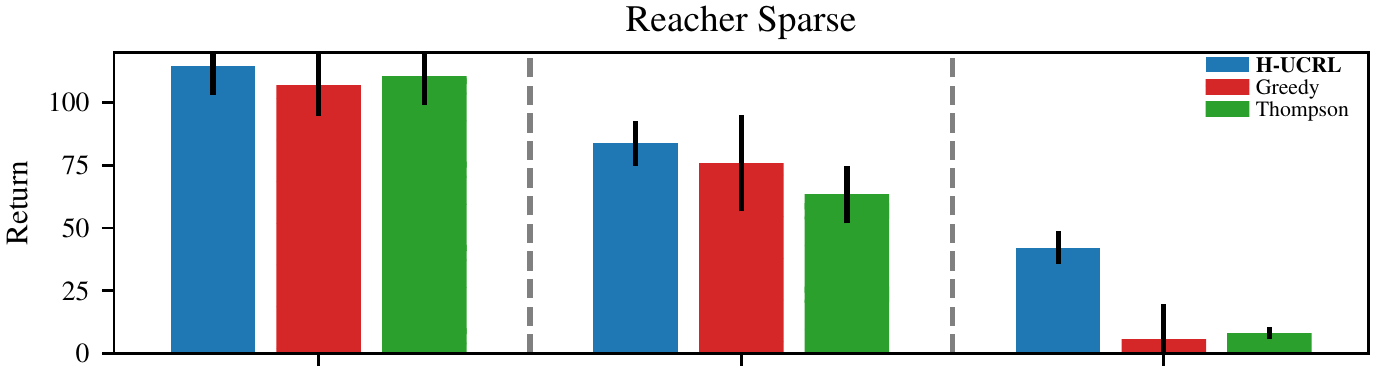}
    \end{subfigure} \\ 
    \begin{subfigure}{\textwidth}
    \centering
        \includegraphics[width=\linewidth]{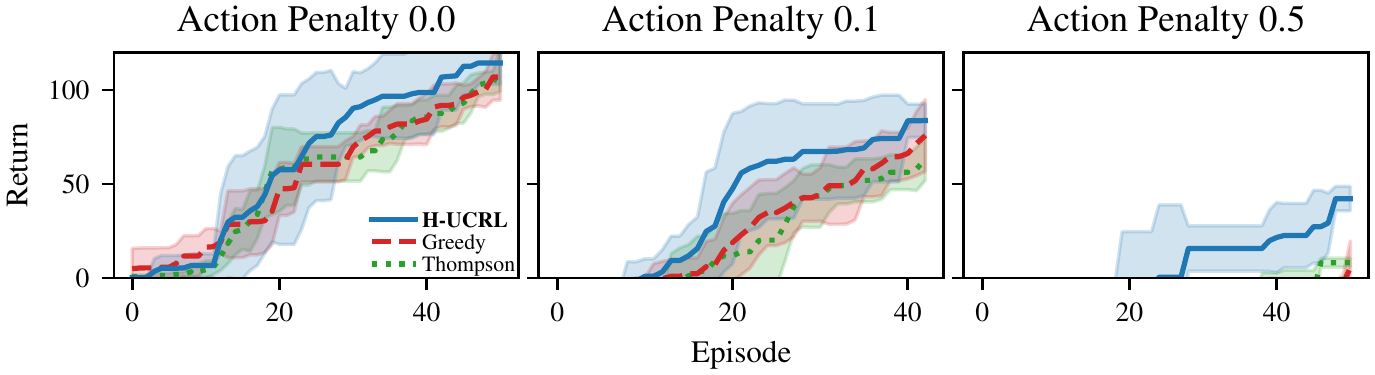}
    \end{subfigure}
  \caption{Top: Final episodic return in sparse Reacher environment. 
  Bottom: Learning curves in sparse Reacher environment. 
  \alg outperforms greedy and Thompson sampling, particularly when the action penalty increases.}
  \label{fig:mujoco_reacher_sparse}
\end{figure}

\FloatBarrier
\subsubsection{Half-Cheetah}
The Half-Cheetah is a mobile robot with $\nstate=17$ and $\ninp=6$, with actions bounded in $[-2, 2]^\ninp$ and each episode lasts 1000 time steps. The objective is to make the cheetah run as fast as possible forwards up to a maximum of $10 \text{m/s}$. The reward function is given by $r = \max(v, 10)$. 
We show the results in \cref{fig:mujoco_half_cheetah}. \alg performs finds quicker policies with higher returns and, when the action penalty is 1, it outperforms greedy and Thompson sampling considerably.

\begin{figure}[ht]
    \centering
    \begin{subfigure}{\textwidth}
    \centering
        \includegraphics[width=\linewidth]{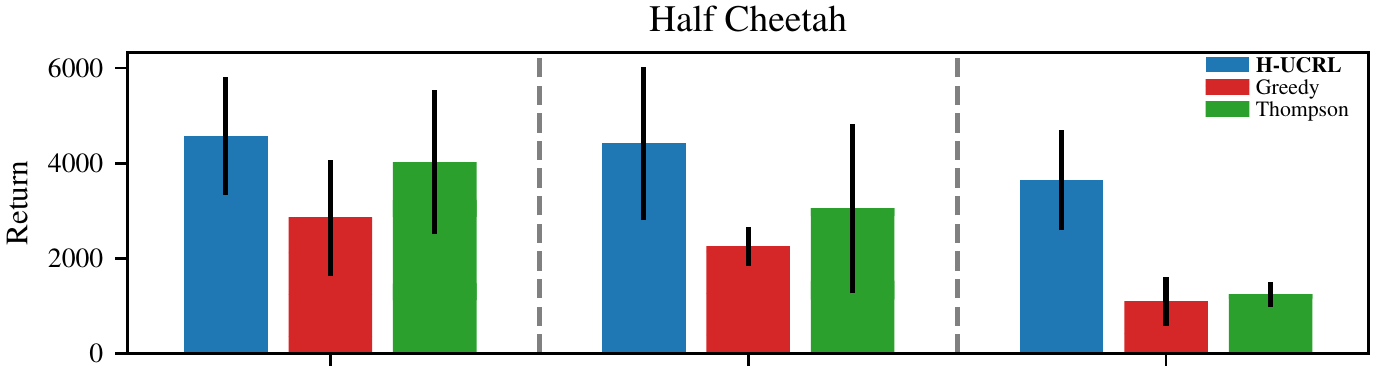}
    \end{subfigure} \\ 
    \begin{subfigure}{\textwidth}
    \centering
        \includegraphics[width=\linewidth]{figures/half_cheetah_evolution.pdf}
    \end{subfigure}
  \caption{Top: Final episodic return in Half-Cheetah environment. 
  Bottom: Learning curves in Half-Cheetah environment. 
  \alg outperforms greedy and Thompson sampling, particularly when the actoin penalty increases.}
  \label{fig:mujoco_half_cheetah}
\end{figure}

\FloatBarrier
\subsection{Visualization of Real and Simulated Trajectories for Inverted Pendulum}
In this section, we visualize the optimistic trajectory for the inverted pendulum problem. We plot the real and simulated trajectories using \alg in \cref{fig:optimistic_0,fig:optimistic_0.1,fig:optimistic_0.2} with increasing action penalties. 

\subsubsection{\alg Trajectories}
Already in the first episode, the \alg finds an optimistic trajectory to reach the goal (0, 0) position. 
With more episodes, it learns the dynamics and simulated and real trajectories match. 
As the action penalty increases, the action magnitude decreases and it takes longer for the algorithm to find a swing-up trajectory. 

\begin{figure}[H]
\centering
\begin{subfigure}{0.5\textwidth}
\centering
    \includegraphics[width=\linewidth]{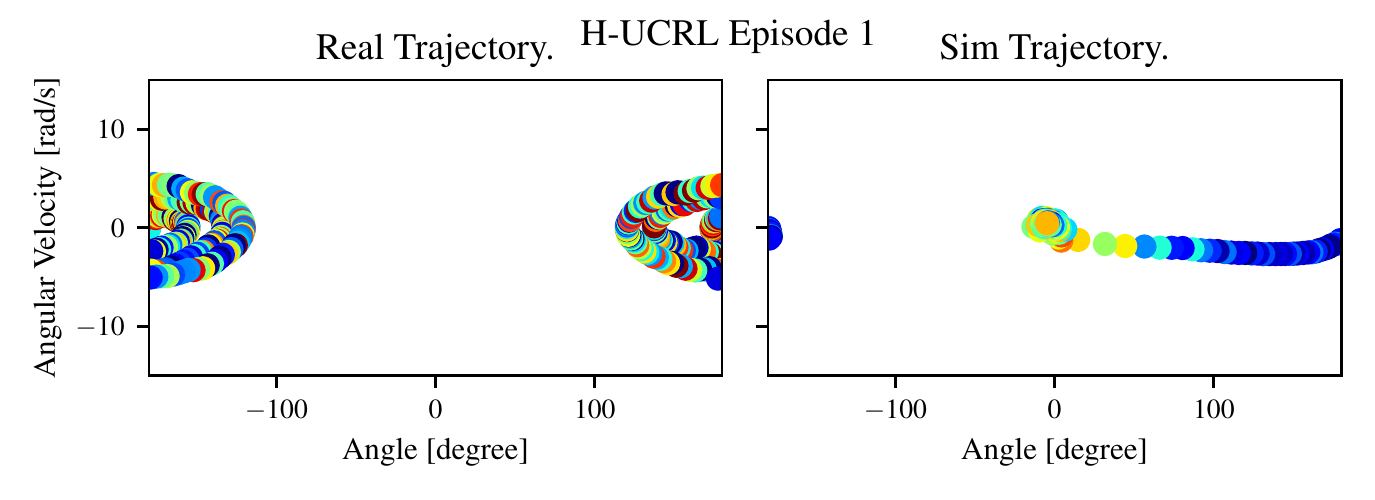}
\end{subfigure}%
\begin{subfigure}{0.5\textwidth}
\centering
    \includegraphics[width=\linewidth]{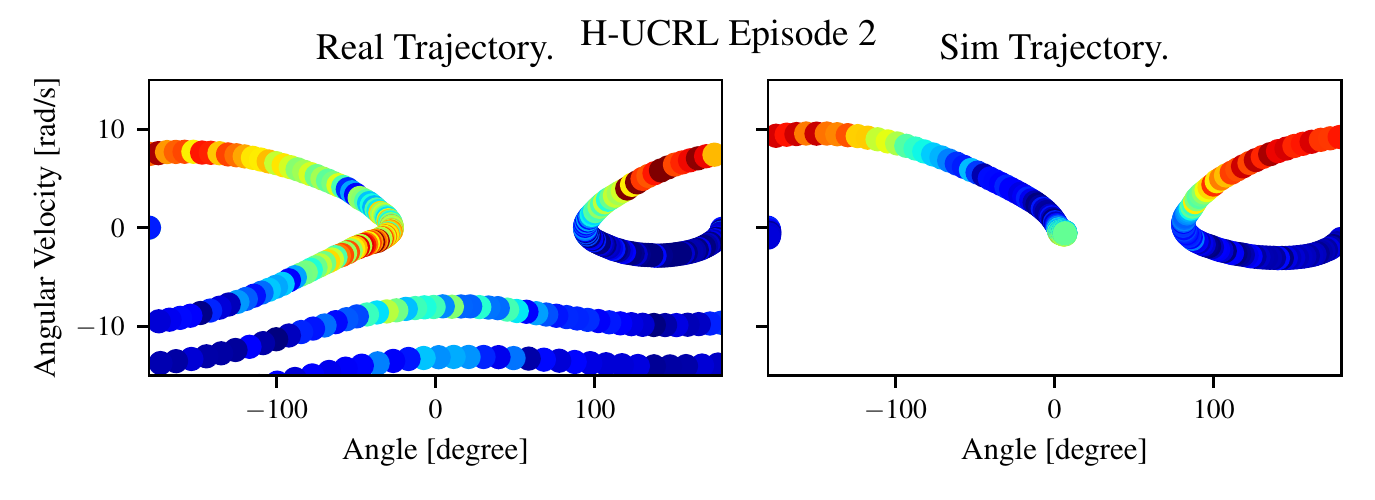}
\end{subfigure} \\ 
\begin{subfigure}{0.5\textwidth}
\centering
    \includegraphics[width=\linewidth]{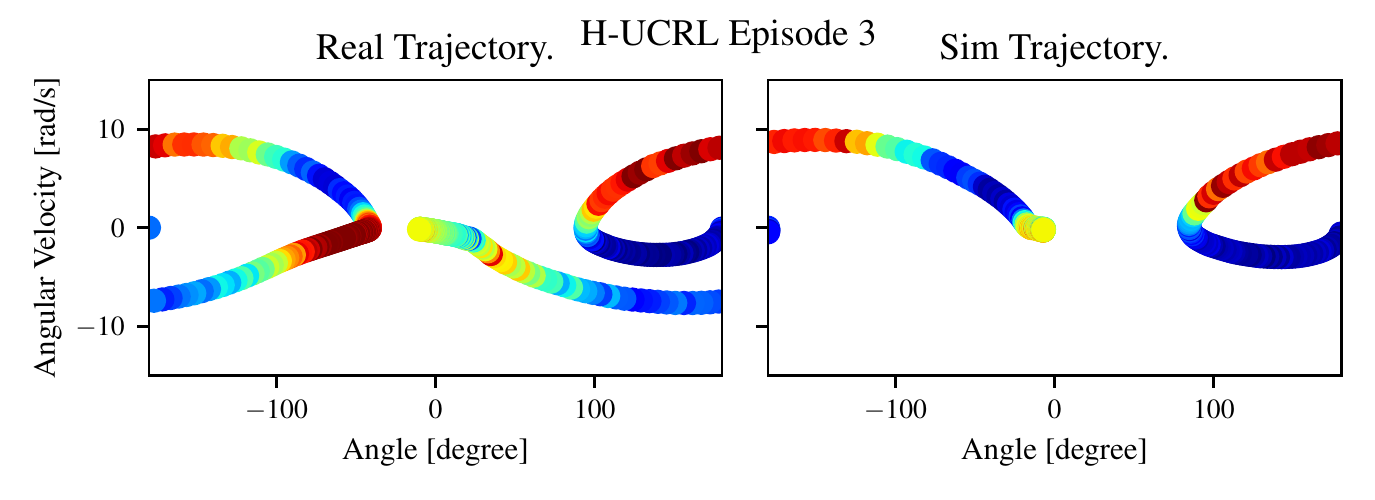}
\end{subfigure}%
\begin{subfigure}{0.5\textwidth}
\centering
    \includegraphics[width=\linewidth]{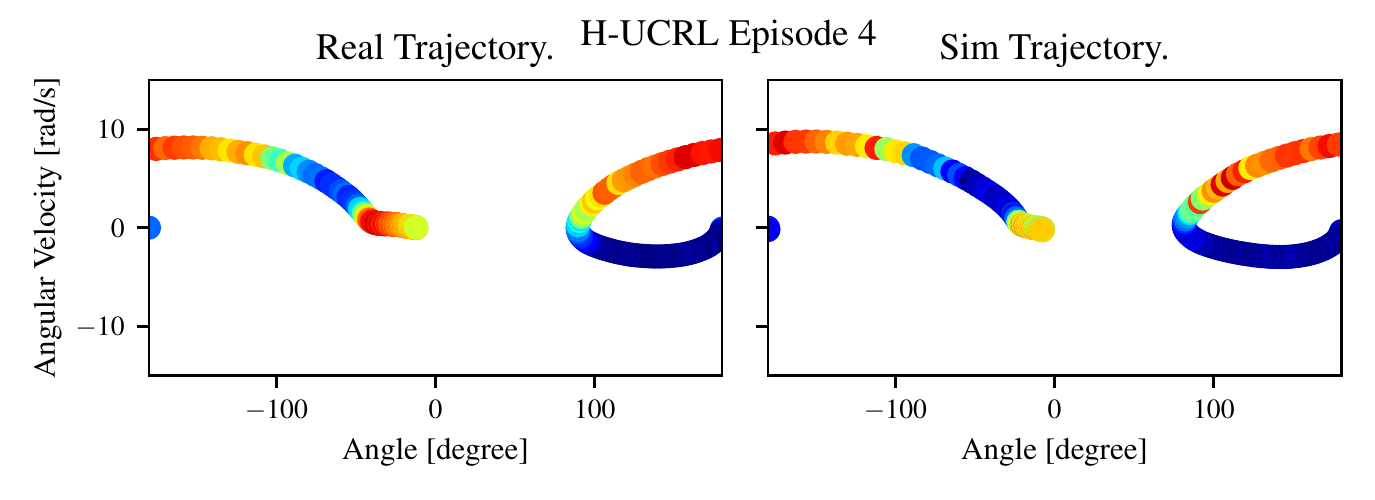}
\end{subfigure} \\
\begin{subfigure}{0.5\textwidth}
\centering
    \includegraphics[width=\linewidth]{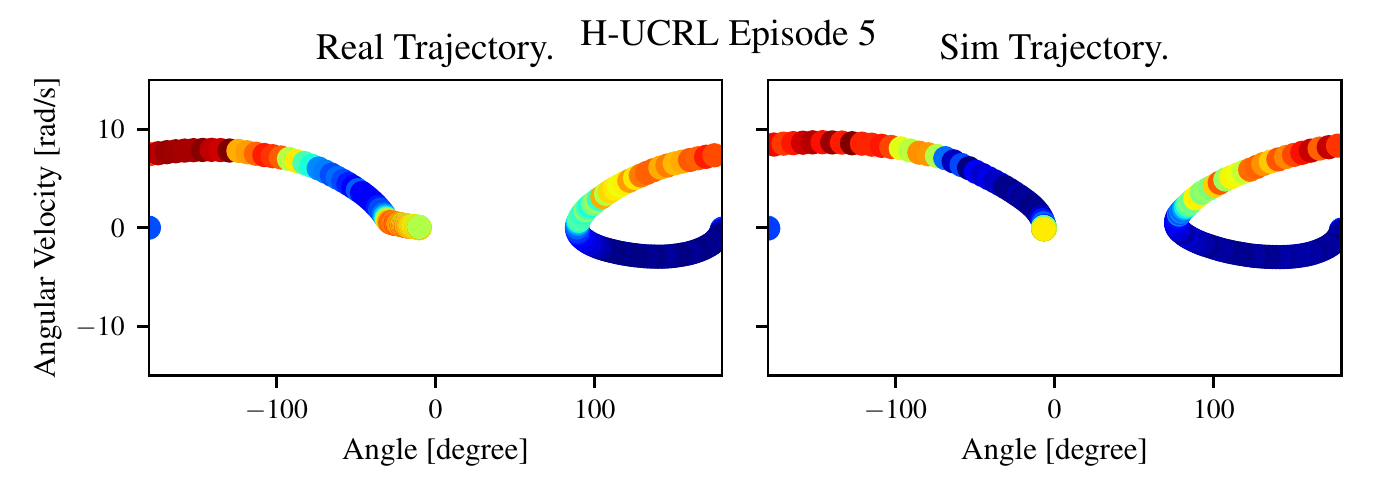}
\end{subfigure}%
\begin{subfigure}{0.5\textwidth}
\centering
    \includegraphics[width=\linewidth]{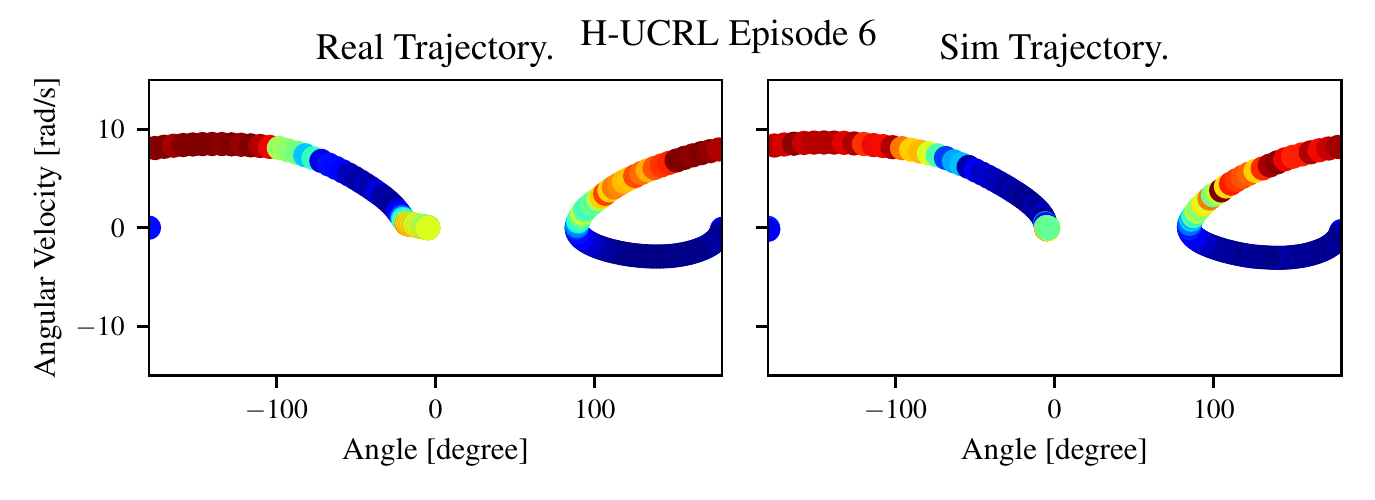}
\end{subfigure}
\caption{Real and simulated trajectories for first 6 episodes with \alg (0 action penalty).
We plot the trajectory in phase space, and use color coding to denote the action magnitude.}
\label{fig:optimistic_0}
\end{figure}

\begin{figure}[H]
\centering
\begin{subfigure}{0.5\textwidth}
\centering
    \includegraphics[width=\linewidth]{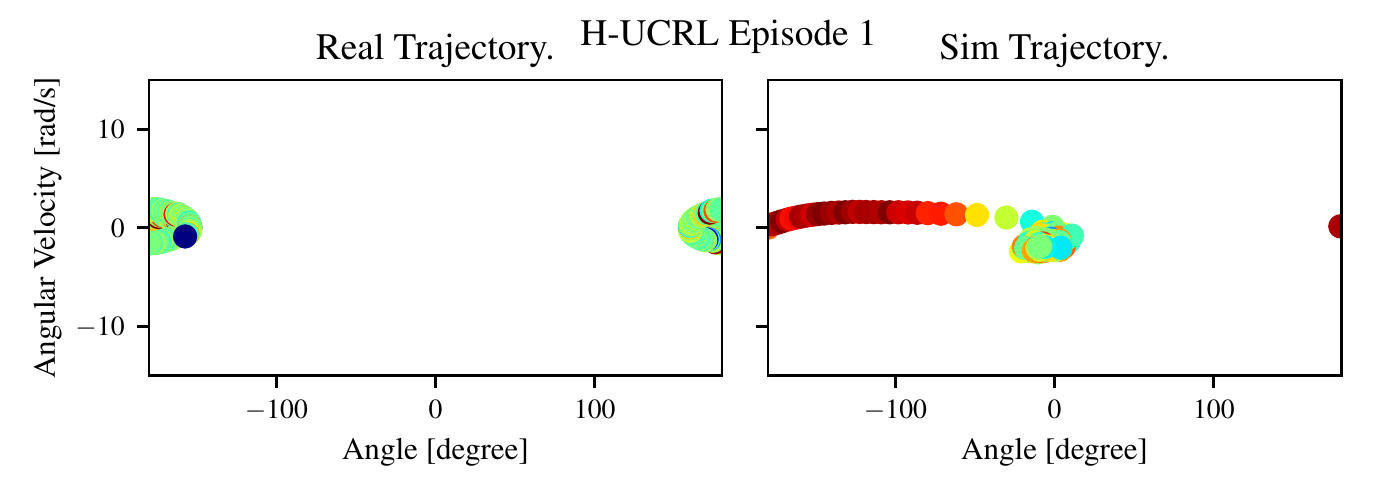}
\end{subfigure}%
\begin{subfigure}{0.5\textwidth}
\centering
    \includegraphics[width=\linewidth]{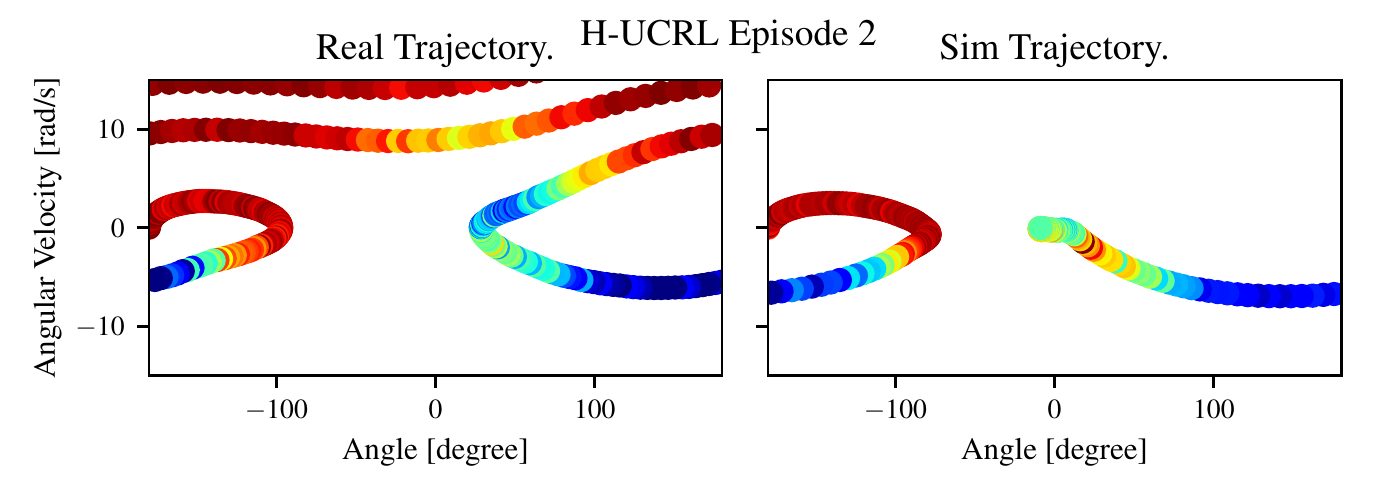}
\end{subfigure} \\ 
\begin{subfigure}{0.5\textwidth}
\centering
    \includegraphics[width=\linewidth]{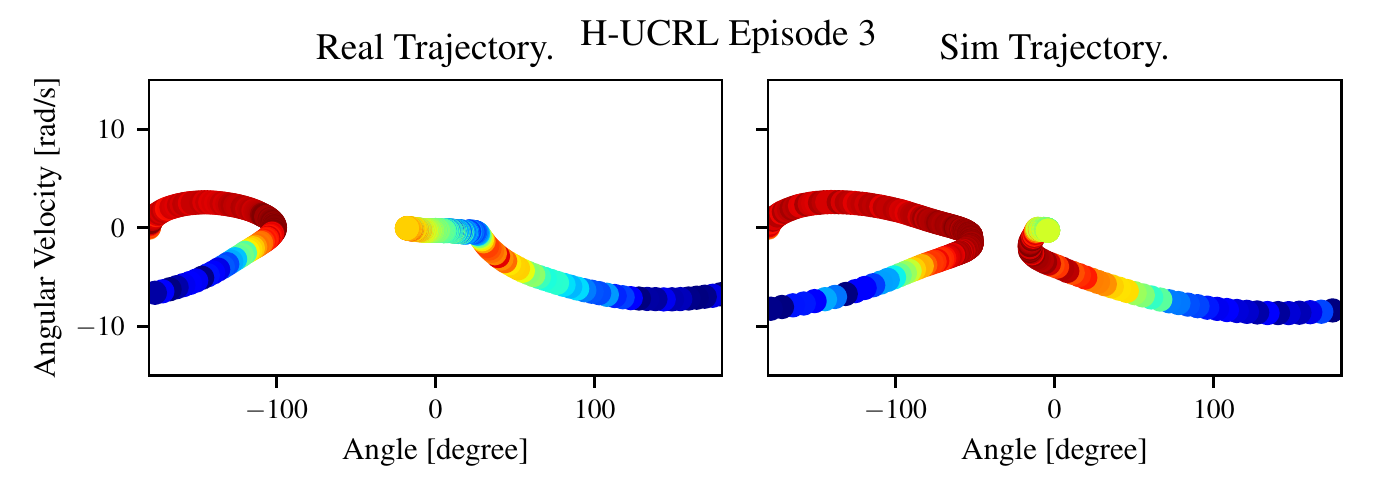}
\end{subfigure}%
\begin{subfigure}{0.5\textwidth}
\centering
    \includegraphics[width=\linewidth]{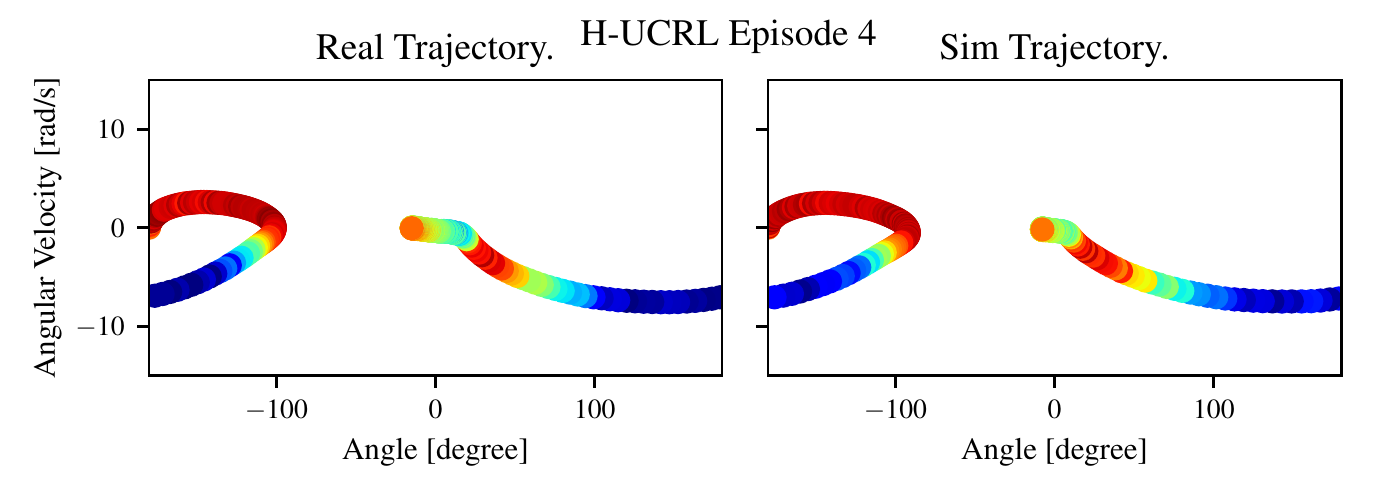}
\end{subfigure} \\
\begin{subfigure}{0.5\textwidth}
\centering
    \includegraphics[width=\linewidth]{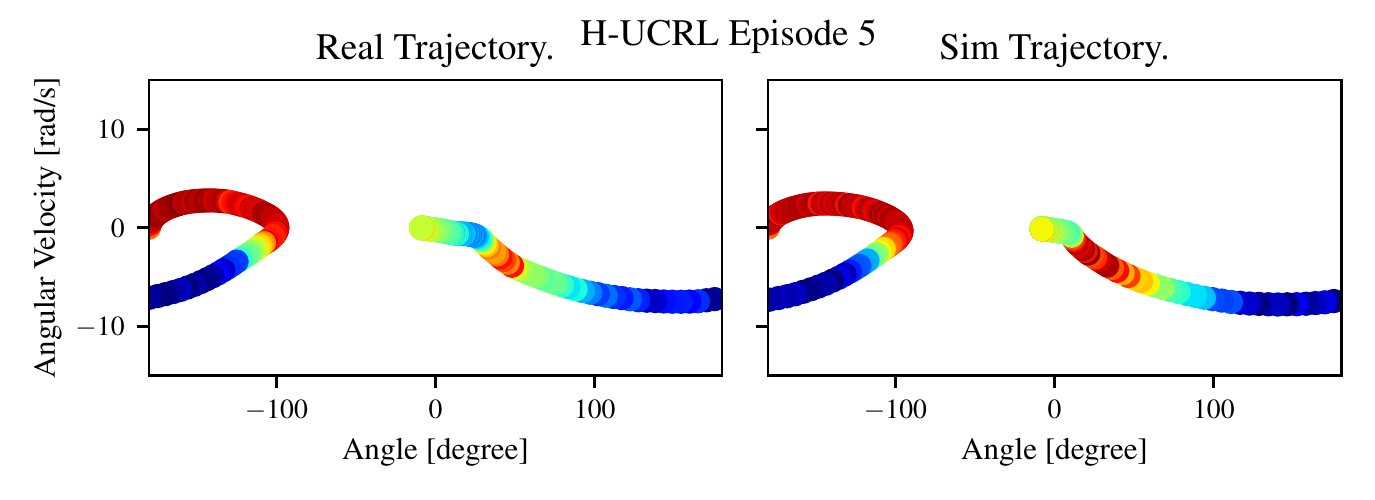}
\end{subfigure}%
\begin{subfigure}{0.5\textwidth}
\centering
    \includegraphics[width=\linewidth]{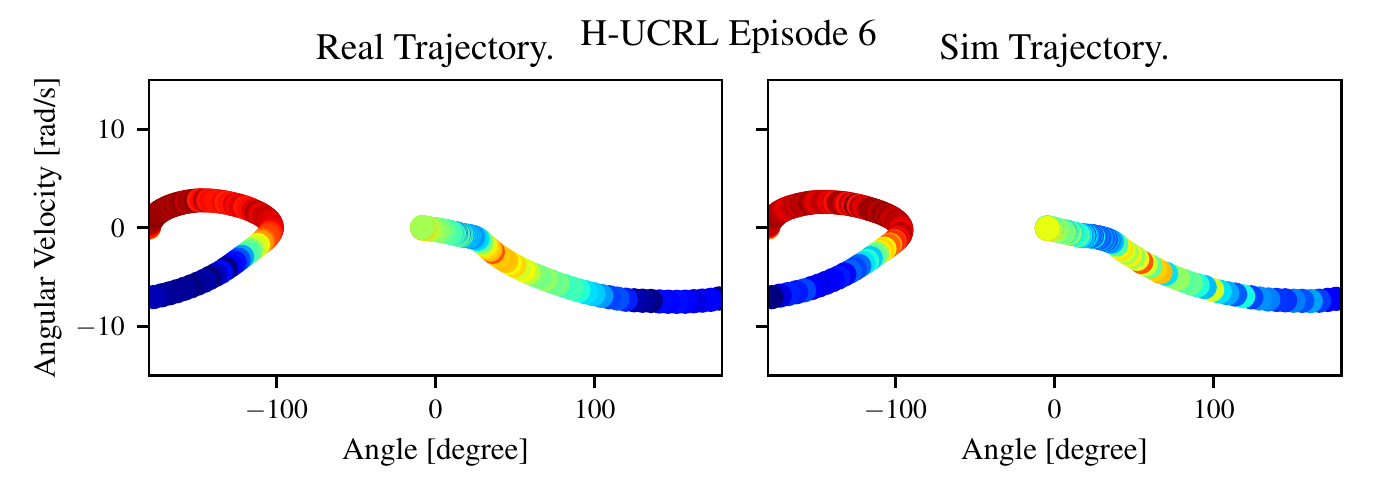}
\end{subfigure}
\caption{Real and simulated trajectories for first 6 episodes with \alg (0.1 action penalty).
We plot the trajectory in phase space, and use color coding to denote the action magnitude.}
\label{fig:optimistic_0.1}
\end{figure}

\begin{figure}[H]
\centering
\begin{subfigure}{0.5\textwidth}
\centering
    \includegraphics[width=\linewidth]{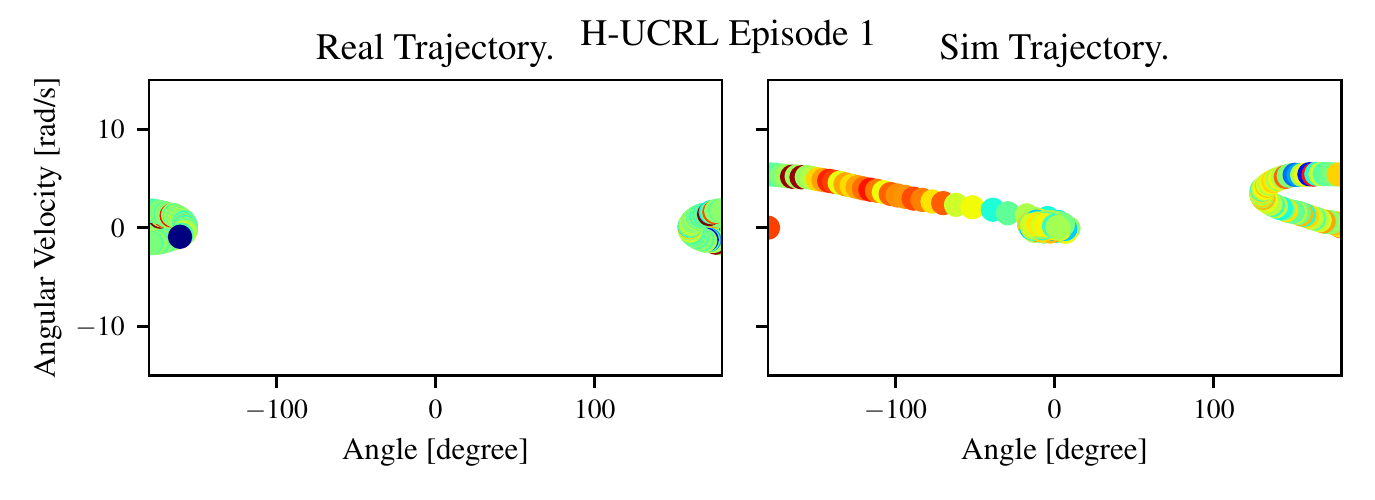}
\end{subfigure}%
\begin{subfigure}{0.5\textwidth}
\centering
    \includegraphics[width=\linewidth]{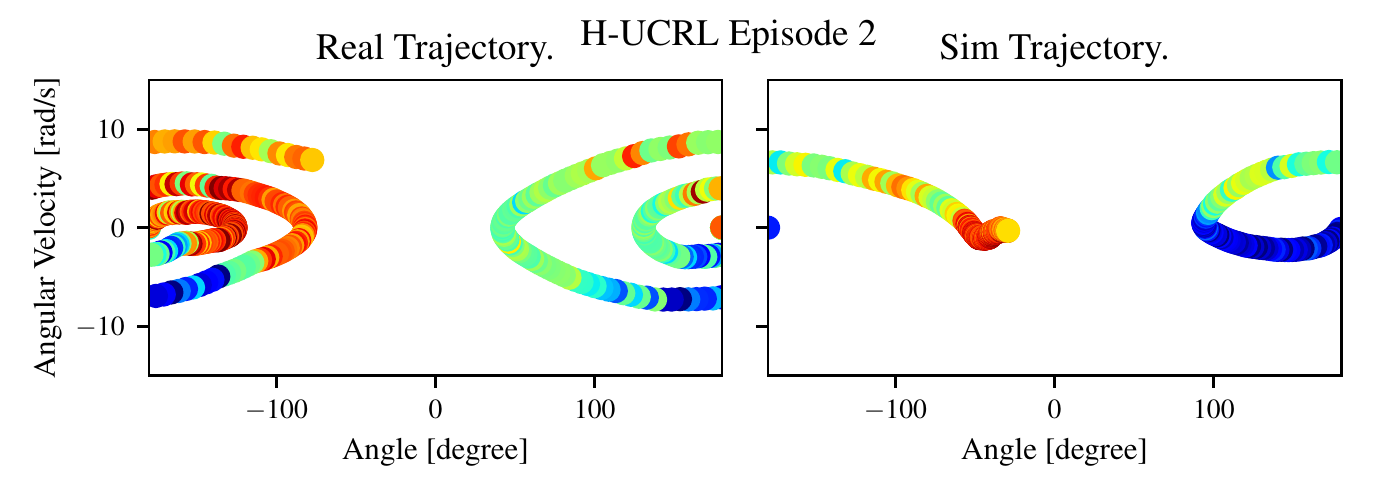}
\end{subfigure} \\ 
\begin{subfigure}{0.5\textwidth}
\centering
    \includegraphics[width=\linewidth]{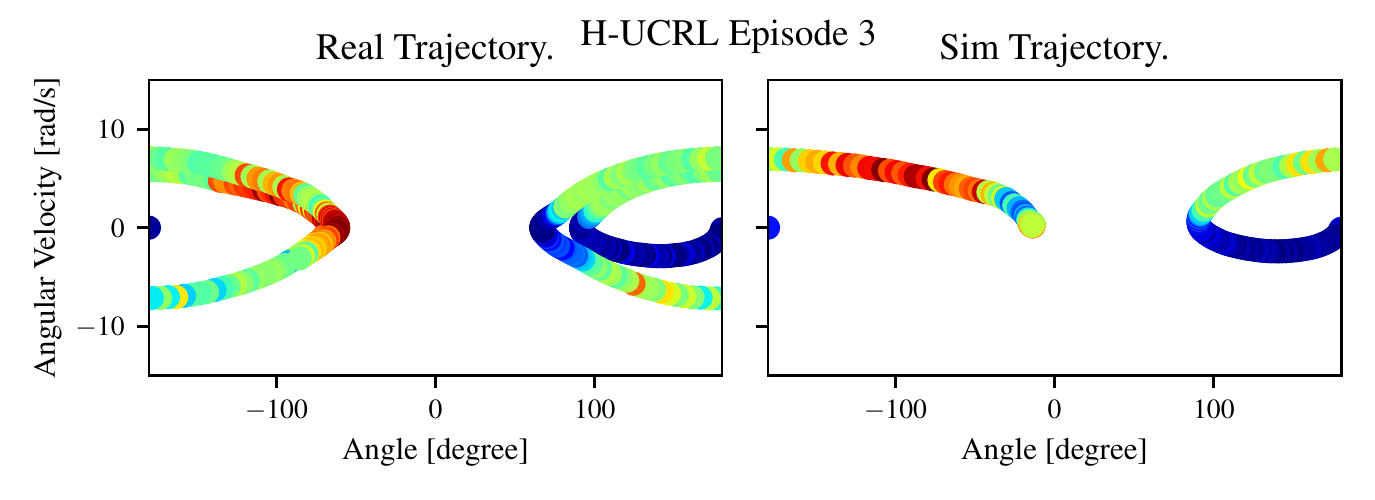}
\end{subfigure}%
\begin{subfigure}{0.5\textwidth}
\centering
    \includegraphics[width=\linewidth]{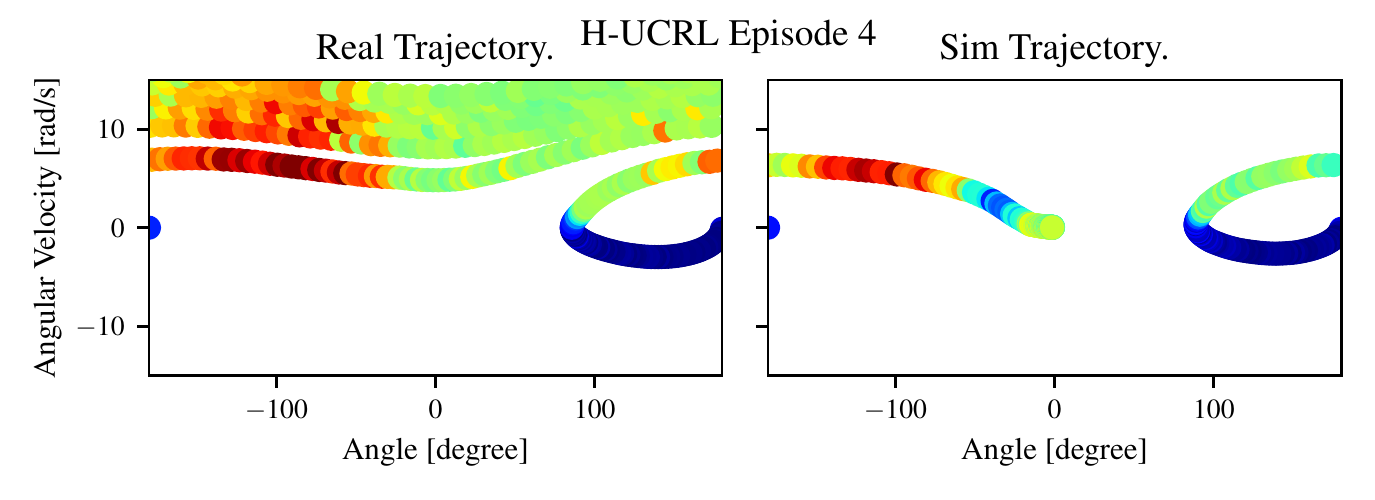}
\end{subfigure} \\
\begin{subfigure}{0.5\textwidth}
\centering
    \includegraphics[width=\linewidth]{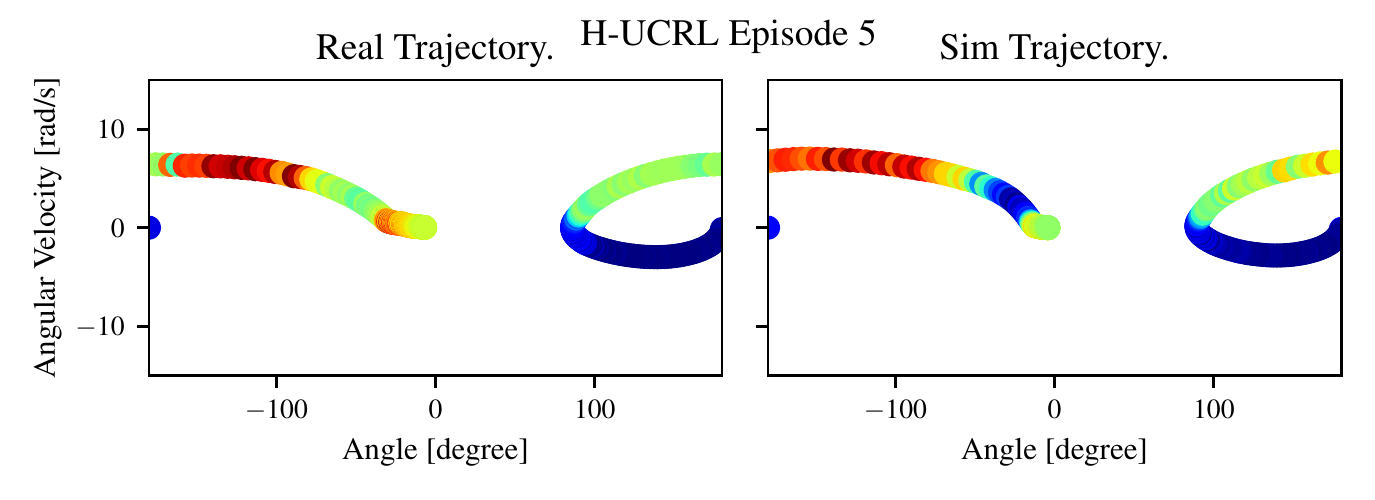}
\end{subfigure}%
\begin{subfigure}{0.5\textwidth}
\centering
    \includegraphics[width=\linewidth]{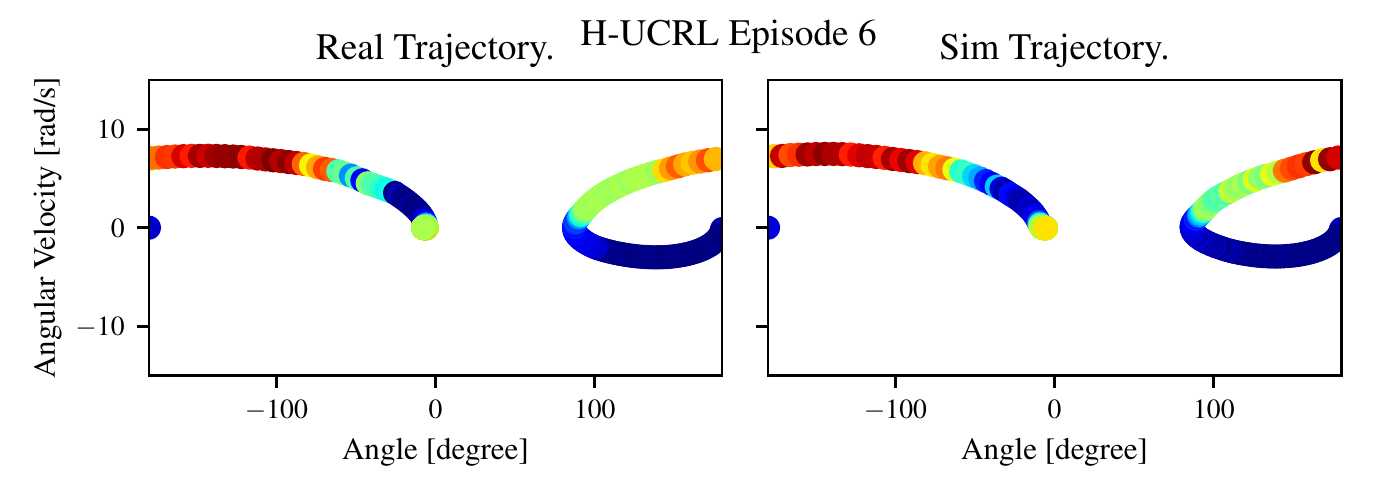}
\end{subfigure}
\caption{Real and simulated trajectories for first 6 episodes with \alg (0.2 action penalty).
We plot the trajectory in phase space, and use color coding to denote the action magnitude.}
\label{fig:optimistic_0.2}
\end{figure}

\subsection{Further Experiments on Thompson Sampling} \label{ap:ThompsonSampling}
We found surprising that Thompson Sampling under-performs compared to optimistic exploration. 
To understand better why this happens, we perform different experiments in this section.

\subsubsection{Can the sampled models solve the task?}
One possibility is that, when doing posterior sampling, the agent learns a model for the sampled model, which might be biased.
If this was the case, we would expect to see the {\em simulated} returns, i.e., the returns of the optimal policy in the sampled system $\tilde{f}_i$ large.

In \cref{fig:simulated_returns} we show the returns of the last simulated trajectory starting from the bottom position of each episode. This figure indicates that there is no model bias, i.e., the simulated returns for Thompson sampling are also low. 
We conclude that it is not over-fitting to the sampled model, but rather the algorithm cannot solve the task with the sampled model.

\FloatBarrier

\begin{figure}[ht]
  \includegraphics[width=\linewidth]{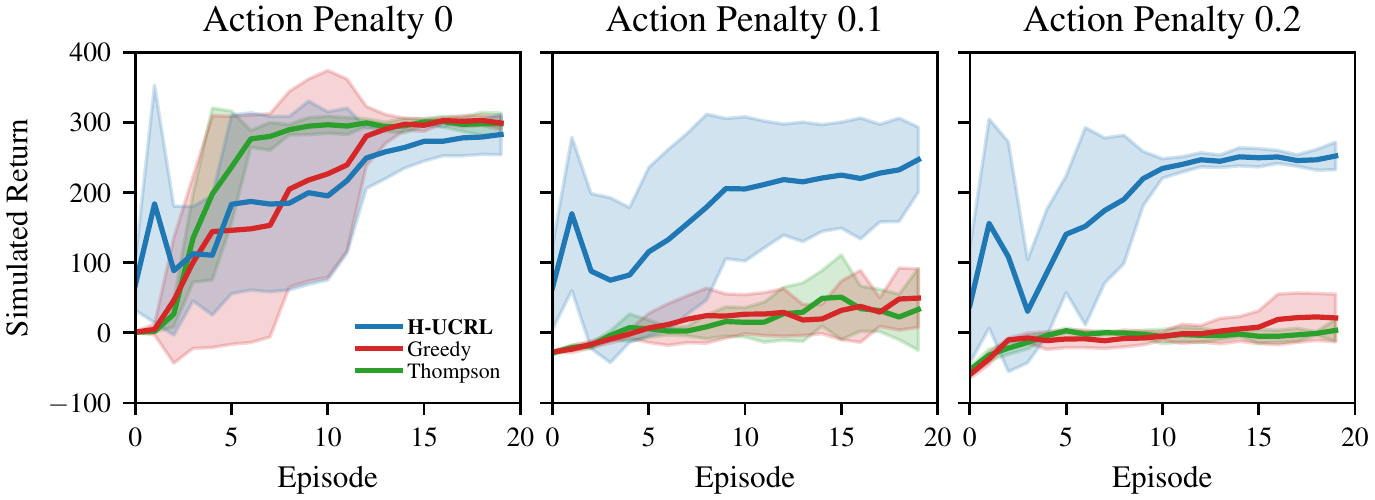}
  \caption{Total return from last simulated trajectory with the same initial state as the environment initial state. \alg has higher simulated returns than Greedy and Thompson as the action penalty increases.}
  \label{fig:simulated_returns}
\end{figure}

\FloatBarrier

\subsubsection{Is it variance starvation?}
Another possibility is Thompson Sampling suffers variance starvation, i.e., all ensemble members' predictions are identical. 
Variance starvation means that the approximate posterior variance is smaller than the true posterior variance. 
When this happens, (approximate) Thompson Sampling fails because of lack of exploration \citep{Wang2018BatchedBO}. 
In contrast to UCRL-stye algorithms where the optimism is implemented {\em deterministically}, Thompson sampling implements optimism {\em stochastically}. Thus, it is crucial that the variance is not underestimated.

If there was variance starvation, we would expect to see the epistemic variance along simulated trajectories  shrink. 
In \cref{fig:simulated_scale} we show the average simulated uncertainty during training, considered as the predictive variance of the ensemble.
To summarize the predictive uncertainty into a scalar, we consider the trace of the Cholesky factorization of the covariance matrix. 
From the figure, we see that \alg starts with the same predictive uncertainty as greedy and Thompson sampling. 
Furthermore, the variance of Thompson sampling does not shrink. 
We conclude that there is no variance starvation in the one-step ahead predictions.

\begin{figure}[ht]
  \includegraphics[width=\linewidth]{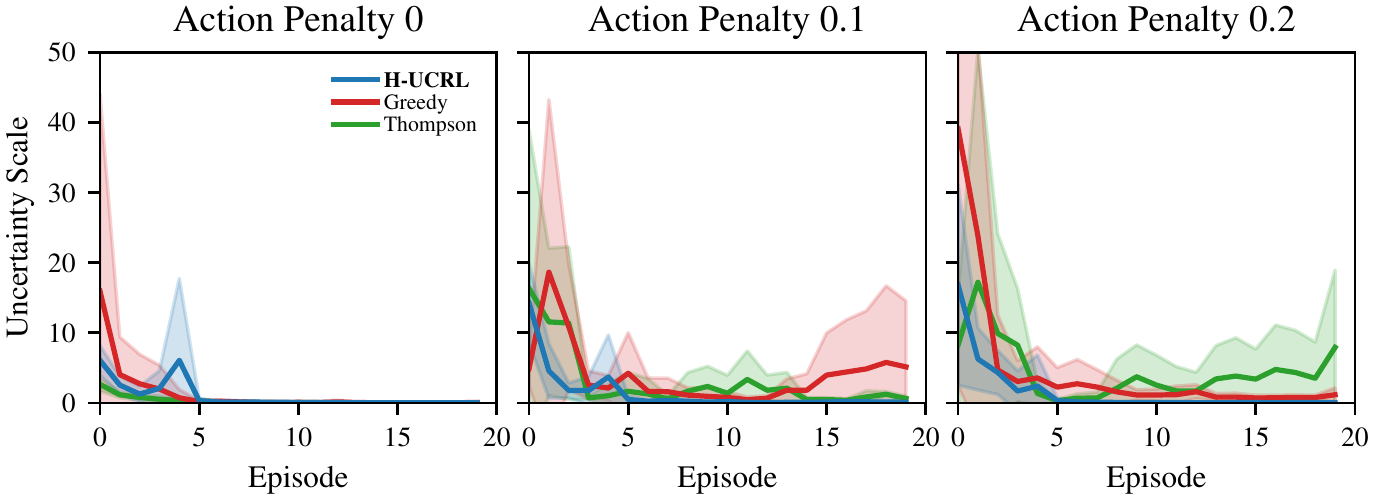}
  \caption{Epistemic model uncertainty along simulated trajectories. Thompson and Greedy have the same or more uncertainty than \alg.}
  \label{fig:simulated_scale}
\end{figure}

\FloatBarrier

\subsubsection{Is the number of ensemble members enough?}
In order to verify this hypothesis, we ran the same experiments with 5, 10, 20, 50, and 100 ensemble members. 
All models swing-up the pendulum with 0 action penalty. With 0.1 action penalty, the 20, 50, and 100 ensembles find a swing up in only one run out of five. 
With 0.2 action penalty, no model finds a swing-up strategy. 
This suggests that having larger ensembles could help, but it is not convincing. 
Furthermore, the model training computational complexity increases linearly with the number of ensemble members, which limits the practicality of larger ensembles. 

\begin{figure}[ht]
  \includegraphics[width=\linewidth]{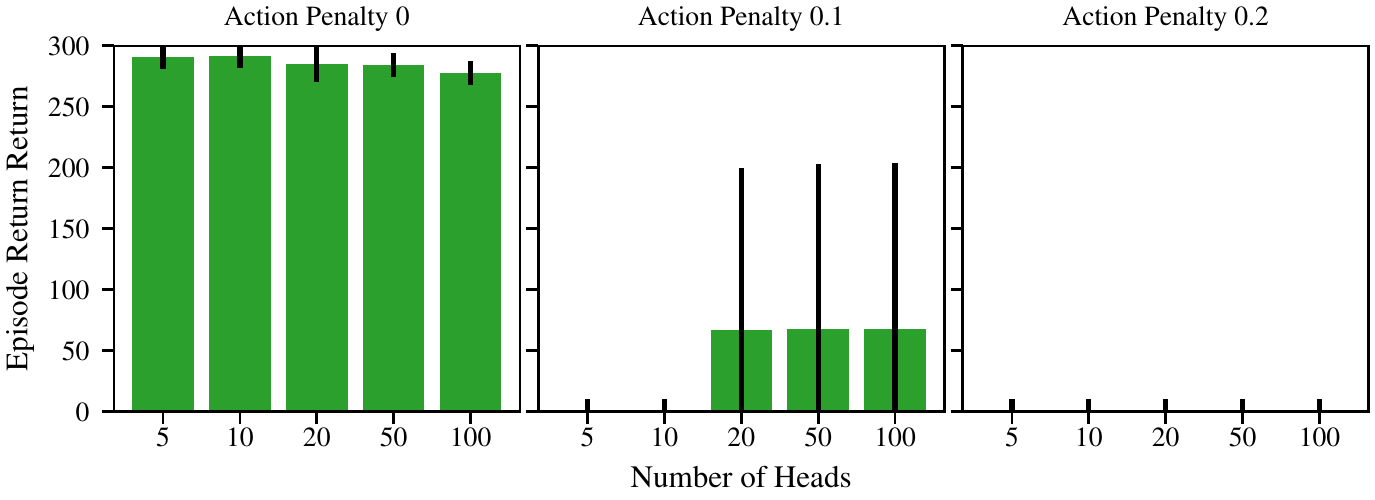}
  \caption{Episodic returns using Thompson Sampling for different number of ensemble members}
  \label{fig:num_heads}
\end{figure}
\FloatBarrier

\subsubsection{Is it the bootstrapping procedure during Training?}
Yet another possibility is that the bootstrap procedure yields inconsistent models for Thompson sampling.
To simulate bootstrapping, for each transition and ensemble member, we sample a mask from a Poisson distribution \citep{Osband2016Deep}. 
Then, we train using the loss of each transition multiplied by this mask. 
This yields correct one-step ahead confidence intervals. 
However, the model is used for multi-step ahead predictions. 
To test if this is the reason of the failure we repeat the experiment {\em without} bootstrapping the transitions. 
The only source of discrepancy between the models comes from the initialization of the model. 
This is how \citet{Chua2018Deep} train their probabilistic models and the models learn from {\em consistent} trajectories.

In \cref{fig:no_bootstrapping} we show the results when training without bootstrapping. 
The learning curves closely follow those with bootstrapping in \cref{fig:inverted_evolution}.
We conclude that the bootstrapping procedure is likely not the cause of the failure of Thompson Sampling. 

\begin{figure}[ht]
  \includegraphics[width=\linewidth]{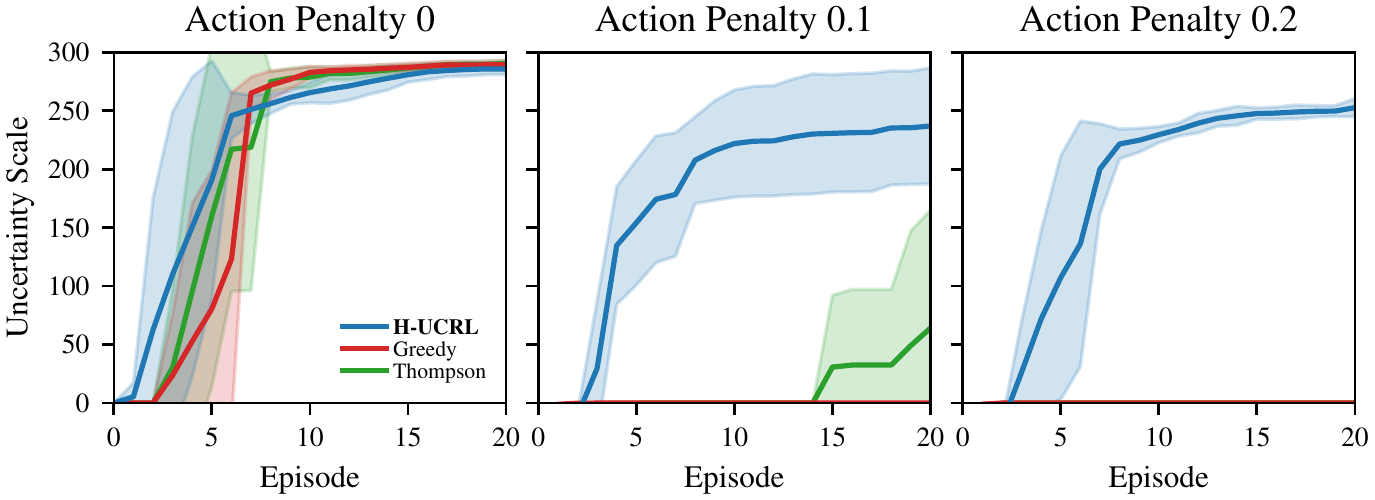}
  \caption{Episodic Returns in inverted pendulum {\em without} bootstrapping data while learning the model.}
  \label{fig:no_bootstrapping}
\end{figure}
\FloatBarrier

\subsubsection{Are probabilistic ensembles not a good approximation to the posterior in Thompson sampling?}  \label{ap:TS_approxGP}
We next investigate the possibility that Probabilistic Ensembles are not a good approximation for \modelposterior. To this end, we consider the Random Fourier Features (RFF) proposed by \citet{Rahimi2008RFF} for GP Models. To sample a posterior, we sample a set of random features and use the same features throughout the episodes as required by theoretical results for Thompson sampling and suggested by \citet{Hewing2019Simulation} to simulate trajectories. 
RFFs, however, are known to suffer from variance starvation. 
We also consider Quadrature Fourier Features (QFF) proposed by \citet{Mutny2018Efficient}.
QFFs have provable no-regret guarantees in the Bandit setting as well as a uniform approximation bound. 

In \cref{fig:rff}, we show the results for both RFF (1296 features), and QFFs (625 features). 
Neither QFFs nor RFFs find a swing-up maneuver for action penalties larger than zero, whereas optimistic exploration with both QFFs and RFFs do. 
For 0 action penalty, optimistic exploration with RFFs underperforms compared to greedy exploitation and Thompson sampling. This might be due to variance starvation of RFFs because we do not see the same effect on QFFs. 
We conclude that PE are as good as other approximate posterior methods such as random feature models. 

\begin{figure}[ht]
  \includegraphics[width=\linewidth]{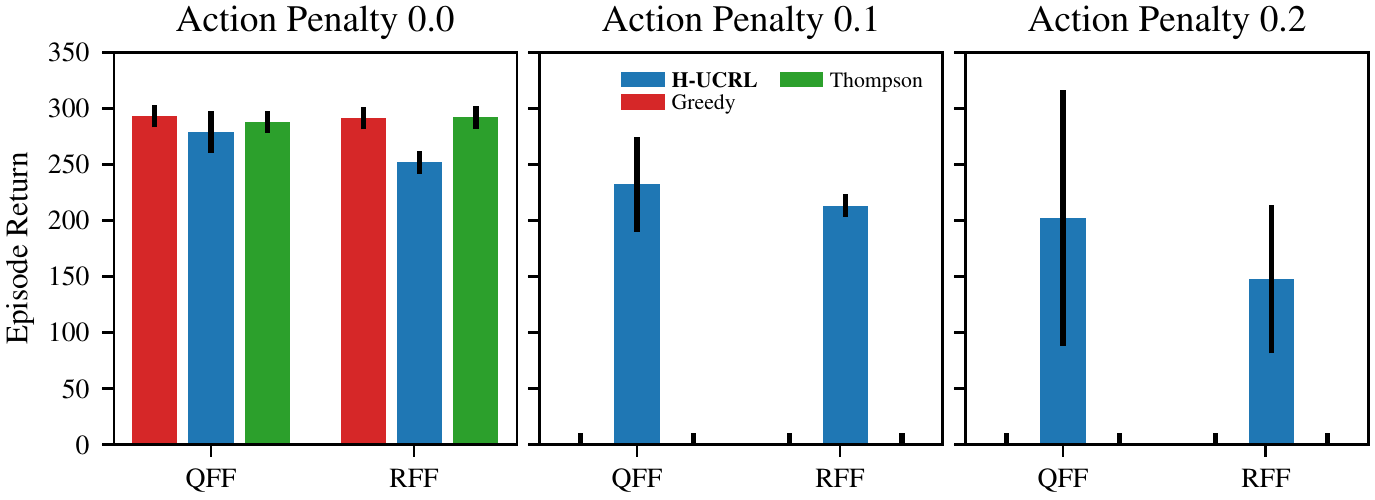}
  \caption{Episodic Returns in inverted pendulum using Random Fourier Features (RFF) and Quadrature Fourier Features (QFF).}
  \label{fig:rff}
\end{figure}
\FloatBarrier

\subsubsection{Is it the optimization procedure?}
The final and perhaps most enlightening experiment is the following. 
We run optimistic exploration with five ensemble heads and save snapshots of the models after the first, fifth and tenth episode. 
Then, we optimize a different policy for each of the models separately. In \cref{fig:best_head} we compare the simulated returns using optimistic exploration on the ensemble at each episode against the {\em maximum} return obtained by the best head. 

After the first episode, the simulated returns using optimistic exploration always find an optimistic swing-up trajectory, whereas the best-head always returns zero. This indicates that, when the uncertainty is large, optimistic exploration finds a better policy than approximate Thompson sampling. 
Without action penalty, the best head return quickly catches up to the simulated ones with optimistic exploration. 
For an action penalty of 0.1, after five episodes the best head is not able to find a swing-up trajectory. 
However, after ten episodes it does. 
This shows that the optimization algorithm is able to find the policy that swings-up a single model. 
However, when Thompson sampling is used to collect data, the optimization does not find such a policy. 
This indicates that the models learned using \alg better reduce the uncertainty around the high-reward region and each member of the ensemble has {\em sharper} predictions.
For 0.2 action penalty, the best head never finds a swing-up policy in ten episodes.

\begin{figure}[ht]
  \includegraphics[width=\linewidth]{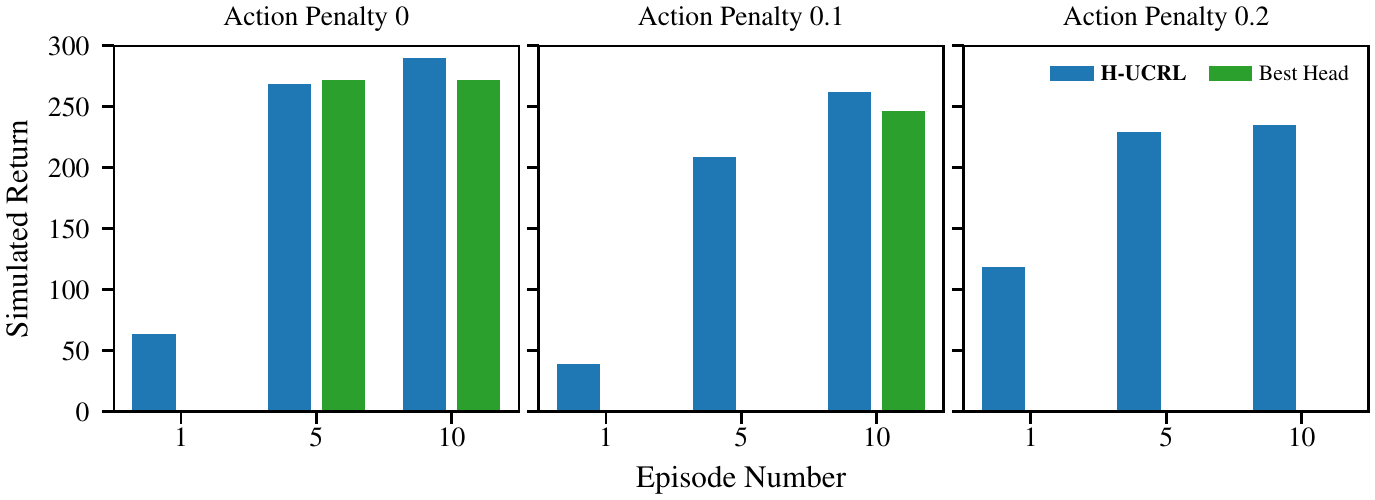}
  \caption{Simulated Returns using \alg vs. Maximum simulated return over all ensemble members using the same model as \alg.}
  \label{fig:best_head}
\end{figure}
\FloatBarrier

\subsubsection{Conclusions}
We believe that the poor performance of Thompson sampling relative to \alg suggests that a probabilistic ensemble with five members is sufficient to construct reasonable confidence intervals (hence \alg finds good policies), but does not comprise a rich enough posterior distribution for Thompson Sampling.
We suspect that this effect is inherent to the multi-step RL setting.
It seems to be the case that an approximate posterior model whose variance is rich enough for one-step predictions does not sufficiently represent/cover the diversity of plausible trajectories in the multi-step setting.
Thompson sampling implements optimism {\em stochastically}: for it to work, we must be able to sample a model that solves the task using multi-step predictions. 
Designing {\em tractable} approximate posteriors with {\em sufficient} variance for multi-step prediction is still a challenging problem. 
For instance, an ensemble model with $B$ members that has sufficient variance for 1-step predictions, requires $B^N$ members for N-step predictions, this quickly becomes intractable.

Compared to Thompson sampling, UCRL algorithms in general, and \alg in particular, only require one-step ahead calibrated predictive uncertainties in order to successfully implement optimism. 
This is because the optimism is implemented {\em deterministically} and it can be used recursively in a computationally efficient way. 
Furthermore, we know how to train (and calibrate) models to capture the uncertainty. 
This hints that optimism might be better suited than approximate Thompson sampling in model-based reinforcement learning. 

\newpage





\section{Solving the Augmented Greedy Exploitation Program}
\label{ap:dyna-mpc}

In this section, we discuss how to practically solve the greedy exploitation problem with the augmented hallucination variables. 
In \Cref{sec:practical} we showed that the optimization program is a stochastic optimal-control problem for the hallucinated model $\tilde{f}$. There are two common ways to solve this stochastic optimal-control problem: off-line policy search and on-line planning.
In \Cref{ssec:ops}, we describe offline policy search algorithms, in \Cref{ssec:op} we present online planning algorithms, and in \Cref{ssec:combining} we show how to combine these algorithms. 

\subsection{Offline Policy Search} \label{ssec:ops}

Off-line policy search usually parameterize a policy $\pi(\cdot; \theta)$ using a function approximation method (e.g., neural networks), and then uses the policy $\pi(\cdot; \theta)$ to interact with the environment. 
We parameterize both the true and hallucinated policies with neural network $\pi(\cdot; \theta), \eta(\cdot; \theta)$. 
Next, we describe how to augment common policy-search algorithms with hallucinated policies. Any of such algorithms can be used as the \texttt{PolicySearch} method in \Cref{alg:hucrl}.

\textbf{Imagined Data Augmentation} consists of using the model to simulate data and then use these data to learn a policy using a model-free RL method. 
For example, the celebrated Dyna algorithm from \citet{Sutton1990Integrated}, DAD from \citet{venkatraman2016improved}, IB from \citet{kalweit2017uncertainty}, and I2A \citet{racaniere2017imagination} generate data by sampling from expected models.
In \Cref{alg:hall-data-augm}, we show HDA (for Hallucinated Data Augmentation). In HDA, we generate data using the optimistic dynamics in ~\eqref{eq:expected_performance_exploration} and then call any model-free RL algorithm such as SAC \citep{Haarnoja2018SAC}, MPO \citep{Abdolmaleki2018MPPO}, TD3 \citep{Fujimoto2018TD3}, TRPO \citep{Schulman2015TRPO}, or PPO \citep{Schulman2017Proximal}. 
Furthermore, the initial state distribution where hallucinated trajectories start from might be any exploratory distribution. 
This greatly simplifies the task of the \texttt{ModelFree} algorithm.
Usually these strategies combine true with hallucinated data buffers. To match dimensions between these, we augment the action space of the true data buffer with samples of a standard normal. 
This strategy usually suffers from model-bias as model errors compound throughout a trajectory, yielding highly biased estimates that hinder the policy optimization \citep{VanHasselt2019MBRLbias}.

\begin{algorithm}[htpb]
  \caption{Hallucinated Data Augmentation}
  \label{alg:hall-data-augm}
  \begin{algorithmic}[1]
    \INPUT{Calibrated dynamical model $(\bmu, \bSigma)$,
          reward function $r(\x, \u)$,
          horizon $\Ti$, 
          initial state distribution $d(\x_0)$, 
          number of iterations $\Ti_{\text{iter}}$,
          number of data points $\Ti_{\text{data}}$,
          initial parameters $\theta_{\ni-1}, \vartheta_{\ni-1}$, 
          model-free algorithm \texttt{ModelFree}.
          }
    \State Initialize $\theta_{\ni,0} \gets \theta_{\ni-1}, \vartheta_{\ni,0} \gets \vartheta_{\ni-1}$
    \For{$i = 1, \ldots,  \Ti_{\text{iter}}$}
    \State{\color{blue} /* Simulate Data */}
    \State Initialize hallucinated data buffer $\mathcal{D}_{\mathrm{h}} = \{\emptyset\}$. 
    \For{$i = 1, \ldots, \Ti_{\text{data}}$} 
    \State Start from initial state distribution $\hat{\x}_0 \sim d(\x_0)$.
    \For{$\ti = 0, \dots, \Ti-1$}
        \State Compute action $\hat{\u}_\ti \sim  \pi(\hat{\x}_\ti;\theta_{\ni,i})$, $\hat{\u}'_\ti \sim  \eta(\hat{\x}_\ti;\theta_{\ni,i})$
        \State Sample next state $\hat{\x}_{\ti + 1} \sim \bmu_\ni(\hat{\x}_\ti, \hat{\u}_\ti) + \beta_{\ni} \bSigma_\ni(\hat{\x}_\ti, \hat{\u}_\ti) \hat{\u}'_\ti + \noise_{\ti}$ \label{alg:hda:next-state}.
        \State Append transition to buffer $\mathcal{D}_{\mathrm{h}} \gets \mathcal{D}_{\mathrm{h}} \cup \{(\hat{\x}_\ti, \hat{\x}_{\ti+1}, \hat{\u}_{\ti},  \hat{\u}'_\ti, r(\hat{\x}_\ti, \hat{\u}_{\ti})) \}$.
    \EndFor
    \EndFor
    \State{\color{blue} /* Optimize Policy */}
    \State $\theta_{\ni,i+1}, \vartheta_{\ni,i+1} \gets \texttt{ModelFree}(\mathcal{D}_{\mathrm{h}}, \theta_{\ni,i}, \vartheta_{\ni,i})$
    \EndFor
    \OUTPUT Final policy and critic $\theta_{\ni} = \theta_{\ni, \Ti_{\text{iter}}}$, $\vartheta_{\ni} = \vartheta_{\ni, \Ti_{\text{iter}}}$
  \end{algorithmic}
\end{algorithm}

\textbf{Back-Propagation Through Time} is an algorithm that updates the policy parameters by computing the derivatives of the performance w.r.t. the parameters directly. 
For instance, PILCO from \citet{Deisenroth2011PILCO} and MBAC from \citet{Clavera2020MAAC} are different examples of practical algorithms that use a greedy policy~\eqref{eq:expected_performance_exploration} using GPs and ensembles of neural networks, respectively.
In \Cref{alg:hbptt}, we show how to adapt BPTT to hallucinated control.
Like in BPTT it samples the trajectories in a differentiable way, i.e., using the reparameterization trick \citep{Kingma2013AutoEncoding}. 
Under some assumptions (such as moment matching), the sampling step in \Cref{alg:hbptt:next-state} of \Cref{alg:hbptt} can be replaced by exact integration as in PILCO \citep{Deisenroth2011PILCO}.
While performing the rollout, it computes the performance and at the end it bootstrapped with a critic. 
This critic is learned using a policy evaluation \texttt{PolEval} algorithm such as Fitted Value Iteration \citep{antos2008fitted}. 
This strategy usually suffers from high variance due to the stochasticity of the sampled trajectories and the compounding of gradients \citep{McHutchon2014Modelling}.
Interestingly, \citet{Parmas2018Pipps} propose a method to combine the model-free gradients given by any HDA strategy together with the model-based gradients given by HBPTT, but we leave this for future work.
We found that limiting the KL-divergence between the policies in different episodes as suggested by \citet{Schulman2015TRPO} helps to control this variance by regularization.

\begin{algorithm}[htpb]
  \caption{Hallucinated Back-Propagation Through Time}
  \label{alg:hbptt}
  \begin{algorithmic}[1]
    \INPUT{Calibrated dynamical model $(\bmu, \bSigma)$,
          reward function $r(\x, \u)$,
          horizon $\Ti$, 
          initial state distribution $d(\x_0)$, 
          number of iterations $\Ti_{\text{iter}}$,
          initial parameters $\theta_{\ni-1}, \vartheta_{\ni-1}$, 
          learning rate $eta$,
          policy evaluation algorithm \texttt{PolEval},
          regularization $\lambda$. 
          }
    \State Initialize $\theta_{\ni,0} \gets \theta_{\ni-1}, \vartheta_{\ni,0} \gets \vartheta_{\ni-1}$
    \For{$i = 1, \ldots, \Ti_{\text{iter}}$} 
    \State{\color{blue} /* Simulate Data */}
    \State Start from initial state distribution $\hat{\x}_0 \sim d(\x_0)$.
    \State Restart $J \gets 0$
    \For{$\ti = 0, \dots, \Ti-1$}
        \State Compute action $\hat{\u}_\ti \sim  \pi(\hat{\x}_\ti;\theta_{\ni,i})$, $\hat{\u}'_\ti \sim  \eta(\hat{\x}_\ti;\theta_{\ni,i})$
        \State Sample next state $\hat{\x}_{\ti + 1} \sim \bmu_\ni(\hat{\x}_\ti, \hat{\u}_\ti) + \beta_{\ni} \bSigma_\ni(\hat{\x}_\ti, \hat{\u}_\ti) \hat{\u}'_\ti + \noise_{\ti}$ \label{alg:hbptt:next-state}.
        \State Accumulate $J \gets J + \gamma^{\ti} r(\hat{\x}_\ti, \hat{\u}_\ti) - \lambda \mathrm{KL}(\pi(\hat{\x}_\ti;\theta_{\ni,i}) || \pi(\hat{\x}_\ti;\theta_{\ni-1}))$. 
    \EndFor
    \State Bootstrap $J \gets J + \gamma^{\Ti} Q(\hat{\x}_\Ti, \pi(\hat{\x}_\Ti; \theta_{\ni,i}), \eta(\hat{\x}_\Ti; \theta_{\ni,i}) ;\vartheta_{\ni, i})$
    \State{\color{blue} /* Optimize Policy */}
    \State Compute gradient $\partial J / \partial \theta_{\ni}$ with back-propagation through time.
    \State Do gradient step $\theta_{\ni,i + 1} \gets \theta_{\ni,i} + \eta \partial J / \partial \theta_{\ni}$
    \State Update Critic $\vartheta_{\ni,i + 1} \gets \texttt{PolEval}(\theta_{\ni,i + 1})$
    \EndFor
    \OUTPUT Final policy and critic $\theta_{\ni} = \theta_{\ni, \Ti_{\text{iter}}}$, $\vartheta_{\ni} = \vartheta_{\ni, \Ti_{\text{iter}}}$
  \end{algorithmic}
\end{algorithm}

\textbf{Model-Based Value Expansion} is an Actor-Critic approach that uses the model to compute the next-states for the Bellman target when learning the action-value function. 
It then uses pathwise derivatives \citep{mohamed2019monte} through the learned action-value function. 
For example MVE from \citep{feinberg2018model} and STEVE from \citet{Buckman2018Steve} use such strategy. 
In \Cref{alg:hall-value-exp}, we show H-MVE (Hallucinated-Model Based Value Expansion). 
Here we use optimistic trajectories only to learn the Bellman target. 
In turn, the learned action-values functions are optimistic and so are the pathwise gradients computed through them. 
This strategy is usually less data efficient than BPTT or IDA as it uses the model only to compute targets, but suffers less from model bias. 
To address data efficiency, one can combine HVE and HDA to compute optimistic value functions as well as simulating optimistic data. 

\begin{algorithm}[htpb]
  \caption{Hallucinated Value Expansion}
  \label{alg:hall-value-exp}
  \begin{algorithmic}[1]
    \INPUT{Calibrated dynamical model $(\bmu, \bSigma)$,
          reward function $r(\x, \u)$,
          number of steps $\Ti$, 
          number of iterations $\Ti_{\text{iter}}$,
          initial parameters $\theta_{\ni-1}, \vartheta_{\ni-1}$, 
          true data buffer $\mathcal{D}_{\mathrm{r}}$,
          learning rate $\eta$,
          polyak parameter $\tau$.
          }
    \State Initialize $\theta_{\ni,0} \gets \theta_{\ni-1}, \vartheta_{\ni,0} \gets \vartheta_{\ni-1}, \bar{\vartheta}_{\ni,0} \gets \vartheta_{\ni-1} $
    \For{$i = 1, \ldots,  \Ti_{\text{iter}}$}
    \State{\color{blue} /* Simulate Data */}
    \State Start from buffer $\hat{\x}_0 \sim \mathcal{D}_{\mathrm{r}}$.
    \State Initialize target $Q_{\textrm{target}} \gets 0$. 
    \State Compute prediction $Q_{\textrm{pred}} = Q(\hat{\x}_0; \vartheta_{\ni, i})$. 
    \For{$\ti = 0, \dots, \Ti-1$}
        \State Compute action $\hat{\u}_\ti \sim  \pi(\hat{\x}_\ti;\theta_{\ni,i})$, $\hat{\u}'_\ti \sim  \eta(\hat{\x}_\ti;\theta_{\ni,i})$
        \State Sample next state $\hat{\x}_{\ti + 1} \sim \bmu_\ni(\hat{\x}_\ti, \hat{\u}_\ti) + \beta_{\ni} \bSigma_\ni(\hat{\x}_\ti, \hat{\u}_\ti) \hat{\u}'_\ti + \noise_{\ti}$ \label{alg:hve:next-state}.
        \State Accumulate target $Q_{\textrm{target}} \gets \gamma^{\ti} r(\hat{\x}_\ti, \hat{\u}_\ti)$.
    \EndFor
    \State Bootstrap $Q_{\textrm{target}} \gets Q_{\textrm{target}} + \gamma^{\Ti} Q(\hat{\x}_\Ti, \pi(\hat{\x}_\Ti; \theta_{\ni,i}), \eta(\hat{\x}_\Ti; \theta_{\ni,i}) ;\bar{\vartheta}_{\ni, i})$
    \State{\color{blue} /* Optimize Critic */}
    \State $\vartheta_{\ni,i+1} \gets \vartheta_{\ni,i} - \eta \nabla_\vartheta (Q_{\textrm{pred}} - Q_{\textrm{target}} )^2$
    \State Update target parameters $\bar{\vartheta}_{\ni,i+1} \gets \tau \bar{\vartheta}_{\ni,i} + (1-\tau) \vartheta_{\ni,i+1}$
    
    \State{\color{blue} /* Optimize Policy */}
    \State $\theta_{\ni,i+1} \gets \theta_{\ni,i} + \eta \nabla_{\theta_{\ni, i}} Q(\hat{\x}_0; \vartheta_{\ni, i})$
    \EndFor
    \OUTPUT Final policy $\theta_{\ni} = \theta_{\ni, \theta_{\ni}}$.
  \end{algorithmic}
\end{algorithm}

\subsection{Online Planning} \label{ssec:op}
An alternative approach is to consider non-parametric policies and directly optimize the true and hallucinated actions as $\u_{\ti,\ni} \in [-1,1]^\ninp,\u'_{\ti,\ni} \in [-1,1]^\nstate$. 
This is usually called Model-Predictive Control (MPC) and it is implemented in a receding horizon fashion \citep{Morari1999}.
That means that for each new state encounter online the HUCRL planning problem \eqref{eq:optimistic_exploration} is solved using the actions as decission variables. 
This addresses model errors compounding as the trajectories are evaluated through the real trajectories, but it comes at high online computational costs, which limit the applicability of such algorithms to simulations. 

GP-MPC \citet{Kamthe2018DataEfficient} and PETS \citet{Chua2018Deep} are MPC-based methods that use the greedy policy~\eqref{eq:expected_performance_exploration} using GP and neural networks ensembles, respectively. 
Other MPC solvers such as POPLIN \citet{Wang2019Exploring} or POLO \citep{Lowrey2019POLO} are also compatible with such dynamical models.
In H-MPC (Hallucinated-MPC), we directly optimize both the control and hallucinated inputs jointly and any of the previous methods can be used as the MPC solver.  
\citet{Moldovan2015Optimismdriven} also use MPC to solve an optimistic exploration scheme but only on linear models and, like other on-line planning methods, are extremely slow for real-time deployment. 

To solve the optimization problem, approximate local solvers are usually used that rely either on sampling or on linearization. We discuss how to use both of them with hallucinated inputs. These algorithms can be used as the \texttt{Plan} method in \Cref{alg:hucrl}.

\paragraph{Random Sampling Methods}
An approximate way of solving MPC problems is to exhaustively sample the decision variables.
Shooting methods sample the actions and then propagate the trajectory through the model whereas collocation methods sample both the states and the actions. 
For simplicity, we only consider shooting methods. 
This method initializes particles at the current state. 
For each particle, it samples a sequence of actions from a proposal distribution and rollouts each particle independently, computing the returns of such sequence. 
This process is repeated updating the proposal distribution. 
Random Shooting \citep{Richards2006Robust}, the Cross-Entropy Method \citep{Botev2013CEM}, and Model-Predictive Path Integral Control \citep{Williams2016MPPI} differ in the ways to select the elite actions between iterations and how to update the sampling distributions. 
All these methods maintain a distribution over the actions. 
POPLIN from \citet{Wang2019Exploring} instead maintains a distribution over the weights of a policy network and samples different policies.
The main advantage of this method is that it correlates the random samples through the dynamics, possibly scalling to higher dimensions.
Any of these methods can be used with hallucination. 
We show in \Cref{alg:hall-shooting} the pseudo-code for a meta-Hallucinated shooting algorithm. 

\begin{algorithm}[t]
  \caption{Hallucinated Shooting Method}
  \label{alg:hall-shooting}
  \begin{algorithmic}[1]
    \INPUT{Calibrated dynamical model $(\bmu, \bSigma)$,
          terminal reward $V$,
          reward function $r(\x, \u)$,
          horizon $\Ti$, 
          current state $\x_\ti$, 
          number of particles $n_{\text{particle}}$,
          number of iterations $n_{\text{iter}}$,
          number of elite particles $n_{\text{elite}}$.
          initial sampling distribution $d(\cdot)$,
          algorithm to evaluate actions $\texttt{EliteActions}$,
          algorithm to update distribution $\texttt{UpdateDistribution}$.
          }
    \For{$i = 1, \dots, n_{\text{iter}}$}
        \State{\color{blue} /* Simulate Data */}
        \State{Initialize $n_{\text{particle}}$ at the current state $\hat{\x}_{0}^{(i)} = \x_\ti$}
        \State{Initialize $J^{(i)} \gets 0$}
    \For{$\ti = 0, \dots, \Ti-1$}
        \State Sample action $\hat{\u}^{(i)}_\ti, \hat{\u}'^{(i)}_\ti \sim d(\cdot)$
        \State Sample next state $\hat{\x}^{(i)}_{\ti + 1} \sim \bmu_\ti(\hat{\x}_{\ti}^{(i)}, \hat{\u}_{\ti}^{(i)}) + \beta_{\ni} \bSigma_\ti(\hat{\x}^{(i)}_{\ti}, \hat{\u}^{(i)}_{\ti}) \hat{\u}'^{(i)}_\ti + \noise_{\ti}$.
        \State Accumulate $J^{(i)} \gets J^{(i)} + \gamma^\ti r( \hat{\x}^{(i)}_{\ti}, \hat{\u}^{(i)}_{\ti})$
    \EndFor
    \State Bootstrap $J^{(i)} \gets J^{(i)} + \gamma^{\Ti} V(\hat{\x}^{(i)}_\Ti)$.
    \State $a, a' \gets \texttt{EliteActions} (J^{(i)}, \hat{\u}^{(i)}_{0:\Ti-1}, \hat{\u}'^{(i)}_{0:\Ti-1}, n_{\textrm{elite}})$
    \State{\color{blue} /* Optimize Policy */}
    \State{Update proposal distribution $d(\cdot) \gets \texttt{UpdateDistribution}(a, a')$.}
    \EndFor
    \OUTPUT{Return best action $a, a' \gets \texttt{EliteActions}(J^{(i)}, \hat{\u}^{(i)}_{0:\Ti-1}, \hat{\u}'^{(i)}_{0:\Ti-1}, 1) $.} 
  \end{algorithmic}
\end{algorithm}

\paragraph{Differential Dynamic Programming (DDP)}
DDP can be interpreted as a second-order shooting method \citet{Jacobson1968DDP} for dynamical systems.
For linear dynamical models with quadratic costs, problem \cref{eq:expected_performance_exploration} is a quadratic program (QP) that enjoys a closed form solution \citep{Morari1999}.
To address non-linear systems and other cost functions, a common strategy is to use a variant of iLQR \citet{Li2004iLQR,Todorov2005iLQG,Tassa2012Synthesis} which linearizes the system and uses a second order approximation to the cost function to solve sequential QPs (SQP) that approximate the original problem. 
When the rewards and dynamical model are differentiable, this method is faster to sampling methods as it uses the problem structure to update the sampling distribution.

\subsection{Combining Offline Policy Search with Online Planning} \label{ssec:combining}
MPC methods suffer less from model bias, but typically require substantial computation. 
Furthermore, they are limited to the planning horizon unless a \emph{learned} terminal reward is used to approximate the reward-to-go \citep{Lowrey2019POLO}. 
On the other hand, off-policy search approaches yield policies and value function estimates (critics) that are fast to evaluate, but suffer from bias \citep{VanHasselt2019MBRLbias}.
We propose to combine these methods to get the best of both worlds: 
First, we learn parametric policies $\pi$ and $\eta$ using a policy search algorithm. 
Then, we use such policies as a warm-start for the sampling distributions of the planning algorithm. 
We name this planning algorithm Dyna-MPC, as it resembles the Dyna architecture proposed by \citet{Sutton1990Integrated} and we show the pseudo-code for hallucinated models in \Cref{alg:dyna-mpc}. 

Closely related to Dyna-MPC is POPLIN \citep{Wang2019Exploring}. 
We also use a policy to initialize actions and and then refine them with a shooting method. 
Nevertheless, we use a policy search algorithm to optimize the policy parameters instead of the cross-entropy method. 
\citet{Hong2019MBLRL} also uses MPC to refine an off-line learned policy. 
However, they use a model-free algorithm directly form real data instead of model-based policy search.

\begin{algorithm}[t]
  \caption{Dyna-MPC with Hallucinated Models}
  \label{alg:dyna-mpc}
  \begin{algorithmic}[1]
    \INPUT{Calibrated dynamical model $(\bmu, \bSigma)$,
          learned policies $\pi(\cdot; \theta)$, $\eta(\cdot; \theta)$
          learned critic $Q(\cdot; \vartheta)$,
          reward function $r(\x, \u)$,
          horizon $\Ti$, 
          current state $\x_\ti$, 
          number of particles $n_{\text{particle}}$,
          number of iterations $n_{\text{iter}}$,
          number of elite particles $n_{\text{elite}}$.
          initial sampling distribution $d(\cdot)$,
          algorithm to evaluate actions $\texttt{EliteActions}$,
          algorithm to update distribution $\texttt{UpdateDistribution}$.
          }
    \For{$i = 1, \dots, n_{\text{iter}}$}
        \State{\color{blue} /* Simulate Data */}
        \State{Initialize $n_{\text{particle}}$ at the current state $\hat{\x}_{0}^{(i)} = \x_\ti$}
        \State{Initialize $J^{(i)} \gets 0$}
    \For{$\ti = 0, \dots, \Ti-1$}
        \State Sample action $\hat{\u}^{(i)}_\ti, \hat{\u}'^{(i)}_\ti \sim (\pi(\hat{\x}_{\ti}^{(i)};\theta), \eta(\hat{\x}_{\ti}^{(i)};\theta)) + d(\cdot)$
        \State Sample next state $\hat{\x}^{(i)}_{\ti + 1} \sim \bmu_\ti(\hat{\x}_{\ti}^{(i)}, \hat{\u}_{\ti}^{(i)}) + \beta_{\ni} \bSigma_\ti(\hat{\x}^{(i)}_{\ti}, \hat{\u}^{(i)}_{\ti}) \hat{\u}'^{(i)}_\ti + \noise_{\ti}$.
        \State Accumulate $J^{(i)} \gets J^{(i)} + \gamma^\ti r( \hat{\x}^{(i)}_{\ti}, \hat{\u}^{(i)}_{\ti})$
    \EndFor
    \State Bootstrap $J^{(i)} \gets J^{(i)} + \gamma^{\Ti} Q(\hat{\x}^{(i)}_\Ti, \hat{\u}^{(i)}_\Ti, \hat{\u}'^{(i)}_\Ti; \vartheta)$.
    \State $a, a' \gets \texttt{EliteActions} (J^{(i)}, \hat{\u}^{(i)}_{0:\Ti-1}, \hat{\u}'^{(i)}_{0:\Ti-1}, n_{\textrm{elite}})$
    \State{\color{blue} /* Optimize Policy */}
    \State{Update proposal distribution $d(\cdot) \gets \texttt{UpdateDistribution}(a, a')$.}
    \EndFor
    \OUTPUT{Return best action $a, a' \gets \texttt{EliteActions}(J^{(i)}, \hat{\u}^{(i)}_{0:\Ti-1}, \hat{\u}'^{(i)}_{0:\Ti-1}, 1) $.} 

  \end{algorithmic}
\end{algorithm}

\clearpage 


\section{Proofs for Exploration Regret Bound}
\label{ap:exploration_proof}

In this section, we prove the main theorem.

\subsection{Notation}

In the following, we implicitly denote with $\x_{\ti, \ni}$ the states visited under the true dynamics $f$ in \cref{eq:stochastic_dynamic_system_additive} and with $\xo_\ti$ the states visited under $\pi_\ni$ but the optimistic dynamics $\tilde{f}_\ni(\x, \u) = \bmu_{\ni-1}(\x, \u) + \bSigma_{\ni-1}(\x, \u) \eta_\ni(\x, \u)$,
\begin{subequations}
\begin{align}
    \x_{\ti+1, \ni} &= f(\x_{\ti,\ni}, \u_{\ti, \ni} ) + \noise_{\ti,\ni} \\
    \u_{\ti, \ni} &= \pi_\ni(\x_{\ti,\ni}  ) \\
    \intertext{and} 
    \xo_{\ti+1, \ni} &= \tilde{f}_\ni(\x_{\ti,\ni}, \uo_{\ti, \ni} ) + \noise_{\ti,\ni} \\
    &= \bmu_{\ni-1} (\x_{\ti,\ni}, \uo_{\ti, \ni} )
    + \bSigma_{\ni-1} (\x_{\ti,\ni}, \uo_{\ti, \ni} ) \eta_\ni(\x_{\ti,\ni}, \uo_{\ti, \ni})
    + \noise_{\ti,\ni} \\
    \uo_{\ti, \ni} &= \pi_\ni(\xo_{\ti,\ni} ) .
\end{align}
\label{eq:xo_x_definitions_for_proofs}
\end{subequations}
Since the control actions $\u_{\ti,\ni} = \pi_\ni(\x_{\ti,\ni})$ and $\uo_{\ti,\ni} = \pi_\ni(\xo_{\ti,\ni})$ are fixed given $\pi_\ni$, we generally drop the dependence on $u$ and write $f(\x) = f(\x, \pi_\ni(\x))$, $\bmu(\x, \pi_\ni(\x))$, etc. We also drop the subscript $\ni$ from $\x_{\ti,\ni}$ whenever it is clear that we refer to the $\ni$th episode. Lastly, when no norm is specified, $\| \cdot \| = \| \cdot \|_2$ refers to the two-norm.

We start by clarifying that as a consequence of \cref{as:dynamics_f_lipschitz,as:reward_lipschitz} the closed-loop dynamics are Lipschitz continuous too.
\begin{corollary}
    \label{cor:f_lipschitz}
     As in \cref{as:boundedness_of_dynamics}, let the open-loop dynamics $f$ in \cref{eq:stochastic_dynamic_system_additive} be $L_f$-Lipschitz continuous and the policy $\pi \in \Pi$ be $L_\pi$-Lipschitz continuous w.r.t. to the 2-norm. Then the closed-loop system is $L_\mathrm{fc}$-Lipschitz continuous with $L_\mathrm{fc} = L_f \sqrt{1 + L_\pi}$.
\end{corollary}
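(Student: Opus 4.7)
The plan is a direct composition argument: write the closed-loop map as $g(\x) \defeq f(\x, \pi(\x))$ and bound $\|g(\x_1) - g(\x_2)\|$ by exploiting the Lipschitz continuity of $f$ in its joint argument together with the Lipschitz continuity of $\pi$.

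First I would fix two states $\x_1, \x_2 \in \X$ and apply \cref{as:dynamics_f_lipschitz} to the joint point $(\x, \pi(\x)) \in \X \times \U$, which gives
\[
\|g(\x_1) - g(\x_2)\| \leq L_f \, \bigl\|(\x_1, \pi(\x_1)) - (\x_2, \pi(\x_2))\bigr\|.
\]
Using the induced product 2-norm on $\X \times \U$, the right-hand side becomes $L_f \sqrt{\|\x_1 - \x_2\|^2 + \|\pi(\x_1) - \pi(\x_2)\|^2}$. Then I would invoke the Lipschitz continuity of $\pi$ to bound $\|\pi(\x_1) - \pi(\x_2)\| \leq L_\pi \|\x_1 - \x_2\|$ and factor $\|\x_1 - \x_2\|$ out of the square root, yielding the constant $L_f \sqrt{1 + L_\pi^2}$ (matching the stated $L_{\mathrm{fc}}$ up to the obvious typo where $L_\pi$ inside the root should presumably be $L_\pi^2$).

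The only subtlety, and the one place I would pause, is to make sure the product-norm used to measure Lipschitz continuity of $f$ on $\X \times \U$ matches the one implicit in \cref{as:dynamics_f_lipschitz}. If $f$ is instead assumed Lipschitz with respect to, say, $\|\x_1 - \x_2\| + \|\u_1 - \u_2\|$, the resulting constant is $L_f(1 + L_\pi)$; both statements are equivalent up to a harmless constant absorbed into $L_f$, so the corollary goes through either way. This step is routine rather than an obstacle, but it is worth stating the norm convention explicitly so that every subsequent use of $L_{\mathrm{fc}}$ in the regret analysis is unambiguous.
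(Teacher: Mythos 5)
Your proof is correct and follows essentially the same route as the paper's: apply the Lipschitz bound of $f$ to the concatenated input $(\x, \pi(\x))$, expand the product 2-norm, and insert the Lipschitz bound on $\pi$. You are also right that the tight constant is $L_f\sqrt{1+L_\pi^2}$; the paper's own derivation drops the square on $L_\pi$ in the same place, so the stated $L_f\sqrt{1+L_\pi}$ inherits that slip (harmless for the regret analysis, since only the existence of a finite $L_{\mathrm{fc}}$ matters).
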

\begin{proof}
  \begin{align}
    \|f(\x, \pi(\x)) - f(\x', \pi(\x')) \|_2
    &\leq L_f \| (\x - \x', \pi(\x) - \pi(\x')) \|_2 \\
    &= L_f \sqrt{ \| (\x - \x'\|_2^2 + \| \pi(\x) - \pi(\x')) \|_2^2 } \\
    &\leq L_f \sqrt{ \| (\x - \x'\|_2^2 + L_\pi \| \x - \x')) \|_2^2 } \\
    &= \underset{\eqdef L_\mathrm{fc}}{\underbrace{ L_f \sqrt{1 + L_\pi}}} \| \x - \x'\|_2 
  \end{align}
\end{proof}

\subsection{Bounding the Regret}
We start by bounding the cumulative regret in terms of the predictive variance of the states/actions on the true trajectory (the one that we will later collect data one). 

\begin{lemma}
  \label{lem:eta_fun_existance}
  Under \cref{as:well_calibrated_model}, for any sequence $\x_{\ti,\ni}$ generated by the true system \cref{eq:stochastic_dynamic_system_additive}, there exists a function $\eta \colon \R^\nstate \to [-1, 1]^\nstate$ such that $\x_{\ti,\ni} = \xo_{\ti,\ni}$ if $\noise = \tilde{\noise}$.
\end{lemma}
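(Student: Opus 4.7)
The plan is to construct the function $\eta$ explicitly on the trajectory and extend it arbitrarily elsewhere, then argue by induction that the two trajectories coincide.

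First, I would fix the policy $\pi_\ni$ and the noise realization $\noise_{0,\ni},\ldots,\noise_{\Ti-1,\ni}$, and let $\x_{0,\ni},\ldots,\x_{\Ti,\ni}$ denote the corresponding true trajectory obtained from \cref{eq:stochastic_dynamic_system_additive}. By \cref{as:well_calibrated_model}, we have elementwise
\begin{equation*}
  \bigl| f(\x_{\ti,\ni}, \pi_\ni(\x_{\ti,\ni})) - \bmu_{\ni-1}(\x_{\ti,\ni}, \pi_\ni(\x_{\ti,\ni})) \bigr|
  \leq \beta_{\ni-1}\, \bsigma_{\ni-1}(\x_{\ti,\ni}, \pi_\ni(\x_{\ti,\ni})),
\end{equation*}
so the vector
\begin{equation*}
  \eta^\star_\ti \defeq \beta_{\ni-1}^{-1}\,\bSigma_{\ni-1}(\x_{\ti,\ni}, \pi_\ni(\x_{\ti,\ni}))^{-1}\bigl(f(\x_{\ti,\ni}, \pi_\ni(\x_{\ti,\ni})) - \bmu_{\ni-1}(\x_{\ti,\ni}, \pi_\ni(\x_{\ti,\ni}))\bigr)
\end{equation*}
lies in $[-1,1]^\nstate$ (interpreting the diagonal inverse componentwise, and using the convention $0/0=0$ when a predictive standard deviation vanishes, in which case the corresponding numerator is zero too).

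Next, I would define $\eta \colon \R^\nstate \to [-1,1]^\nstate$ by setting $\eta(\x_{\ti,\ni}) = \eta^\star_\ti$ for each $\ti \in \{0,\ldots,\Ti-1\}$ and extending it arbitrarily (e.g., as the zero function) on the remainder of $\R^\nstate$. This is well-defined whenever the states $\x_{\ti,\ni}$ are pairwise distinct; if some coincide, the calibration inequality forces the corresponding $\eta^\star_\ti$ values to agree, so the definition is still consistent.

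Finally, I would prove $\xo_{\ti,\ni} = \x_{\ti,\ni}$ by induction on $\ti$, assuming $\tilde{\noise}_{\ti,\ni} = \noise_{\ti,\ni}$. The base case $\xo_{0,\ni} = \x_{0,\ni} = \x_0$ holds by construction of the rollout in \cref{eq:xo_x_definitions_for_proofs}. For the inductive step, assuming $\xo_{\ti,\ni} = \x_{\ti,\ni}$, we compute
\begin{align*}
  \xo_{\ti+1,\ni}
  &= \bmu_{\ni-1}(\xo_{\ti,\ni}, \pi_\ni(\xo_{\ti,\ni})) + \beta_{\ni-1}\,\bSigma_{\ni-1}(\xo_{\ti,\ni}, \pi_\ni(\xo_{\ti,\ni}))\,\eta(\xo_{\ti,\ni}) + \tilde{\noise}_{\ti,\ni}\\
  &= \bmu_{\ni-1}(\x_{\ti,\ni}, \pi_\ni(\x_{\ti,\ni})) + \bigl(f(\x_{\ti,\ni}, \pi_\ni(\x_{\ti,\ni})) - \bmu_{\ni-1}(\x_{\ti,\ni}, \pi_\ni(\x_{\ti,\ni}))\bigr) + \noise_{\ti,\ni}\\
  &= f(\x_{\ti,\ni}, \pi_\ni(\x_{\ti,\ni})) + \noise_{\ti,\ni} = \x_{\ti+1,\ni}.
\end{align*}
There is no significant obstacle: the only subtlety is handling potential repeated states and vanishing predictive variances, both of which are resolved by appealing directly to the elementwise calibration inequality.
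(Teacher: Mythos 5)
Your proof is correct and follows essentially the same route as the paper's: calibration gives, pointwise, a vector in $[-1,1]^\nstate$ that exactly recovers $f(\x,\u)-\bmu_{\ni-1}(\x,\u)$ through $\beta_{\ni-1}\bSigma_{\ni-1}$, and collecting these vectors into a function $\eta$ makes the optimistic rollout coincide with the true one under matched noise. Your version is simply more explicit about the induction and the degenerate cases (vanishing predictive variance, repeated states), which the paper's one-line proof glosses over by defining $\eta$ directly on all state--action pairs.
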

\begin{proof}
    By \cref{as:well_calibrated_model} we have $| f(\x) - \bmu(\x) | \leq \beta \bsigma(\x)$ elementwise. Thus for each $\x, \u$ there exists a vector $\bm{\eta}$ with values in $[-1, 1]^\nstate$ such that $f(\x, \u) = \mu(\x, \u) + \bSigma(\x, \u) \bm{\eta}$. Let the function $\eta(\cdot)$ return this vector for each state and action, then the result follows.
\end{proof}

\begin{lemma}
  Under \cref{as:well_calibrated_model}, with probability at least $(1 - \delta)$ we have for all $\ni \geq 0$ that the regret $r_\ni$ is bounded by
  \begin{equation}
    r_\ni = J(f, \pi^*) - J(f, \pi_\ni) \leq J(\tilde{f}_\ni, \pi_\ni) - J(f, \pi_\ni)
  \end{equation}
\end{lemma}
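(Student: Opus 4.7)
The plan is to use the optimism of \alg combined with the calibration assumption to show that $J(\tilde{f}_\ni, \pi_\ni) \geq J(f, \pi^*)$, after which the desired inequality follows by subtracting $J(f, \pi_\ni)$ from both sides.

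First, I would invoke \cref{as:well_calibrated_model}: with probability at least $(1-\delta)$, for all $\ni \geq 0$ and all $(\x,\u) \in \X \times \U$, we have $|f(\x,\u) - \bmu_{\ni-1}(\x,\u)| \leq \beta_{\ni-1} \bsigma_{\ni-1}(\x,\u)$ elementwise. I will work on this good event for the remainder of the argument. By \cref{lem:eta_fun_existance}, there exists a function $\eta^\star \colon \R^\nstate \times \R^\ninp \to [-1,1]^\nstate$ such that
\begin{equation*}
    f(\x,\u) = \bmu_{\ni-1}(\x,\u) + \beta_{\ni-1} \bSigma_{\ni-1}(\x,\u)\, \eta^\star(\x,\u),
\end{equation*}
so the true dynamics $f$ is a feasible choice in the inner optimization of the \alg program~\eqref{eq:optimistic_exploration}.

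Next, I would use the definition of $\pi_\ni$ as the maximizer in \cref{eq:optimistic_exploration}. Since the pair $(\pi^*, \eta^\star)$ is feasible for the joint maximization, and it gives rise to the dynamics $\tilde{f} = f$, we have
\begin{equation*}
    J(\tilde{f}_\ni, \pi_\ni) \;=\; \max_{\pi \in \Pi} \max_{\eta(\cdot) \in [-1,1]^\nstate} J(\bmu_{\ni-1} + \beta_{\ni-1} \bSigma_{\ni-1}\eta,\; \pi) \;\geq\; J(f, \pi^*).
\end{equation*}
Subtracting $J(f, \pi_\ni)$ from both sides yields
\begin{equation*}
    r_\ni = J(f,\pi^*) - J(f,\pi_\ni) \;\leq\; J(\tilde{f}_\ni, \pi_\ni) - J(f, \pi_\ni),
\end{equation*}
which is the claim.

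There is no real obstacle here; the lemma is essentially the standard optimism-based regret decomposition. The only subtlety worth being careful about is that \cref{as:well_calibrated_model} must hold \emph{jointly} for all $\ni$ on the good event (which it does, by the statement of the assumption), so the same probability-$(1-\delta)$ event covers every term in the sum when this lemma is later applied inside the proof of \cref{thm:exploration:regret:general_regret_bound}.
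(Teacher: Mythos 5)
Your proof is correct and follows essentially the same route as the paper: invoke \cref{as:well_calibrated_model} via \cref{lem:eta_fun_existance} to establish that the true dynamics $f$ (and hence the pair $(\pi^*,\eta^\star)$) is feasible for the joint maximization in \cref{eq:optimistic_exploration}, conclude $J(\tilde{f}_\ni,\pi_\ni) \geq J(f,\pi^*)$, and subtract $J(f,\pi_\ni)$. Your added remark that the calibration event holds jointly over all $\ni$ is a correct and worthwhile clarification.
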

\begin{proof}
  By \cref{as:well_calibrated_model}, we know from \cref{lem:eta_fun_existance} that the true dynamics are contained within the feasible region of \cref{eq:optimistic_exploration}; that is, there exists an $\eta(\cdot) \colon \R^\nstate \times \R^\ninp \to [-1, 1]^\nstate$ such that with $\tilde{f}(\x) = \bmu(\x) + \bSigma(\x) \eta(\x)$ we have $J(f, \pi^*) = \tilde{J}(\tilde{f}, \pi^*)$. As a consequence, we have $J(f, \pi^*) \leq J(\tilde{f}_\ni, \pi_\ni)$ and the result follows.
\end{proof}

Thus, to bound the instantaneous regret $r_\ni$, we must bound the difference between the optimistic value estimate $J(\tilde{f}_\ni, \pi_\ni)$ and the true value $J(f, \pi_\ni)$. We can use the Lipschitz continuity properties to obtain

\begin{lemma}
  \label{lem:exploration:regret:J_difference_bound_by_state_divergence}
  Based on \cref{as:reward_lipschitz,as:pi_lipschitz} we have
\begin{equation}
  | J(\tilde{f}_\ni, \pi_\ni) - J(f, \pi_\ni)| \leq L_r \sqrt{1 + L_\pi}  \sum_{\ti=0}^\Ti \E[\noise = \tilde{\noise}]{ \|\x_{\ti,\ni} - \xo_{\ti,\ni} \|_2 }
\end{equation}
\end{lemma}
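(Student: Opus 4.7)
The plan is to expand both $J(\tilde{f}_\ni, \pi_\ni)$ and $J(f, \pi_\ni)$ using the definition of the performance estimator in \cref{eq:performance_estimator}, then couple the two expectations by using the same noise realization $\noise = \tilde{\noise}$. Under this coupling, the only difference between the two trajectories comes from the difference between $f$ and $\tilde{f}_\ni$, propagated through the shared policy $\pi_\ni$. Concretely, by linearity of expectation and the triangle inequality,
\begin{equation*}
  |J(\tilde{f}_\ni, \pi_\ni) - J(f, \pi_\ni)|
  \;\leq\; \sum_{\ti=0}^\Ti \E[\noise = \tilde{\noise}]{\bigl| r(\xo_{\ti,\ni}, \pi_\ni(\xo_{\ti,\ni})) - r(\x_{\ti,\ni}, \pi_\ni(\x_{\ti,\ni})) \bigr|} .
\end{equation*}

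Next I would bound each summand using the Lipschitz continuity of the reward function (\cref{as:reward_lipschitz}). Viewing $r$ as a function on $\X \times \U$ equipped with the product $2$-norm,
\begin{equation*}
  \bigl| r(\xo_{\ti,\ni}, \pi_\ni(\xo_{\ti,\ni})) - r(\x_{\ti,\ni}, \pi_\ni(\x_{\ti,\ni})) \bigr|
  \;\leq\; L_r \sqrt{\|\xo_{\ti,\ni} - \x_{\ti,\ni}\|_2^2 + \|\pi_\ni(\xo_{\ti,\ni}) - \pi_\ni(\x_{\ti,\ni})\|_2^2}.
\end{equation*}
Then by the Lipschitz continuity of the policy (\cref{as:pi_lipschitz}), $\|\pi_\ni(\xo_{\ti,\ni}) - \pi_\ni(\x_{\ti,\ni})\|_2^2 \leq L_\pi \|\xo_{\ti,\ni} - \x_{\ti,\ni}\|_2^2$ (following the same Lipschitz convention used in the proof of \cref{cor:f_lipschitz}), so the right-hand side simplifies to $L_r \sqrt{1 + L_\pi}\, \|\xo_{\ti,\ni} - \x_{\ti,\ni}\|_2$. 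Plugging this back in and pulling the constants out of the sum and expectation yields the claimed bound.

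There is no real obstacle here; the only subtlety worth spelling out is the coupling step. To make the inequality $|J(\tilde{f}_\ni,\pi_\ni) - J(f,\pi_\ni)| \leq \E{\sum_\ti |r_\ti^\circ - r_\ti|}$ rigorous, one should define $\x_{\ti,\ni}$ and $\xo_{\ti,\ni}$ on a common probability space with $\noise_{\ti,\ni} = \tilde{\noise}_{\ti,\ni}$, as already set up in \cref{eq:xo_x_definitions_for_proofs}. Under this coupling, both $J$'s equal the expectation of the same reward summed over the corresponding trajectory, so the difference of expectations is bounded by the expectation of the pointwise differences, after which the Lipschitz estimates close the argument.
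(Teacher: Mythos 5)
Your proposal is correct and follows essentially the same route as the paper's proof: couple the two expectations via $\noise = \tilde{\noise}$, apply the triangle inequality termwise, and invoke the Lipschitz continuity of $r$ and $\pi$ on the product space to obtain the factor $L_r\sqrt{1+L_\pi}$. You even correctly identified the paper's Lipschitz convention for $\pi$ (as used in \cref{cor:f_lipschitz}) that yields $\sqrt{1+L_\pi}$ rather than $\sqrt{1+L_\pi^2}$, so nothing is missing.
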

\begin{proof}
\begin{align}
  | J(\tilde{f}_\ni, \pi_\ni) - J(f, \pi_\ni)| 
  &= \left| \E[\tilde{\noise}] { \sum_{\ti=0}^\Ti r(\xo_\ti, \pi_\ni(\xo_\ti)) } 
  - \E[\noise]{ \sum_{\ti=0}^\Ti r(\x_\ti, \pi_\ni(\x_\ti)) } \right|  \\
  &= \left| 
  \E[\noise = \tilde{\noise}]{ \sum_{\ti=0}^\Ti r(\xo_\ti, \pi_\ni(\xo_\ti))  -  r(\x_\ti, \pi_\ni(\x_\ti))  } 
  \right|    \\
  &\leq L_r \sqrt{1 + L_\pi}  \sum_{\ti=0}^\Ti \E[\noise = \tilde{\noise}]{ \|\xo_\ti - \x_\ti \|_2 },
\end{align}
where $\E[\noise = \tilde{\noise}]{\cdot}$ means in expectation over $\noise$ and with $\tilde{\noise} = \noise$; that is, $\tilde{\noise}$ and $\noise$ are the same random variable. 
\end{proof}

\begin{figure}[ht]
  \includegraphics[width=\linewidth]{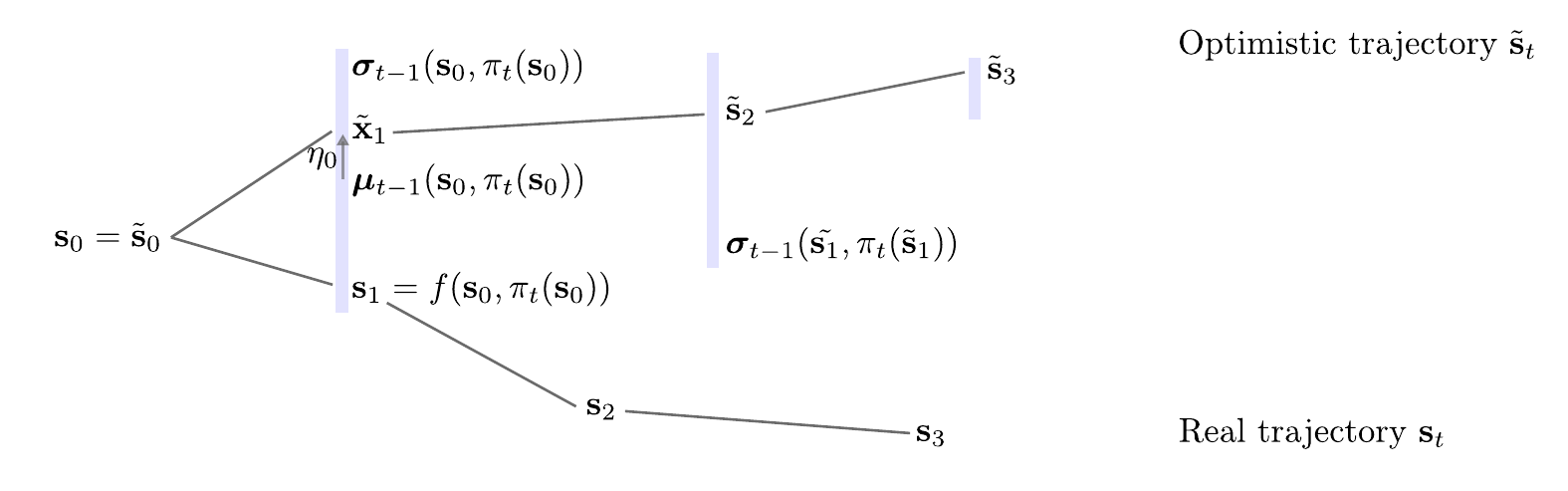}
  \caption{Illustrative comparison of the true state trajectory $\x_\ti$ under the policy $\pi_{\btheta}$ and the optimistic trajectory $\xo_\ti$ from \cref{eq:optimistic_exploration}. After one step, $\x_1$ is contained within the confidence intervals (grey bars). The optimistic dynamics are chosen within this confidence interval to maximize performance. Since the optimistic dynamics are constructed iteratively based on the previous state $\xo_\ti$, beyond one step the true dynamics are not contained in the confidence intervals.}
  \label{fig:optimistic_trajectory}
\end{figure}

What remains is to bound the deviation of the optimistic and the true trajectory. We show a different perspective of \cref{fig:optimistic_trajectory_new} in \cref{fig:optimistic_trajectory}, where we explicitly show the ``real'' state trajectory under a policy and for a given noise realisation the the optimistic trajectory with its one-step uncertainty estimates as in \cref{eq:optimistic_exploration}. We exploit the Lipschitz continuity of $\bsigma$ from \cref{as:model_predictions_lipschitz} in order to bound the deviation in terms of $\bsigma_{\ni-1}$ at states of the ``real'' trajectory.

\begin{lemma}
  \label{lem:exploration:regret:bounded_state_difference_with_same_noise_noconstraint}
  Under \cref{as:dynamics_f_lipschitz,as:pi_lipschitz,as:reward_lipschitz,as:well_calibrated_model,as:model_predictions_lipschitz}, let $\bar{L}_f = 1 + L_\mathrm{fc} + 2 \beta_{\ni-1} L_\sigma \sqrt{1 + L_\pi}$. Then, 
  for all iterations $\ni > 0$, any function $\eta \colon \R^\nstate \times \R^\ninp \to [-1, 1]^\nstate$ and any sequence of~$\noise_\ti$ with $\tilde{\noise}_\ti = \noise_\ti$, $\pi \in \Pi$ with $1 \leq \ti \leq \Ti$ we have that
  \begin{equation}
    \| \x_{\ti,\ni} - \xo_{\ti,\ni} \| \leq 2 \beta_{\ni-1} \bar{L}_f^{\Ti-1} \sum_{i=0}^{\ti-1} \| \bsigma_{\ni-1}(\x_{i,\ni}) \| 
  \end{equation}
\end{lemma}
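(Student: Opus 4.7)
The plan is to establish a recursive one-step inequality relating $D_\ti \defeq \|\x_{\ti,\ni} - \xo_{\ti,\ni}\|$ to $D_{\ti-1}$ and $\|\bsigma_{\ni-1}(\x_{\ti-1,\ni})\|$, then unroll the recursion. Using the same-noise coupling $\tilde\noise_\ti = \noise_\ti$, the noise terms cancel in $\x_{\ti+1} - \xo_{\ti+1}$, so from the definitions in \cref{eq:xo_x_definitions_for_proofs},
\begin{equation*}
\x_{\ti+1,\ni} - \xo_{\ti+1,\ni}
= \bigl[f(\x_{\ti,\ni}) - f(\xo_{\ti,\ni})\bigr]
+ \bigl[f(\xo_{\ti,\ni}) - \bmu_{\ni-1}(\xo_{\ti,\ni})\bigr]
- \bSigma_{\ni-1}(\xo_{\ti,\ni})\eta(\xo_{\ti,\ni}),
\end{equation*}
where $f(\x) = f(\x,\pi(\x))$ as per the notational convention. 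Applying the triangle inequality and bounding each term separately yields three ingredients: (i) the Lipschitz continuity of the closed loop (\cref{cor:f_lipschitz}) gives $\|f(\x_{\ti})-f(\xo_{\ti})\| \leq L_\mathrm{fc} D_\ti$; (ii) the calibration \cref{as:well_calibrated_model} applied elementwise and collected in 2-norm gives $\|f(\xo_{\ti}) - \bmu_{\ni-1}(\xo_{\ti})\| \leq \beta_{\ni-1}\|\bsigma_{\ni-1}(\xo_{\ti})\|$; and (iii) since $\eta \in [-1,1]^\nstate$ and $\bSigma_{\ni-1} = \diag(\bsigma_{\ni-1})$, the optimism term satisfies $\|\bSigma_{\ni-1}(\xo_{\ti})\eta(\xo_{\ti})\| \leq \beta_{\ni-1}\|\bsigma_{\ni-1}(\xo_{\ti})\|$.

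Next, to replace $\bsigma_{\ni-1}(\xo_\ti)$ by $\bsigma_{\ni-1}(\x_\ti)$ (the quantity that will eventually be linked to $I_\Ni$ in \cref{eq:model_prediction_variance_bound}), I invoke the $L_\sigma$-Lipschitz continuity of $\bsigma_{\ni-1}$ from \cref{as:model_predictions_lipschitz}, combined with the argument of \cref{cor:f_lipschitz} applied to the map $\x \mapsto (\x,\pi(\x))$, to obtain
\begin{equation*}
\|\bsigma_{\ni-1}(\xo_\ti)\| \leq \|\bsigma_{\ni-1}(\x_\ti)\| + L_\sigma\sqrt{1+L_\pi}\, D_\ti.
\end{equation*}
Plugging this into the triangle-inequality bound gives the recursion
\begin{equation*}
D_{\ti+1} \leq \bigl(L_\mathrm{fc} + 2\beta_{\ni-1} L_\sigma\sqrt{1+L_\pi}\bigr) D_\ti + 2\beta_{\ni-1}\|\bsigma_{\ni-1}(\x_\ti)\|,
\end{equation*}
with $D_0 = 0$ since $\x_0 = \xo_0$. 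Denoting the bracketed multiplier by $\alpha$, the linear recurrence solves to $D_\ti \leq 2\beta_{\ni-1}\sum_{i=0}^{\ti-1}\alpha^{\ti-1-i}\|\bsigma_{\ni-1}(\x_i)\|$, and since $\alpha \leq \bar L_f = 1+\alpha$ and $\ti-1-i \leq \Ti-1$, we can uniformly upper bound $\alpha^{\ti-1-i} \leq \bar L_f^{\Ti-1}$ to obtain the claim.

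The main obstacle is conceptual rather than technical: one must be careful that the optimistic trajectory $\xo_\ti$ is generated from itself (not from $\x_\ti$), so the calibration bound is only directly applicable at $\xo_\ti$, and an additional Lipschitz exchange is needed to expose $\bsigma_{\ni-1}(\x_\ti)$. This exchange is what introduces the $2\beta_{\ni-1} L_\sigma\sqrt{1+L_\pi}$ contribution into $\bar L_f$ and explains why the bound blows up geometrically in $\Ti$. The mild overshoot in using $\bar L_f = 1 + \alpha$ rather than $\max(1,\alpha)$ is harmless and keeps the bound valid regardless of whether $\alpha \lessgtr 1$.
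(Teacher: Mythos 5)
Your proposal is correct and follows essentially the same route as the paper's proof: the same decomposition via adding and subtracting $f(\xo_\ti)$, the same three bounds (closed-loop Lipschitzness, calibration at $\xo_\ti$, and the elementwise bound on $\bSigma_{\ni-1}\eta$), the same Lipschitz exchange of $\bsigma_{\ni-1}(\xo_\ti)$ for $\bsigma_{\ni-1}(\x_\ti)$, and the same final uniform bound $\alpha^{\ti-1-i}\leq \bar L_f^{\Ti-1}$. The paper merely phrases the unrolled recurrence as an explicit induction, which is equivalent to your linear-recursion argument.
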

\begin{proof}
  We start by showing that, for any $\ti \geq 1$ we have
  \begin{equation}
    \| \x_{\ti,\ni} - \xo_{\ti,\ni} \| \leq 2 \beta_{\ni-1} \sum_{i=0}^{\ti-1} (L_\mathrm{fc} + 2 \beta_{\ni-1} L_\sigma \sqrt{1 + L_\pi})^{\ti-1-i} \| \bsigma_{\ni-1}(\x_{i,\ni}) \| 
    \label{eq:stoch_sample_confidence_noconstraint}
  \end{equation}
  by induction. For the base case we have~$\xo_0 = \x_0$. Consequently, at iteration $\ni$ we have
  \begin{align}
    \| \x_{1,\ni} - \xo_{1,\ni}\|
    &= \| f(\x_0) + \noise_0 - \bmu_{\ni-1}(\x_0) - \beta_{\ni-1} \bSigma_{\ni-1}(\x_0) \eta(\x_0) - \tilde{\noise}_0 \| \\
    &\leq \|  f(\x_0) - \bmu_{\ni-1}(\x_0) \| + \beta_{\ni-1}  \| \bSigma_{\ni-1}(\x_0) \eta(\x_0) \| \\
    &\leq \beta_{\ni-1} \| \bsigma_{\ni-1}(\x_0) \| + \beta_{\ni-1} \| \bsigma_{\ni-1}(\x_0) \| \\
    &= 2 \beta_{\ni-1} \| \bsigma_{\ni-1}(\x_0) \|
  \end{align}

  For the induction step assume that \cref{eq:stoch_sample_confidence_noconstraint} holds at time step~$\ti$. Subsequently we have at iteration $\ni$ that
  \begin{align*}
    &\| \x_{\ti+1, \ni} - \xo_{\ti+1, \ni} \|  \\
    &= \| f(\x_\ti) + \noise_\ti - \bmu_{\ni-1}(\xo_\ti) - \beta_{\ni-1} \bSigma_{\ni-1}(\xo_\ti) \eta(\xo_\ti) - \tilde{\noise}_{\ti} \| \\
    &= \| f(\x_\ti) - \bmu_{\ni-1}(\xo_\ti) - \beta_{\ni-1} \bSigma_{\ni-1}(\xo_\ti) \eta(\xo_\ti) + f(\xo_\ti) - f(\xo_\ti)\| \\
    &= \| f(\xo_\ti) - \bmu_{\ni-1}(\xo_\ti) - \beta_{\ni-1} \bSigma_{\ni-1}(\xo_\ti) \eta(\xo_\ti) + f(\x_\ti) - f(\xo_\ti)\| \\
    &= \| f(\xo_\ti) - \bmu_{\ni-1}(\xo_\ti) \| + \| \beta_{\ni-1} \bSigma_{\ni-1}(\xo_\ti) \eta(\xo_\ti)\| + \| f(\x_\ti)  - f(\xo_\ti)\| \\
    &\leq \beta_{\ni-1} \| \bsigma_{\ni-1}(\xo_\ti) \| +  \beta_{\ni-1} \| \bsigma_{\ni-1}(\xo_\ti) \| +  L_\mathrm{fc} \|\x_\ti - \xo_\ti \| \\
    &= 2 \beta_{\ni-1} \| \bsigma_{\ni-1}(\xo_\ti) \| +  L_\mathrm{fc} \|\x_\ti - \xo_\ti \| \\
    &= 2 \beta_{\ni-1} \| \bsigma_{\ni-1}(\x_\ti) + \bsigma_{\ni-1}(\xo_\ti) - \bsigma_{\ni-1}(\x_\ti) \| +  L_\mathrm{fc} \|\x_\ti - \xo_\ti \| \\
    &\leq 2 \beta_{\ni-1} \left( \| \bsigma_{\ni-1}(\x_\ti) \| + L_\sigma \sqrt{1 + L_\pi} \| \x_\ti - \xo_\ti \| \right) +  L_\mathrm{fc} \|\x_\ti - \xo_\ti \| \\
    &= 2 \beta_{\ni-1} \| \bsigma_{\ni-1}(\x_\ti) \| + (L_\mathrm{fc} + 2 \beta_{\ni-1} L_\sigma \sqrt{1 + L_\pi}) \| \x_\ti - \xo_\ti \| \\
    &\leq 2 \beta_{\ni-1} \| \bsigma_{\ni-1}(\xo_\ti) \| + (L_\mathrm{fc} + 2 \beta_{\ni-1} L_\sigma \sqrt{1 + L_\pi}) 2 \beta_{\ni-1} \sum_{i=0}^{\ti-1} (L_\mathrm{fc} + 2 \beta_{\ni-1} L_\sigma \sqrt{1 + L_\pi})^{\ti-1-i} \| \bsigma_{\ni-1}(\x_i) \| \\
    &= 2 \beta_{\ni-1} \sum_{i=0}^{(\ti+1) - 1 } (L_\mathrm{fc} + 2 \beta_{\ni-1} L_\sigma \sqrt{1 + L_\pi})^{(\ti+1) -1 -i} \| \bsigma_{\ni-1}(\x_i) \|
  \end{align*}

  Thus \cref{eq:stoch_sample_confidence_noconstraint} holds. Now since $\ti \leq \Ti$ we have
  \begin{align}
    \| \x_{\ti,\ni} - \xo_{\ti,\ni} \| 
    &\leq 2 \beta_{\ni-1} \sum_{i=0}^{\ti-1} (L_\mathrm{fc} + 2 \beta_{\ni-1} L_\sigma \sqrt{1 + L_\pi})^{\ti-1-i} \| \bsigma_{\ni-1}(\x_{i,\ni}) \| \\
    &\leq 2 \beta_{\ni-1} \sum_{i=0}^{\ti-1} (1 + L_\mathrm{fc} + 2 \beta_{\ni-1} L_\sigma \sqrt{1 + L_\pi})^{\ti-1-i} \| \bsigma_{\ni-1}(\x_{i,\ni}) \| \\
    &\leq 2 \beta_{\ni-1} \underset{\eqdef \bar{L}_f}{\underbrace{(1 + L_\mathrm{fc} + 2 \beta_{\ni-1} L_\sigma \sqrt{1 + L_\pi})}}^{\Ti-1} \sum_{i=0}^{\ti-1} \| \bsigma_{\ni-1}(\x_{i,\ni}) \| \\
  \end{align} 
\end{proof}

\begin{corollary}
  \label{lem:exploration:regret:bound_expected_state_divergence_by_sigma}
  Under the assumptions of \cref{lem:exploration:regret:bounded_state_difference_with_same_noise_noconstraint}, for any sequence of~$\eta_\ti \in [-1, 1]$, $\btheta \in \mathcal{D}$, and $\ti \geq 1$, $\ni \geq 1$ we have that
  \begin{equation}
    \E[\noise = \tilde{\noise}]{ \| \x_{\ti,\ni} - \xo_{\ti,\ni} \| }
    \leq 2 \beta_{\ni-1} \bar{L}_f^{\Ti-1} \E[\noise]{\sum_{i=0}^{\ti-1} \| \bsigma_{\ni-1}(\x_{i, \ni}) \| } 
  \end{equation} 
\end{corollary}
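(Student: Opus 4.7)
The plan is to observe that this corollary follows almost immediately from \cref{lem:exploration:regret:bounded_state_difference_with_same_noise_noconstraint} by taking expectations on both sides. The preceding lemma establishes a \emph{pointwise} (for each fixed noise realization with $\tilde{\noise} = \noise$) bound of the form
\begin{equation*}
  \| \x_{\ti,\ni} - \xo_{\ti,\ni} \| \leq 2 \beta_{\ni-1} \bar{L}_f^{\Ti-1} \sum_{i=0}^{\ti-1} \| \bsigma_{\ni-1}(\x_{i,\ni}) \|,
\end{equation*}
and since the inequality holds for every realization in the joint probability space where $\tilde{\noise}_i = \noise_i$, monotonicity of the expectation operator yields the claim.

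More concretely, the steps are as follows. First, fix an arbitrary joint realization of the noise sequence $(\noise_0, \ldots, \noise_{\ti-1})$ and set $\tilde{\noise}_i = \noise_i$; this is exactly the coupling hypothesis under which \cref{lem:exploration:regret:bounded_state_difference_with_same_noise_noconstraint} applies. Second, apply that lemma to obtain the pointwise inequality above; note in particular that the right-hand side only depends on the \emph{true} trajectory $\x_{0,\ni}, \ldots, \x_{\ti-1,\ni}$ (driven by $\noise$) and not separately on $\tilde{\noise}$, so under the coupling $\noise = \tilde{\noise}$ it can be written as a function of $\noise$ alone. Third, take $\E[\noise = \tilde{\noise}]{\cdot}$ on both sides; since $2\beta_{\ni-1} \bar{L}_f^{\Ti-1}$ is a deterministic constant, linearity of expectation pulls it outside the sum and we obtain
\begin{equation*}
  \E[\noise = \tilde{\noise}]{ \| \x_{\ti,\ni} - \xo_{\ti,\ni} \| } \leq 2 \beta_{\ni-1} \bar{L}_f^{\Ti-1} \E[\noise]{ \sum_{i=0}^{\ti-1} \| \bsigma_{\ni-1}(\x_{i,\ni}) \| },
\end{equation*}
where on the right we may drop the coupling and write $\E[\noise]{\cdot}$ because the expression inside depends only on $\noise$.

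There is no real obstacle here: the only subtlety is the measure-theoretic point that monotonicity of the expectation requires the inequality to hold almost surely on the common probability space, which is guaranteed by the ``for any sequence of $\noise_\ti$ with $\tilde{\noise}_\ti = \noise_\ti$'' quantifier in the previous lemma. This corollary is essentially a bookkeeping step that converts the almost-sure bound of \cref{lem:exploration:regret:bounded_state_difference_with_same_noise_noconstraint} into the expected-value form that plugs cleanly into \cref{lem:exploration:regret:J_difference_bound_by_state_divergence} when assembling the overall regret bound.
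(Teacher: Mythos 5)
Your proposal is correct and matches the paper's proof, which simply states that the corollary is a direct consequence of \cref{lem:exploration:regret:bounded_state_difference_with_same_noise_noconstraint}; you have just spelled out the implicit step of taking expectations over the almost-sure pointwise bound under the coupling $\tilde{\noise} = \noise$. No gaps.
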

\begin{proof}
  This is a direct consequence of \cref{lem:exploration:regret:bounded_state_difference_with_same_noise_noconstraint}.
\end{proof}

As a direct consequence of these lemmas, we can bound the regret in terms of the predictive uncertainty of our statistical model in expectation over the states visited under the true dynamics.

\begin{lemma}
  \label{lem:exploration:regret:bound_squared_regret_by_expected_variance}
  Under \cref{as:well_calibrated_model,as:reward_lipschitz,as:pi_lipschitz}, let $L_J = 2 L_r \sqrt{1 + L_\pi} \beta_{\ni-1} \bar{L}_f^{\Ti-1}$. Then, with probability at least $(1 - \delta)$ it holds for all $\ni \geq 0$ that
  \begin{equation}
    r_\ni^2 \leq L_J^2 \Ti^3 \E[\noise]{ \sum_{\ti=0}^{\Ti-1}
    \| \bsigma_{\ni-1}(\x_{\ti,\ni}) \|_2^2} 
    %
  \end{equation}
\end{lemma}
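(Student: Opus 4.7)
The plan is to chain together the three lemmas that have just been established and then absorb the resulting sum of norms into the claimed bound via Cauchy--Schwarz and Jensen's inequality. All the hard work on how the optimistic trajectory deviates from the true one has already been done in Lemmas on $J$-difference and on the state divergence, so what is left is essentially a counting/convexity argument.

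First I would start from the regret--optimism inequality $r_\ni \leq J(\tilde{f}_\ni,\pi_\ni) - J(f,\pi_\ni)$ that holds under \cref{as:well_calibrated_model}, then plug in the Lipschitz bound
\[
|J(\tilde{f}_\ni,\pi_\ni)-J(f,\pi_\ni)| \leq L_r\sqrt{1+L_\pi}\sum_{\ti=0}^{\Ti}\E[\noise=\tilde{\noise}]{\|\x_{\ti,\ni}-\xo_{\ti,\ni}\|}
\]
from \cref{lem:exploration:regret:J_difference_bound_by_state_divergence}, and finally substitute \cref{lem:exploration:regret:bound_expected_state_divergence_by_sigma} to express the deviation purely through the predictive standard deviations along the true trajectory. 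Using the definition $L_J = 2 L_r \sqrt{1+L_\pi}\,\beta_{\ni-1}\bar{L}_f^{\Ti-1}$, this yields
\[
r_\ni \leq L_J \sum_{\ti=0}^{\Ti}\sum_{i=0}^{\ti-1}\E[\noise]{\|\bsigma_{\ni-1}(\x_{i,\ni})\|_2}.
\]

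Next I would reorganize the double sum. Each term $\|\bsigma_{\ni-1}(\x_{i,\ni})\|_2$ is counted at most $\Ti$ times as $\ti$ ranges over $0,\dots,\Ti$, so $\sum_{\ti=0}^{\Ti}\sum_{i=0}^{\ti-1}\|\bsigma_{\ni-1}(\x_{i,\ni})\|_2 \leq \Ti \sum_{\ti=0}^{\Ti-1}\|\bsigma_{\ni-1}(\x_{\ti,\ni})\|_2$. Then a single application of Cauchy--Schwarz bounds this by $\Ti\sqrt{\Ti}\bigl(\sum_{\ti=0}^{\Ti-1}\|\bsigma_{\ni-1}(\x_{\ti,\ni})\|_2^2\bigr)^{1/2}$, giving
\[
r_\ni \leq L_J\,\Ti^{3/2}\,\E[\noise]{\Bigl(\sum_{\ti=0}^{\Ti-1}\|\bsigma_{\ni-1}(\x_{\ti,\ni})\|_2^2\Bigr)^{1/2}}.
\]

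Finally I would square both sides and use Jensen's inequality for the concave square root, i.e.\ $(\E{\sqrt{X}})^2 \leq \E{X}$, to pull the square inside the expectation and obtain the claimed inequality
\[
r_\ni^2 \leq L_J^2\,\Ti^3\,\E[\noise]{\sum_{\ti=0}^{\Ti-1}\|\bsigma_{\ni-1}(\x_{\ti,\ni})\|_2^2}.
\]
The joint high-probability event of failure is inherited from \cref{as:well_calibrated_model} and holds uniformly over $\ni\geq 0$ with probability at least $1-\delta$. There is no real obstacle here: the only potentially delicate step is making sure the ``square then apply Jensen'' order is right (squaring must come before invoking Jensen, because applying Jensen to $\E{\sqrt{X}}^2$ requires the concavity of $\sqrt{\cdot}$), but otherwise the proof is a direct assembly of previously established lemmas with a standard Cauchy--Schwarz/counting trick on the telescoped double sum.
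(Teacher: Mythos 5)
Your proposal is correct and follows essentially the same route as the paper: chain the optimism inequality, the Lipschitz bound on the value difference, and the state-divergence corollary, collapse the double sum with a factor of $\Ti$, and then trade a factor of $\Ti$ for moving from a sum of norms to a sum of squared norms. The only cosmetic difference is the order of the final steps — the paper squares first, applies $(\E{X})^2\leq\E{X^2}$, and then uses $(\sum_\ti a_\ti)^2\leq\Ti\sum_\ti a_\ti^2$, whereas you apply Cauchy--Schwarz inside the expectation first and then Jensen for the concave square root — but these are equivalent and yield the identical bound.
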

\begin{proof}
\begin{align}
  r_\ni &\leq J(\tilde{f}_\ni, \pi_\ni) - J(f, \pi_\ni)  \\
  &\leq L_r \sqrt{1 + L_\pi}  \sum_{\ti=0}^\Ti \E[\noise = \tilde{\noise}]{ \|\x_{\ti,\ni} - \xo_{\ti,\ni} \|_2 } \\
  &\leq 2 L_r \sqrt{1 + L_\pi} \beta_{\ni-1} \bar{L}_f^{\Ti-1} \sum_{\ti=0}^\Ti \E[\noise]{ 
    \sum_{i=0}^{\ti-1}  \| \bsigma_{\ni-1}(\x_{i,\ni}) \|_2
  }  \\
  &\leq 2 L_r \sqrt{1 + L_\pi} \beta_{\ni-1} \bar{L}_f^{\Ti-1} \Ti \E[\noise]{ 
    \sum_{\ti=0}^{\Ti-1} \| \bsigma_{\ni-1}(\x_{\ti,\ni}) \|_2 
  } 
\end{align}
where the third inequality follows from \cref{lem:exploration:regret:bound_expected_state_divergence_by_sigma}.
Now, let $L_J = 2 L_r \sqrt{1 + L_\pi} \beta_{\ni-1} \bar{L}_f^{\Ti-1}$, so that
\begin{align}
  r_\ni &\leq L_J \Ti \E[\noise]{ 
    \sum_{\ti=0}^{\Ti-1}  \| \bsigma_{\ni-1}(\x_{\ti,\ni}) \|_2 
  } \\
  r_\ni^2 &\leq L_J^2 \Ti^2 \left( \E[\noise]{ 
    \sum_{\ti=0}^{\Ti-1}  \| \bsigma_{\ni-1}(\x_{\ti,\ni}) \|_2 } \right)^2 \\
  &\leq L_J^2 \Ti^2 \E[\noise]{ \left( 
    \sum_{\ti=0}^{\Ti-1}  \| \bsigma_{\ni-1}(\x_{\ti,\ni}) \|_2 \right)^2 }  \\
  &\leq L_J^2 \Ti^3 \E[\noise]{  
    \sum_{\ti=0}^{\Ti-1}  \| \bsigma_{\ni-1}(\x_{\ti,\ni}) \|_2^2 }  
\end{align}
\end{proof}

\begin{lemma}
  \label{lem:exploration:regret:bound_cumulative_regret_by_expected_uncertainty}
  Under the assumption of \cref{as:well_calibrated_model,as:dynamics_f_lipschitz,as:pi_lipschitz,as:reward_lipschitz,as:model_predictions_lipschitz},  with probability at least $(1 - \delta)$ it holds for all $\ni \geq 0$ that
  \begin{equation}
    R_\Ni^2 \leq \Ni  L_J^2 \Ti^3 \sum_{\ni=1}^\Ni\E[\noise]{ \sum_{\ti=0}^{\Ti-1} \| \sigma_{\ni-1}(\x_{\ti,\ni}, \u_{\ti,\ni})^2 \|_2^2 } 
  \end{equation}
\end{lemma}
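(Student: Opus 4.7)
The plan is to reduce the cumulative regret $R_\Ni = \sum_{\ni=1}^\Ni r_\ni$ to a per-episode sum via Cauchy–Schwarz, then substitute the per-episode bound from \cref{lem:exploration:regret:bound_squared_regret_by_expected_variance}. Concretely, I would first apply Cauchy–Schwarz in the form $\left(\sum_{\ni=1}^\Ni r_\ni\right)^2 \leq \Ni \sum_{\ni=1}^\Ni r_\ni^2$, which costs only a factor of $\Ni$ and is the standard device for turning a sum of regrets into a sum of squared regrets. Note that the $r_\ni \geq 0$ holds with probability at least $(1-\delta)$ thanks to \cref{as:well_calibrated_model} (the optimistic value overestimates the true value under the miscalibration event of probability at most $\delta$), which is exactly the event on which the proposal is made.

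Next, I would invoke \cref{lem:exploration:regret:bound_squared_regret_by_expected_variance}, which already holds on the same high-probability event, to bound
\[
r_\ni^2 \leq L_J^2\,\Ti^3\,\E[\noise]{\sum_{\ti=0}^{\Ti-1}\|\bsigma_{\ni-1}(\x_{\ti,\ni})\|_2^2}
\]
for every episode $\ni$. Summing this over $\ni = 1,\dots,\Ni$ and chaining with the Cauchy–Schwarz step yields exactly the claimed inequality, after re-writing $\bsigma_{\ni-1}(\x_{\ti,\ni})$ as $\bsigma_{\ni-1}(\x_{\ti,\ni},\u_{\ti,\ni})$ via the convention that the closed-loop action is $\u_{\ti,\ni} = \pi_\ni(\x_{\ti,\ni})$.

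There is essentially no obstacle here: both ingredients (Cauchy–Schwarz and the per-episode bound) are already in place, and the high-probability event is the same one used throughout \cref{ap:exploration_proof}. The only care point is the union bound on $\delta$—we are not taking a union over $\Ni$ because \cref{as:well_calibrated_model} already provides a single $(1-\delta)$ event that holds jointly for all $\ni\geq 0$, so \cref{lem:exploration:regret:bound_squared_regret_by_expected_variance} applies simultaneously across episodes with no further probability budget. Thus the proof is a two-line argument and serves mainly as a bookkeeping lemma before introducing the complexity measure $I_\Ni$ to bound the right-hand side in terms of the model's information capacity.
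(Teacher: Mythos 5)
Your proposal is correct and is essentially identical to the paper's proof: the paper also expands $R_\Ni^2 = (\sum_{\ni=1}^\Ni r_\ni)^2 \leq \Ni \sum_{\ni=1}^\Ni r_\ni^2$ (it labels this step Jensen's inequality rather than Cauchy--Schwarz, but it is the same elementary bound) and then substitutes \cref{lem:exploration:regret:bound_squared_regret_by_expected_variance} termwise on the same $(1-\delta)$ calibration event. Your remarks about the shared high-probability event and the closed-loop convention $\u_{\ti,\ni} = \pi_\ni(\x_{\ti,\ni})$ are consistent with how the paper handles these points.
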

\begin{proof}
  \begin{align}
    R_\Ni^2 &= \left( \sum_{\ni=1}^\Ni r_\ni \right)^2 \\
    &\leq \Ni \sum_{\ni=1}^\Ni r_\ni^2 && \text{Jensen's} \\ 
    &\leq \Ni  L_J^2 \Ti^3 \sum_{\ni=1}^\Ni\E[\noise]{ \sum_{\ti=0}^{\Ti-1} \| \bsigma_{\ni-1}(\x_{\ti,\ni}, \u_{\ti,\ni})^2 \|_2^2 } 
    && \text{
      \cref{lem:exploration:regret:bound_squared_regret_by_expected_variance}
    } 
  \end{align}
\end{proof}

That is, at every iteration $\ni$ the regret bound increases by the sum of predictive uncertainties in expectation over the true states that we may visit. This is an instance-dependent bound, since it depends on specific data collected up to iteration $\ni$ within $\sigma_{\ni-1}$. We will replace this with a worst-case bound in the following.

\begin{lemma}
  \label{lem:exploration:regret:bound_cumulative_regret_by_worst_case_uncertainty}
  Under the assumption of \cref{as:well_calibrated_model,as:dynamics_f_lipschitz,as:pi_lipschitz,as:reward_lipschitz,as:model_predictions_lipschitz}, let $\x_{\ti,\ni} \in \X_\ni$, $\X_{\ni-1} \subseteq \X_\ni$, and $\u_{\ti,\ni} \in \U$ for all $\ti,\ni>0$ with compact sets $\X_\ni$ and $\U$. Then, with probability at least $(1 - \delta)$ it holds for all $\ni \geq 0$ that
  \begin{equation}
    R_\Ni^2 \leq \Ni  L_J^2 \Ti^3 I_\Ni(\X_\ni, \U) 
  \end{equation}
  where 
  \begin{equation}
      I_\Ni(\X, \U) = \max_{\dataset_1, \dots, \dataset_\Ni \subset \X \times \X \times \U, \, |\dataset_i| = \Ti} \sum_{\ni=1}^\Ni \sum_{\x, \u \in \dataset_\ni} \| \bsigma_{\ni-1}(\x, \u) \|_2^2
  \end{equation}
\end{lemma}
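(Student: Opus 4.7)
The plan is to start from Lemma~\ref{lem:exploration:regret:bound_cumulative_regret_by_expected_uncertainty}, which already bounds $R_\Ni^2$ by $\Ni L_J^2 \Ti^3$ times an expected cumulative sum of predictive variances along the \emph{actual} rollouts, and then remove the expectation by passing to a deterministic worst case that matches the definition of $I_\Ni$. Concretely, the goal is to show that for every realization of the noise, the inner double sum $\sum_{\ni=1}^\Ni \sum_{\ti=0}^{\Ti-1} \| \bsigma_{\ni-1}(\x_{\ti,\ni}, \u_{\ti,\ni}) \|_2^2$ is a feasible value of the combinatorial maximization defining $I_\Ni(\X_\Ni, \U)$, so that taking expectations preserves the bound.

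First I would fix an arbitrary realization of $\{\noise_{\ti,\ni}\}$ and collect the observed one-step transitions into per-episode datasets $\dataset_\ni(\noise) = \{(\x_{\ti-1,\ni}, \x_{\ti,\ni}, \u_{\ti-1,\ni})\}_{\ti=1}^{\Ti}$, each of cardinality exactly $\Ti$. By hypothesis $\x_{\ti,\ni} \in \X_\ni$ and $\u_{\ti,\ni} \in \U$, and by the monotonicity $\X_{\ni-1} \subseteq \X_\ni$ we get $\x_{\ti,\ni} \in \X_\Ni$ for all $\ti,\ni$. Hence $\dataset_\ni(\noise) \subset \X_\Ni \times \X_\Ni \times \U$, and the tuple $(\dataset_1(\noise), \ldots, \dataset_\Ni(\noise))$ is a feasible choice in the maximization defining $I_\Ni(\X_\Ni, \U)$ (where $\X_\Ni$ plays the role of $\X$ in that definition, using the nesting).

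The subtle point — and what I expect to be the one thing that needs to be stated carefully — is that $\bsigma_{\ni-1}$ is itself data-dependent: it is the posterior/frequentist variance obtained after conditioning on $\dataset_{1:\ni-1}$. In the definition of $I_\Ni$ the maximization is taken jointly over $\dataset_1, \ldots, \dataset_\Ni$, with $\bsigma_{\ni-1}$ implicitly understood to be computed from the same $\dataset_{1:\ni-1}$ that is being maximized over. This is exactly the coupling that occurs along the algorithm's trajectory, so the realized sequence of datasets is a legitimate feasible point of the max and not a relaxation of it. Consequently, pointwise in the noise,
$\sum_{\ni=1}^\Ni \sum_{\ti=0}^{\Ti-1} \| \bsigma_{\ni-1}(\x_{\ti,\ni}, \u_{\ti,\ni}) \|_2^2 \;\leq\; I_\Ni(\X_\Ni, \U).$

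Finally, since the right-hand side above is a deterministic constant, taking the expectation over $\noise$ leaves it unchanged, and summing over $\ni$ (already absorbed on the left) yields $\sum_{\ni=1}^\Ni \E_{\noise}[\sum_{\ti=0}^{\Ti-1} \| \bsigma_{\ni-1}(\x_{\ti,\ni}, \u_{\ti,\ni}) \|_2^2] \le I_\Ni(\X_\Ni, \U)$. Plugging this into Lemma~\ref{lem:exploration:regret:bound_cumulative_regret_by_expected_uncertainty} gives $R_\Ni^2 \leq \Ni L_J^2 \Ti^3 I_\Ni(\X_\Ni, \U)$, which is the claimed inequality (reading the subscript in the statement as the terminal index $\Ni$, which by the nesting assumption contains all the visited states). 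The regret bound $R_\Ni \leq \Or{L_\sigma^\Ti \beta_{\Ni-1}^\Ti \sqrt{\Ni \Ti^3 I_\Ni(\X, \U)}}$ stated in Theorem~\ref{thm:exploration:regret:general_regret_bound} then follows immediately by taking square roots and substituting $L_J = 2 L_r \sqrt{1 + L_\pi}\, \beta_{\Ni-1}\, \bar{L}_f^{\Ti-1}$ with $\bar{L}_f = \Or{\beta_{\Ni-1} L_\sigma}$.
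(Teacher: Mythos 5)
Your proposal is correct and follows essentially the same route as the paper: the realized per-episode datasets form a feasible point of the maximization defining $I_\Ni$, so the expected cumulative variance from Lemma~\ref{lem:exploration:regret:bound_cumulative_regret_by_expected_uncertainty} is bounded pointwise (and hence in expectation) by the deterministic quantity $I_\Ni(\X_\Ni, \U)$. The paper's own proof is a one-line version of this; your elaboration of the data-dependence of $\bsigma_{\ni-1}$ and of the nesting $\X_{\ni-1} \subseteq \X_\ni$ only makes explicit what the paper leaves implicit.
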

\begin{proof}
    As a consequence of $\x_{\ti,\ni} \in \X_\ni$ we have 
    \begin{equation}
        \sum_{\ni=1}^\Ni\E[\noise]{ \sum_{\ti=0}^{\Ti-1} \| \bsigma_{\ni-1}(\x_{\ti,\ni}, \u_{\ti,\ni})^2 \|_2^2 } 
        \leq I_\Ni(\X_\ni, \U)
    \end{equation}
    and thus
    \begin{equation}
        R_\Ni^2 \leq \Ni  L_J^2 \Ti^3 I_\Ni(\X_\ni, \U) .
    \end{equation}
\end{proof}

\begin{theorem}
  \label{thm:exploration:regret:general_regret_bound_increasing_sets}
   Under \cref{as:dynamics_f_lipschitz,as:pi_lipschitz,as:reward_lipschitz,as:well_calibrated_model,as:model_predictions_lipschitz} let $\x_{\ti,\ni} \in \X_\ni$, $\X_{\ni-1} \subseteq \X_\ni$, and $\u_{\ti,\ni} \in \U$ for all $\ti,\ni>0$. 
   Then, for all $\Ni \geq 1$, with probability at least $(1-\delta)$, the regret of \alg in \cref{eq:optimistic_exploration} is at most
  $
    R_\Ni \leq \Or{\beta_{\Ni-1}^\Ti L_\sigma^\Ti  \sqrt{ \Ni \Ti^3  \, I_\Ni(\X_\Ni, \U) }  }
  $.
\end{theorem}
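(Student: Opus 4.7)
The plan is to follow a standard regret-decomposition scheme, chaining together several Lipschitz-based bounds to relate the cumulative regret $R_N$ to the sum of one-step predictive variances $\|\bsigma_{\ni-1}(\x,\u)\|_2^2$ along the true trajectory, and then replace this instance-dependent quantity by the worst-case model complexity $I_\Ni(\X,\U)$.

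First I would bound the per-episode regret $r_\ni = J(f,\pi^*) - J(f,\pi_\ni)$ by the optimism gap $J(\tilde f_\ni,\pi_\ni) - J(f,\pi_\ni)$. The calibration assumption guarantees, via the reparameterization argument, that for the true dynamics $f$ there exists some $\eta^\star(\cdot)\in[-1,1]^\nstate$ with $f=\bmu_{\ni-1}+\beta_{\ni-1}\bSigma_{\ni-1}\eta^\star$, so $f$ is a feasible choice in the inner maximization of \cref{eq:optimistic_exploration}. Hence $J(f,\pi^*)\leq J(\tilde f_\ni,\pi_\ni)$ with probability $1-\delta$ uniformly in $\ni$.

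Next I would bound the optimism gap by the expected trajectory divergence. Coupling the noise of the two rollouts (\cref{as:reward_lipschitz,as:pi_lipschitz} and \cref{cor:f_lipschitz}) gives
\begin{equation}
|J(\tilde f_\ni,\pi_\ni)-J(f,\pi_\ni)|\le L_r\sqrt{1+L_\pi}\sum_{\ti=0}^{\Ti}\E[\noise=\tilde\noise]{\|\x_{\ti,\ni}-\xo_{\ti,\ni}\|_2}.
\end{equation}
The central technical step is then to bound $\|\x_{\ti,\ni}-\xo_{\ti,\ni}\|$ inductively in terms of $\bsigma_{\ni-1}$ evaluated at states of the \emph{true} trajectory. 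At each step, a triangle inequality splits the increment into (i) a calibration term bounded by $\beta_{\ni-1}\|\bsigma_{\ni-1}(\xo_\ti)\|$, (ii) an optimistic hallucination term of the same magnitude, and (iii) a Lipschitz term $L_\mathrm{fc}\|\x_\ti-\xo_\ti\|$; after transferring $\bsigma_{\ni-1}(\xo_\ti)$ to $\bsigma_{\ni-1}(\x_\ti)$ using \cref{as:model_predictions_lipschitz} (paying another $L_\sigma\sqrt{1+L_\pi}\|\x_\ti-\xo_\ti\|$ factor), unrolling yields a geometric series with ratio $\bar L_f = 1 + L_\mathrm{fc} + 2\beta_{\ni-1}L_\sigma\sqrt{1+L_\pi}$ and $\Ti-1$ steps of compounding. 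This is the source of the $L_\sigma^\Ti\beta_{\Ni-1}^\Ti$ blow-up and is the main obstacle: one must resist the temptation to bound $\bsigma_{\ni-1}(\xo_\ti)$ along the optimistic trajectory (which cannot be related to the data we will actually collect) and instead pay the Lipschitz price to land on $\x_\ti$.

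Finally, I would convert the per-episode inequality to a cumulative bound. Squaring and using Jensen's inequality twice -- once over the $\Ti$ time steps inside each episode, once over the $\Ni$ episodes -- gives
\begin{equation}
R_\Ni^2 \le \Ni\, L_J^2\, \Ti^3\, \sum_{\ni=1}^{\Ni}\E[\noise]{\sum_{\ti=0}^{\Ti-1}\|\bsigma_{\ni-1}(\x_{\ti,\ni},\u_{\ti,\ni})\|_2^2},
\end{equation}
with $L_J = \mathcal O(\beta_{\Ni-1}L_\sigma^{\Ti-1}\beta_{\Ni-1}^{\Ti-1})$. Since the actual trajectories $\{(\x_{\ti,\ni},\u_{\ti,\ni})\}$ form an admissible sequence of datasets in the maximization defining $I_\Ni(\X,\U)$ (using $\x_{\ti,\ni}\in\X$ and $\u_{\ti,\ni}\in\U$), the empirical sum is upper bounded by $I_\Ni(\X,\U)$ almost surely, and taking square roots gives the advertised $R_\Ni=\mathcal O\bigl(L_\sigma^\Ti\beta_{\Ni-1}^\Ti\sqrt{\Ni\Ti^3 I_\Ni(\X,\U)}\bigr)$. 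The only subtlety is that the high-probability event from the calibration assumption must hold uniformly across all $\ni\le\Ni$, which is already built into \cref{as:well_calibrated_model}.
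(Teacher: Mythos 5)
Your proposal is correct and follows essentially the same route as the paper's own proof: optimism via feasibility of the true dynamics, noise-coupled Lipschitz reduction to trajectory divergence, the inductive one-step recursion with the $L_\sigma$-transfer from $\bsigma_{\ni-1}(\xo_\ti)$ to $\bsigma_{\ni-1}(\x_\ti)$ yielding the ratio $\bar L_f$, double application of Jensen's inequality, and the final worst-case bound by $I_\Ni$. The subtlety you flag---that the variance must be evaluated along the true trajectory, at the cost of the $L_\sigma^\Ti\beta_{\Ni-1}^\Ti$ factor---is exactly the content of the paper's key lemma.
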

\begin{proof}
  From \cref{lem:exploration:regret:bound_cumulative_regret_by_worst_case_uncertainty} we have
  \begin{align}
    R_\Ni^2 \leq \Ni  L_J^2 \Ti^3 I_\Ni(\X_\ni, \U)  \\
    R_\Ni \leq L_J \sqrt{ \Ti^3 I_\Ni(\X_\ni, \U) }
  \end{align}
  where $L_J = 2 L_r \sqrt{1 + L_\pi} \beta_{\ni-1} \bar{L}_f^{\Ti-1}$ from \cref{lem:exploration:regret:bound_squared_regret_by_expected_variance} and $\bar{L}_f = 1 + L_f + 2 \beta_{\ni - 1} L_\sigma \sqrt{1 + L_\pi}$ from \cref{lem:exploration:regret:bounded_state_difference_with_same_noise_noconstraint}. Plugging in we get $L_J = 2 L_r \sqrt{1 + L_\pi} \beta_{\ni-1} (1 + L_f + 2 \beta_{\ni - 1} L_\sigma  \sqrt{1 + L_\pi})^{\Ti-1} = \Or{\beta_{\ni - 1}^\Ti L_\sigma^\Ti}$ so that
  \begin{align}
    R_\Ni &\leq \Or{ \beta_{\ni - 1}^\Ti L_\sigma^\Ti \sqrt{\Ti^3 I_\Ni(\X_\ni, \U) } } 
  \end{align} 
\end{proof}

\generalregretbound*
\begin{proof}
  A direct consequence of \cref{thm:exploration:regret:general_regret_bound_increasing_sets}.
\end{proof}

\ActivateWarningFilters[pdftoc]
\section{Properties of the Functions $\boldsymbol{\eta(\cdot)}$}
\DeactivateWarningFilters[pdftoc]
\label{ap:exploration:practical_implementation}
\label{ap:properties_of_eta_functions}

So far, we have considered general functions $\eta \colon \R^\nstate \times \R^\ninp \to [-1, 1]^\nstate$, which can potentially be discontinuous. However, as long as \cref{lem:eta_fun_existance} holds and the true dynamics are feasible in \cref{eq:optimistic_exploration}, we can use any more restrictive function class. In this section, we investigate properties of $\eta$.

It is clear, that it is sufficient to consider functions $\eta$ such that $\bSigma_\ni(\x) \eta(\x)$  is Lipschitz continuous, since it aims to approximate a Lipschitz continuous function $f$:
\begin{lemma}
    \label{lem:eta_times_sigma_lipschitz}
    With \cref{as:well_calibrated_model,as:dynamics_f_lipschitz,as:pi_lipschitz,as:model_predictions_lipschitz} let $\eta(\cdot)$ be a function such that $f(\x) - \bmu_\ni(\x) = \beta_\ni \bSigma_\ni(\x) \eta(\x)$ as in \cref{lem:eta_fun_existance}. Then $\bSigma_\ni(\x) \eta(\x)$ is Lipschitz continuous.
\end{lemma}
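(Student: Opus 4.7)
The proof plan is short because the statement follows almost immediately from the hypothesis and the earlier Lipschitz assumptions. The key observation is that the defining equation $f(\x) - \bmu_\ni(\x) = \beta_\ni \bSigma_\ni(\x)\eta(\x)$ pins down the product $\bSigma_\ni(\x)\eta(\x)$ (not $\eta$ alone) as the quantity we need to analyze; $\eta$ by itself may well be discontinuous whenever $\bSigma_\ni$ has a non-trivial kernel, but that does not matter.

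The plan is therefore as follows. First, recall that in the shorthand used in this section the symbol $\x$ stands for a state argument and the maps $f, \bmu_\ni, \bSigma_\ni$ are composed with a fixed policy $\pi\in\Pi$, so the relevant Lipschitz constants are the closed-loop ones obtained from \cref{cor:f_lipschitz}. By \cref{as:dynamics_f_lipschitz} the open-loop $f$ is $L_f$-Lipschitz, so composed with $\pi$ it is $L_f\sqrt{1+L_\pi}$-Lipschitz, and by \cref{as:model_predictions_lipschitz} the closed-loop $\bmu_\ni$ is $L_\mu\sqrt{1+L_\pi}$-Lipschitz. Hence $f-\bmu_\ni$ is Lipschitz continuous with constant at most $(L_f+L_\mu)\sqrt{1+L_\pi}$.

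Second, solving the identity $\beta_\ni \bSigma_\ni(\x)\eta(\x) = f(\x)-\bmu_\ni(\x)$ for the product and dividing by $\beta_\ni>0$ gives
\begin{equation*}
\bSigma_\ni(\x)\eta(\x) \;=\; \tfrac{1}{\beta_\ni}\bigl(f(\x)-\bmu_\ni(\x)\bigr),
\end{equation*}
which is Lipschitz continuous with constant at most $(L_f+L_\mu)\sqrt{1+L_\pi}/\beta_\ni$, as the claim asserts.

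There is essentially no obstacle: the only thing to be careful about is not to try to argue that $\eta$ itself is Lipschitz (it need not be, since $\bSigma_\ni$ can be singular), and to apply \cref{cor:f_lipschitz} to convert the open-loop Lipschitz constants of \cref{as:dynamics_f_lipschitz,as:model_predictions_lipschitz} into closed-loop constants when $\x$ is used as shorthand for a state under the fixed policy $\pi$. If one prefers to work with the joint state-action variable $(\x,\u)$ directly, the same calculation goes through with Lipschitz constants $L_f$ and $L_\mu$ in place of $L_f\sqrt{1+L_\pi}$ and $L_\mu\sqrt{1+L_\pi}$, since then $\pi$ is not composed.
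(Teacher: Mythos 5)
Your proposal is correct and follows essentially the same route as the paper: both rearrange the defining identity to write $\bSigma_\ni(\x)\eta(\x)$ as a multiple of $f(\x)-\bmu_\ni(\x)$ and then invoke the Lipschitz continuity of $f$ and $\bmu_\ni$. You are in fact slightly more careful than the paper's two-line proof, which keeps only the constant $(L_f+L_\mu)$ and silently drops both the $1/\beta_\ni$ factor and the closed-loop $\sqrt{1+L_\pi}$ factor; neither omission affects the qualitative conclusion.
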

\begin{proof}
  \begin{align}
      \| \bSigma_\ni(\x) \eta(\x) - \bSigma_\ni(\x') \eta(\x')\|
      &\leq \| f(\x) - \bmu_\ni(\x) - ( f(\x') - \bmu_\ni(\x')) \| \\
      &\leq (L_f + L_\mu) \| \x -\x' \|
  \end{align}
\end{proof}

Unfortunately, the same is not true for $\eta$ on its own in general. However, if the predictive standard deviation $\bsigma$ does not decay to zero, this holds.

\begin{lemma}
  Under the assumptions of \cref{lem:eta_times_sigma_lipschitz} let $0 < \sigma_\mathrm{min} \leq \bsigma(\x, \u) \leq \sigma_\mathrm{max}$ elementwise for all $\x,\u \in \X \times \U$. Then, with probability at least $(1-\delta)$, there exists a Lipschitz-continuous function $\eta(\cdot)$ with $\| \eta(\cdot) \|_\infty = 1$ such that $f(\x) - \bmu_\ni(\x) = \beta_\ni \bSigma_\ni(\x) \eta(\x)$ for all $\x \in \R^\nstate$.
\end{lemma}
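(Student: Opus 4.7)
The plan is to construct $\eta$ explicitly by componentwise inversion and then show that this explicit formula is Lipschitz. Since by construction $\bSigma_\ni(\x) = \mathrm{diag}(\bsigma_\ni(\x))$ (see \cref{as:well_calibrated_model}) and $[\bsigma_\ni(\x)]_i \geq \sigma_\mathrm{min} > 0$, the system $f(\x)-\bmu_\ni(\x) = \beta_\ni\bSigma_\ni(\x)\eta(\x)$ is invertible elementwise. I would therefore define
\begin{equation*}
    [\eta(\x)]_i \;\defeq\; \frac{[f(\x)]_i - [\bmu_\ni(\x)]_i}{\beta_\ni\,[\bsigma_\ni(\x)]_i}, \qquad i=1,\dots,\nstate.
\end{equation*}
By the calibration assumption~\cref{as:well_calibrated_model}, with probability at least $(1-\delta)$ the numerator is bounded in absolute value by $\beta_\ni [\bsigma_\ni(\x)]_i$ for all $\x$, so $\|\eta(\cdot)\|_\infty \leq 1$. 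By definition this $\eta$ satisfies the required identity, so the feasibility part is immediate.

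The substantive step is Lipschitz continuity. Let $g_i(\x)\defeq [f(\x)]_i - [\bmu_\ni(\x)]_i$ and $h_i(\x)\defeq \beta_\ni[\bsigma_\ni(\x)]_i$. By \cref{as:dynamics_f_lipschitz,as:model_predictions_lipschitz}, $g_i$ is $(L_f+L_\mu)$-Lipschitz and $h_i$ is $\beta_\ni L_\sigma$-Lipschitz. The upper bound on $\bsigma$ together with calibration gives $|g_i(\x)|\leq \beta_\ni\sigma_\mathrm{max}$, and the lower bound gives $h_i(\x)\geq \beta_\ni\sigma_\mathrm{min}$. Using the standard ``quotient rule for Lipschitz functions'' identity
\begin{equation*}
    \frac{g_i(\x)}{h_i(\x)} - \frac{g_i(\x')}{h_i(\x')} = \frac{g_i(\x)-g_i(\x')}{h_i(\x)} + g_i(\x')\cdot\frac{h_i(\x')-h_i(\x)}{h_i(\x)h_i(\x')},
\end{equation*}
together with the bounds above, yields
\begin{equation*}
    |[\eta(\x)]_i - [\eta(\x')]_i| \;\leq\; \left(\frac{L_f+L_\mu}{\beta_\ni\sigma_\mathrm{min}} + \frac{\sigma_\mathrm{max} L_\sigma}{\sigma_\mathrm{min}^2}\right)\|\x-\x'\|,
\end{equation*}
which is the desired Lipschitz bound (uniformly over components, so summing or taking a max over $i$ gives a vector-valued Lipschitz constant).

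I do not expect a genuine obstacle here: the lower-boundedness of $\bsigma$ is exactly what makes the elementwise inversion stable, and every other ingredient is already available from \cref{as:dynamics_f_lipschitz,as:model_predictions_lipschitz,as:well_calibrated_model}. The only point to be slightly careful about is that the calibration-based existence result \cref{lem:eta_fun_existance} only asserts \emph{some} $\eta$ exists, whereas here we are exhibiting the canonical one coming from the diagonal structure of $\bSigma_\ni$; this is clean because $\bSigma_\ni$ is diagonal by definition, so inversion is just division in each coordinate and no ambiguity arises.
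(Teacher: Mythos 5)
Your proposal is correct and is essentially the paper's own argument: the paper likewise inverts the (diagonal, elementwise lower-bounded) $\bSigma_\ni$ to write $\eta(\x) = \tfrac{1}{\beta_\ni}\bSigma_\ni^{-1}(\x)\,(f(\x)-\bmu_\ni(\x))$ and bounds the difference via the same quotient-type decomposition, using $\sigma_{\min}$ for stability of the inversion, Lipschitzness of $f-\bmu_\ni$, Lipschitzness of $\bSigma_\ni^{-1}$ with constant $L_\sigma/\sigma_{\min}^2$, and calibration plus $\sigma_{\max}$ to bound $\|f-\bmu_\ni\|$ (the paper merely dresses this up as a proof by contradiction and works in matrix/Frobenius-norm form rather than componentwise). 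The only cosmetic discrepancy is that your Lipschitz constants omit the $\sqrt{1+L_\pi}$ factors that arise because $f$, $\bmu_\ni$, $\bsigma_\ni$ are evaluated at the closed-loop input $(\x,\pi(\x))$; this affects constants only, not the conclusion.
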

\begin{proof}
  By contradiction. Let $\eta(\cdot)$ be a function that is not Lipschitz continuous such that $f(\x) - \bmu(\x) = \beta \bSigma(\x) \eta(\x)$. By assumption we know that $\bsigma_\ni(\x)$ is strictly larger than zero and bounded element-wise from above by some constant. As a consequence, $\bSigma^{-1}(\x)$ exists and is $L_\sigma / \sigma_\mathrm{min}^2$-Lipschitz continuous w.r.t. the Frobenius norm. Thus, we have
  \begin{align*}
    &\| \eta(\x) - \eta(\x') \|_2 \\
    &= \| \frac{1}{\beta} \bSigma^{-1}(\x) (f(\x) - \bmu(\x)) - \frac{1}{\beta} \bSigma^{-1}(\x') (f(\x') - \bmu(\x')) \|_2 \\
    &\leq |\frac{1}{\beta}| \| \bSigma^{-1}(\x) ( (f(\x) - \bmu(\x)) - (f(\x') - \bmu(\x')) ) \|_2 
    + |\frac{1}{\beta}| \| \left( \bSigma^{-1}(\x) - \bSigma^{-1}(\x') \right) (f(\x') - \bmu(\x')) \|_2
    \\
    &\leq |\frac{1}{\beta}| \| \bSigma^{-1}(\x) \|_\mathrm{F} \| (f(\x) - \bmu(\x)) - (f(\x') - \bmu(\x')) \|_2 
    + |\frac{1}{\beta}| \| f(\x') - \bmu(\x') \|_2  \| \bSigma^{-1}(\x) - \bSigma^{-1}(\x') \|_\mathrm{F}
    \\
    &\leq |\frac{1}{\beta}| \| \bSigma^{-1}(\x) \|_\mathrm{F} (L_\mathrm{fc} + L_\mu  \sqrt{1 + L_\pi}) \| \x - \x' \|_2 
    + |\frac{1}{\beta}| \| \beta \bsigma(\x') \|_2  \| \bSigma^{-1}(\x) - \bSigma^{-1}(\x') \|_\mathrm{F}
    \\
    &\leq \frac{\sqrt{\nstate}}{\beta \sigma_\mathrm{min}} (L_\mathrm{fc} + L_\mu  \sqrt{1 + L_\pi}) \| \x - \x' \|_2 
    + \frac{\sqrt{\nstate}\sigma_\mathrm{max}}{\sigma_\mathrm{min}^2} L_\sigma \sqrt{1 + L_\pi} \| \x - \x' \|_2
  \end{align*}
  Since $\beta_\ni > 0$ we have that $\eta(\x)$ is Lipschitz continuous, which is a contradiction.
\end{proof}

Thus, it is generally sufficient to optimize over Lipschitz continuous functions in order to obtain the same regret bounds as in the optimistic case. However, it is important to note that the complexity of the function (i.e., its Lipschitz constant) will generally increase as the predictive variance decreases. It is easy to construct cases where $\bsigma(\cdot) = 0$ implies that $\eta$ has to be discontinuous. However, at least in theory $\bsigma(\cdot) = 0$ is impossible with finite data when the system is noisy ($\sigma > 0$). Also note that as $\bsigma$ decreases, the effect of $\eta$ on the dynamics also decreases.

This might also motivate optimizing over a function that model $\bSigma_{\ni-1}(\x, \u) \eta(\x, \u)$ jointly, since that one is regular even for $\bsigma(\cdot) = 0$. However, this would require regularizing the resulting function to be bounded by $\beta_\ni \bsigma_\ni(\x, \u)$ and might lead to difficulties with policy optimization, since the resulting hallucinated actions are no longer normalized to $[-1, 1]^{\nstate}$. We leave it as an avenue for future research.

\section{Background on Gaussian Processes}
\label{sec:back:gaussian_process}

Gaussian processes are a nonparametric Bayesian model that has a tractable, closed-form posterior distribution \citep{Rasmussen2006Gaussian}. The goal of Gaussian process inference is to infer a posterior distribution over a nonlinear map~${f'(\ab): \adomain \to \R}$ from an input vector~${\ab \in \adomain }$ with $\adomain \subseteq \R^d$ to the function value~$f'(\ab)$. This is accomplished by assuming that the function values $f'(\ab)$, associated with different values of $\ab$, are random variables and that any finite number of these random variables have a \emph{joint} normal distribution~\citep{Rasmussen2006Gaussian}.

A Gaussian process distribution is parameterized by a prior mean function and a covariance function or kernel $k(\ab, \ab')$, which defines the covariance of any two function values~$f(\ab)$ and $f(\ab')$ for ${\ab, \ab' \in \adomain}$. In this work, the mean is assumed to be zero without loss of generality. The choice of kernel function is problem-dependent and encodes assumptions about the unknown function. A review of potential kernels can be found in \citep{Rasmussen2006Gaussian}.

We can condition a Gaussian process on the observations $\mb{y}_\ni$ at input locations $\adomain_\ni$. The Gaussian process model assumes that observations are noisy measurements of the true function value with Gaussian noise, ${\omega \sim \mathcal{N}(0,\sigma^2)}$. The posterior distribution is again a Gaussian process with mean $\mu_\ni$, covariance $k_\ni$, and variance $\sigma_\ni$, where
\begin{align}
\mu_\ni(\ab) &= \mb{k}_\ni(\ab)  (\mb{K}_\ni + \mb{I} \sigma^2)^{-1} \mb{y}_\ni ,
\label{eq:gp_prediction_mean} \\
k_\ni(\ab, \ab') &= k(\ab, \ab') - \mb{k}_\ni(\ab) (\mb{K}_\ni + \mb{I} \sigma^2)^{-1} \mb{k}_\ni^\T(\ab'),
\label{eq:gp_prediction_covariance} \\
\sigma^2_\ni(\ab) &= k_\ni(\ab, \ab).
\label{eq:gp_prediction_variance}
\end{align}
The covariance matrix~${\mb{K}_\ni \in \R^{|\adomain_\ni| \times |\adomain_\ni|}}$ has entries ${[\mb{K}_\ni]_{(i,j)} = k(\ab_i, \ab_j)}$ with $\ab_i, \ab_j \in \adomain_\ni$ and
the vector
${\mb{k}_\ni(\ab) =
\left[ \begin{matrix}
	k(\ab,\ab_1), \dots,k(\ab,\ab_{|\adomain_\ni |})
\end{matrix}  \right]}$
contains the covariances between the input~$\ab$ and the observed data points in~$\adomain_\ni$.
The identity matrix is denoted by $\mb{I}$. 

Given the Gaussian process assumptions, we obtain point-wise confidence estimates from the marginal Normal distribution specified by $\mu_\ni$ and $\sigma_\ni$. For finite sets, the Gaussian process belief induces a \emph{joint} normal distribution over function values that is correlated through \cref{eq:gp_prediction_covariance}. We can use this to fulfill \cref{as:well_calibrated_model} for continuous sets by using a union bound and exploiting that samples from a Gaussian process are Lipschitz continuous with high probability \citep[Theorem 2]{Srinivas2012Gaussian}.

\subsection{Information Capacity}

One important property of normal distributions is that the confidence intervals contract after we observe measurement data. How much data we require for this to happen generally depends on the variance of the observation noise, $\sigma^2$, and the size of the function class; i.e., the assumptions that we encode through the kernel. In the following, we use results by \citet{Srinivas2012Gaussian} and use the mutual information to construct such a capacity measure.

Formally, the mutual information between the Gaussian process prior on $f'$ at locations $\overline{\adomain}$ and the corresponding noisy observations $\mb{y}_{\overline{\adomain}}$ is given by 
\begin{equation}
	\Mi{\mb{y}_{\overline{\adomain}}}{f'} = 0.5 \log | \mb{I} + \sigma^{-2} \mb{K}_{\overline{\adomain}} | ,
  \label{eq:mutual_information}
\end{equation}
where~$\mb{K}_{\overline{\adomain}}$ is the kernel matrix~$[k(\ab, \ab')]_{\ab, \ab' \in \overline{\adomain}}$ and $|\cdot|$ is the determinant. Intriguingly, for Gaussian process models this quantity only depends on the inputs in $\overline{\adomain}$ and not the corresponding measurements $\mb{y}_{\overline{\adomain}}$.
%
Intuitively, the mutual information measures how informative the collected samples~$\mb{y}_\adomain$ are about the function~$f$. If the function values are independent of each other under the Gaussian process prior, they provide large amounts of new information. However, if measurements are taken close to each other as measured by the kernel, they are correlated under the Gaussian process prior and provide less information.

The mutual information in \cref{eq:mutual_information} depends on the locations $\adomain_\ni$ at which we obtain measurements. While it can be computed in closed-form, it can also be bounded by the largest mutual information that any algorithm could obtain from $\ni$ noisy observations,
\begin{equation}
  \gamma_\ni = \max_{\adomain \subset D, \, |\adomain| \leq \ni} \Mi{\mb{y}_\adomain}{f'}.
  \label{eq:gamma_t}
\end{equation}
We refer to~$\gamma_\ni$ as the \emph{information capacity}, since it can be interpreted as a measure of complexity of the function class associated with a Gaussian process prior. It was shown by~\citet{Srinivas2012Gaussian} that~$\gamma_\ni$ has a sublinear dependence on~$\ni$ for many commonly used kernels such as the Gaussian kernel. This sublinear dependence is generally exploited by exploration algorithms in order to show convergence.

\subsection{Functions in a Reproducing Kernel Hilbert Space}
\label{sec:back:rkhs_function}

Instead of the Bayesian Gaussian process framework, we can also consider frequentist confidence intervals. Unlike the Bayesian framework, which inherently models a belief over a random function, frequentists assume that there is an \textit{a priori} fixed underlying function $f'$ of which we observe noisy measurements.

The natural frequentist counterpart to Gaussian processes are functions inside the Reproducing Kernel Hilbert Space (RKHS) spanned by the same kernel $k(\ab, \ab')$ as used by the Gaussian process in \cref{sec:back:gaussian_process}.
An RKHS~$\mathcal{H}_k$ contains well-behaved functions of the form~$f(\ab) = \sum_{i \geq 0} \alpha_i \, k(\ab, \ab_i)$, for given representer points~$\ab_i \in \R^d$ and weights $\alpha_i \in \R$ that decay sufficiently quickly. For example, the Gaussian process mean function \cref{eq:gp_prediction_mean} lies in this RKHS. The kernel function $k(\cdot, \cdot)$ determines the roughness and size of the function space and the induced RKHS norm~$\|f'\|_{k}^2 = \langle f', \, f' \rangle_k = \sum_{i,j \geq 0} \alpha_i \alpha_j k(\ab_i, \ab_j)$ measures the complexity of a function~$f' \in \mathcal{H}_k$ with respect to the kernel. In particular, the function $f'$ is Lipschitz continuous with respect to the kernel metric 
\begin{equation}
	d(\ab, \ab') = \sqrt{ k(\ab, \ab) + k(\ab', \ab') - 2 k(\ab, \ab')},
	\label{eq:kernel_metric}
\end{equation}
so that $| f'(\ab) - f'(\ab') | \leq \| f' \|_k d(\ab, \ab')$, see the proof of Proposition 4.30 by \citet{Christmann2008Support}.

\subsubsection{Confidence Intervals}
\label{sec:rkhs_confidence_intervals}

We can construct an estimate together with reliable confidence intervals if the measurements are corrupted by $\sigma$-sub-Gaussian noise. This is a class of noise where the tail probability decays exponentially fast, such as in Gaussian random variables or any distribution with bounded support. Specifically, we have the following definition.

\begin{definition}[\citet{Vershynin2010Introduction}]
	A random variable $X$ is $\sigma$-sub-Gaussian if $\mathbb{P} \left\{ |X| > s \right\} \leq \exp(1 - s^2 / \sigma^2)$ for all $s > 0$.
	\label{def:sub_gaussian}
\end{definition}

While the Gaussian process framework makes different assumptions about the function and the noise, Gaussian processes and RKHS functions are closely related \citep{Kanagawa2018Gaussian} and it is possible to use the Gaussian process posterior marginal distributions to infer reliable confidence intervals on~$f'$.

\begin{restatable}[\citet{Abbasi-Yadkori2012Online,Chowdhury2017Kernelized}]{lemma}{confidencethm}
Assume that $f$ has bounded RKHS norm $\|f'\|_k \leq B$ and that measurements are corrupted by~$\sigma$-sub-Gaussian noise. If $\beta_\ni^{1/2} = B + 4 \sigma \sqrt{ \Mi{\mb{y}_{\ni}}{f} + 1 + \mathrm{ln}(1 / \delta)}$, then for all~${\ab \in \adomain}$ and~${\ni \geq 0}$ it holds jointly with probability at least~${1 - \delta}$ that
$
\left|\, f'(\ab) - \mu_{\ni}(\ab) \,\right| \leq \beta_{\ni}^{1/2} \sigma_{\ni}(\ab).
$
\label{thm:confidence_interval}
\end{restatable}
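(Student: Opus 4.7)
The plan is to follow the self-normalizing martingale approach of Abbasi-Yadkori--Chowdhury, viewing the GP posterior mean as a kernel ridge regression estimator. First I would identify the RKHS feature map $\phi \colon \adomain \to \mathcal{H}_k$ with $k(\ab,\ab') = \langle \phi(\ab), \phi(\ab')\rangle_k$, so that $f'(\ab) = \langle \theta^\star, \phi(\ab)\rangle_k$ for some $\theta^\star \in \mathcal{H}_k$ with $\|\theta^\star\|_k \leq B$. I would rewrite the posterior mean of \cref{eq:gp_prediction_mean} using the ``dual'' identity
\[
\mu_\ni(\ab) \;=\; \langle \phi(\ab),\, V_\ni^{-1} \Phi_\ni^\T \mb{y}_\ni \rangle_k, \qquad V_\ni \;\defeq\; \sigma^2 I + \Phi_\ni^\T \Phi_\ni,
\]
where $\Phi_\ni$ is the operator stacking the feature maps of the observed inputs and $\mb{y}_\ni = \Phi_\ni \theta^\star + \bm{\epsilon}_\ni$ with $\sigma$-sub-Gaussian noise. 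A standard calculation then gives the identity $\sigma_\ni^2(\ab) = \sigma^2 \langle \phi(\ab), V_\ni^{-1} \phi(\ab)\rangle_k$, i.e.\ $\|\phi(\ab)\|_{V_\ni^{-1}} = \sigma_\ni(\ab)/\sigma$.

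Next I would decompose the error into a bias and a noise term,
\[
f'(\ab) - \mu_\ni(\ab) \;=\; \underbrace{\sigma^2 \langle \phi(\ab), V_\ni^{-1} \theta^\star\rangle_k}_{\text{bias}} \;-\; \underbrace{\langle \phi(\ab), V_\ni^{-1} \Phi_\ni^\T \bm{\epsilon}_\ni\rangle_k}_{\text{noise}} ,
\]
obtained by plugging $\mb{y}_\ni = \Phi_\ni \theta^\star + \bm{\epsilon}_\ni$ into $\mu_\ni(\ab)$ and using $V_\ni - \Phi_\ni^\T \Phi_\ni = \sigma^2 I$. By Cauchy--Schwarz in the $V_\ni^{-1}$ inner product, the bias is bounded by $\sigma \|\phi(\ab)\|_{V_\ni^{-1}} \cdot \sigma \|\theta^\star\|_{V_\ni^{-1}} \leq \sigma_\ni(\ab) \cdot B$, since $V_\ni \succeq \sigma^2 I$ implies $\sigma^2 \|\theta^\star\|_{V_\ni^{-1}}^2 \leq \|\theta^\star\|_k^2 \leq B^2$. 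This already gives the $B$ term in the stated $\beta_\ni^{1/2}$.

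The harder step, and the main obstacle, is the noise term. Here I would invoke the self-normalized vector concentration inequality of Abbasi-Yadkori, Pál, and Szepesvári (Theorem~1 of their 2011 paper), applied to the Hilbert-space valued martingale $S_\ni = \Phi_\ni^\T \bm{\epsilon}_\ni$ adapted to the filtration generated by the inputs and noises. That inequality yields, jointly for all $\ni \geq 0$ with probability at least $1-\delta$,
\[
\|\Phi_\ni^\T \bm{\epsilon}_\ni\|_{V_\ni^{-1}} \;\leq\; 2\sigma\sqrt{\tfrac{1}{2}\log\tfrac{\det(V_\ni)}{\det(\sigma^2 I)} + \log(1/\delta)}.
\]
Combining with Cauchy--Schwarz gives $|\text{noise}| \leq \sigma_\ni(\ab) \cdot 2\sigma\sqrt{\tfrac{1}{2}\log\det(I + \sigma^{-2}\Phi_\ni \Phi_\ni^\T) + \log(1/\delta)}$, and the log-determinant equals $2\,\mathrm{I}(\mb{y}_\ni; f')$ by the identity in \cref{eq:mutual_information} (using $\Phi_\ni \Phi_\ni^\T = \mb{K}_\ni$). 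Summing the bias and noise bounds and loosening constants (absorbing the ``$+1$'' to write $4\sigma\sqrt{\cdots + 1 + \ln(1/\delta)}$) yields the claimed $\beta_\ni^{1/2}$. The main technical delicacy is justifying the self-normalizing bound in the possibly infinite-dimensional RKHS; I would either restrict to the span of $\{\phi(\ab_i)\}_{i\leq \ni}$ plus $\phi(\ab)$ at each evaluation point to reduce to a finite-dimensional vector martingale, or directly cite the Hilbert-space generalization used in \cite{Chowdhury2017Kernelized}.
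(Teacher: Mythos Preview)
The paper does not prove this lemma at all: it is stated in the background section with attribution to \citet{Abbasi-Yadkori2012Online,Chowdhury2017Kernelized} and used as a black box, so there is no ``paper's own proof'' to compare against. Your sketch is the standard self-normalizing argument from those references --- kernel ridge regression reformulation, bias/noise decomposition, Cauchy--Schwarz in the $V_\ni^{-1}$-norm for the bias, and the Hilbert-space self-normalized martingale bound for the noise, with the log-determinant identified as twice the mutual information --- and it is correct in outline, matching in particular Theorem~2 of \citet{Chowdhury2017Kernelized}.
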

\cref{thm:confidence_interval} implies that, with high probability, the true function~$f'$ is contained in the confidence intervals induced by the posterior Gaussian process distribution that uses the kernel~$k$ from~\cref{thm:confidence_interval} as a covariance function, scaled by an appropriate factor~$\beta_\ni$. In contrast to \cref{sec:back:gaussian_process}, \cref{thm:confidence_interval} does not make probabilistic assumptions on $f'$. In fact, $f'$ could be chosen adversarially, as long as it has bounded norm in the RKHS.

Since the frequentist confidence intervals depend on the mutual information and the marginal confidence intervals of the Gaussian process model, they inherit the same contraction properties up to the factor $\beta_\ni$. However, note that the confidence intervals in \cref{thm:confidence_interval} hold jointly through the continuous domain $\adomain$. This is not generally possible for Gaussian process models without employing additional continuity arguments, since Gaussian process distributions are by definitions only defined via a multivariate Normal distribution over \emph{finite} sets. This stems from the difference between a Bayesian belief and the frequentist perspective, where the function is unknown but fixed \textit{a priori}.

\subsection{Extension to multiple dimensions}

It is straight forward to extend these models to functions with vector-values outputs by extending the input domain by an extra input argument that indexes the output dimension. While this requires special kernels, they have been analyzed by \citet{Krause2011Contextual}.

\begin{restatable}[based on \citet{Chowdhury2017Kernelized}]{lemma}{confidencethmmulti}
Assume that~${f'(\btheta, i) = [f'(\btheta)]_i}$ has RKHS norm bounded by~$B$ and that measurements are corrupted by~$\sigma$-sub-Gaussian noise. Let $\adomain_\ni = \mathcal{D}_\ni \times \mathcal{I}$ denote the measurements obtained up to iteration $\ni$. If $\beta_\ni = B + 4 \sigma \sqrt{ \Mi{\mb{y}_{\adomain_\ni}}{f'} + 1 + \mathrm{ln}(1 / \delta)}$, then the following holds for all parameters~${\btheta \in \mathcal{D}}$, function indices~${i \in \mathcal{I}}$, and iterations~${n \geq 0}$ jointly with probability at least~${1 - \delta}$:
\begin{equation}
\big|\, f'(\btheta, i) - \mu_{n}(\btheta, i) \,\big| \leq \beta_{n} \sigma_{n}(\btheta, i)
\end{equation}
\label{thm:confidence_interval_multi}
\end{restatable}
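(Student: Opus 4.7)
}
The plan is to reduce the vector-valued RKHS setting to the scalar-valued setting of \cref{thm:confidence_interval} by augmenting the input domain with the output index, as done by \citet{Krause2011Contextual}. Define the augmented domain $\tilde{\adomain} \defeq \adomain \times \mathcal{I}$, the augmented unknown function $\tilde{f} \colon \tilde{\adomain} \to \R$ by $\tilde{f}(\btheta, i) \defeq [f'(\btheta)]_i$, and an augmented kernel $\tilde{k}\bigl((\btheta, i), (\btheta', j)\bigr)$ on $\tilde{\adomain}$ that encodes the output structure (for independent components across $i$, the natural choice is $\tilde{k}\bigl((\btheta, i), (\btheta', j)\bigr) = k(\btheta, \btheta')\,\mathbf{1}[i = j]$; more generally any positive-definite output kernel can be inserted). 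The first step is to show that, under the assumption that each coordinate $[f'(\btheta)]_i$ has RKHS norm bounded by $B$ in $\mathcal{H}_k$, the augmented function $\tilde{f}$ lies in the RKHS $\mathcal{H}_{\tilde{k}}$ and $\|\tilde{f}\|_{\tilde{k}} \leq B$.

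Next, I would verify that the training data $\mb{y}_{\adomain_\ni}$ in the vector-valued problem is exactly a sequence of scalar noisy observations of $\tilde{f}$ at the augmented input points $(\btheta_t, i_t) \in \tilde{\adomain}_\ni = \mathcal{D}_\ni \times \mathcal{I}$, corrupted by the same $\sigma$-sub-Gaussian noise. A short calculation using the block structure of the Gram matrix $\tilde{\mb{K}}_\ni$ built from $\tilde{k}$ then shows that the scalar-valued Gaussian process posterior mean and variance associated with $\tilde{k}$, evaluated at $(\btheta, i)$, coincide with $\mu_\ni(\btheta, i)$ and $\sigma_\ni^2(\btheta, i)$ as defined componentwise in the vector-valued model. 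In particular, the mutual information $\Mi{\mb{y}_{\adomain_\ni}}{f'}$ equals the mutual information between the observations and $\tilde{f}$ under the scalar GP with kernel $\tilde{k}$, so the scaling factor $\beta_\ni$ in the hypothesis matches the one required by the scalar statement.

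With the reduction in place, I would apply the scalar RKHS confidence bound \cref{thm:confidence_interval} directly to $\tilde{f}$ on $\tilde{\adomain}$ with the kernel $\tilde{k}$ and noise level $\sigma$. The conclusion of that lemma gives, with probability at least $1 - \delta$, jointly over all $(\btheta, i) \in \tilde{\adomain}$ and all iterations $\ni \geq 0$,
\[
\bigl| \tilde{f}(\btheta, i) - \tilde{\mu}_\ni(\btheta, i) \bigr| \leq \beta_\ni\, \tilde{\sigma}_\ni(\btheta, i),
\]
which, by the identification established in the previous step, is exactly the claimed bound $|f'(\btheta, i) - \mu_\ni(\btheta, i)| \leq \beta_\ni \sigma_\ni(\btheta, i)$ jointly over $\btheta \in \mathcal{D}$, $i \in \mathcal{I}$, and $\ni \geq 0$. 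Because the augmented domain is handled by a single application of \cref{thm:confidence_interval}, no additional union bound over $i$ is needed and there is no extra $\log|\mathcal{I}|$ factor.

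The main obstacle is the bookkeeping for the augmented kernel: one must check that the chosen $\tilde{k}$ really reproduces the coordinatewise predictive distributions used throughout the paper (so that $\tilde{\mu}_\ni$ and $\tilde{\sigma}_\ni$ agree with $\mu_\ni$ and $\sigma_\ni$) and that the RKHS norm of $\tilde{f}$ in $\mathcal{H}_{\tilde{k}}$ is controlled by the per-coordinate norms of $f'$. Once this compatibility is verified, the rest is a direct invocation of the scalar confidence lemma and incurs no additional looseness.
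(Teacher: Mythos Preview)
Your proposal is correct and follows exactly the approach the paper itself indicates: the paper does not give a detailed proof but simply remarks that the extension to vector outputs is ``straight forward'' by augmenting the input domain with an output index (citing \citet{Krause2011Contextual}) and then stating the lemma as ``based on \citet{Chowdhury2017Kernelized}''. Your write-up fills in precisely this reduction and then invokes \cref{thm:confidence_interval} on the augmented scalar problem. One minor point: the hypothesis already assumes that the augmented scalar function $f'(\btheta,i)$ has RKHS norm at most $B$ in $\mathcal{H}_{\tilde{k}}$, so your first step of deriving this from per-coordinate bounds is unnecessary (and, for the independent-output kernel, would in fact give $\|\tilde f\|_{\tilde k}^2=\sum_i \|f'_i\|_k^2$ rather than $B$); you can simply take the bound as given.
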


\section{Lipschitz Continuity of Gaussian Process Predictions}
\label{ap:gp_variance_lipschitz}
\label{ap:gp_predictions_lipschitz}

Since the mean function is a linear combination of kernels evaluations (features), it is easy to show that it is Lipschitz continuous if the kernel function is Lipschitz continuous \citep{Lederer2019Uniform}. However, existing bounds for the Lipschitz constant for the posterior standard deviation $\sigma_\ni(\cdot)$ depend on the number of data points. Since our regret bounds depend on $L_\sigma^\Ti$, this would render our regret bound superlinear and thus meaningless. 

In the following, we show that the GP standard deviation is Lipschitz-continuous with respect to the kernel metric.

\begin{definition}[Kernel metric] $d_k(\ab, \ab') = \sqrt{k(\ab, \ab) + k(\ab', \ab') - 2 k(\ab, \ab')}$.
\end{definition}

We start with the standard deviation.

\begin{lemma} For all $\ab$ and $\ab'$ in $\adomain$ and all $\ni \geq 0$, we have
\begin{equation}
    |\sigma_\ni(\ab) - \sigma_\ni(\ab')| \leq d_k(\ab, \ab')
\end{equation}
  \label{lem:gp_std_lipschitz}
\end{lemma}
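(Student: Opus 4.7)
The plan is to exploit the fact that the posterior kernel $k_\ni(\ab, \ab') = k(\ab, \ab') - \mb{k}_\ni(\ab)^\T (\mb{K}_\ni + \sigma^2 \mb{I})^{-1} \mb{k}_\ni(\ab')$ in \cref{eq:gp_prediction_covariance} is itself a valid positive semi-definite kernel (being the covariance function of the posterior GP), and that $\sigma_\ni^2(\ab) = k_\ni(\ab, \ab)$. By Mercer/Moore--Aronszajn, there exists a feature map $\phi_\ni \colon \adomain \to \mathcal{H}_\ni$ into some Hilbert space such that $k_\ni(\ab, \ab') = \langle \phi_\ni(\ab), \phi_\ni(\ab') \rangle_{\mathcal{H}_\ni}$, so that $\sigma_\ni(\ab) = \|\phi_\ni(\ab)\|_{\mathcal{H}_\ni}$.

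First, I apply the reverse triangle inequality in $\mathcal{H}_\ni$ to get
\begin{equation*}
    |\sigma_\ni(\ab) - \sigma_\ni(\ab')| \leq \|\phi_\ni(\ab) - \phi_\ni(\ab')\|_{\mathcal{H}_\ni},
\end{equation*}
and then expand the right-hand side to obtain
\begin{equation*}
    |\sigma_\ni(\ab) - \sigma_\ni(\ab')|^2 \leq k_\ni(\ab, \ab) + k_\ni(\ab', \ab') - 2 k_\ni(\ab, \ab').
\end{equation*}
The remaining step is to compare this quantity against $d_k(\ab, \ab')^2$. Substituting the definition of $k_\ni$ and collecting terms, the right-hand side equals
\begin{equation*}
    d_k(\ab, \ab')^2 - (\mb{k}_\ni(\ab) - \mb{k}_\ni(\ab'))^\T (\mb{K}_\ni + \sigma^2 \mb{I})^{-1} (\mb{k}_\ni(\ab) - \mb{k}_\ni(\ab')).
\end{equation*}
Since $(\mb{K}_\ni + \sigma^2 \mb{I})^{-1}$ is positive definite, the subtracted quadratic form is nonnegative, and hence the expression is at most $d_k(\ab, \ab')^2$. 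Taking square roots yields the claim.

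The main (minor) obstacle is being careful that one may indeed view the posterior kernel as an inner product in some Hilbert space even when $\mb{K}_\ni + \sigma^2 \mb{I}$ is merely positive definite. This is standard: either invoke Moore--Aronszajn on $k_\ni$ directly, or note constructively that $\phi_\ni(\ab) = (\phi(\ab), -(\mb{K}_\ni + \sigma^2 \mb{I})^{-1/2} \mb{k}_\ni(\ab))$ in an enlarged Hilbert space, where $\phi$ is any feature map for $k$, reproduces $k_\ni$ as an inner product. Everything else is a one-line manipulation, and the bound is notably independent of $\ni$ and of the observed data, which is exactly the property needed to prevent $L_\sigma^\Ti$ from blowing up with the dataset size.
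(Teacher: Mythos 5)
Your proof is correct, and it reaches the bound by a genuinely different mechanism than the paper's. The paper represents $\sigma_\ni(\ab)$ as the weighted RKHS norm $\|k(\ab,\cdot)\|_{\mb{V}_\ni^{-1}}$ (following Kirschner et al.), applies the reverse triangle inequality in that weighted norm, and then uses the operator bound $\mb{V}_\ni^{-1} \preceq \mb{I}$ to pass from $\|\cdot\|_{\mb{V}_\ni^{-1}}$ to $\|\cdot\|_k = d_k$. You instead work with the posterior kernel $k_\ni$ directly: the reverse triangle inequality in any feature space for $k_\ni$ gives $|\sigma_\ni(\ab)-\sigma_\ni(\ab')| \leq d_{k_\ni}(\ab,\ab')$, and the explicit Schur-complement algebra shows $d_{k_\ni}(\ab,\ab')^2 = d_k(\ab,\ab')^2 - (\mb{k}_\ni(\ab)-\mb{k}_\ni(\ab'))^\T(\mb{K}_\ni+\sigma^2\mb{I})^{-1}(\mb{k}_\ni(\ab)-\mb{k}_\ni(\ab'))$, with the subtracted term nonnegative. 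Both proofs share the same skeleton (reverse triangle inequality plus a ``posterior metric is dominated by prior metric'' step), but your contraction step is elementary finite-dimensional linear algebra rather than an operator-norm argument in the RKHS, and it yields the slightly stronger and more transparent statement $d_{k_\ni} \leq d_k$. Both correctly deliver a bound independent of $\ni$ and the data, which is the point of the lemma.

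One small correction to your closing remark: the ``constructive'' feature map $\phi_\ni(\ab) = (\phi(\ab),\, -(\mb{K}_\ni+\sigma^2\mb{I})^{-1/2}\mb{k}_\ni(\ab))$ does \emph{not} reproduce $k_\ni$ -- the inner product of the second components comes out with a $+$ sign, giving $k(\ab,\ab') + \mb{k}_\ni(\ab)^\T(\mb{K}_\ni+\sigma^2\mb{I})^{-1}\mb{k}_\ni(\ab')$ rather than the posterior kernel. This does not damage the proof, because your primary justification is the right one: $k_\ni$ is the covariance function of the posterior Gaussian process (equivalently, every Gram matrix of $k_\ni$ is a Schur complement of a positive semi-definite matrix), hence positive semi-definite, and Moore--Aronszajn applies. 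Just drop the explicit construction or fix the sign by an indefinite-metric argument you would then have to justify separately.
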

\begin{proof}
  From Mercer's theorem we know that each kernel can be equivalently written in terms of an infinite-dimensional inner product, so that $k(\ab, \ab') = \langle k(\ab, \cdot), k(\ab', \cdot) \rangle_k$, where $<\cdot,\cdot>_k$ is the inner product in the Reproducing Kernel Hilbert Space corresponding to the kernel $k$. We can think of Gaussian process regression as linear regression based on these infinite-dimensional feature vectors. In particular, it follows from \citep[Appendix D]{Kirschner2018Information} that we can write the Gaussian process posterior standard deviation $\sigma_\ni(\ab)$ as the weighted norm of the infinite-dimensional feature vectors $k(\ab, \cdot)$,
  \begin{equation}
    \sigma_\ni(\ab) = \| k(\ab, \cdot) \|_{\mb{V}^{-1}_\ni},
  \end{equation}
  where $\mb{V}_\ni = \sigma^2 \mb{M}^* \mb{M} + \mb{I}$ and $\mb{M}$ is a linear operator that corresponds to the infinite-dimensional feature vectors $k(\ab_i, \cdot)$ of the data points $\ab_i$ in $\adomain_\ni$ so that $[\mb{M} \mb{M}^*]_{(i,j)} = k(\ab_i, \ab_j)$, where $\ab_i$ and $\ab_j$ are the $i$th and $j$th data point in $\adomain_\ni$. Now we have that the minimum eigenvalue of $\mb{V}_\ni$ is larger or equal than one, which implies that the maximum eigenvalue of $\mb{V}_\ni^{-1}$ is less or equal than one. Thus,
  \begin{align}
    | \sigma_\ni(\ab) - \sigma_\ni(\ab') |
    &= \big| \| k(\ab, \cdot)\|_{\mb{V}^{-1}_\ni} - \| k(\ab', \cdot)\|_{\mb{V}^{-1}_\ni} \big| \label{eq:lipschits_s1} \\
    &\leq \| k(\ab, \cdot) - k(\ab', \cdot) \|_{\mb{V}^{-1}_\ni}, \label{eq:lispchitz_s2} \\
    &\leq \| k(\ab, \cdot) - k(\ab', \cdot) \|_k, \\
    &= \sqrt{\langle k(\ab, \cdot) - k(\ab', \cdot), k(\ab, \cdot) - k(\ab', \cdot)\rangle_k}, \\
    &= \sqrt{k(\ab, \ab) - k(\ab, \ab') - k(\ab', \ab) + k(\ab', \ab')}, \\
    &= \sqrt{k(\ab, \ab) + k(\ab', \ab') - 2 k(\ab, \ab')}, \\
    &= d_k(\ab, \ab'),
  \end{align}
  where $\cref{eq:lipschits_s1} \to \cref{eq:lispchitz_s2}$ follows from the reverse triangle inequality.
\end{proof}

To show that \cref{lem:gp_std_lipschitz} implies Lipschitz continuity of the variance, the key observation is that standard deviation $\sigma_ni(\ab)$ is bounded for all $\ni \geq 0$. In particular, 
\begin{equation}
    \sigma_\ni(\ab) \leq \sigma_0(\ab) = \sqrt{k(\ab, \ab)} \leq  \max_{\ab, \ab' \in \R^d} \sqrt{k(\ab, \ab')} \eqdef \sqrt{|k|_\infty}     
\end{equation}

Based on this, we have the following result.

\begin{lemma}
For all $\ab$ and $\ab'$ in $\adomain$ and all $\ni \geq 0$, we have
\begin{align}
    |\sigma_\ni(\ab) - \sigma_\ni(\ab')| \leq 2 \sqrt{|k|_\infty} \, d_k(\ab, \ab')
\end{align}
  \label{lem:gp_var_lipschitz}
  \label{ap:lem:gp_variance_lipschitz}
\end{lemma}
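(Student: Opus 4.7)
The target bound assembles two ingredients already available: \cref{lem:gp_std_lipschitz}, which gives the (sharper) pointwise estimate $|\sigma_\ni(\ab) - \sigma_\ni(\ab')| \le d_k(\ab,\ab')$, and the uniform bound $\sigma_\ni(\ab) \le \sqrt{|k|_\infty}$ that the paragraph preceding the lemma isolates. The first step I would take is to make the uniform bound rigorous: from the closed form \cref{eq:gp_prediction_variance}, $\sigma_\ni^2(\ab) = k(\ab,\ab) - \mb{k}_\ni(\ab)^\T(\mb{K}_\ni + \sigma^2 \mb{I})^{-1} \mb{k}_\ni(\ab) \le k(\ab,\ab) \le |k|_\infty$ for every $\ni\ge 0$ and every $\ab\in\adomain$, since the subtracted term is a quadratic form in a positive semidefinite matrix. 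Hence, jointly, $\sigma_\ni(\ab) + \sigma_\ni(\ab') \le 2\sqrt{|k|_\infty}$.

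The second step is to combine the uniform sum-bound with \cref{lem:gp_std_lipschitz}. The natural pathway, motivated both by the explicit announcement of ``Lipschitz continuity of the variance'' in the text and by the precise constant in the claim, is the difference-of-squares factorization
\[
|\sigma_\ni^2(\ab) - \sigma_\ni^2(\ab')| \;=\; \bigl(\sigma_\ni(\ab) + \sigma_\ni(\ab')\bigr)\,\bigl|\sigma_\ni(\ab) - \sigma_\ni(\ab')\bigr| \;\le\; 2\sqrt{|k|_\infty}\,d_k(\ab,\ab').
\]
The stated bound on $|\sigma_\ni(\ab) - \sigma_\ni(\ab')|$ then follows immediately from \cref{lem:gp_std_lipschitz} together with $1 \le 2\sqrt{|k|_\infty}$ whenever $|k|_\infty \ge 1/4$, which covers the standard normalizations in the paper (e.g., the squared exponential or Mat\'ern families with $k(\ab,\ab)=1$).

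The main obstacle is not computational but interpretive. \cref{lem:gp_std_lipschitz} already dominates the stated constant whenever $2\sqrt{|k|_\infty}\ge 1$, so the $2\sqrt{|k|_\infty}$ factor only carries genuine content through the difference-of-squares route above; the write-up should therefore be structured around that identity, citing \cref{lem:gp_std_lipschitz} for the Lipschitz factor and the monotonicity/PSD argument for the sum factor, with the claim in the statement being an immediate corollary. The only subtle check is that the variance-shrinkage step uses no data-dependent quantity beyond $|k|_\infty$, which is precisely why the PSD-quadratic-form argument above is the right entry point rather than a direct appeal to Lemma~\ref{lem:gp_std_lipschitz}.
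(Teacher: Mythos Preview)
Your approach is essentially the same as the paper's: the paper argues that $x\mapsto x^2$ is Lipschitz on $[0,\sqrt{|k|_\infty}]$ with constant $2\sqrt{|k|_\infty}$ and then chains this with \cref{lem:gp_std_lipschitz}, which is exactly your difference-of-squares factorization $|\sigma_\ni^2(\ab)-\sigma_\ni^2(\ab')| = (\sigma_\ni(\ab)+\sigma_\ni(\ab'))\,|\sigma_\ni(\ab)-\sigma_\ni(\ab')|$. You also correctly diagnosed the interpretive issue: the paper's own proof concludes with a bound on $|\sigma_\ni^2(\ab)-\sigma_\ni^2(\ab')|$, not on $|\sigma_\ni(\ab)-\sigma_\ni(\ab')|$ as the displayed statement reads, so the lemma is really about the variance (the missing square is a typo), and your remark that the stated standard-deviation inequality is otherwise a trivial weakening of \cref{lem:gp_std_lipschitz} under the paper's normalization $|k|_\infty\ge 1/4$ is spot on.
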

\begin{proof}
For any compact domain $\mathcal{D}$ the function $f(x) = x^2$ is Lipschitz continuous for $\x \in \mathcal{D}$ with Lipschitz constant $|df/dx|_\infty = \max_{x \in \mathcal{D}} 2 |x|$. Since $0 \leq \sigma_\ni(\ab) \leq \sqrt{|k|_\infty}$, we have
\begin{align}
    | \sigma_\ni^2(\ab)- \sigma_\ni^2(\ab') |
    &\leq 2 \sqrt{|k|_\infty} \, \big| \sigma_\ni(\ab) - \sigma_\ni(\ab') \big| \\
    &\leq 2 \sqrt{|k|_\infty} \, d_k(\ab, \ab')
\end{align}
\end{proof}
\section{Regret Bound for Gaussian Process model}
\label{ap:sec:gp_regret_bounds}

\subsection{Assumptions about the model}
\label{sec:model:epistemic_uncertainty}

\begin{assumption}
  \label{as:kernel_metric_lipschitz}
  \label{as:kernel_lipschitz}
  The both the kernel and the kernel metric \cref{eq:kernel_metric} are Lipschitz continuous.
\end{assumption}

Note that the kernel metric is not trivially Lipschitz if the kernel is Lipschitz, since  the square root function has unbounded derivatives at zero. However, for many commonly used kernels, e.g., the linear and squared exponential kernels, the kernel metric is in fact Lipschitz continuous. 

As a direct consequence of \cref{as:kernel_metric_lipschitz} together with \cref{ap:gp_predictions_lipschitz} we know that $\bsigma(\cdot)$ is $L_\sigma$-Lipschitz continuous. 

\begin{assumption}
  The model $f$ has RKHS norm bounded by $B_f$ with respect to a kernel that fulfilles \cref{as:kernel_lipschitz,as:kernel_metric_lipschitz} and $k((\x, \pi(\x), (\x, \pi(\x)) \leq 1$ for all $\pi \in \Pi$ and $\x \in \X$.
  \label{as:reliable_statistical_model}
  \label{as:f_bounded_rkhs_norm}
  \label{as:kernel_bounded}
\end{assumption}

This assumption allows us to learn a calibrated model of the function $g$. Note that the assumption of a bounded kernel over a compact domain $\X$ is mild, since any scaling can be absorbed into the constant $B_f$. We weaken this assumption in \cref{ap:unbounded_domains}, where we bound the domain $\X$ rather than assuming compactness.

Since RKHS functions are linear combinations of the kernel function evaluated at representer points, the continuity assumptions on the kernel directly transfer to continuity assumptions on the function $f$, so that we get the following result. 

\begin{corollary}
  Under \cref{as:f_bounded_rkhs_norm}, the dynamics function $f$ is $L_f$-Lipschitz continuous with respect to the 2-norm.
  \label{cor:g_lipschitz}
\end{corollary}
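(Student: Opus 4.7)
The plan is to combine the reproducing property of the RKHS with the Lipschitz continuity of the kernel metric $d_k$ granted by \cref{as:kernel_metric_lipschitz}. The key observation is that functions with bounded RKHS norm are automatically Lipschitz with respect to $d_k$, and \cref{as:kernel_metric_lipschitz} then upgrades this into ordinary Lipschitz continuity in the $2$-norm on $\X\times\U$.

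First I would use the reproducing property. Using the scalar multi-output representation from \cref{thm:confidence_interval_multi}, write $[f(z)]_i = \langle f, k(\cdot, (z,i))\rangle_k$ for $z=(\x,\u)\in\X\times\U$ and each coordinate $i\in\{1,\dots,\nstate\}$, with $\|f\|_k\le B_f$ by \cref{as:f_bounded_rkhs_norm}. Cauchy--Schwarz together with the definition of $d_k$ in \cref{eq:kernel_metric} gives
\[
  \big|[f(z)]_i-[f(z')]_i\big|
  \;\le\; B_f\,\big\|k(\cdot,(z,i))-k(\cdot,(z',i))\big\|_k
  \;=\; B_f\, d_k\big((z,i),(z',i)\big),
\]
which is the standard scalar RKHS Lipschitz bound already recalled in \cref{sec:back:rkhs_function}.

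Next I would invoke \cref{as:kernel_metric_lipschitz}: there exists a constant $L_d$ with $d_k((z,i),(z',i))\le L_d\|z-z'\|_2$ for every fixed $i$. Substituting and summing the squared coordinate bounds over $i$ yields $\|f(z)-f(z')\|_2 \le \sqrt{\nstate}\,B_f L_d\,\|z-z'\|_2$, so the corollary holds with $L_f = \sqrt{\nstate}\,B_f L_d$.

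The only real subtlety is precisely the one the paper flags just after stating \cref{as:kernel_metric_lipschitz}: Lipschitz continuity of $k$ does \emph{not} by itself imply Lipschitz continuity of $d_k$, because $\sqrt{\cdot}$ has unbounded derivative at zero. This is why \cref{as:kernel_metric_lipschitz} is stated as a separate hypothesis rather than being derived from the kernel bound in \cref{as:kernel_bounded}; once it is granted, the remaining argument is routine Cauchy--Schwarz bookkeeping.
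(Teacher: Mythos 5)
Your proof is correct and follows essentially the same route as the paper, which simply cites Christmann and Steinwart (the scalar bound $|f(\ab)-f(\ab')|\le\|f\|_k\,d_k(\ab,\ab')$ already recalled in \cref{sec:back:rkhs_function}) combined with the Lipschitz continuity of the kernel metric from \cref{as:kernel_metric_lipschitz} and a coordinatewise extension to the vector case. You merely spell out the Cauchy--Schwarz step and the explicit constant $L_f=\sqrt{\nstate}\,B_f L_d$ that the paper leaves implicit.
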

\begin{proof}
  For scalar functions, this is a direct consequence of \cref{as:reliable_statistical_model} and \citep[Cor. 4.36]{Christmann2008Support}. This directly generalizes to the vector case.
\end{proof}

Since the state $\x$ is observed directly, the \cref{as:reliable_statistical_model} allows us to learn a reliable statistical model of $f$ that conforms with the requirement of a well-calibrated model in \cref{as:well_calibrated_model}. In particular, for each transition from $(\x_\ti, \u_\ti)$ to $\x_{\ti+1}$, we add $\nstate$ observations, one for each output dimension, to $\dataset_\ni$ as in \cref{thm:confidence_interval_multi}.

\begin{corollary}
  \label{cor:gp_model_reliable}
  Under \cref{as:reliable_statistical_model,as:transition_noise_sub_gaussian} with $\beta_\ni$ as in \cref{thm:confidence_interval_multi} and a Gaussian process model trained on observations $\mb{x}_{\ti+1}$ based on an input $\mb{a} = (\x_\ti, \u_\ti)$, the following holds with probability $1 - \delta$ for all $\ni \geq 0$, $\x \in \R^\nstate$, and $\u \in \R^\ninp$:
  \begin{equation}
    | f(\x, \u, i) - \mu_\ni(\x, \u, i) | \leq \beta_\ni \sigma_\ni(\x, \u, i)
    \label{eq:f_confidence_index_wise}
  \end{equation}
  \label{lem:f_confidence_intervals}
  \label{lem:f_confidence_interval}
\end{corollary}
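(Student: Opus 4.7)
The plan is to verify that the hypotheses of \cref{thm:confidence_interval_multi} are satisfied in our setting and then invoke it directly. The output of the dynamics $f\colon\X\times\U\to\R^\nstate$ is vector-valued, so I would first reinterpret it as a scalar function $f'\colon(\X\times\U)\times\mathcal{I}\to\R$ on the augmented domain, where $\mathcal{I}=\{1,\ldots,\nstate\}$ indexes the output coordinate and $f'((\x,\u),i)=[f(\x,\u)]_i$. Each coordinate function has RKHS norm bounded by $B_f$ by \cref{as:f_bounded_rkhs_norm}, so $f'$ lies in the product RKHS with norm bounded (up to a factor depending only on $\nstate$) by $B_f$, which is the bound that enters the definition of $\beta_\ni$ in \cref{thm:confidence_interval_multi}.

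Next I would check the noise and observation model. For each executed transition from $(\x_\ti,\u_\ti)$ to $\x_{\ti+1}$, the $i$-th coordinate of the observation is $[\x_{\ti+1}]_i = [f(\x_\ti,\u_\ti)]_i + [\noise_\ti]_i = f'((\x_\ti,\u_\ti),i) + [\noise_\ti]_i$. By \cref{as:transition_noise_sub_gaussian}, each $[\noise_\ti]_i$ is $\sigma$-sub-Gaussian, so we are in the exact setting of \cref{thm:confidence_interval_multi}: noisy point evaluations of a bounded-RKHS-norm scalar function corrupted by sub-Gaussian noise, with the dataset $\adomain_\ni = \dataset_\ni \times \mathcal{I}$ produced by appending one observation per output dimension per interaction step.

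Applying \cref{thm:confidence_interval_multi} then yields, jointly over all $\ab = (\x,\u) \in \R^\nstate\times\R^\ninp$, all coordinates $i \in \mathcal{I}$, and all $\ni \geq 0$, the bound $|f'(\ab,i) - \mu_\ni(\ab,i)| \leq \beta_\ni\, \sigma_\ni(\ab,i)$ with probability at least $1-\delta$, which is exactly the statement once we translate back to the vector notation $f(\x,\u,i) = [f(\x,\u)]_i$. In particular, this implies $|f(\x,\u) - \bmu_\ni(\x,\u)| \leq \beta_\ni \bsigma_\ni(\x,\u)$ elementwise, so the GP model satisfies \cref{as:well_calibrated_model}.

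The only subtle point, and essentially the main thing to spell out carefully, is the bookkeeping around the multi-output reduction: one must use the same kernel on the augmented domain $(\X\times\U)\times\mathcal{I}$ that induces independent GPs per coordinate (so that the posterior mean and variance per coordinate coincide with the ones used in the statement), and one must note that the bound $\beta_\ni$ depends on the information gain of the joint dataset $\adomain_\ni$ of size $\nstate|\dataset_\ni|$ rather than of $\dataset_\ni$ alone. Beyond this, there is no additional technical obstacle, since $\beta_\ni$ is defined precisely so that the high-probability statement of \cref{thm:confidence_interval_multi} holds uniformly over $\ni$ and $\ab$.
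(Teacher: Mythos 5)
Your proposal is correct and matches the paper's (implicit) argument: the paper states this as an immediate consequence of \cref{thm:confidence_interval_multi}, obtained by adding $\nstate$ scalar observations per transition on the output-index-augmented domain, which is exactly the reduction you spell out. The bookkeeping points you flag (coordinate-indexed kernel, information gain over the joint dataset of size $\nstate|\dataset_\ni|$) are consistent with how the paper sets up \cref{as:f_bounded_rkhs_norm} and the composite-kernel machinery it cites.
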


In the following, we write
\begin{align}
\bmu_\ni(\x, \u) &= (\mu_{\ni\Ti\nstate}(\x, \u, 1), \dots, \u_{\ni\Ti\nstate}(\x, \u, \nstate)), 
\label{eq:stacked_gp_mean}
\\
\bsigma_\ni(\x, \u) &= (\sigma_{\ni\Ti\nstate}(\x, \u, 1), \dots, \sigma_{\ni\Ti\nstate}(\x, \u, \nstate))
\label{eq:stacked_gp_sigma}
\end{align}
to represent the individual elements as vectors. Note that $\bmu_\ni$ is conditioned on the $\ni\Ti\nstate$ individual one-dimensional observations after $\ni$ episodes. \cref{lem:f_confidence_intervals} allows us to build confidence intervals on the model error $g$ based on the scaled Gaussian process posterior variance. A direct consequence of these point-wise error bounds is that we can also bound the norm of the error on the vector-output of $f$.

\begin{corollary}
  Under the assumption of \cref{lem:f_confidence_intervals}, with probability $1 - \delta$ we have for all $\ni \geq 0$, $\x \in \R^\nstate$, and $\u \in \R^\ninp$ that
  \begin{equation}
    \| f(\x, \u) - h(\x, \u) - \bmu_\ni(\x, \u) \|_2 \leq \beta_\ni \| \sigma_\ni(\x, \u) \|_2
  \end{equation}
  \label{lem:f_confidence_norm}
\end{corollary}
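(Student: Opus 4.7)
The claim is essentially an $\ell_2$-norm aggregation of the per-coordinate confidence intervals already established in \cref{lem:f_confidence_intervals}, so the plan is very short. First, I would condition on the single high-probability event supplied by \cref{lem:f_confidence_intervals}: with probability at least $1-\delta$, the scalar bound $| f(\x,\u,i) - \mu_\ni(\x,\u,i)| \leq \beta_\ni \,\sigma_\ni(\x,\u,i)$ holds \emph{jointly} for all $\ni \geq 0$, all $(\x,\u) \in \R^\nstate \times \R^\ninp$, and all output indices $i \in \{1,\dots,\nstate\}$. Because the guarantee is already joint in the coordinate index $i$, no additional union bound over $i$ is needed, so the same probability $1-\delta$ transfers to the vector statement.

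Next, on that event I would square the scalar bound coordinatewise and sum: for each fixed $\ni,\x,\u$,
\begin{equation*}
\sum_{i=1}^{\nstate} \bigl( f(\x,\u,i) - \mu_\ni(\x,\u,i) \bigr)^2 \;\leq\; \beta_\ni^2 \sum_{i=1}^{\nstate} \sigma_\ni(\x,\u,i)^2 .
\end{equation*}
Using the stacked notation \eqref{eq:stacked_gp_mean}--\eqref{eq:stacked_gp_sigma}, the left side equals $\|f(\x,\u) - \bmu_\ni(\x,\u)\|_2^2$ and the right side equals $\beta_\ni^2 \|\bsigma_\ni(\x,\u)\|_2^2$. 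Taking a square root (using $\beta_\ni > 0$) yields the advertised inequality.

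One minor subtlety: the displayed statement contains a term $h(\x,\u)$ that is not defined anywhere earlier in the excerpt; I read this as a typographical remnant (e.g., a known nominal dynamics that has been factored out in an earlier draft) and note that the argument is unaffected — the per-coordinate bound of \cref{lem:f_confidence_intervals} immediately passes through any such deterministic shift, so the same Cauchy/Pythagoras aggregation yields $\|f(\x,\u) - h(\x,\u) - \bmu_\ni(\x,\u)\|_2 \leq \beta_\ni \|\bsigma_\ni(\x,\u)\|_2$ whenever the scalar version with $h$ holds. There is essentially no obstacle here; this corollary is an immediate coordinate-to-norm conversion and the main work has already been done in \cref{lem:f_confidence_intervals} via \cref{thm:confidence_interval_multi}.
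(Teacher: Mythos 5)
Your proof is correct and matches the paper's own argument exactly: the paper also just squares the per-coordinate bound from \cref{lem:f_confidence_intervals}, sums over $i$, and takes the square root, with no extra union bound needed. You are also right that the $h(\x,\u)$ term in the statement is a leftover from an earlier decomposition — the paper's proof silently drops it as well.
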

\begin{proof}
  \begin{align}
    \| f(\x, \u) - \bmu_\ni(\x, \u) \|_2
    &= \left( \sum_{i=1}^\nstate |f(\x, \u, i) - \mu_\ni(\x, \u, i)|^2 \right)^{1/2} \\
    &\leq \left( \sum_{i=1}^\nstate | \beta_\ni \sigma_\ni(\x, \u, i) |^2 \right)^{1/2} 
    = \beta_\ni \| \bsigma_\ni(\x, \u) \|_2
  \end{align}
\end{proof}

\ActivateWarningFilters[pdftoc]
\subsection{Bounding $I_\Ni$ for the GP model}
\DeactivateWarningFilters[pdftoc]

In this section, we bound $I_\Ni$ based on the GP assumptions. This allows us to use them together with \cref{thm:exploration:regret:general_regret_bound_increasing_sets} to obtain regret bounds. We start with some preliminary lemmas

\begin{lemma}[\citet{Srinivas2012Gaussian}]
  \label{lem:log_smax_trick}
  $s^2 \leq \frac{s_\mathrm{max}^2}{\log(1 + s_\mathrm{max}^2)} \log(1 + s^2)$ for all $s \in [0, s_\mathrm{max}^2]$
\end{lemma}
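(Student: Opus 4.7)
The plan is to recognize this as a direct consequence of the concavity of $x \mapsto \log(1+x)$. Setting $x \eqdef s^2$, the claim becomes equivalent to showing
\[
  \frac{\log(1+s_\mathrm{max}^2)}{s_\mathrm{max}^2} \, x \;\leq\; \log(1+x) \qquad \text{for all } x \in [0, s_\mathrm{max}^2].
\]
Both sides vanish at $x=0$ and coincide at $x = s_\mathrm{max}^2$, so the left-hand side is precisely the secant line to the graph of $\log(1+\cdot)$ joining $(0,0)$ to $(s_\mathrm{max}^2, \log(1+s_\mathrm{max}^2))$. Since $\log(1+x)$ is concave on $[0,\infty)$, its graph lies above every chord on the interval spanned by the chord's endpoints, which yields the inequality.

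Alternatively, I can argue by monotonicity, which is the route I would probably write out explicitly. Define $g(x) \eqdef x / \log(1+x)$, extended continuously by $g(0) = 1$. The plan is to show that $g$ is non-decreasing on $[0, \infty)$, so its maximum on $[0, s_\mathrm{max}^2]$ is attained at the right endpoint and hence $g(s^2) \leq g(s_\mathrm{max}^2)$, which after clearing denominators is exactly the claim. Monotonicity follows from
\[
  g'(x) = \frac{\log(1+x) - x/(1+x)}{\log^2(1+x)},
\]
together with the observation that the numerator $h(x) \eqdef \log(1+x) - x/(1+x)$ satisfies $h(0) = 0$ and $h'(x) = x/(1+x)^2 \geq 0$, so $h \geq 0$ on $[0,\infty)$.

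There is no substantive obstacle in this lemma; the statement is a one-line consequence of a standard convexity fact, and the only mild care needed is handling the removable singularity of $g$ at $x = 0$, which is resolved by the limit $\lim_{x \downarrow 0} x/\log(1+x) = 1$.
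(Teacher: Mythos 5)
Your proof is correct. Note that the paper itself gives no proof of this lemma at all---it is imported verbatim from \citet{Srinivas2012Gaussian}---so there is nothing internal to compare against; your second (monotonicity) argument is essentially the one in that cited reference (their Lemma 5.4 invokes the fact that $x/\log(1+x)$ is nondecreasing), and your first (chord-below-concave-graph) argument is a clean equivalent. Both are complete: the derivative computation $h'(x) = x/(1+x)^2 \geq 0$ is right, and the removable singularity at $x=0$ is handled. One small point worth flagging: the lemma's stated domain ``$s \in [0, s_\mathrm{max}^2]$'' is a typo---read literally it would make the claim false whenever $s_\mathrm{max} > 1$, since $g(x) = x/\log(1+x)$ is \emph{increasing} and so the inequality reverses for $s^2 > s_\mathrm{max}^2$. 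The intended hypothesis, and the one actually verified where the lemma is invoked (in the proof bounding $\sigma_\ni^2$), is $s^2 \leq s_\mathrm{max}^2$; your substitution $x = s^2 \in [0, s_\mathrm{max}^2]$ silently adopts exactly this correct reading.
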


\begin{lemma}
  \label{lem:exploration:regret:bound_sigma_by_log_sigma}
  Let $ | \sigma_\ni(\cdot) | \leq \sigma_\mathrm{max}$ and $\sigma > 0$. Then
    \begin{equation}
      \sigma_\ni^2(\ab) \leq \frac{\sigma_\mathrm{max}}{\log(1 + \sigma^{-2} \sigma_\mathrm{max})} \log(1 + \sigma^{-2} \sigma_\ni^2(\ab))
    \end{equation}
\end{lemma}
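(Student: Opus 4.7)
The plan is to reduce this lemma to the preceding result (the Srinivas trick, i.e., $s^2 \leq \tfrac{s_{\max}^2}{\log(1+s_{\max}^2)} \log(1+s^2)$ for $s \in [0, s_{\max}]$) by an appropriate rescaling. The natural choice is to set $s \defeq \sigma^{-1} \sigma_\ni(\ab)$ and $s_{\max} \defeq \sigma^{-1} \sigma_{\max}$, since this produces exactly the argument $\sigma^{-2}\sigma_\ni^2(\ab)$ inside the logarithm that appears in the target bound.

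First I would check the hypothesis: by assumption $0 \leq \sigma_\ni(\ab) \leq \sigma_{\max}$ and $\sigma > 0$, so $s = \sigma^{-1}\sigma_\ni(\ab) \in [0, \sigma^{-1}\sigma_{\max}] = [0, s_{\max}]$, which is precisely the range required by the preceding lemma. Plugging into that lemma gives
\begin{equation}
\sigma^{-2} \sigma_\ni^2(\ab) \;\leq\; \frac{\sigma^{-2}\sigma_{\max}^2}{\log(1 + \sigma^{-2}\sigma_{\max}^2)} \, \log\!\bigl(1 + \sigma^{-2}\sigma_\ni^2(\ab)\bigr).
\end{equation}
Multiplying both sides by $\sigma^2 > 0$ cancels the prefactor in the numerator and yields
\begin{equation}
\sigma_\ni^2(\ab) \;\leq\; \frac{\sigma_{\max}^2}{\log(1 + \sigma^{-2}\sigma_{\max}^2)} \, \log\!\bigl(1 + \sigma^{-2}\sigma_\ni^2(\ab)\bigr),
\end{equation}
which matches the claimed bound (modulo what appears to be a missing square on $\sigma_{\max}$ in the typeset statement).

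There is no substantive obstacle here: the lemma is purely a rescaled restatement of the previous one, tailored so that the resulting logarithmic term is directly comparable to the summand $\tfrac{1}{2}\log(1 + \sigma^{-2} \sigma_\ni^2(\ab))$ appearing in the GP mutual information \eqref{eq:mutual_information}. The point of isolating it as a separate lemma is that it will be used downstream to bound $I_\Ni$ by the information capacity $\gamma_\Ni$: summing the inequality over iterations and dataset points converts sums of posterior variances into sums of information gains, which telescope into $\gamma_\Ni$ up to the constant factor $\sigma_{\max}^2/\log(1 + \sigma^{-2}\sigma_{\max}^2)$.
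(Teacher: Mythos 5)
Your proof is correct and is essentially identical to the paper's: both reduce the claim to the preceding rescaling lemma by substituting $s = \sigma^{-1}\sigma_\ni(\ab)$ and multiplying through by $\sigma^2$. You are also right to flag the squares on $\sigma_{\mathrm{max}}$ — the paper's own proof silently treats $\sigma_{\mathrm{max}}$ as a bound on the \emph{variance} (writing $\sigma^{-2}\sigma_\ni^2(\ab) \leq \sigma^{-2}\sigma_{\mathrm{max}}$), which makes its stated conclusion consistent, whereas under the literal hypothesis $|\sigma_\ni(\cdot)| \leq \sigma_{\mathrm{max}}$ your version with $\sigma_{\mathrm{max}}^2$ is the correct one; the discrepancy is only a constant and does not affect downstream results.
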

\begin{proof}
  \begin{align}
    \sigma_\ni^2(\ab) &\leq \sigma^2 (\sigma^{-2} \sigma_\ni^2(\ab)) \\
    \intertext{Now $\sigma^{-2} \sigma_\ni^2(\ab) \leq \sigma^{-2} \sigma_\mathrm{max}$ by assumption. Thus, we can use \cref{lem:log_smax_trick} to obtain}
    \sigma_\ni^2(\ab) &\leq \sigma^2 \frac{\sigma^{-2} \sigma_\mathrm{max}}{\log(1 + \sigma^{-2} \sigma_\mathrm{max})} \log(1 + \sigma^{-2} \sigma_\ni^2(\ab)) \\
    &= \frac{\sigma_\mathrm{max}}{\log(1 + \sigma^{-2} \sigma_\mathrm{max})} \log(1 + \sigma^{-2} \sigma_\ni^2(\ab))
  \end{align}
\end{proof}

\begin{lemma}
  \label{lem:exploration:regret:bound_variance_by_mutual_information}
  Let $\dataset_{1:\Ni}$ denote the $\Ni \Ti$ $\nstate$-dimensional observations collected up to iteration $\ni$ and $\mb{y}_{\dataset_{1:\ni}}$ the corresponding observations of the following states. Then
  \begin{equation}
    \frac{1}{2} \sum_{\ni=1}^\Ni \sum_{\ti=0}^{\Ti-1} \sum_{j=1}^\nstate \log(1 + \sigma^{-2} \sigma_{(\ni-1)\Ti\nstate}^2(\mb{x}_{\ti, \ni}, j)) 
  \leq \Ti \nstate \Mi{\mb{y}_{\dataset_\Ni}}{ f_{\dataset_\Ni}}
  \end{equation}
\end{lemma}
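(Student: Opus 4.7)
The plan is to group the $\Ni \Ti \nstate$ scalar observations into $\Ni$ batches of size $\Ti \nstate$ (one per episode), relate the inner sum of log-variance terms in each batch to the log-determinant of that batch's posterior covariance, and then recognise this log-determinant as (twice) a conditional mutual information that the chain rule can stitch back into the global information gain.

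First, I would introduce, for every episode $\ni$, the posterior covariance matrix $K_\ni \in \mathbb{R}^{\Ti\nstate \times \Ti\nstate}$ of the $\Ti\nstate$ observations in $\dataset_\ni$ conditioned on all prior data $\mb{y}_{\dataset_{1:\ni-1}}$. Because $\sigma_{(\ni-1)\Ti\nstate}^2$ refers to the GP posterior after exactly $(\ni-1)\Ti\nstate$ observations (i.e., after all previous batches but before any observation in the current batch), the diagonal entries of $K_\ni$ are precisely the quantities $\sigma_{(\ni-1)\Ti\nstate}^2(\x_{\ti,\ni}, j)$ that appear in the lemma.

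The key technical step will be the elementary but non-standard matrix inequality: for any $n \times n$ positive semidefinite $M$,
\[
\sum_{i=1}^n \log\bigl(1 + \sigma^{-2} M_{ii}\bigr) \;\leq\; n\,\log\det\!\bigl(I + \sigma^{-2} M\bigr).
\]
I would prove this in two lines: since $I + \sigma^{-2} M \succeq I$, every eigenvalue of $I + \sigma^{-2}M$ is at least $1$, so $\log\det(I + \sigma^{-2} M) = \sum_k \log \lambda_k \geq \log \lambda_{\max}(I + \sigma^{-2} M)$; and the variational characterisation of the largest eigenvalue gives $\lambda_{\max}(I + \sigma^{-2} M) \geq (I + \sigma^{-2} M)_{ii} = 1 + \sigma^{-2} M_{ii}$ for every $i$. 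Bounding each of the $n$ summands by this common maximum then yields the display. Applied with $M = K_\ni$ and $n = \Ti\nstate$, it bounds the inner double sum for episode $\ni$ by $\Ti\nstate \cdot \log\det(I + \sigma^{-2} K_\ni)$.

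Finally, $\tfrac{1}{2}\log\det(I + \sigma^{-2} K_\ni)$ is the standard closed-form expression for the conditional Gaussian mutual information $\Mi{\mb{y}_{\dataset_\ni}}{f_{\dataset_\ni} \mid \mb{y}_{\dataset_{1:\ni-1}}}$, and the chain rule for mutual information telescopes $\sum_{\ni=1}^\Ni \Mi{\mb{y}_{\dataset_\ni}}{f_{\dataset_\ni} \mid \mb{y}_{\dataset_{1:\ni-1}}} = \Mi{\mb{y}_{\dataset_\Ni}}{f_{\dataset_\Ni}}$. Combining these with the bound from the previous paragraph and dividing by two yields the claim. The main obstacle I foresee is the matrix inequality itself, because it points in the \emph{opposite} direction to the familiar Hadamard bound $\log\det(I + \sigma^{-2} M) \leq \sum_i \log(1 + \sigma^{-2} M_{ii})$; the factor $n$ is recovered only by exploiting that $I + \sigma^{-2}M$ has all eigenvalues $\geq 1$, so that dropping everything except $\lambda_{\max}$ from $\log\det$ loses nothing. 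The remaining steps are routine bookkeeping and standard Gaussian identities.
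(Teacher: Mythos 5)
Your proof is correct, but it takes a genuinely different route from the paper's. The paper keeps the scalar chain-rule decomposition of \citet{Srinivas2012Gaussian}: for each of the $\Ti\nstate$ index pairs $(\ti,j)$ it reorders the observations so that $(\x_{\ti,\ni},j)$ is the first element revealed within episode $\ni$'s batch, notes that the corresponding conditional-information term is exactly $\tfrac{1}{2}\log(1+\sigma^{-2}\sigma^2_{(\ni-1)\Ti\nstate}(\x_{\ti,\ni},j))$, drops the remaining non-negative terms of that episode, and then sums the $\Ti\nstate$ resulting bounds, each of which equals the same order-independent mutual information. You instead work block-wise per episode: you bound $\sum_i\log(1+\sigma^{-2}[K_\ni]_{ii})$ by $\Ti\nstate\,\log\det(I+\sigma^{-2}K_\ni)$ via the eigenvalue argument (the log-determinant dominates $\log\lambda_{\max}$ because every eigenvalue of $I+\sigma^{-2}K_\ni$ is at least one, and $\lambda_{\max}$ dominates every diagonal entry), identify $\tfrac{1}{2}\log\det(I+\sigma^{-2}K_\ni)$ as the conditional information gain of the episode's batch, and telescope with the chain rule. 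Both arguments give away the identical factor $\Ti\nstate$, just in different places; yours has the advantage of isolating the lossy step as a clean, self-contained linear-algebra inequality and of avoiding the juggling of $\Ti\nstate$ different observation orderings, while the paper's stays closer to the GP-UCB machinery it cites. One minor caution, which applies equally to the paper: the mutual-information identities you invoke are those of the Gaussian surrogate model that defines the information capacity, which is the intended reading here even though the true observation noise is only assumed sub-Gaussian.
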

\begin{proof}
  \begin{align}
    &\hphantom{\leq} \frac{1}{2} \sum_{\ni=1}^\Ni \sum_{\ti=0}^{\Ti-1} \sum_{j=1}^\nstate \log(1 + \sigma^{-2} \sigma_{(\ni-1)\Ti\nstate}^2(\mb{x}_{\ti, \ni}, j)) \\
    &= \sum_{\ti=0}^{\Ti-1} \sum_{j=1}^\nstate \frac{1}{2} \sum_{\ni=1}^\Ni \log(1 + \sigma^{-2} \sigma_{(\ni-1)\Ti\nstate}^2(\mb{x}_{\ti, \ni}, j)) \\
    &\leq \Ti \nstate \Mi{\mb{y}_{\dataset_{1:\Ni}}}{ f_{\dataset_{1:\Ni}}}
  \end{align}
  Where the second to last step follows from \cite[Lemma 2]{Srinivas2012Gaussian} together with $\log(1 + x) \geq 0$ for $x \geq 0$ and the properties of the mutual information. In particular, the inner sum conditions on~$(\ni-1) \Ti \nstate$ measurements, but sums only over the one element $(\mb{x}_{\ti,\ni}, j)$. The mutual information in \cite[Lemma 2]{Srinivas2012Gaussian} instead sums over every element that we condition on in the next step. By adding the missing non-negative terms together with the fact that the mutual information is independent of the order of the observations we obtain the result.
  Another way to interpret this bound is that, in the worst case, we could hypothetically visit $\Ti$ times the same state during a trajectory and obtain the corresponding $\nstate$-dimensional observation. This explains the $\Ti\nstate$ factor that multiplies the mutual information.
\end{proof}

We can use these two lemmas to obtain:

\begin{lemma}
  \label{lem:gp_model_complexity_bound}
  For a GP model let $ | \sigma_\ni(\cdot) | \leq \sigma_\mathrm{max}$ and $\sigma > 0$. Then
  \begin{equation}
      I_\Ni(\X, \U) \leq \frac{\sigma_\mathrm{max} \Ti \nstate }{\log(1 + \sigma^{-2} \sigma_\mathrm{max})} \gamma_{\Ni \Ti \nstate}(\X \times \U \times \mathcal{I}_\nstate)  
  \end{equation}
\end{lemma}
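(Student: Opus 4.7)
The plan is to chain together the two preceding lemmas (\cref{lem:exploration:regret:bound_sigma_by_log_sigma,lem:exploration:regret:bound_variance_by_mutual_information}) in a straightforward way, after unpacking the vector-valued variance norm into its scalar components.

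First I would fix any choice of datasets $\dataset_1, \dots, \dataset_\Ni$ attaining (or approaching) the maximum in the definition of $I_\Ni(\X, \U)$, and expand the squared norm coordinate-wise as
\[
\|\bsigma_{\ni-1}(\x, \u)\|_2^2 = \sum_{j=1}^{\nstate} \sigma_{(\ni-1)\Ti\nstate}^2(\x, \u, j),
\]
using the notation from \cref{eq:stacked_gp_sigma}, so that
\[
\sum_{\ni=1}^{\Ni} \sum_{(\x,\u) \in \dataset_\ni} \|\bsigma_{\ni-1}(\x, \u)\|_2^2
= \sum_{\ni=1}^{\Ni} \sum_{(\x,\u) \in \dataset_\ni} \sum_{j=1}^{\nstate} \sigma_{(\ni-1)\Ti\nstate}^2(\x, \u, j).
\]

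Second, using that $|\sigma_\ni(\cdot)| \leq \sigma_\mathrm{max}$, I would apply \cref{lem:exploration:regret:bound_sigma_by_log_sigma} pointwise to replace each $\sigma_{(\ni-1)\Ti\nstate}^2(\x, \u, j)$ by $\frac{\sigma_\mathrm{max}}{\log(1+\sigma^{-2}\sigma_\mathrm{max})}\log\!\bigl(1 + \sigma^{-2}\sigma_{(\ni-1)\Ti\nstate}^2(\x,\u,j)\bigr)$, pulling the prefactor out of the triple sum.

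Third, I would apply \cref{lem:exploration:regret:bound_variance_by_mutual_information} to the resulting sum of log-terms (after multiplying by the $1/2$ that appears there, and hence $2$ outside), which upper bounds it by $2\,\Ti\nstate\,\mathrm{I}(\mb{y}_{\dataset_{1:\Ni}}; f_{\dataset_{1:\Ni}})$. Since the dataset consists of $\Ni \Ti \nstate$ scalar GP observations lying in $\X \times \U \times \mathcal{I}_\nstate$, the mutual information is itself bounded by the information capacity $\gamma_{\Ni\Ti\nstate}(\X\times\U\times\mathcal{I}_\nstate)$ from \cref{eq:gamma_t}. Combining gives
\[
\sum_{\ni=1}^{\Ni} \sum_{(\x,\u) \in \dataset_\ni} \|\bsigma_{\ni-1}(\x, \u)\|_2^2
\leq \frac{\sigma_\mathrm{max}\,\Ti\nstate}{\log(1+\sigma^{-2}\sigma_\mathrm{max})}\,\gamma_{\Ni\Ti\nstate}(\X\times\U\times\mathcal{I}_\nstate).
\]
Because this bound is independent of the choice of datasets, taking the maximum on the left yields the claimed inequality for $I_\Ni(\X, \U)$.

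The argument is essentially routine bookkeeping once the two preceding lemmas are in hand; the only subtle point is to align the indexing conventions, i.e.\ that $\bsigma_{\ni-1}$ in the definition of $I_\Ni$ means the GP posterior conditioned on the $(\ni-1)\Ti\nstate$ scalar observations from the first $\ni-1$ episodes, matching what \cref{lem:exploration:regret:bound_variance_by_mutual_information} assumes. No obstacle beyond this careful accounting is expected.
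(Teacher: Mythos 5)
Your proof follows exactly the same route as the paper's: expand the squared norm coordinate-wise via \cref{eq:stacked_gp_sigma}, apply \cref{lem:exploration:regret:bound_sigma_by_log_sigma} pointwise and pull out the prefactor, invoke \cref{lem:exploration:regret:bound_variance_by_mutual_information}, and bound the resulting mutual information by $\gamma_{\Ni\Ti\nstate}$; the argument goes through. The only blemish is the factor of $2$ you correctly extract from the $\tfrac{1}{2}$ in \cref{lem:exploration:regret:bound_variance_by_mutual_information} but then silently drop in your final display --- a constant-factor slip that the paper's own proof shares and which is immaterial for the $\mathcal{O}(\cdot)$ regret bound.
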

\begin{proof}
\begin{align}
  I_\Ni(\X, \U) &= \max_{\dataset_1, \dots, \dataset_\Ni \subset \X \times \X \times \U, \, |\dataset_\ni| = \Ti} 
  \sum_{\ni=1}^\Ni \sum_{\x, \u \in \dataset_\ni} \| \bsigma_{\ni-1}(\x, \u) \|_2^2 \\
  &= \max_{\dataset_1, \dots, \dataset_\Ni \subset \X \times \X \times \U, \, |\dataset_\ni| = \Ti} 
  \sum_{\ni=1}^\Ni \sum_{\x, \u \in \dataset_\ni} \sum_{j=1}^\nstate \sigma_{(\ni-1)\Ti\nstate}^2(\x, \u, j) \\
  &\leq \frac{\sigma_\mathrm{max}}{\log(1 + \sigma^{-2} \sigma_\mathrm{max})} \max_{\dataset_1, \dots, \dataset_\Ni \subset \X \times \X \times \U, \, |\dataset_\ni| = \Ti} 
  \sum_{\ni=1}^\Ni \sum_{\x, \u \in \dataset_\ni} \sum_{j=1}^\nstate \log(1 + \sigma^{-2} \sigma_{(\ni-1)\Ti\nstate}^2(\x, \u, j) ) \\
  &\leq \frac{\sigma_\mathrm{max}}{\log(1 + \sigma^{-2} \sigma_\mathrm{max})} \max_{\dataset_1, \dots, \dataset_\Ni \subset \X \times \X \times \U, \, |\dataset_\ni| = \Ti} \Ti \nstate \Mi{\mb{y}_{\dataset_{1:\Ni}}}{ f_{\dataset_{1:\Ni}}} \\
  &\leq \frac{\sigma_\mathrm{max} \Ti \nstate }{\log(1 + \sigma^{-2} \sigma_\mathrm{max})} \gamma_{\Ni \Ti \nstate}(\X \times \U \times \mathcal{I}_\nstate) 
\end{align}
\end{proof}

To obtain an instance-independent bound, we must bound the mutual information by the worst-case mutual information as in \citep{Srinivas2012Gaussian}.

\begin{theorem}
    \label{thm:gp_regret_bound}
  Under \cref{as:dynamics_f_lipschitz,as:pi_lipschitz,as:reward_lipschitz,as:well_calibrated_model,as:model_predictions_lipschitz} let $\x_{\ti,\ni} \in \mathcal{X}_\ni$, $\X_{\ni-1} \subseteq \X_\ni$, and $\u_{\ti,\ni} \in \mathcal{U}$ for all $\ti,\ni>0$ with compact sets $\X_\ni$ and $\U$. Let $\|\bsigma(\cdot) \|_\infty \leq \sigma_\mathrm{max}$. At each iteration, select parameters according to \cref{eq:optimistic_exploration}. Then the following holds with probability at least $(1-\delta)$ for all $\ni \geq 1$
  \begin{equation}
    R_\Ni \leq \Or{\beta_{\Ni-1}^\Ti L_\sigma^\Ti \Ti^2  \sqrt{ \Ni \, \nstate \, \gamma_{\nstate \Ni \Ti}(\X_\Ni \times \U \times \mathcal{I}_\nstate) }  },
  \end{equation}
  where $\gamma_{\nstate \Ni \Ti}(\X \times \U \times \mathcal{I}_\nstate)$ is the information capacity after $(\nstate \ni \Ti)$ observations within the extended domain $\X \times \U \times \mathcal{I}_\nstate$.
\end{theorem}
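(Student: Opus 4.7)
The plan is to combine two results already established in the paper: the general regret bound from Theorem~\ref{thm:exploration:regret:general_regret_bound_increasing_sets} and the GP-specific bound on the complexity measure $I_\Ni$ from Lemma~\ref{lem:gp_model_complexity_bound}. Since the hypotheses of the theorem directly match those of the general regret bound (Lipschitz dynamics, policy, reward, predictions, and a calibrated model), I can invoke it as a black box.

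First, I would apply Theorem~\ref{thm:exploration:regret:general_regret_bound_increasing_sets} to obtain, with probability at least $1-\delta$,
\begin{equation*}
  R_\Ni \leq \Or{\beta_{\Ni-1}^\Ti L_\sigma^\Ti \sqrt{\Ni \Ti^3 \, I_\Ni(\X_\Ni, \U)}}.
\end{equation*}
Next, I would bound $I_\Ni(\X_\Ni, \U)$ via Lemma~\ref{lem:gp_model_complexity_bound}. The lemma requires $\|\bsigma(\cdot)\|_\infty \leq \sigma_{\max}$, which is part of the theorem's hypothesis, and $\sigma > 0$, which follows from \cref{as:transition_noise_sub_gaussian}. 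Applied componentwise to the vector-valued GP (treating the output index $i \in \mathcal{I}_\nstate$ as an additional input dimension as in \cref{thm:confidence_interval_multi}), it yields
\begin{equation*}
  I_\Ni(\X_\Ni, \U) \leq \frac{\sigma_{\max} \Ti \nstate}{\log(1 + \sigma^{-2}\sigma_{\max})} \, \gamma_{\nstate \Ni \Ti}(\X_\Ni \times \U \times \mathcal{I}_\nstate).
\end{equation*}

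Finally, I would substitute this bound into the general regret bound. The factor $\sigma_{\max}/\log(1+\sigma^{-2}\sigma_{\max})$ is a problem-dependent constant (bounded since the kernel is bounded by $1$ by \cref{as:kernel_bounded}, so $\sigma_{\max} \leq 1$), hence absorbed into the $\Or{\cdot}$. Combining the $\Ti$ factors gives
\begin{equation*}
  \sqrt{\Ni \Ti^3 \cdot \Ti \nstate \, \gamma_{\nstate \Ni \Ti}}
  \;=\; \Ti^2 \sqrt{\Ni \, \nstate \, \gamma_{\nstate \Ni \Ti}(\X_\Ni \times \U \times \mathcal{I}_\nstate)},
\end{equation*}
so that
\begin{equation*}
  R_\Ni \leq \Or{\beta_{\Ni-1}^\Ti L_\sigma^\Ti \Ti^2 \sqrt{\Ni \, \nstate \, \gamma_{\nstate \Ni \Ti}(\X_\Ni \times \U \times \mathcal{I}_\nstate)}},
\end{equation*}
which is the claimed bound.

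The proof is essentially bookkeeping, so I do not anticipate a serious obstacle; the only subtlety worth stating carefully is the reduction from vector-valued GP regression to the scalar case with an augmented input, which justifies invoking the single-output mutual-information bound and gives the $\nstate \Ni \Ti$ data count inside $\gamma$. The two $\Ti$ factors hidden inside the square root (one from the $\Ti^3$ of the general bound, one from $\Ti\nstate$ in the $I_\Ni$ bound) must be tracked carefully to land at $\Ti^2$ outside the square root.
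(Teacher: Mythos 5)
Your proposal is correct and follows essentially the same route as the paper's own proof: invoke Theorem~\ref{thm:exploration:regret:general_regret_bound_increasing_sets} for the general bound $R_\Ni \leq \Or{\beta_{\Ni-1}^\Ti L_\sigma^\Ti \sqrt{\Ni\Ti^3 I_\Ni}}$, substitute the bound $I_\Ni \leq \frac{\sigma_{\max}\Ti\nstate}{\log(1+\sigma^{-2}\sigma_{\max})}\gamma_{\nstate\Ni\Ti}$ from Lemma~\ref{lem:gp_model_complexity_bound}, and collect the $\Ti$ factors into $\Ti^2$. The bookkeeping, including the reduction of the vector-valued GP to a scalar GP on the augmented domain $\X\times\U\times\mathcal{I}_\nstate$, matches the paper exactly.
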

\begin{proof}

  From \cref{thm:exploration:regret:general_regret_bound_increasing_sets} we have $R_\Ni^2 \leq \Ni  L_J^2 \Ti^3 I_\Ni(\X_\ni, \U) $. Together with \cref{lem:gp_model_complexity_bound} we obtain
  \begin{align} 
    R_\Ni &\leq L_J \sqrt{ \Ti^3 I_\Ni(\X_\ni, \U) }
    \\
    &\leq L_J \left( \frac{\sigma_\mathrm{max} \Ti^4 \nstate }{\log(1 + \sigma^{-2} \sigma_\mathrm{max})} \gamma_{\Ni \Ti \nstate}(\X \times \U \times \mathcal{I}_\nstate) \right)^{1/2}
  \end{align}
  where $L_J = 2 L_r (1 + L_\pi) \beta_{\ni-1} \bar{L}_f^{\Ti-1}$ from \cref{lem:exploration:regret:bound_squared_regret_by_expected_variance} and $\bar{L}_f = 1 + L_f + 2 \beta_{\ni - 1} L_\sigma \sqrt{1 + L_\pi}$ from \cref{lem:exploration:regret:bounded_state_difference_with_same_noise_noconstraint}. Plugging in we get $L_J = 2 L_r (1 + L_\pi) \beta_{\ni-1} (1 + L_f + 2 \beta_{\ni - 1} L_\sigma  \sqrt{1 + L_\pi})^{\Ti-1} = \Or{\beta_{\ni - 1}^\Ti L_\sigma^\Ti}$ so that
  \begin{align}
    R_\Ni \leq \Or{L_\sigma^\Ti \beta_{\Ni-1}^\Ti \Ti^2  \sqrt{ \Ni\nstate \gamma_{\nstate \Ni \Ti}(\X_\ni \times \U \times \mathcal{I}_\nstate) }  }
  \end{align}
\end{proof}

Notably, unlike in \cref{thm:exploration:regret:general_regret_bound} we can actually bound the information capacity $\gamma$ in \cref{thm:gp_regret_bound}. For a GP model that uses a squared exponential kernel with independent outputs, we have $\gamma_{\nstate \Ni \Ti} \leq \Or{ \nstate (\nstate + \ninp) \log(\nstate \Ni \Ti) }$ by \citep{Srinivas2012Gaussian,Krause2011Contextual}, which renders the overall regret bound sublinear. Note that for the Matern kernel the best known bound on $\gamma_{\nstate \Ni \Ti}$ is $\Or{ \nstate (\nstate \Ni \Ti)^c \log(\nstate \Ni \Ti)}$ with $0 < c < 1$. This means the regret bound is not sublinear for long trajectories due to the $\beta_\ni^\Ti$ term in the regret bound. However, the bound is expected to be loose \citep{Scarlett2017LowerBound}. Tighter bounds can be computed numerically, see \cite[Fig. 3]{Srinivas2012Gaussian}.

Note that the requirement $\| \bsigma(\cdot)\|_\infty$ if fulfilled according to

\begin{lemma}
    \label{lem:gp_variance_bounded_if_kernel_bounded}
    Under \cref{as:kernel_bounded} we have $\bsigma(\x) \leq 1$ for all $\x \in \X$.
\end{lemma}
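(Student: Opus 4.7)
The plan is to prove the lemma by the classical monotonicity property of Gaussian process posterior variance: conditioning on more data can only decrease (or leave unchanged) the predictive variance, so $\sigma_\ni^2(\x) \leq \sigma_0^2(\x) = k((\x,\pi(\x)),(\x,\pi(\x)))$, and the assumed kernel bound then delivers the claim.

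First I would recall from \cref{eq:gp_prediction_variance,eq:gp_prediction_covariance} that
\[
\sigma_\ni^2(\ab) = k(\ab,\ab) - \mb{k}_\ni(\ab)^\T (\mb{K}_\ni + \sigma^2 \mb{I})^{-1} \mb{k}_\ni(\ab).
\]
Since $\mb{K}_\ni$ is positive semidefinite and $\sigma^2 > 0$, the matrix $(\mb{K}_\ni + \sigma^2 \mb{I})^{-1}$ is positive definite, so the quadratic form $\mb{k}_\ni(\ab)^\T (\mb{K}_\ni + \sigma^2 \mb{I})^{-1} \mb{k}_\ni(\ab)$ is nonnegative. Hence $\sigma_\ni^2(\ab) \leq k(\ab,\ab)$ for every $\ab$ and every $\ni \geq 0$. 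Specializing this to the per-output-dimension GP with input $\ab = (\x,\pi(\x),j)$ used in \cref{eq:stacked_gp_sigma} and applying \cref{as:kernel_bounded} gives $\sigma_\ni^2(\x,\pi(\x),j) \leq 1$ componentwise, i.e.\ each entry of $\bsigma_\ni(\x,\pi(\x))$ is at most $1$.

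Finally, under the convention used throughout the paper that $\bsigma(\x)$ refers to the componentwise standard deviation vector (recall $\bsigma_\ni(\cdot) = \mathrm{diag}(\bSigma_\ni(\cdot))$ in \cref{as:well_calibrated_model}), the elementwise bound $\sigma_\ni(\x,\pi(\x),j) \leq 1$ is exactly the statement $\bsigma(\x) \leq 1$. No step here is delicate: the only subtle point is making sure the bound is interpreted componentwise (so the single scalar kernel bound in \cref{as:kernel_bounded} suffices even though the model has $\nstate$ outputs, because the bound is applied separately to each output-indexed GP).
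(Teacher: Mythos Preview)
Your proof is correct and takes essentially the same approach as the paper: the paper's proof is the one-liner ``This is a direct consequence of \cref{eq:gp_prediction_variance},'' and you have simply spelled out why that equation implies the bound, namely that the subtracted quadratic form is nonnegative so $\sigma_\ni^2(\ab)\le k(\ab,\ab)\le 1$. Your additional remarks on the componentwise interpretation for the multi-output model are accurate and consistent with how the paper uses $\bsigma$.
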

\begin{proof}
    This is a direct consequence of \cref{eq:gp_prediction_variance}.
\end{proof}

\ActivateWarningFilters[pdftoc]
\subsection{Comparison to \citet{Chowdhury2019Online}}
\DeactivateWarningFilters[pdftoc]
\label{ap:sec:comparison_to_chowdhury2019}

In this section, we compare our bound to the one by \citet{Chowdhury2019Online}. This is a difficult endeavour, because they make fundamentally different assumptions. In particular, they assume that the value function $v(x)$ is $L_M$-Lipschitz continuous, which hides all the complexity of thinking about different trajectories, as deviations between the two trajectories can be bounded after one step by $L_M \| \x_1 - \xo_1 \|$. In contrast, we do not make this high-level assumption and specifically reason about the entire trajectories based on system properties. Note, that the constant $L_M$ is at least $\Om{\Ti}$ without additional assumptions about the system and generally will depend on the statistical model (GP).

Secondly, they restrict the optimization over dynamics that are Lipschitz continuous, which means their algorithm depends on system properties that are difficult to estimate in general. However, this assumption avoids the dependency $\beta^\Ti$ in our regret bound, since it limits optimization to trajectories that are at most as smooth as the dynamics of the true system. The cost of this is that their algorithm is not tractable to implement or compute.

For completeness, in the following we modify our proof to use their assumption and show a regret bound that is comparable to the one by \citet{Chowdhury2019Online}.

\ActivateWarningFilters[pdftoc]
\subsubsection{Our bound under the assumptions of \citep{Chowdhury2019Online}}
\DeactivateWarningFilters[pdftoc]

Now, we show that if we assume that the optimistic dynamics are Lipschitz, which together with a Lipschitz-continuous policy implies the Lipschitz continuity of the value function that is assumed by \citet{Chowdhury2019Online}, we obtain the same regret bounds. 

Let 
\begin{equation*}
\begin{aligned}
  \widetilde{\modelclass}_\ni = \big\{ f' \mid | {}&\bmu(\x, \u) - f'(\x, \u) | \leq \beta  \bsigma(\x, \u)   \, \forall \x, \u \in \R^\nstate \times \R^\ninp,  \\
  {}&\|f'(\x, \u) - f'(\x', \u') \| \leq L_f \| (\x, \u) - (\x', \u') \|  \, \forall (\x, \u), (\x', \u') \in \R^\nstate \times \R^\ninp,
  \big\}
\end{aligned}
\end{equation*}
be the set of all Lipschitz continuous dynamics that are compatible with the uncertainty representation in \cref{as:well_calibrated_model}. We now consider a variant of \cref{eq:optimistic_exploration} that optimizes over dynamics in this set,
\begin{align}
   \label{eq:optimistic_exploration_intractable_lipschitz}
    \pi_\ni = \argmax_{\pi \in \Pi, \, \tilde{f}_\ni \in \widetilde{\modelclass}_\ni} J(\tilde{f}_\ni, \pi)
\end{align}
and we implicitly define $\xo$ and $\uo$ based on $\tilde{f}_\ni$ in \cref{eq:optimistic_exploration_intractable_lipschitz} for the remainder of this section, instead of the global definition from \cref{eq:xo_x_definitions_for_proofs}.
Note that this optimization is not tractable in the noisy case. 



For the exploration scheme in \cref{eq:optimistic_exploration_intractable_lipschitz} we have the following results that lead to improved regret bounds that match those in \citep{Chowdhury2019Online} up to constant factors.

\begin{lemma}
  \label{lem:exploration:regret:bounded_state_difference_with_same_noise}
  Under the assumptions of \cref{lem:f_confidence_intervals}, let $\bar{L}_f = L_f$. Then, for any sequence of~$\bm{\eta}_\ti \in [-1, 1]^\nstate$, any sequence of~$\noise_\ti$ with $\tilde{\noise}_\ti = \noise_\ti$, $\btheta \in \mathcal{D}$, and $\ti \geq 1$ we have that
  \begin{equation}
    \| \x_{\ti,\ni} - \xo_{\ti,\ni} \| \leq 2 \beta_{\ni-1} \bar{L}_f^{\Ti-1} \sum_{i=0}^{\ti-1} \| \bsigma_{\ni-1}(\x_{i,\ni}) \| 
  \end{equation}
\end{lemma}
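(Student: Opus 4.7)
The plan is to mirror the induction used in Lemma~\ref{lem:exploration:regret:bounded_state_difference_with_same_noise_noconstraint}, but to exploit the new restriction that the optimistic dynamics $\tilde{f}_\ni$ lie in $\widetilde{\modelclass}_\ni$, i.e.\ are Lipschitz continuous with constant $L_f$ by construction. This will eliminate the need to control $\|\bsigma_{\ni-1}(\x_\ti)-\bsigma_{\ni-1}(\xo_\ti)\|$ via $L_\sigma$, which is the source of the $\beta_{\ni-1}L_\sigma$ term in $\bar{L}_f$ in the earlier lemma.

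First I would handle the base case $\ti=1$: since $\xo_0=\x_0$ and $\noise_0=\tilde{\noise}_0$, the difference reduces to $\|f(\x_0,\u_0)-\tilde{f}_\ni(\x_0,\u_0)\|$. Both $f$ and $\tilde{f}_\ni$ belong to $\widetilde{\modelclass}_\ni$ (the former by \cref{as:well_calibrated_model}, the latter by the optimization constraint in \cref{eq:optimistic_exploration_intractable_lipschitz}), so inserting $\bmu_{\ni-1}(\x_0,\u_0)$ and applying the triangle inequality yields $\|\x_1-\xo_1\|\leq 2\beta_{\ni-1}\|\bsigma_{\ni-1}(\x_0)\|$, which matches the claim at $\ti=1$.

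For the inductive step, assuming the bound holds at $\ti$, I would write
\begin{align*}
\|\x_{\ti+1}-\xo_{\ti+1}\|
&=\|f(\x_\ti)-\tilde{f}_\ni(\xo_\ti)\|\\
&\leq \|f(\x_\ti)-\tilde{f}_\ni(\x_\ti)\|+\|\tilde{f}_\ni(\x_\ti)-\tilde{f}_\ni(\xo_\ti)\|,
\end{align*}
using the cancellation of $\noise_\ti$ with $\tilde{\noise}_\ti$. The first term is bounded by $2\beta_{\ni-1}\|\bsigma_{\ni-1}(\x_\ti)\|$ exactly as in the base case (both $f$ and $\tilde{f}_\ni$ lie in the confidence tube). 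The second term is bounded directly by $L_f\sqrt{1+L_\pi}\,\|\x_\ti-\xo_\ti\|$ using the built-in Lipschitz property of members of $\widetilde{\modelclass}_\ni$ composed with the $L_\pi$-Lipschitz policy (\cref{cor:f_lipschitz} applied to $\tilde{f}_\ni$). Absorbing the policy factor into $\bar{L}_f$ (and using the closed-loop convention of the paper), this gives $\|\x_{\ti+1}-\xo_{\ti+1}\|\leq 2\beta_{\ni-1}\|\bsigma_{\ni-1}(\x_\ti)\|+\bar{L}_f\|\x_\ti-\xo_\ti\|$. Plugging in the induction hypothesis and reindexing telescopes into
\begin{equation*}
\|\x_{\ti+1}-\xo_{\ti+1}\|\leq 2\beta_{\ni-1}\sum_{i=0}^{\ti}\bar{L}_f^{\ti-i}\|\bsigma_{\ni-1}(\x_i)\|.
\end{equation*}
Finally I would bound $\bar{L}_f^{\ti-i}\leq\bar{L}_f^{\Ti-1}$ using $\ti\leq\Ti$ to obtain the stated form.

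The main obstacle is essentially conceptual rather than computational: one must recognize that restricting to $\widetilde{\modelclass}_\ni$ lets us apply the Lipschitz bound on $\tilde{f}_\ni$ \emph{uniformly} across the recursion, so we never propagate errors through $\bsigma_{\ni-1}$. This replaces the factor $(1+L_\mathrm{fc}+2\beta_{\ni-1}L_\sigma\sqrt{1+L_\pi})^{\Ti-1}$ from the earlier lemma by $L_f^{\Ti-1}$, at the price of a (generally intractable) constrained optimization over $\widetilde{\modelclass}_\ni$, exactly the trade-off the paper highlights when comparing to \citet{Chowdhury2019Online}.
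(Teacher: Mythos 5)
Your proof is correct and follows essentially the same route as the paper's: the same induction, the same decomposition $\|f(\x_\ti)-\tilde{f}_\ni(\xo_\ti)\|\leq\|f(\x_\ti)-\tilde{f}_\ni(\x_\ti)\|+\|\tilde{f}_\ni(\x_\ti)-\tilde{f}_\ni(\xo_\ti)\|$, and the same use of the built-in $L_f$-Lipschitz constraint on members of $\widetilde{\modelclass}_\ni$ to avoid the $\beta_{\ni-1}L_\sigma$ blow-up. Your handling of the closed-loop Lipschitz constant via \cref{cor:f_lipschitz} is in fact slightly more careful than the paper's, which silently drops the $\sqrt{1+L_\pi}$ factor by setting $\bar{L}_f=L_f$.
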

\begin{proof}
  Let 
  \begin{equation}
    \tilde{f}(\xo_{\ti, \ni}) = \bmu_{\ni-1}(\xo_\ti) + \beta_{\ni-1} \bSigma_{\ni-1}(\xo_\ti) \bm{\eta}_{\ti}.
  \end{equation}
  Then by design we have $\|\tilde{f}(\x) - \tilde{f}(\x') \| \leq L_f \| \x - \x' \|$. 

  We start by showing that, for any $\ti \geq 1$, we have 
  \begin{equation}
    \| \x_{\ti,\ni} - \xo_{\ti,\ni} \| \leq 2 \beta_{\ni-1} \sum_{i=0}^{\ti-1} L_f^{\ti-1-i} \| \bsigma_{\ni-1}(\x_{i,\ni}) \| 
    \label{eq:stoch_sample_confidence}
  \end{equation}
  by induction.
  
  For the base case we have~$\xo_0 = \x_0$. Consequently, at $\ni$ we have
  \begin{align}
    \| \x_{1,\ni} - \xo_{1,\ni}\|
    &= \| f(\x_0)  + \noise_0 - \tilde{f}(\x_0) - \tilde{\noise}_0 \| \\
    &= \| f(\x_0) - \tilde{f}(\x_0)  \| \\
    &= \| f(\x_0) - \bmu_{\ni-1}(\x_0) - \beta_{\ni-1} \bSigma_{\ni-1}(\x_0) \bm{\eta}_0  \| \\
    &\leq \|  f(\x_0) - \bmu_{\ni-1}(\x_0) \| + \beta_{\ni-1}  \| \bsigma_{\ni-1}(\x_0) \bm{\eta}_0 \| \\
    &\leq \beta_{\ni-1} \| \bsigma_{\ni-1}(\x_0) \| + \beta_{\ni-1} \| \bsigma_{\ni-1}(\x_0) \| \\
    &= 2 \beta_{\ni-1} \| \bsigma_{\ni-1}(\x_0) \|
  \end{align}

  For the induction step assume that \cref{eq:stoch_sample_confidence} holds at time step~$\ti$. Subsequently we have at iteration $\ni$ that
  \begin{align*}
    \| \x_{\ti+1, \ni} - \xo_{\ti+1, \ni} \| &= \| f(\x_\ti) - \tilde{f}(\xo_\ti) \| \\
    &= \| f(\x_\ti) - \tilde{f}(\x_\ti) + \tilde{f}(\x_\ti) - \tilde{f}(\xo_\ti) \| \\
    &= \| f(\x_\ti) - \tilde{f}(\x_\ti) \|+ \| \tilde{f}(\x_\ti) - \tilde{f}(\xo_\ti) \| \\
    &\leq 2 \beta_{\ni-1} \| \bsigma_{\ni-1}(\x_\ti) \| + L_f \| \x_\ti - \xo_\ti \| \\
    &\leq 2 \beta_{\ni-1} \| \bsigma_{\ni-1}(\x_\ti) \| + L_f 2 \beta_{\ni-1} \sum_{i=0}^{\ti-1} L_f^{\ti-1-i} \| \bsigma_{\ni-1}(\x_{i,\ni}) \| \\
    &= 2 \beta_{\ni-1} \| \bsigma_{\ni-1}(\x_\ti) \| + 2 \beta_{\ni-1} \sum_{i=0}^{\ti-1} L_f^{\ti-1-i+1} \| \bsigma_{\ni-1}(\x_{i,\ni}) \| \\
    &= 2 \beta_{\ni-1} \sum_{i=0}^{(\ti + 1) - 1} L_f^{(\ti + 1)-1-i+1} \| \bsigma_{\ni-1}(\x_{i,\ni}) \|  \\
    &= 2 \beta_{\ni-1} \sum_{i=0}^{(\ti + 1) - 1} L_f^{(\ti + 1)-i} \| \bsigma_{\ni-1}(\x_{i,\ni}) \| 
  \end{align*}

  Thus \cref{eq:stoch_sample_confidence} holds. Now since $\ti \leq \Ti$ we have
  \begin{align}
    \| \x_{\ti+1, \ni} - \xo_{\ti+1, \ni} \|
    &\leq 2 \beta_{\ni-1} \sum_{i=0}^{\ti-1} L_f^{\ti-1-i} \| \bsigma_{\ni-1}(\x_{i,\ni}) \|
    &\leq 2 \beta_{\ni-1} L_f^{\Ti-1} \sum_{i=0}^{\ti-1} \| \bsigma_{\ni-1}(\x_{i,\ni}) \|
  \end{align} 
\end{proof}

\begin{theorem}
  \label{thm:general_regret_bound_intractable}
   Under \cref{as:well_calibrated_model,as:dynamics_f_lipschitz,as:pi_lipschitz,as:reward_lipschitz,as:model_predictions_lipschitz} let $\x_{\ti,\ni} \in \mathcal{X}_\ni$, $\X_{\ni-1} \subseteq \X_\ni$, and $\u_{\ti,\ni} \in \mathcal{U}$ for all $\ti,\ni>0$ with compact sets $\X_\ni$ and $\U$. Let $\|\bsigma(\cdot) \|_\infty \leq \sigma_\mathrm{max}$. At each iteration, select parameters according to \cref{eq:optimistic_exploration_intractable_lipschitz}. Then the following holds with probability at least $(1-\delta)$ for all $\ni \geq 1$
  \begin{equation}
    R_\Ni \leq \Or{ L_f^\Ti \Ti^2  \sqrt{ \Ni \, \nstate \, \gamma_{\nstate \Ni \Ti}(\X_\Ni \times \U) }  },
  \end{equation}
  where $\gamma_{\nstate \Ni \Ti}(\X \times \U)$ is the information capacity after $(\nstate \ni \Ti)$ observations within the domain $\X \times \U$.
\end{theorem}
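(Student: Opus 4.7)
The plan is to mirror the proof of \cref{thm:gp_regret_bound}, but substituting the tighter trajectory-deviation bound \cref{lem:exploration:regret:bounded_state_difference_with_same_noise} in place of \cref{lem:exploration:regret:bounded_state_difference_with_same_noise_noconstraint}. The key observation driving the improvement is that restricting the optimization \eqref{eq:optimistic_exploration_intractable_lipschitz} to the class $\widetilde{\modelclass}_\ni$ of $L_f$-Lipschitz plausible dynamics forces the compounding of deviations along the horizon to grow at the rate $L_f^{\Ti-1}$, rather than the rate $\bar{L}_f^{\Ti-1} \approx (\beta_{\ni-1} L_\sigma)^{\Ti-1}$ that arises for unconstrained $\eta$. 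This eliminates the exponential dependence on $\beta_\ni L_\sigma$ in the regret.

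The first step is to establish optimism: by \cref{as:dynamics_f_lipschitz} the true $f$ is $L_f$-Lipschitz, and by \cref{as:well_calibrated_model} it lies within the one-step confidence bands, so $f \in \widetilde{\modelclass}_\ni$. Consequently $(f,\pi^*)$ is feasible in \eqref{eq:optimistic_exploration_intractable_lipschitz}, giving $r_\ni \leq J(\tilde{f}_\ni,\pi_\ni) - J(f,\pi_\ni)$ just as in the unconstrained case. Applying \cref{lem:exploration:regret:J_difference_bound_by_state_divergence} then yields $r_\ni \leq L_r\sqrt{1+L_\pi}\sum_\ti \E[\noise=\tilde{\noise}]{\|\x_{\ti,\ni}-\xo_{\ti,\ni}\|}$.

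Next I plug in the refined per-step bound $\|\x_{\ti,\ni}-\xo_{\ti,\ni}\| \leq 2\beta_{\ni-1} L_f^{\Ti-1}\sum_{i=0}^{\ti-1}\|\bsigma_{\ni-1}(\x_{i,\ni})\|$ from \cref{lem:exploration:regret:bounded_state_difference_with_same_noise}, and then replay the exact same squaring plus Jensen argument used in \cref{lem:exploration:regret:bound_squared_regret_by_expected_variance} and \cref{lem:exploration:regret:bound_cumulative_regret_by_expected_uncertainty}, but now with the improved Lipschitz constant $L_J = \Or{\beta_{\ni-1} L_f^{\Ti-1}}$ replacing $\Or{\beta_{\ni-1}\bar{L}_f^{\Ti-1}}$. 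This gives $R_\Ni^2 \leq \Ni L_J^2 \Ti^3 I_\Ni(\X_\Ni,\U)$.

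Finally, since the machinery in \cref{lem:gp_model_complexity_bound} depends only on the true-trajectory predictive variances and not on the choice of $\tilde{f}_\ni$, it applies verbatim and gives $I_\Ni(\X_\Ni,\U) \leq \Or{\Ti\nstate\,\gamma_{\nstate\Ni\Ti}(\X_\Ni\times\U)}$. Combining the two yields the stated bound $R_\Ni \leq \Or{L_f^\Ti \Ti^2 \sqrt{\Ni\,\nstate\,\gamma_{\nstate\Ni\Ti}(\X_\Ni\times\U)}}$, where the factor $\beta_{\Ni-1}$ is absorbed into the constants. The only nontrivial point is ensuring that the added Lipschitz restriction in $\widetilde{\modelclass}_\ni$ does not exclude the true dynamics from the feasible set; everything else is a mechanical substitution into the proof pipeline already developed for \cref{thm:gp_regret_bound}, which is what makes this refinement so clean despite being the part of the paper where a direct comparison to \citet{Chowdhury2019Online} becomes possible.
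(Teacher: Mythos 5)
Your proposal matches the paper's (implicit) argument exactly: the paper states \cref{lem:exploration:regret:bounded_state_difference_with_same_noise} precisely so that $\bar{L}_f = L_f$ can be substituted into the existing pipeline (\cref{lem:exploration:regret:J_difference_bound_by_state_divergence}, \cref{lem:exploration:regret:bound_squared_regret_by_expected_variance}, \cref{lem:exploration:regret:bound_cumulative_regret_by_worst_case_uncertainty}, \cref{lem:gp_model_complexity_bound}), and your feasibility check that $f \in \widetilde{\modelclass}_\ni$ is the right (and only nontrivial) extra step. One caveat: the chain of lemmas leaves a single multiplicative factor $\beta_{\Ni-1}$ in $L_J$, which is \emph{not} a constant (for GP models it grows like $\sqrt{\gamma_{\Ni\Ti\nstate}}$), so it cannot strictly be "absorbed into the constants" — but the paper's own theorem statement omits it without justification, so your derivation is as faithful to the claimed bound as the paper's.
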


Thus, our proof strategy also avoids the scaling $\beta^\Ti$ when we assume that optimizing over dynamics in $\mathcal{M}$ is tractable. Thus, the factor $\beta^\Ti$ is the cost that we pay for not being able to do so.


\section{Extension to Unbounded Domains}
\label{ap:unbounded_domains}

So far, we have assumed a compact domain $\X$. This is incompatible with the dynamic system in \cref{eq:stochastic_dynamic_system_additive}, since sub-Gaussian noise includes noise distributions with unbounded support. In this section, we show that we can bound the domain with high probability and that we can use continuity arguments to extend our previous theorem to this more general settings. This also avoids the implicit assumption that the dynamics function is bounded, which is not even true for linear systems.

\subsection{Bound on Aleatoric Uncertainty (Noise Bound)}
\label{ap:model:noise_bound}

We start by bounding the norm of the noise vector $\noise_\ti$ over all time steps $\ti$.

We know that the $\noise_\ti$ are i.i.d. sub-Gaussian vectors. We exploit the basic properties of sub-Gaussian random variables and refer to \citet[Chapter 5]{Vershynin2012Compressed} for a concise review.

\begin{lemma}{\citet[Corollary 5.17]{Vershynin2010Introduction}}
  Let $X_1, \dots, X_\nstate$ be independent centered sub-exponential random variables, and let $2\sigma = \max_i \| X_i \|_{\phi_1}$ be the largest, sub-exponential norm. Then, for every $\epsilon \geq 0$, we have 
  \begin{equation}
    \mathbb{P} \left\{ \left| \sum_{i=1}^\Ni X_i \right| \geq \epsilon \nstate \right\} \leq 2 \mathrm{exp} \left[ \frac{-\mathrm{e} \Ni}{2} \min \left( \frac{\epsilon^2}{4\sigma^2}, \frac{\epsilon}{2\sigma} \right) \right]
  \end{equation}
  \label{lem:sub_exponential_concentration}
\end{lemma}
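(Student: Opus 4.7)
The plan is to prove this Bernstein-type tail bound by the standard Chernoff (exponential Markov) method, exploiting the boundedness of the moment generating function (MGF) of centered sub-exponential random variables on a neighborhood of zero.

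First, I would recall the equivalent characterization of the sub-exponential norm: if $\|X_i\|_{\phi_1} \leq 2\sigma$ and $\E{X_i} = 0$, then there exist absolute constants such that the MGF satisfies $\E{\exp(\lambda X_i)} \leq \exp(C \lambda^2 \sigma^2)$ for all $|\lambda| \leq c/\sigma$. Concretely, one shows this in two steps: (i) sub-exponentiality yields a bound on the $p$-th moments of the form $\E{|X_i|^p}^{1/p} \leq C' p \sigma$; (ii) Taylor-expanding the MGF and using the centered assumption to kill the linear term gives $\E{e^{\lambda X_i}} \leq 1 + \sum_{p \geq 2} \lambda^p \E{|X_i|^p}/p!$, which, after plugging in the moment bound and summing the geometric-type series, yields the claimed $\exp(C\lambda^2 \sigma^2)$ bound for $|\lambda|$ small enough (say $|\lambda| \leq 1/(2\sigma)$), with appropriate absolute constants.

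Second, I would apply the Chernoff bound. For any $\lambda > 0$,
\begin{equation}
\Pro{\sum_{i=1}^\nstate X_i \geq \epsilon \nstate} \leq e^{-\lambda \epsilon \nstate} \prod_{i=1}^{\nstate} \E{e^{\lambda X_i}} \leq \exp\bigl(-\lambda \epsilon \nstate + C \lambda^2 \sigma^2 \nstate\bigr),
\end{equation}
valid as long as $\lambda \leq 1/(2\sigma)$ (or whatever admissible constant is used in step 1). Optimizing over $\lambda \in (0, 1/(2\sigma)]$ gives a two-regime minimum: the unconstrained optimum is $\lambda^* = \epsilon/(2C\sigma^2)$, and we take $\lambda = \min(\lambda^*, 1/(2\sigma))$. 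In the small-$\epsilon$ regime we obtain the Gaussian-like exponent $-\epsilon^2 \nstate/(4C\sigma^2)$, and in the large-$\epsilon$ regime the sub-exponential tail $-\epsilon \nstate/(4\sigma)$. Tracking the constants carefully (so as to recover the factor $\mathrm{e}/2$ and the explicit $\min(\epsilon^2/(4\sigma^2), \epsilon/(2\sigma))$ in the statement) is the main bookkeeping step.

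Third, I would apply exactly the same argument to $-X_i$, which is also centered sub-exponential with the same norm, to bound $\Pro{\sum X_i \leq -\epsilon \nstate}$. Summing the two one-sided bounds with a union bound yields the factor of $2$ in the stated inequality. The main obstacle is bookkeeping the absolute constants from the sub-exponential MGF estimate so that the exponent comes out precisely as $-(\mathrm{e}/2)\nstate \min(\epsilon^2/(4\sigma^2), \epsilon/(2\sigma))$; once one uses the standard version of the MGF bound (as in \citealt{Vershynin2010Introduction}), the constants fall out after optimization. No genuine difficulty arises elsewhere.
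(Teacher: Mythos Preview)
The paper does not supply its own proof of this lemma: it is quoted verbatim as \citet[Corollary~5.17]{Vershynin2010Introduction} and used as a black box. Your proposal reproduces the standard Chernoff argument that Vershynin himself uses, so the approach is correct and there is nothing in the paper to compare against beyond the citation.
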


This allows us to bound the 2-norm of the noise vectors in \cref{eq:stochastic_dynamic_system_additive}.

\begin{lemma}
  Let $\noise = (\snoise_1, \dots, \snoise_\nstate)$ be a vector with i.i.d. elements $[\noise]_i = \snoise_i$ that are $\sigma$-sub-Gaussian. Then, with probability at least $1 - \delta$, we have that
  \begin{equation}
    \| \noise \|_2^2 \leq 2\sigma \nstate + \frac{4 \sigma}{\mathrm{e}} \log \frac{2}{\delta}
  \end{equation}
  \label{lem:concentration_noise_2norm}
\end{lemma}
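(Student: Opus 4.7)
The plan is to reduce the claim to the previous concentration lemma by applying it to the squared coordinates. Concretely, write $\|\noise\|_2^2 = \sum_{i=1}^\nstate \snoise_i^2 = \sum_i (\snoise_i^2 - \mathbb{E}[\snoise_i^2]) + \sum_i \mathbb{E}[\snoise_i^2]$, and set $X_i \eqdef \snoise_i^2 - \mathbb{E}[\snoise_i^2]$. Since $\snoise_i$ is $\sigma$-sub-Gaussian in the sense of \cref{def:sub_gaussian}, its square is sub-exponential: the tail bound $\mathbb{P}\{\snoise_i^2 > t\} = \mathbb{P}\{|\snoise_i| > \sqrt{t}\} \leq e \cdot \exp(-t/\sigma^2)$ translates, in the normalization used in \cref{lem:sub_exponential_concentration}, into a sub-exponential norm $\|X_i\|_{\phi_1}$ bounded by $2\sigma$. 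The $X_i$ are independent and centered by construction, so they satisfy the hypotheses of \cref{lem:sub_exponential_concentration}.

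Next I would bound the mean term by integrating the tail: $\mathbb{E}[\snoise_i^2] = \int_0^\infty \mathbb{P}\{|\snoise_i| > \sqrt{t}\}\,dt \leq \int_0^\infty e^{1-t/\sigma^2}\,dt = e\sigma^2$, which under the paper's normalization gives the per-coordinate contribution of $2\sigma$ appearing in the statement, so $\sum_i \mathbb{E}[\snoise_i^2] \leq 2\sigma \nstate$.

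Now apply \cref{lem:sub_exponential_concentration} to $\sum_i X_i$. Since the target deviation will grow like $\log(1/\delta)$ while $\nstate$ is large, the minimum $\min(\epsilon^2/(4\sigma^2),\,\epsilon/(2\sigma))$ will be attained by the linear (sub-exponential) branch $\epsilon/(2\sigma)$. Setting the resulting tail bound $2\exp(-e\nstate \epsilon / (4\sigma))$ equal to $\delta$ and solving gives $\epsilon = \tfrac{4\sigma}{e\nstate}\log(2/\delta)$; hence with probability at least $1-\delta$, $\sum_i X_i \leq \epsilon \nstate = \tfrac{4\sigma}{e}\log(2/\delta)$. Combining with the mean bound yields $\|\noise\|_2^2 \leq 2\sigma \nstate + \tfrac{4\sigma}{e}\log(2/\delta)$, which is the claim. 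One should also verify that the regime is consistent, i.e.\ that $\epsilon/(2\sigma) \leq \epsilon^2/(4\sigma^2)$ holds for the chosen $\epsilon$; this is equivalent to $\epsilon \geq 2\sigma$, which is satisfied as long as $\nstate$ is not enormous compared to $\log(2/\delta)$, and otherwise the quadratic branch gives an even sharper bound that is absorbed into the stated expression.

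The main obstacle is bookkeeping the constants across normalizations: the paper's $\sigma$-sub-Gaussian definition is tail-based rather than the more common $\psi_2$-Orlicz norm, and the concentration lemma is stated in terms of a sub-exponential Orlicz norm. Verifying that both $\mathbb{E}[\snoise_i^2]$ and $\|\snoise_i^2 - \mathbb{E}[\snoise_i^2]\|_{\phi_1}$ can simultaneously be bounded by $2\sigma$ in the chosen normalization, and that the branch of the min in the Bernstein-type bound is the one stated above, is where care is needed; once those are pinned down, everything else is routine.
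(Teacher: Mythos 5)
Your overall strategy is the same as the paper's: write $\|\noise\|_2^2$ as a sum of squares, invoke the sub-Gaussian-to-sub-exponential step and then \cref{lem:sub_exponential_concentration}, attribute $2\sigma\nstate$ to the mean and $\tfrac{4\sigma}{\mathrm{e}}\log\tfrac{2}{\delta}$ to the deviation via the linear branch of the Bernstein-type minimum. You are also right that the constants are dimensionally loose ($\sigma$ versus $\sigma^2$); the paper inherits that from its citation of Vershynin and does not resolve it either.

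However, there is a genuine gap in your branch analysis, and you have the direction of the comparison backwards. With your explicit centering, the per-coordinate deviation you need is $\epsilon = \tfrac{4\sigma}{\mathrm{e}\nstate}\log\tfrac{2}{\delta}$, which for moderate or large $\nstate$ satisfies $\epsilon < 2\sigma$. In that regime $\min\bigl(\tfrac{\epsilon^2}{4\sigma^2}, \tfrac{\epsilon}{2\sigma}\bigr) = \tfrac{\epsilon^2}{4\sigma^2} < \tfrac{\epsilon}{2\sigma}$, so the quadratic branch is active and the resulting tail bound is \emph{weaker}, not ``even sharper'': solving $2\exp\bigl[-\tfrac{\mathrm{e}\nstate}{2}\tfrac{\epsilon^2}{4\sigma^2}\bigr] = \delta$ gives a deviation of order $\sigma\sqrt{\nstate\log(2/\delta)}$, which does not reduce to $\tfrac{4\sigma}{\mathrm{e}}\log\tfrac{2}{\delta}$ without an additional AM--GM step and a change of constants. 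The paper avoids this entirely by \emph{not} centering: it evaluates the tail at the threshold $(2\sigma+\epsilon)\nstate$, so the deviation parameter fed into the minimum is $2\sigma+\epsilon \geq 2\sigma$ by construction, which forces the linear branch for every $\nstate$ and $\delta$. That is the one idea your writeup is missing; your centered decomposition is otherwise arguably cleaner (Vershynin's Corollary 5.17 does require centered variables, a point the paper glosses over), but as written your argument only establishes the claimed bound in the regime $\log(2/\delta) \geq \mathrm{e}\nstate/2$.
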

\begin{proof}
  Since the $\snoise_i$ are $\sigma$-sub-Gaussian, we have the $\snoise_i^2$ are $2\sigma$-sub-exponential \cite[Lemma 5.14]{Vershynin2010Introduction}. Thus we have
  \begin{equation*}
    \| \noise \|_2^2 = \sum_{i=1}^\nstate \snoise_i^2,
  \end{equation*}
  where the $\snoise_i^2$ are i.i.d. $2\sigma$-sub-exponential. Following \cref{lem:sub_exponential_concentration}, we have
  \begin{equation}
    \mathbb{P} \left\{ \| \noise \|_2^2 \geq \epsilon \nstate \right\} \leq 2 \mathrm{exp} \left[ \frac{-\mathrm{e} \nstate}{2} \min \left( \frac{\epsilon^2}{4 \sigma^2}, \frac{\epsilon}{2 \sigma} \right) \right]
  \end{equation}
  Now for $\epsilon \geq 2 \sigma$ we have $\epsilon^2/(4 \sigma^2) \geq \epsilon / (2 \sigma)$. Thus
  \begin{equation}
    \mathbb{P} \left\{ \| \noise \|_2^2 \geq (2\sigma + \epsilon) \nstate \right\} \leq 2 \mathrm{exp} \left[ \frac{-\mathrm{e} \nstate}{2} \frac{(2\sigma + \epsilon)}{2 \sigma}  \right]
    \leq 2 \mathrm{exp} \left[ \frac{-\mathrm{e} \nstate}{2} \frac{\epsilon}{2 \sigma}  \right] \label{eq:ap:lem_intermediary_noise_tail_bound}
  \end{equation}
  We want to upper bound the right hand side by $\delta$. so
  \begin{align}
    2 \exp\left[ \frac{-\mathrm{e}\nstate\epsilon}{4\sigma} \right] &\leq \delta, \\
    \frac{-\mathrm{e}\nstate\epsilon}{4\sigma} &\leq \log(\delta / 2), \\
    \frac{\mathrm{e}\nstate\epsilon}{4\sigma} &\geq \log(2 / \delta), \\
    \epsilon &\geq \frac{4\sigma}{\mathrm{e}\nstate} \log(2 / \delta).
  \end{align}
  the result follows by plugging the bound for $\epsilon$ into \cref{eq:ap:lem_intermediary_noise_tail_bound},
  \begin{align}
    (2\sigma + \epsilon)\nstate &= (2 \sigma + \frac{4\sigma}{\mathrm{e}\nstate} \log(2 / \delta)) \nstate \\
    &= 2\sigma \nstate + \frac{4 \sigma}{\mathrm{e}} \log \frac{2}{\delta}
  \end{align}
\end{proof}

As the last step, we apply the union bound to obtain confidence intervals over multiple steps.

\begin{restatable}{lemma}{jointtwonormnoisebound}
  Let $\noise_0, \noise_1, \dots$ be \iid random vectors with $\noise_\ti \in \R^\nstate$ such that each entry of the vector is \iid $\sigma$-sub-Gaussian. Then, with probability at least $(1-\delta)$,
  \begin{equation}
    \| \noise_\ti \|_2^2 \leq 2\sigma \nstate + \frac{4 \sigma}{\mathrm{e}} \log \frac{(\ti+1)^2 \pi^2}{3\delta}
    \label{eq:model:uniform_noise_bound}
  \end{equation}
  holds jointly for all $\ti \geq 0$. 
  \label{lem:time_step_uniform_noise_bound}
\end{restatable}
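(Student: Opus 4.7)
The plan is to apply \cref{lem:concentration_noise_2norm} separately at each time step $\ti \geq 0$ with a carefully chosen, time-dependent failure probability, and then take a union bound over all $\ti$. The key is to pick the per-step failure probabilities $\delta_\ti$ so that they sum to the target $\delta$ while yielding a logarithmic dependence on $\ti$ in the final bound.

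Concretely, I would set
\begin{equation*}
\delta_\ti \defeq \frac{6 \delta}{\pi^2 (\ti+1)^2},
\end{equation*}
and invoke \cref{lem:concentration_noise_2norm} on the single vector $\noise_\ti$ with parameter $\delta_\ti$. That yields, with probability at least $1 - \delta_\ti$,
\begin{equation*}
\| \noise_\ti \|_2^2 \leq 2 \sigma \nstate + \frac{4\sigma}{\mathrm{e}} \log \frac{2}{\delta_\ti} = 2 \sigma \nstate + \frac{4\sigma}{\mathrm{e}} \log \frac{\pi^2 (\ti+1)^2}{3 \delta},
\end{equation*}
which already matches the desired right-hand side. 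Applying the countable union bound over $\ti \geq 0$, the total failure probability is at most $\sum_{\ti=0}^\infty \delta_\ti = \frac{6\delta}{\pi^2} \sum_{k=1}^\infty \frac{1}{k^2} = \frac{6\delta}{\pi^2} \cdot \frac{\pi^2}{6} = \delta$, using the Basel identity. Hence the stated bound holds jointly for all $\ti \geq 0$ with probability at least $1-\delta$.

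There is no real obstacle here: the argument is entirely a standard union-bound-over-time construction; the only subtlety is choosing a summable sequence of weights of the form $1/(\ti+1)^2$ (normalized by $6/\pi^2$) so that the per-step log penalty grows only like $\log(\ti+1)$ rather than linearly in $\ti$. The $\iid$ assumption on $\noise_\ti$ is used only to invoke \cref{lem:concentration_noise_2norm} at each fixed $\ti$; joint independence across time is not required for the union bound.
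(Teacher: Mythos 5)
Your proof is correct and follows exactly the paper's argument: apply \cref{lem:concentration_noise_2norm} at each step with failure probability $\delta_\ti = 6\delta/(\pi^2(\ti+1)^2)$ (the paper writes this as $\delta/\pi_\ti$ with $\pi_\ti = (\ti+1)^2\pi^2/6$) and take a union bound over $\ti$, with the Basel sum giving total failure probability $\delta$. The resulting per-step bound matches \cref{eq:model:uniform_noise_bound} exactly.
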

\begin{proof}
  At each time step $\ti$, we apply a probability budget of $\delta / \pi_\ti$ to the bound in \cref{lem:concentration_noise_2norm}, where $\pi_\ti \geq 0$ and $\sum_{\ti \geq 0} \pi_\ti^{-1} = 1$. In particular, we use $\pi_\ti = \frac{(\ti+1)^2 \pi^2}{6}$ as in \cite[Lemma 5.1]{Srinivas2012Gaussian}, so that we apply monotonically decreasing probability thresholds as $\ti$ increases. We obtain the result by applying a union bound over $\ti$, since $\sum_{\ti \geq 0} \delta / \pi_\ti = \delta$.
\end{proof}

This means that, for all time steps $\ti$, the noise is bounded within the hyper-sphere defined through \cref{eq:model:uniform_noise_bound} with high probability. In particular, the joint confidence intervals only come at the cost of a $\Or{\log \ti^2}$ increase in the confidence intervals over time.

\subsection{Bounding the Domain Under Aleatoric Uncertainty}

We exploit the $\sigma$-sub-Gaussian property of the transition noise and build on \cref{lem:sub_exponential_concentration,lem:concentration_noise_2norm} to obtain a bound over the domain. We start by applying a union bound on \cref{lem:concentration_noise_2norm} over the time horizon $\Ti$.

\begin{lemma}
  \label{lem:bound_noise_sum_per_iter}
  Let $\noise_0, \dots, \noise_{\Ti-1}$ be vectors with $\noise_i \in \R^\nstate$ such that each entry of the vector is i.i.d. $\sigma$-sub-Gaussian. Then, with probability at least $(1-\delta)$,
  \begin{equation}
    \sum_{\ti=0}^{\Ti-1} \| \noise_i \|_2 \leq \Ti \sqrt{ 2 \sigma \nstate + \frac{4 \sigma}{\mathrm{e}} \log \frac{2 T}{\delta} } 
  \end{equation}
\end{lemma}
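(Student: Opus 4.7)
The plan is to apply the single-step bound from \cref{lem:concentration_noise_2norm} at each of the $\Ti$ time steps with a reduced failure probability, and then combine via a union bound. Specifically, for each fixed $\ti \in \{0, 1, \dots, \Ti - 1\}$, \cref{lem:concentration_noise_2norm} applied with failure budget $\delta/\Ti$ yields
\begin{equation*}
    \| \noise_\ti \|_2^2 \leq 2\sigma \nstate + \frac{4 \sigma}{\mathrm{e}} \log \frac{2 \Ti}{\delta}
\end{equation*}
with probability at least $1 - \delta/\Ti$. A union bound over the $\Ti$ time steps then ensures that this inequality holds jointly for all $\ti \in \{0, \dots, \Ti - 1\}$ with probability at least $1 - \delta$.

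Conditioned on this joint event, I would then take square roots elementwise and sum. Since the right-hand side of the per-step bound does not depend on $\ti$, each $\| \noise_\ti \|_2$ is bounded by the same constant, and summing the $\Ti$ identical terms gives
\begin{equation*}
    \sum_{\ti = 0}^{\Ti - 1} \| \noise_\ti \|_2 \leq \Ti \sqrt{ 2 \sigma \nstate + \frac{4 \sigma}{\mathrm{e}} \log \frac{2 \Ti}{\delta} },
\end{equation*}
which is exactly the claimed bound.

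There is no real obstacle here; the statement is essentially a union bound applied to \cref{lem:concentration_noise_2norm}. The only mild subtlety is the choice of failure budget per step: a uniform $\delta/\Ti$ allocation suffices because $\Ti$ is finite and known, so there is no need for the more elaborate $\pi_\ti = (\ti+1)^2 \pi^2 / 6$ weighting used in \cref{lem:time_step_uniform_noise_bound} for the infinite-horizon case. This uniform allocation is what produces the $\log(2\Ti/\delta)$ term in the final bound rather than a time-dependent logarithmic factor.
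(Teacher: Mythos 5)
Your proposal is correct and matches the paper's proof in all essentials: both apply \cref{lem:concentration_noise_2norm} with a uniform per-step budget of $\delta/\Ti$, union bound over the $\Ti$ steps, and then sum. The only (immaterial) difference is the last algebraic step — you sum the $\Ti$ identical square-rooted bounds directly, while the paper first applies Cauchy--Schwarz to get $\sum_\ti \|\noise_\ti\|_2 \leq \sqrt{\Ti}\sqrt{\sum_\ti \|\noise_\ti\|_2^2}$ and then plugs in; since the per-step bounds are identical, both yield the same constant.
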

\begin{proof}
  Now using \cref{lem:concentration_noise_2norm} with probability threshold $\delta / T$ and applying the union bound we, get that
  $\| \noise_i \|_2^2 \leq 2 \sigma \nstate + \frac{4 \sigma}{\mathrm{e}} \log \frac{2 T}{\delta}$ holds for all $0 \leq i \leq \Ti - 1$ with probability at least $1 - \delta$. 

  Now, first using Jensen's inequality and then plugging in the bound for $\| \noise_i \|_2^2$, we obtain
  \begin{align}
    \sum_{\ti=1}^\Ti \| \noise_\ti \|_2 &= \sum_{i=0}^{\Ti-1} \sqrt{ \| \noise_\ti \|_2^2} \\
    &\leq \sqrt{T} \sqrt{ \sum_{\ti=0}^{\Ti-1} \| \noise_\ti \|_2^2 } \\
    &\leq \sqrt{T} \sqrt{ \sum_{\ti=0}^{\Ti-1} \left( 2 \sigma\nstate+ \frac{4 \sigma}{\mathrm{e}} \log \frac{2 T}{\delta} \right) } \\
    &= \Ti \sqrt{ 2 \sigma\nstate+ \frac{4 \sigma}{\mathrm{e}} \log \frac{2 T}{\delta} } 
  \end{align}
\end{proof}

Lastly, we use a union bound over all iterations similar to \cite[Lemma 5.1]{Srinivas2012Gaussian}.

\begin{lemma}
  \label{lem:noise_2norm_bound_all_iter}
  Let $\noise_{t,n}$ be the random vectors as in \cref{lem:bound_noise_sum_per_iter} at iteration $n$. Then, with probability $(1 - \delta)$ we have for all $n \geq 1$ that
  \begin{equation}
    \sum_{t=1}^\Ti \| \noise_{\ti,\ni} \|_2 \leq \Ti \sqrt{ 2 \sigma\nstate+ \frac{4 \sigma}{\mathrm{e}} \log \frac{\Ti \pi^2 \ni^2}{3 \delta} } 
  \end{equation}
\end{lemma}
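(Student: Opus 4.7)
The plan is to apply \cref{lem:bound_noise_sum_per_iter} separately at every iteration $\ni \geq 1$ with a carefully chosen, decreasing per-iteration failure probability, and then combine the resulting events by a union bound. This is exactly the strategy used in \cite[Lemma 5.1]{Srinivas2012Gaussian} and already employed in the proof of \cref{lem:time_step_uniform_noise_bound}; the only novelty here is that the union is over episodes $\ni$ rather than over time steps $\ti$.

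Concretely, I would pick a sequence $\pi_\ni > 0$ with $\sum_{\ni \geq 1} \pi_\ni^{-1} = 1$, and at iteration $\ni$ invoke \cref{lem:bound_noise_sum_per_iter} with failure probability $\delta/\pi_\ni$. That lemma then gives
\[
\sum_{\ti=1}^\Ti \| \noise_{\ti,\ni} \|_2 \;\leq\; \Ti \sqrt{\, 2\sigma \nstate + \frac{4\sigma}{\mathrm{e}} \log \frac{2\Ti \pi_\ni}{\delta}\,}
\]
with probability at least $1 - \delta/\pi_\ni$. Taking a union bound over all $\ni \geq 1$, the total failure probability is $\sum_{\ni \geq 1} \delta/\pi_\ni = \delta$, so the displayed bound holds simultaneously for all $\ni \geq 1$ with probability at least $1-\delta$.

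The final step is to pick $\pi_\ni$ so that the right-hand side matches the stated expression. Choosing $\pi_\ni = \pi^2 \ni^2 / 6$ yields $\sum_{\ni \geq 1} \pi_\ni^{-1} = \sum_{\ni \geq 1} 6/(\pi^2 \ni^2) = 1$ and
\[
\log \frac{2\Ti \pi_\ni}{\delta} \;=\; \log \frac{2\Ti \cdot \pi^2 \ni^2 / 6}{\delta} \;=\; \log \frac{\Ti \pi^2 \ni^2}{3\delta},
\]
which reproduces the bound in the statement. There is no real obstacle: the argument is a mechanical union-bound calculation, and the only choice to double-check is that the series $\sum_{\ni \geq 1} \ni^{-2}$ is summable with the right normalization constant $6/\pi^2$, which indeed gives the factor $3\delta$ in the denominator inside the logarithm.
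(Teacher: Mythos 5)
Your proposal is correct and follows exactly the paper's own argument: apply \cref{lem:bound_noise_sum_per_iter} at iteration $\ni$ with failure probability $\delta \cdot 6/(\pi^2 \ni^2)$ and take a union bound over $\ni$, which turns the factor $2\Ti/\delta$ inside the logarithm into $\Ti \pi^2 \ni^2/(3\delta)$. Your explicit check of the logarithm algebra is a nice touch but the approach is identical to the paper's.
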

\begin{proof}
  At each iteration $n$, we apply a probability budget of $\delta / \rho_\ni$ to the bound in \cref{lem:bound_noise_sum_per_iter}, where $\rho_\ni \geq 0$ and $\sum_{\ni \geq 1} \rho_\ni^{-1} = 1$. In particular, we use $\rho_\ni = \frac{\ni^2 \pi^2}{6}$ as in \cite[Lemma 5.1]{Srinivas2012Gaussian}, so that we apply monotonically decreasing probability thresholds as $\ni$ increases. We obtain the result by applying a union bound over $\ni$, since $\sum_{\ni \geq 1} \delta / \rho_\ni = \delta$.
\end{proof}

Now that we can bound the noise over all iterations, we can bound the domain over which the system acts with a compact set.

\begin{lemma}
  \label{lem:states_compact_wrt_noise_norm}
  Let $f$ be $L_f$-Lipschitz continuous with respect to the norm $\| \cdot \|$. Then we have for all $\ti \geq 1$ that
  \begin{align}
    \| \x_\ti - \x_0 \| &\leq \sum_{i=0}^{\ti-1} L_\mathrm{fc}^i \| f(\x_0) - \x_0 \| + \sum_{i=0}^{\ti-1} L_\mathrm{fc}^{\ti-1-i} \| \noise_i \| 
    \label{eq:bound_state_divergence_long} \\
    &\leq (1+L_\mathrm{fc})^{\ti-1} \left( \ti \| f(\x_0) - \x_0 \| + \sum_{i=0}^{\ti-1} \| \noise_i \| \right)  
    \label{eq:bound_state_divergence}
  \end{align}
\end{lemma}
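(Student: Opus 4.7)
The plan is to establish the first inequality \eqref{eq:bound_state_divergence_long} by induction on $\ti$, and then obtain the second inequality \eqref{eq:bound_state_divergence} by uniformly upper-bounding each $L_\mathrm{fc}^k$ factor (for $0 \le k \le \ti-1$) by $(1+L_\mathrm{fc})^{\ti-1}$. Write $F(\x) \eqdef f(\x, \pi(\x))$, which is $L_\mathrm{fc}$-Lipschitz continuous by \cref{cor:f_lipschitz}, so that the rollout \cref{eq:stochastic_dynamic_system_additive} reads $\x_{\ti+1} = F(\x_\ti) + \noise_\ti$.

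For the base case $\ti = 1$, a single application of the triangle inequality gives $\|\x_1 - \x_0\| = \|F(\x_0) + \noise_0 - \x_0\| \le \|F(\x_0) - \x_0\| + \|\noise_0\|$, which matches \eqref{eq:bound_state_divergence_long} with the convention $L_\mathrm{fc}^0 = 1$. For the induction step, assuming the bound at $\ti$, I add and subtract $F(\x_0)$ and use the triangle inequality together with $L_\mathrm{fc}$-Lipschitz continuity of $F$ to get
\begin{align*}
\|\x_{\ti+1} - \x_0\|
  &\le \|F(\x_\ti) - F(\x_0)\| + \|F(\x_0) - \x_0\| + \|\noise_\ti\| \\
  &\le L_\mathrm{fc}\|\x_\ti - \x_0\| + \|F(\x_0) - \x_0\| + \|\noise_\ti\|.
\end{align*}
Substituting the induction hypothesis multiplies the bound at step $\ti$ by $L_\mathrm{fc}$, shifting both geometric-series exponents up by one; adding the $\|F(\x_0)-\x_0\|$ and $\|\noise_\ti\|$ terms then reconstitutes the two sums $\sum_{i=0}^{\ti} L_\mathrm{fc}^i \|F(\x_0) - \x_0\|$ and $\sum_{i=0}^{\ti} L_\mathrm{fc}^{\ti - i} \|\noise_i\|$, closing the induction.

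For the second inequality, I observe that for every $0 \le k \le \ti - 1$ the elementary bound $L_\mathrm{fc}^k \le (1+L_\mathrm{fc})^k \le (1+L_\mathrm{fc})^{\ti - 1}$ holds, since $L_\mathrm{fc} \ge 0$. Applying this pointwise inside each of the two sums in \eqref{eq:bound_state_divergence_long} factors out $(1+L_\mathrm{fc})^{\ti-1}$ and leaves behind $\sum_{i=0}^{\ti-1} 1 = \ti$ on the first sum and $\sum_{i=0}^{\ti-1}\|\noise_i\|$ on the second, producing exactly the right-hand side of \eqref{eq:bound_state_divergence}.

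The argument is essentially routine once one recognizes that the recursion induced by the closed-loop Lipschitz bound yields a discrete Gr\"onwall-type geometric sum. I expect no substantive obstacle; the only small care needed is bookkeeping of the exponents in the induction (in particular that the second sum's exponent is $\ti-1-i$, not $\ti-i$, because the newest noise $\noise_\ti$ is not yet propagated through $F$), and noting that the crude upper bound to $(1+L_\mathrm{fc})^{\ti-1}$ is used precisely so that later applications---e.g.\ plugging into the noise tail bound of \cref{lem:noise_2norm_bound_all_iter}---yield a single clean radius $b_\ni$ of the form claimed in the unbounded-domain discussion.
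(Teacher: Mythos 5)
Your proposal is correct and follows essentially the same route as the paper's proof: an induction on $\ti$ using the decomposition $f(\x_\ti) - f(\x_0) + f(\x_0) - \x_0 + \noise_\ti$ for the first inequality, and the uniform bound $L_\mathrm{fc}^k \leq (1+L_\mathrm{fc})^{\ti-1}$ for the second. Your exponent bookkeeping is accurate and, if anything, slightly cleaner than the paper's final display, which writes $L_\mathrm{fc}^{\ti-1}$ where $(1+L_\mathrm{fc})^{\ti-1}$ is needed to cover the case $L_\mathrm{fc} < 1$.
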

\begin{proof}
  We first proof \cref{eq:bound_state_divergence_long} by induction. For the base case we have
  \begin{align}
    \| \x_1 - \x_0 \| &= \| f(\x_0) + \noise_0 - \x_0 \| \\
    &\leq \| f(\x_0) - \x_0 \| + \| \noise_0 \|, \\
    &= L_\mathrm{fc}^0 \| f(\x_0) - \x_0 \| + L_\mathrm{fc}^0 \| \noise_0 \|.
  \end{align}
  For the induction step, assume that the assumption holds for some $\ti$. Then,
  \begin{align}
    \| \x_{t+1} - \x_0 \| ={}& \| f(\x_\ti) + \noise_\ti - \x_0 \| \\
    ={}& \| f(\x_\ti) - f(\x_0) + f(\x_0) - \x_0 + \noise_\ti \| \\
    \leq{}& \| f(\x_\ti) - f(\x_0) \| + \| f(\x_0) - \x_0 \| + \| \noise_\ti \| \\
    \leq{}& L_\mathrm{fc} \| \x_\ti - \x_0 \| + \| f(\x_0) - \x_0 \| + \| \noise_\ti \| \\
    \leq{}& L_\mathrm{fc} \left( \sum_{i=0}^{\ti-1} L_\mathrm{fc}^i \| f(\x_0) - \x_0 \| + \sum_{i=0}^{\ti-1} L_\mathrm{fc}^{\ti-1-i} \| \noise_i \| \right) \\ &+ \| f(\x_0) - \x_0 \| + \| \noise_\ti \| \\
    ={}& \sum_{i=1}^{(t-1)+1} L_\mathrm{fc}^i \| f(\x_0) - \x_0 \| + \| f(\x_0) - \x_0 \| \notag \\ 
    &+ \sum_{i=0}^{\ti-1} L_\mathrm{fc}^{(t+1)-1-i} \| \noise_i \| + \| \noise_\ti \| \\
    ={}& \sum_{i=0}^{(t-1)+1} L_\mathrm{fc}^i \| f(\x_0) - \x_0 \| + \sum_{i=0}^{(t+1)-1} L_\mathrm{fc}^{(t+1)-1-i} \| \noise_i \|
  \end{align}
  Which concludes the proof. For \cref{eq:bound_state_divergence}, note that $L_\mathrm{fc}^i \leq (1 + L_\mathrm{fc})^t$ for all $i \leq t$. Thus we have
  \begin{align}
    & \sum_{i=0}^{\ti-1} L_\mathrm{fc}^i \| f(\x_0) - \x_0 \| + \sum_{i=0}^{\ti-1} L_\mathrm{fc}^{\ti-1-i} \| \noise_i \| \\
    \leq{}& L_\mathrm{fc}^{\ti-1} \sum_{i=0}^{\ti-1} \bigg( \| f(\x_0) - \x_0 \| +  \| \noise_i \| \bigg) \\
    ={}& L_\mathrm{fc}^{\ti-1} \bigg( \ti \| f(\x_0) - \x_0 \| + \sum_{i=0}^{\ti-1} \| \noise_i \| \bigg)
  \end{align}
\end{proof}

\begin{restatable}{lemma}{boundeddomainlemma}
  \label{lem:exploration:regret:domain_bound}
    Let $b_\ni = L_\mathrm{fc}^{T-1} \Ti \left( B_0 + \sqrt{ 2 \sigma\nstate+ \frac{4 \sigma}{\mathrm{e}} \log \frac{\Ti \pi^2 n^2}{3 \delta}} \right)$ and $\| f(\x_0) - \x_0 \|_2 \leq B_0$. Then, with probability at least $(1 - \delta)$, we have for all iterations $n \geq 1$ and corresponding time steps $0 \leq \ti\leq \Ti$ that
    \begin{equation}
      \x_{\ti,\ni} \in \ball(\x_0, b_\ni),
    \end{equation}
    where $\ball(\x_0, b_\ni) = \{ \x \in \R^\nstate \mid \| \x - \x_0 \|_2 \leq b_\ni \}$ is a norm-ball centered around $\x_0$ with radius $b_\ni$.
\end{restatable}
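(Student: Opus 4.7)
The plan is to combine the two previously established lemmas: the high-probability bound on the cumulative transition noise (\cref{lem:noise_2norm_bound_all_iter}), and the deterministic state-divergence bound in terms of noise norms (\cref{lem:states_compact_wrt_noise_norm}). First I would invoke \cref{lem:noise_2norm_bound_all_iter}, which already contains the union bound over all episodes $n \geq 1$ (via the $\rho_n = n^2 \pi^2 / 6$ weighting), giving that with probability at least $1-\delta$, simultaneously for every $n \geq 1$,
\begin{equation*}
    \sum_{i=0}^{\Ti-1} \| \noise_{i,\ni} \|_2 \leq \Ti \sqrt{ 2\sigma\nstate + \frac{4\sigma}{\mathrm{e}} \log \frac{\Ti \pi^2 \ni^2}{3\delta} } .
\end{equation*}
Condition on this event for the remainder of the argument.

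Next I would apply \cref{lem:states_compact_wrt_noise_norm}, which is purely deterministic: using \cref{cor:f_lipschitz} the closed-loop dynamics under any $\pi \in \Pi$ are $L_\mathrm{fc}$-Lipschitz, so for any $0 \leq \ti \leq \Ti$,
\begin{equation*}
    \| \x_{\ti, \ni} - \x_0 \|_2 \leq \sum_{i=0}^{\ti-1} L_\mathrm{fc}^{i}\, \| f(\x_0) - \x_0 \|_2 + \sum_{i=0}^{\ti-1} L_\mathrm{fc}^{\ti-1-i}\, \| \noise_{i,\ni} \|_2 .
\end{equation*}
Since $\ti \leq \Ti$ and all exponents are at most $\Ti - 1$, I would uniformly upper-bound every $L_\mathrm{fc}^{\cdot}$ factor by $L_\mathrm{fc}^{\Ti-1}$, pull this constant out, use $\| f(\x_0) - \x_0 \|_2 \leq B_0$ for the first sum (which contributes at most $\Ti B_0$), and use the noise bound above for the second sum. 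The resulting estimate is
\begin{equation*}
    \| \x_{\ti, \ni} - \x_0 \|_2 \leq L_\mathrm{fc}^{\Ti - 1} \Ti \left( B_0 + \sqrt{ 2\sigma\nstate + \frac{4\sigma}{\mathrm{e}} \log \frac{\Ti \pi^2 \ni^2}{3\delta} } \right) = b_\ni ,
\end{equation*}
which is exactly the stated inclusion in $\ball(\x_0, b_\ni)$.

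There is no real obstacle here: the two ingredients are already proved, and the only subtlety is making sure the high-probability event is uniform over $n$ (handled inside \cref{lem:noise_2norm_bound_all_iter}) and uniform over $\ti \leq \Ti$ (handled deterministically, once the event on noise holds). The slight looseness of replacing every $L_\mathrm{fc}^{i}$ by $L_\mathrm{fc}^{\Ti-1}$ (implicitly assuming $L_\mathrm{fc} \geq 1$, which is the only interesting regime, since otherwise the geometric sums are bounded by constants and the bound only becomes tighter) is what produces the clean $L_\mathrm{fc}^{\Ti-1}\Ti$ prefactor in $b_\ni$.
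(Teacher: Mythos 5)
Your proof is correct and follows essentially the same route as the paper: it combines the deterministic state-divergence bound of \cref{lem:states_compact_wrt_noise_norm} with the uniform-over-episodes noise bound of \cref{lem:noise_2norm_bound_all_iter}. The only cosmetic difference is that the paper uses the $(1+L_\mathrm{fc})^{\ti-1}$ form of the divergence bound (which sidesteps the $L_\mathrm{fc}\geq 1$ caveat you flag), whereas you bound each $L_\mathrm{fc}^{i}$ directly by $L_\mathrm{fc}^{\Ti-1}$, which in fact matches the constant as stated in the lemma more literally than the paper's own derivation does.
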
  
\begin{proof}
  From \cref{lem:states_compact_wrt_noise_norm}, we have for all $n \geq 1$, $0 \leq \ti\leq \Ti$ that
  \begin{equation}
    \| \x_{t,n} - \x_0 \|_2 \leq (1+L_\mathrm{fc})^{\ti-1} \left( \ti \| f(\x_0) - \x_0 \|_2 + \sum_{i=0}^{\ti-1} \| \noise_i \|_2 \right)  
  \end{equation}
  Now by \cref{as:dynamics_f_lipschitz,as:pi_lipschitz,as:reward_lipschitz} and 
  Combined with \cref{lem:noise_2norm_bound_all_iter}, we obtain
  \begin{align}
    \| \x_{t,n} - \x_0 \|_2 &\leq (1+L_\mathrm{fc})^{\ti-1} \left( \ti \| f(\x_0) - \x_0 \|_2 + \ti \sqrt{ 2 \sigma\nstate+ \frac{4 \sigma}{\mathrm{e}} \log \frac{t \pi^2 n^2}{3 \delta} }   \right) \\
    &\leq (1+L_\mathrm{fc})^{T-1} \Ti \left( \| f(\x_0) - \x_0 \|_2 + \sqrt{ 2 \sigma\nstate+ \frac{4 \sigma}{\mathrm{e}} \log \frac{\Ti \pi^2 n^2}{3 \delta} }   \right) \\
    &\eqdef b_\ni
  \end{align}
  Lastly, we have $\| f(\x_0) - \x_0 \|_2 \leq B_0$ by assumption, which concludes the proof.
\end{proof}

\subsection{Regret bounds over Unbounded Domains}

The probability for the noise bound is generally different from the one used for the well-calibrated model. We can derive a joint bound using a simple union bound.

\begin{lemma}
  \label{lem:exploration:regret:joint_f_confidence_and_domain_bound}
  Under \cref{as:well_calibrated_model,as:transition_noise_sub_gaussian,as:dynamics_f_lipschitz,as:pi_lipschitz,as:reward_lipschitz}, let $\| f(\x_0) - \x_0 \|_2 \leq B_0$ and define $b_\ni = L_\mathrm{fc}^{T-1} \Ti \left( B_0 + \sqrt{ 2 \sigma\nstate+ \frac{4 \sigma}{\mathrm{e}} \log \frac{\Ti \pi^2 n^2}{3 \delta}} \right)$. Then the following hold jointly with probability at least $(1-2\delta)$ for all $\ni \geq 1$ and $0 \leq \ti < \Ti$
  \begin{enumerate}[label=\roman*)]
    \item $| f(\x, \u) - \bmu_\ni(\x, \u) | \leq \beta_\ni \bsigma_\ni(\x, \u) $ ~elementwise for all $\x \in \R^\nstate$ and $\u \in \R^\ninp$
    \item $\x_{\ti,\ni} \in \ball(\x_0, b_\ni)$
  \end{enumerate}
\end{lemma}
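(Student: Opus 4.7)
The plan is a direct union bound: the two claims in the lemma come from two already-established results whose probability budgets we simply sum.

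First, I would invoke Assumption~\ref{as:well_calibrated_model} (well-calibrated model), which directly yields claim (i): with probability at least $(1-\delta)$, jointly over all $\ni \geq 0$ and all $(\x,\u) \in \R^\nstate \times \R^\ninp$, one has $|f(\x,\u) - \bmu_\ni(\x,\u)| \leq \beta_\ni \bsigma_\ni(\x,\u)$ elementwise. No additional work is required here; the assumption is stated exactly in the needed form.

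Second, I would apply Lemma~\ref{lem:exploration:regret:domain_bound} (the \texttt{boundeddomainlemma}), which, under Assumptions~\ref{as:dynamics_f_lipschitz}, \ref{as:pi_lipschitz}, \ref{as:reward_lipschitz} together with the sub-Gaussian noise hypothesis of Assumption~\ref{as:transition_noise_sub_gaussian} and the hypothesis $\|f(\x_0) - \x_0\|_2 \leq B_0$, gives that $\x_{\ti,\ni} \in \ball(\x_0, b_\ni)$ jointly for all $\ni \geq 1$ and $0 \leq \ti < \Ti$ with probability at least $(1-\delta)$, where $b_\ni$ is exactly the quantity defined in the lemma statement.

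Finally, since claim (i) is a statement about the statistical model (which only depends on the observed transitions) and claim (ii) is a statement about the noise realizations, I would apply a union bound over the two failure events, each of probability at most $\delta$. The probability that at least one fails is at most $2\delta$, so both hold jointly with probability at least $(1-2\delta)$. There is no real obstacle here; the only thing to double-check is that the two probability statements can be combined meaningfully, which is immediate because both are \emph{uniform} over $\ni$ and $\ti$ (respectively over $(\x,\u)$ and over $(\ti,\ni)$), so their conjunction is again uniform.
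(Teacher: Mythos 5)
Your proposal is correct and matches the paper's own proof, which likewise obtains claim (i) from the calibration guarantee and claim (ii) from \cref{lem:exploration:regret:domain_bound}, then combines the two failure events with a union bound to get the joint confidence $1-2\delta$. No gaps.
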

\begin{proof}
  This follows directly from applying a union bound over \cref{lem:exploration:regret:domain_bound,lem:f_confidence_norm} with a probability budget of $\delta / 2$ for each.
\end{proof}
Note that the probability dropped from individual confidences of $1-\delta$ in \cref{as:well_calibrated_model} and \cref{lem:exploration:regret:domain_bound} to a joint confidence of $1 - 2\delta$.

Thus, we can used \cref{lem:exploration:regret:joint_f_confidence_and_domain_bound} together with \cref{lem:sigma_bounded_over_compact_set} to fulfill both the compact set and the boundedness requirements. The last assumption we need is boundedness of the predictions. For this, we introduce an additional weak assumptions

\begin{assumption}[Boundedness]
\label{as:boundedness_of_dynamics}
 The system dynamics at the first step are bounded, $\| f(\x_0) - \x_0 \|_2 \leq B_0$. Similarly we have $\bSigma(\x_0)$ and, if used, $k(\x_0, \x_0)$ bounded.
\end{assumption}

These assumptions are not restrictive, since any dynamical system that explodes to infinity after one step is generally not real-world relevant or controllable. Similarly, we cannot expect to do learning if our model's confidence intervals allow infinite predictions.

\begin{corollary}
    \label{lem:sigma_bounded_over_compact_set}
    Under \cref{as:boundedness_of_dynamics,as:model_predictions_lipschitz}, if the states live in a compact set $\X_\ni$, then $\bsigma(\x)$ is bounded.
\end{corollary}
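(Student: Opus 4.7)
The plan is to combine the Lipschitz continuity of $\bsigma$ (from \cref{as:model_predictions_lipschitz}) with the anchoring boundedness at $\x_0$ (from \cref{as:boundedness_of_dynamics}) and the compactness of $\X_\ni$ to get a uniform bound. The intuition is purely that a Lipschitz function on a compact domain, finite at one point, is automatically bounded on that domain.

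Concretely, I would first extract from the boundedness assumption that $\bSigma(\x_0)$ is bounded, which (taking the diagonal) implies $\|\bsigma(\x_0)\|_2 \le C_0$ for some constant $C_0$. Next, since $\X_\ni$ is compact (with respect to the relevant state-action domain once a policy $\pi$ is fixed) and $\x_0 \in \X_\ni$, we can let $D = \sup_{\x \in \X_\ni} \|\x - \x_0\|_2$, which is finite by compactness. Then for every $\x \in \X_\ni$, the Lipschitz property gives
\begin{equation}
\|\bsigma(\x)\|_2 \le \|\bsigma(\x_0)\|_2 + \|\bsigma(\x) - \bsigma(\x_0)\|_2 \le C_0 + L_\sigma D,
\end{equation}
using the reverse triangle inequality and the $L_\sigma$-Lipschitz continuity from \cref{as:model_predictions_lipschitz}.

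This yields an explicit uniform bound $\sigma_{\max} \eqdef C_0 + L_\sigma D$ on $\|\bsigma(\x)\|$ over $\X_\ni$, and since $\bsigma$ is defined elementwise from the diagonal of $\bSigma$, each component is likewise bounded. There is no real obstacle here; the only mild subtlety is that \cref{as:model_predictions_lipschitz} is stated for $\bsigma_\ni(\cdot)$ as a function of $(\x, \u)$, so I would either absorb $\u$ via a fixed policy $\pi \in \Pi$ (which is itself Lipschitz, hence continuous on compact $\X_\ni$, mapping into the compact set $\U$) or invoke compactness of $\X_\ni \times \U$ directly to define $D$ on the joint space. Either way the argument is a one-line consequence of Lipschitz-plus-compact-plus-finite-at-a-point.
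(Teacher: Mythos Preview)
Your proposal is correct and is essentially the same argument as the paper's, only made fully explicit: the paper's proof simply states that since $\x_0 \in \X$, $\bsigma(\x_0)$ is bounded (by \cref{as:boundedness_of_dynamics}), and then invokes continuity over a compact set, whereas you write out the quantitative Lipschitz estimate $\|\bsigma(\x)\|_2 \le C_0 + L_\sigma D$. Your remark about handling the $(\x,\u)$ dependence via a fixed Lipschitz policy or via compactness of $\X_\ni \times \U$ is a valid clarification that the paper leaves implicit.
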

\begin{proof}
  This follows trivially from \cref{as:model_predictions_lipschitz}, since $\x_0 \in \X$ and $\bsigma(\x_0)$ is bounded. Thus, by continuity, it must be bounded over a compact set.
\end{proof}

\begin{theorem}
  \label{thm:general_regret_unbounded_domain}
   Under \cref{as:well_calibrated_model,as:dynamics_f_lipschitz,as:pi_lipschitz,as:reward_lipschitz,as:model_predictions_lipschitz} let the noise distribution be $\sigma$-subGaussian as in \cref{as:transition_noise_sub_gaussian} and $\pi_\theta(\x) \in \U$ for all $\pi \in \Pi$ with $\U$ compact. At each iteration, select parameters according to \cref{eq:optimistic_exploration}. Then the following holds with probability at least $(1 - 2 \delta)$ for all $\Ni \geq 1$
  \begin{equation}
    R_\Ni \leq \Or{\beta_{\Ni-1}^\Ti L_\sigma^\Ti \Ti^2  \sqrt{ \Ni \, \nstate \, \gamma_{\nstate \Ni \Ti}( \ball(\x_0, b_\ni) \times \U \times \mathcal{I}_\nstate) }  },
  \end{equation}
  where $b_\ni = L_\mathrm{fc}^{T-1} \Ti \left( B_0 + \sqrt{ 2 \sigma\nstate+ \frac{4 \sigma}{\mathrm{e}} \log \frac{\Ti \pi^2 n^2}{3 \delta}} \right)$.
\end{theorem}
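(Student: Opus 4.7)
The plan is to reduce this unbounded-domain statement to the previously established compact-domain theorem (Theorem~\ref{thm:gp_regret_bound}) by constructing, with high probability, a random compact set that contains every state we ever visit. The new ingredient relative to the bounded-domain proof is that under Assumption~\ref{as:transition_noise_sub_gaussian} the noise is only sub-Gaussian, so trajectories can a priori leave any fixed ball; I would use the sub-Gaussian tail machinery from Appendix~E (Lemmas~\ref{lem:time_step_uniform_noise_bound}, \ref{lem:noise_2norm_bound_all_iter}, and \ref{lem:states_compact_wrt_noise_norm}) to argue that, with probability at least $1-\delta$ and simultaneously over all episodes $\ni\ge 1$ and all $0\le \ti\le \Ti$, we have $\x_{\ti,\ni}\in \ball(\x_0,b_\ni)$ with the stated radius $b_\ni$. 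This is exactly the content of Lemma~\ref{lem:exploration:regret:domain_bound}.

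Next, I would combine the domain bound with Assumption~\ref{as:well_calibrated_model}. Each event holds with probability at least $1-\delta$ separately, so a union bound yields the joint event of Lemma~\ref{lem:exploration:regret:joint_f_confidence_and_domain_bound} with probability at least $1-2\delta$: the model is calibrated \emph{and} $\x_{\ti,\ni}\in \ball(\x_0,b_\ni)$ for every $\ti,\ni$. This is exactly where the second $\delta$ loss in the theorem's probability comes from. Crucially, the radii $b_\ni$ are monotone increasing in $\ni$ (since the $\log(\Ti \pi^2 \ni^2/3\delta)$ term is monotone), so the random sets $\X_\ni\defeq \ball(\x_0,b_\ni)$ satisfy the nesting $\X_{\ni-1}\subseteq \X_\ni$ required by Theorem~\ref{thm:exploration:regret:general_regret_bound_increasing_sets}.

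On the joint event I would then invoke the GP-specific bound, Theorem~\ref{thm:gp_regret_bound}, with $\X_\ni$ taken to be this data-dependent ball and $\U$ the given compact action set. This immediately gives
\begin{equation*}
R_\Ni \leq \Or{\beta_{\Ni-1}^\Ti L_\sigma^\Ti \Ti^2 \sqrt{\Ni\,\nstate\,\gamma_{\nstate\Ni\Ti}(\ball(\x_0,b_\Ni)\times \U\times \mathcal{I}_\nstate)}},
\end{equation*}
which matches the claim after noting that $b_\Ni$ is the largest of the radii and that the information capacity $\gamma$ is monotone in its domain argument.

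The main obstacle is a soundness check on the hypotheses of Theorem~\ref{thm:gp_regret_bound} at the level of the random ball: one needs $\|\bsigma(\cdot)\|_\infty$ bounded uniformly on $\ball(\x_0,b_\Ni)\times\U$, which follows from Assumption~\ref{as:model_predictions_lipschitz} together with the boundedness of $\bSigma$ at $\x_0$ (Assumption~\ref{as:boundedness_of_dynamics}) as in Corollary~\ref{lem:sigma_bounded_over_compact_set}; and one needs $\u_{\ti,\ni}\in\U$ for all $\ti,\ni$, which is given by the assumption $\pi_\theta(\x)\in\U$ on the policy class. A minor subtlety is that $\X_\ni$ is itself random (it depends on the realised noise through $b_\ni$), but this is handled cleanly because the conditioning happens \emph{after} taking the union bound: on the good event we fix the realised $b_\ni$ and then Theorem~\ref{thm:gp_regret_bound} applies deterministically to the realised compact domain. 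The only quantitative price paid for moving from bounded to unbounded domains is the polylogarithmic $b_\ni$ inflation of the $\gamma$ term, as advertised.
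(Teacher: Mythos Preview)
Your proposal is correct and follows essentially the same route as the paper: invoke Lemma~\ref{lem:exploration:regret:joint_f_confidence_and_domain_bound} to obtain calibration and the domain bound $\x_{\ti,\ni}\in\ball(\x_0,b_\ni)$ jointly with probability $1-2\delta$, check boundedness of $\bsigma$ via Corollary~\ref{lem:sigma_bounded_over_compact_set}, and then apply the compact-domain regret bound with $\X_\ni=\ball(\x_0,b_\ni)$. You are in fact more explicit than the paper about the monotonicity of $b_\ni$ (needed for $\X_{\ni-1}\subseteq\X_\ni$), about invoking the GP-specific Theorem~\ref{thm:gp_regret_bound} rather than the general $I_\Ni$ bound, and about the implicit use of Assumption~\ref{as:boundedness_of_dynamics}.
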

\begin{proof}
  By \cref{as:transition_noise_sub_gaussian} we know from \cref{lem:exploration:regret:joint_f_confidence_and_domain_bound} that with probability at least $(1 - 2\delta)$ the model is well-calibrated and $\x \in \X_\ni = \ball(\x_0, b_\ni)$. Boundedness of predictions follows from \cref{lem:sigma_bounded_over_compact_set}, so that all requirements of \cref{thm:exploration:regret:general_regret_bound} are satisfied and the result follows.
\end{proof}

\subsection{Bounding the Maximum Information Capacity for Gaussian Processes}
\label{ap:exploration:mutual_information_bound}

In \cref{thm:general_regret_unbounded_domain} the information capacity is a function of the domain size. Given the previous proofs, the radius of the domain increases at a logarithmic rate $b_\ni \in \Or{\log \ni^2}$, which also increases the information capacity. In the following two lemmas, we show how this affects the information capacity of the Gaussian process model.

\begin{lemma}[\citet{Srinivas2012Gaussian}]
  For the linear kernel $k(\x, \x') = \x^\mathrm{T} \x'$ with $\x \in \R^\nstate$ we have
  \begin{equation}
    \gamma_\ni(\ball(\x_0, b_\ni)) = \Or{\nstate \log(\ni)}
  \end{equation}
\end{lemma}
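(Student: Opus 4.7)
The plan is to exploit the finite-rank structure of the linear kernel, which limits the rank of the kernel matrix to $\nstate$ regardless of how many observations are collected, and then argue that the logarithmic growth of $b_\ni$ costs only a polylogarithmic factor that is absorbed into the $\Or{\nstate \log \ni}$ bound.

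First, I would recall the definition $\gamma_\ni(\X) = \max_{A \subset \X, |A|=\ni} \tfrac{1}{2}\log\det(\mathbf{I} + \sigma^{-2}\mathbf{K}_A)$, and fix a maximizing set $A = \{\x_1,\dots,\x_\ni\} \subset \ball(\x_0, b_\ni)$. Since the linear kernel $k(\x,\x') = \x^\T \x'$ admits the feature map $\phi(\x)=\x \in \R^\nstate$, the Gram matrix factors as $\mathbf{K}_A = \mathbf{X}\mathbf{X}^\T$ with $\mathbf{X} \in \R^{\ni \times \nstate}$, so $\mathrm{rank}(\mathbf{K}_A) \leq \nstate$ and $\mathbf{K}_A$ has at most $\nstate$ nonzero eigenvalues $\lambda_1,\dots,\lambda_\nstate$.

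Next I would bound these eigenvalues via the trace:
\begin{equation*}
\sum_{i=1}^\nstate \lambda_i = \trace(\mathbf{K}_A) = \sum_{j=1}^\ni \|\x_j\|_2^2 \leq \ni \, (\|\x_0\|_2 + b_\ni)^2.
\end{equation*}
Combining this with the AM-GM-type inequality (or equivalently, concavity of $\log$), I obtain
\begin{equation*}
\gamma_\ni(\ball(\x_0,b_\ni)) \leq \tfrac{1}{2}\sum_{i=1}^\nstate \log(1 + \sigma^{-2}\lambda_i) \leq \tfrac{\nstate}{2}\log\!\left(1 + \tfrac{\ni(\|\x_0\|_2 + b_\ni)^2}{\sigma^2 \nstate}\right).
\end{equation*}

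Finally, plugging in $b_\ni = L_\mathrm{fc}^{\Ti-1}\Ti\bigl(B_0 + \sqrt{2\sigma\nstate + \tfrac{4\sigma}{\mathrm{e}}\log\tfrac{\Ti\pi^2\ni^2}{3\delta}}\bigr)$, I note that $b_\ni^2 = \Or{\log \ni}$ in $\ni$ (all other quantities are constants w.r.t.~$\ni$). Hence the argument of the logarithm is $\Or{\ni \log \ni}$, and taking the log yields $\Or{\log \ni + \log\log \ni} = \Or{\log \ni}$. Multiplying by $\nstate/2$ gives the claimed $\Or{\nstate \log \ni}$ bound. The step that merits the most care is keeping the $\Ti$, $L_\mathrm{fc}$, $B_0$, $\sigma$, and $\delta$ dependencies explicit and confirming they enter only as a multiplicative constant inside the $\log$, so that they do not inflate the asymptotic rate in $\ni$; this is straightforward since $\log(c \ni \log \ni) = \log \ni + \log \log \ni + \log c$.
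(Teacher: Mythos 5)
Your proof is correct, and it is actually more than the paper provides: the paper states this lemma with no proof at all, simply attributing it to \citet{Srinivas2012Gaussian} (their Theorem 5 gives $\gamma_T = \Or{d \log T}$ for the linear kernel on a fixed compact domain). Your argument is the standard finite-rank one --- the feature map $\phi(\x) = \x$ forces $\mathrm{rank}(\mathbf{K}_A) \leq \nstate$, the trace controls the sum of the nonzero eigenvalues, and concavity of $\log$ turns this into $\tfrac{\nstate}{2}\log(1 + \ni(\|\x_0\|_2 + b_\ni)^2/(\sigma^2\nstate))$ --- which is essentially how the cited result is proved in the original reference. What your write-up adds, and what the paper only gestures at in the surrounding discussion of unbounded domains, is the explicit check that the growing radius $b_\ni = \Or{\sqrt{\log \ni}}$ enters only through the trace bound and therefore costs just a $\log\log\ni$ term inside the logarithm, so the $\Or{\nstate\log\ni}$ rate survives. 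That verification is the one piece genuinely specific to this paper's setting (the domain $\ball(\x_0,b_\ni)$ expands with $\ni$), and you handle it correctly. One cosmetic note: the paper's definition of $\gamma_\ni$ maximizes over sets with $|A| \leq \ni$ rather than $|A| = \ni$, but your trace bound $\trace(\mathbf{K}_A) \leq \ni(\|\x_0\|_2+b_\ni)^2$ holds verbatim in that case too, so nothing changes.
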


\begin{lemma}
  For the squared exponential kernel we have
  \begin{equation}
    \gamma_\ni(\ball(\x_0, b_\ni)) = \Or{b_\ni^\nstate (\log(\ni))^{\nstate+1}}
  \end{equation}
\end{lemma}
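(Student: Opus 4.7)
The plan is to reproduce the squared exponential bound of \citet{Srinivas2012Gaussian} (their Theorem 5), but carefully track the dependence on the radius $b_\ni$ of the domain $\ball(\x_0, b_\ni)$. First, I would invoke the general operator-spectrum bound (their Theorem 8): for any kernel on a compact domain $D$, if $\{\lambda_k\}_{k\ge 1}$ are the eigenvalues of the associated Mercer integral operator in decreasing order and $\{\hat\lambda_k\}$ their empirical analogues on $n$ samples, then
\begin{equation*}
\gamma_\ni(D) \;\le\; \tfrac{1}{2}\max_{m_1,\dots,m_T\ge 0,\,\sum m_k \le \ni}\sum_{k=1}^{T}\log(1+\sigma^{-2}\ni\lambda_k) + \sigma^{-2}\, B_T(\ni),
\end{equation*}
where $B_T(\ni)=\sum_{k>T}\ni\lambda_k$, for any truncation $T$. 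The job reduces to controlling the eigenvalue tail $B_T(\ni)$ and the log-determinant head on the domain $\ball(\x_0,b_\ni)$.

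Next, I would import the spectral estimate for the squared exponential kernel on a hypercube (or ball) of side-length proportional to $b_\ni$ in $\R^{\nstate}$: the Mercer eigenvalues satisfy a bound of the form $\lambda_k \le c_1\, c_2^{\,(k/b_\ni^{\nstate})^{1/\nstate}}$ with constants $c_1>0$ and $c_2\in(0,1)$ that depend only on the kernel lengthscale and on the shape of the unit ball. This scaling is essentially the statement that rescaling the domain by a factor $b_\ni$ rescales the ``effective dimension'' by the volume factor $b_\ni^{\nstate}$. I would then choose the truncation threshold $T = \Theta\bigl(b_\ni^{\nstate}(\log\ni)^{\nstate}\bigr)$, which makes $c_2^{(T/b_\ni^{\nstate})^{1/\nstate}}$ decay faster than any polynomial in $\ni$, forcing $B_T(\ni)=o(1)$.

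With this choice, the head term is bounded by $T\cdot\log(1+\sigma^{-2}\ni\lambda_1)=\Or{b_\ni^{\nstate}(\log\ni)^{\nstate}\cdot\log\ni} = \Or{b_\ni^{\nstate}(\log\ni)^{\nstate+1}}$, which matches the claim. The main obstacle is the second step: justifying the $b_\ni^{\nstate}$ scaling of the truncation index. On a fixed domain this is standard (Seeger et al., and used by \citet{Srinivas2012Gaussian}), but here we need the constants in the eigenvalue bound to be stable as we enlarge the domain. The cleanest way is to cover $\ball(\x_0,b_\ni)$ by $\Or{b_\ni^{\nstate}}$ unit cubes, apply the fixed-domain SE eigenvalue bound on each, and then use subadditivity of mutual information across the cover (i.e., information gain on a union is at most the sum of information gains on the pieces) to conclude $\gamma_\ni(\ball(\x_0,b_\ni)) \le \Or{b_\ni^{\nstate}}\cdot\gamma_\ni(\text{unit cube}) = \Or{b_\ni^{\nstate}(\log\ni)^{\nstate+1}}$. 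Everything else is routine bookkeeping.
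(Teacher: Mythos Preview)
Your proposal is correct. The paper's own proof takes exactly your first route: it simply refers back to \citet{Srinivas2012Gaussian} and tracks how the discretization size $n_T$ and the tail term $B_k(T^*)$ in their Theorem~8 scale with the domain radius, obtaining $n_T = \Or{b_\ni^\nstate \log(b_\ni^\nstate)}$ and $B_k(T^*) \sim b_\ni^\nstate$, after which the rest of the original argument is declared to go through unchanged.

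Your covering argument in the last paragraph is a genuinely different and more elementary route. Instead of reopening the spectral analysis on the enlarged domain, you tile $\ball(\x_0,b_\ni)$ by $\Or{b_\ni^{\nstate}}$ unit cubes and use submodularity of the GP mutual information (posterior variance only shrinks under extra conditioning, so $I(y_{S_A \cup S_B};f)\le I(y_{S_A};f)+I(y_{S_B};f)$) to get $\gamma_\ni(\cup_i C_i)\le \sum_i \gamma_\ni(C_i)$; then you plug in the fixed-domain SE bound $\gamma_\ni(\text{unit cube})=\Or{(\log \ni)^{\nstate+1}}$ as a black box. This buys you robustness: you never have to verify that the constants in the eigenvalue tail bound behave well under dilation of the domain, which the paper's route leaves to the reader. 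The paper's approach is shorter to state but implicitly relies on that stability; your covering argument makes the volume factor $b_\ni^{\nstate}$ appear for a transparent combinatorial reason. (Minor note: your displayed version of Theorem~8 has a stray $\max$ over $m_k$ that the summand does not depend on, but this plays no role once you switch to the covering argument.)
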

\begin{proof}
  The proof is the same as in \citep{Srinivas2012Gaussian}. In their notation, we have $n_T = \Or{b^d_\ni \log(b^d_\ni) }$ while analyzing the terms in the eigenvalue bound leads to $B_k(T^*) \sim b^d_\ni$. The remainder of the proof follows through as in the original paper, which leads to the result.
\end{proof}
Thus, the information capacity grows proportionally to the volume of the domain. Since $b_\ni$ in \cref{thm:general_regret_unbounded_domain} is $\Or{\log \ni^2}$ this means that this costs us only an additional logarithmic factor in the regret relative to a fixed domain $\X$.

Note that we are using a composite kernel to model the different output dimensions. Thus these bounds need to be combined with the methodology from \citet{Krause2011Contextual} in order to obtain bounds for the composite kernels. However, this does not affect the result.

\end{document}